\documentclass{article} 
\usepackage{arxiv}
\usepackage[numbers,sort&compress]{natbib}

\usepackage{microtype}
\usepackage{amsmath,amssymb,amsthm,mathtools,bm}
\usepackage{booktabs}
\usepackage{graphicx}
\usepackage{xcolor}
\usepackage{placeins}
\usepackage{tikz}
\usetikzlibrary{arrows.meta,positioning,fit,calc}
\usepackage{hyperref}
\usepackage{url}

\hypersetup{colorlinks=true,linkcolor=black,citecolor=black,urlcolor=blue,
  pdftitle={Classifier-Free Guidance in Flow Matching: Non-Autonomous Potentials, Overshoot, and Posterior-Mean Control},
  pdfauthor={Jishen Peng and Zheng Ma}}
\newtheorem{theorem}{Theorem}[section]
\newtheorem{proposition}[theorem]{Proposition}
\newtheorem{lemma}[theorem]{Lemma}
\newtheorem{corollary}[theorem]{Corollary}
\theoremstyle{definition}

\theoremstyle{remark}

\DeclareMathOperator{\Cov}{Cov}
\DeclareMathOperator{\Var}{Var}
\DeclareMathOperator{\tr}{tr}
\DeclareMathOperator{\softmax}{softmax}
\newcommand{\E}{\mathbb E}
\newcommand{\R}{\mathbb R}
\newcommand{\dd}{\mathrm d}
\newcommand{\norm}[1]{\left\lVert#1\right\rVert}
\newcommand{\inner}[2]{\left\langle#1,#2\right\rangle}
\newcommand{\vu}{v_t^{\mathrm u}}
\newcommand{\vc}{v_t^{\mathrm c}}
\newcommand{\pu}{p_t^{\mathrm u}}
\newcommand{\pc}{p_t^{\mathrm c}}

\title{Classifier-Free Guidance in Flow Matching:\\
Non-Autonomous Potentials, Overshoot, and Posterior-Mean Control}

\author{
  \normalfont
  \textbf{Jishen Peng}$^1$ \qquad
  \textbf{Zheng Ma}$^{1,2,3,4}$\thanks{Corresponding author: \texttt{zhengma@sjtu.edu.cn}}\\[0.75em]
  \small $^1$ School of Mathematical Sciences, Shanghai Jiao Tong University, Shanghai, China\\
  \small $^2$ Institute of Natural Sciences, MOE-LSC, Shanghai Jiao Tong University, Shanghai, China\\
  \small $^3$ Qing Yuan Research Institute, Shanghai Jiao Tong University, Shanghai, China\\
  \small $^4$ CMA-Shanghai, Shanghai Jiao Tong University, Shanghai, China\\[0.5em]
  \small Code: \url{https://github.com/ppposition/pmc.git}
}

\date{}
\renewcommand{\headeright}{arXiv Preprint}
\renewcommand{\undertitle}{arXiv Preprint}
\renewcommand{\shorttitle}{Classifier-Free Guidance in Flow Matching}

\begin{document}

\maketitle

\begin{abstract}
Classifier-free guidance (CFG) improves conditional generation in Flow
Matching, but strong guidance can distort the generated distribution and reduce
diversity. We provide a geometric account of this behavior by viewing Flow
Matching as a time-varying gradient flow and characterizing how CFG reshapes its
underlying potential. This view explains how stronger alignment can be
accompanied by mean displacement and trajectory concentration, and motivates
controlling guidance through the model-implied terminal posterior mean. We
therefore propose Posterior-Mean-Capped CFG (PMC-CFG), a training-free,
per-sample method that adaptively retains the strongest feasible guidance
without additional network evaluations. Experiments on synthetic and
large-scale image-generation benchmarks show that PMC-CFG limits guidance-induced
distortion and concentration while improving the alignment--diversity trade-off,
with particularly strong benefits when nominal guidance is large.
\end{abstract}

\section{Introduction}
\label{sec:intro}

Diffusion and flow-based generative models learn to transform simple noise into
complex data distributions \cite{ho2020ddpm,lipman2023flow,albergo2022building,liu2023flow}.
For conditional generation, classifier guidance steers this transformation with
the gradient of an auxiliary classifier \cite{dhariwal2021diffusion}, whereas
classifier-free guidance (CFG) jointly learns conditional and unconditional
 predictions and combines them during sampling \cite{ho2022classifierfree}. CFG
 is simple and effective: its conditional--unconditional difference acts as an
 implicit classifier signal, and guided flows have shown substantial gains in
 conditional generation with Flow Matching \cite{ho2022classifierfree,zheng2023guidedflows}.
 Yet increasing its scale improves condition alignment at the cost of diversity
 and can cause oversaturation, mean overshoot, and distributional shrinkage
 \cite{ho2022classifierfree,karras2024badversion,pavasovic2026overshoot}.

Several inference-time methods mitigate excessive guidance by changing its
time schedule \cite{wang2024schedulers,gao2026c2fg}, using analytic
path-integral characterizations to design distribution-guided schedules
\cite{jiang2026analytic}, restricting guidance to selected noise intervals
\cite{kynkaanniemi2024interval}, constraining updates toward the model manifold
\cite{chung2025cfgpp}, rectifying the combination coefficients
\cite{xia2025rectified}, projecting and rescaling the guidance update
\cite{sadat2025apg}, or replacing linear extrapolation with nonlinear
reweighting \cite{pavasovic2026overshoot}. Recent control-theoretic
work instead treats the conditional--unconditional discrepancy as a feedback
signal in continuous-time generative flows \cite{wang2026cfgctrl}. These methods improve guided sampling,
but they do not directly control the state-dependent displacement implied by a
Flow-Matching velocity. Moreover, guided dynamics cannot generally be
identified with sampling from a single power-tilted terminal distribution
 \cite{bradley2024classifier}. This leaves a practical question: which quantity
 should be constrained at each ODE evaluation to prevent excessive guidance
 without discarding useful conditional information?

Two recent perspectives sharpen this question. Adaptive diffusion guidance
casts the scale as a control that may depend on time, state, and condition
\cite{azangulov2026adaptive}, while a flow-specific approach interprets CFG as
homotopy optimization and projects the iterate toward a manifold on which the
conditional and unconditional predictions agree \cite{cai2026manifold}. Our
approach instead preserves their discrepancy as the useful guidance direction
and controls how far that direction may extrapolate the model-implied terminal
mean. This distinction yields a closed-form, per-sample rule rather than a
purely time-dependent schedule or an iterative state projection.

We answer this question in posterior-mean space. For the linear Flow-Matching
path, every velocity evaluation implies a posterior estimate of the terminal
sample, and fixed CFG extrapolates this estimate along the
conditional--unconditional direction. This view motivates
Posterior-Mean-Capped CFG (PMC-CFG), which intersects the nominal CFG ray with a
relative norm ball and selects the farthest feasible point. As illustrated in
Figure~\ref{fig:overview}, PMC-CFG limits excessive extrapolation while retaining
nominal guidance whenever it is already feasible. The resulting controller has
a closed-form per-sample scale, requires no training, and adds no network
evaluations.

\begin{figure*}[t]
\centering
\resizebox{0.78\textwidth}{!}{%
\begin{tikzpicture}[
  x=1cm,y=1cm,
  arr/.style={-{Latex[length=2.2mm,width=1.5mm]},line cap=round},
  tinyLab/.style={font=\tiny,align=center},
  legendLab/.style={font=\scriptsize,anchor=west}
]
\definecolor{condblue}{HTML}{0072B2}
\definecolor{cfgorange}{HTML}{D55E00}
\definecolor{pmcgreen}{HTML}{009E73}
\definecolor{fanamber}{HTML}{E69F00}
\definecolor{softgray}{HTML}{73777F}
\path[use as bounding box] (0,0) rectangle (9.10,5.00);
\coordinate (O) at (3.00,0.70); 
\coordinate (XT) at (0.70,0.50);
\coordinate (MUNCOND) at (4.00,1.45);
\coordinate (MCOND) at (5.20,2.35);
\coordinate (MPMC) at (5.86,2.845);
\coordinate (MCFG) at (7.96,4.42);

\begin{scope}
  \clip (0,0) rectangle (9.10,5.00);
  \clip (XT) -- ($(XT)+(10:8.50)$)
    arc[start angle=10,end angle=31,radius=8.50] -- cycle;
  \shade[inner color=white,outer color=fanamber!34] (XT) circle (8.50);
\end{scope}

\draw[pmcgreen!70!black,dashed,line width=0.90pt]
  ($(O)+(20:3.575)$) arc[start angle=20,end angle=55,radius=3.575];
\node[tinyLab,pmcgreen!70!black,anchor=south west] at (4.88,3.70)
  {$\Gamma\lVert m_t^{\mathrm c}(x)\rVert$};

\draw[softgray!48,densely dashed,line width=0.65pt]
  ($(MUNCOND)+(-0.28,-0.16)$) -- ($(MCFG)+(0.22,0.13)$);
\filldraw[fill=black!72,draw=white,line width=0.55pt] (XT) circle (2.6pt);
\node[tinyLab,anchor=north east] at ($(XT)+(-0.06,-0.05)$) {$x_t$};

\draw[arr,softgray!85,line width=1.00pt] ($(XT)+(0.09,0.05)$) -- (MUNCOND);
\draw[arr,condblue,line width=1.20pt] ($(XT)+(0.08,0.11)$) -- (MCOND);
\draw[arr,pmcgreen!80!black,line width=1.30pt] ($(XT)+(0.09,0.17)$) -- (MPMC);
\draw[arr,cfgorange,line width=1.40pt] ($(XT)+(0.08,0.23)$) -- (MCFG);

\filldraw[fill=softgray,draw=white,line width=0.55pt] (MUNCOND) circle (2.7pt);
\node[tinyLab,softgray,anchor=north east] at ($(MUNCOND)+(-0.04,-0.10)$)
  {$m_t^{\mathrm u}(x)$};
\filldraw[fill=condblue,draw=white,line width=0.55pt] (MCOND) circle (2.9pt);
\node[tinyLab,condblue,anchor=south east] at ($(MCOND)+(-0.05,0.10)$)
  {$m_t^{\mathrm c}(x)$};
\filldraw[fill=pmcgreen!80!black,draw=white,line width=0.60pt] (MPMC) circle (3.2pt);
\node[tinyLab,pmcgreen!70!black,anchor=south west] at ($(MPMC)+(0.05,0.10)$)
  {$m_t^{\mathrm{PMC}}(x)$};
\filldraw[fill=cfgorange,draw=white,line width=0.55pt] (MCFG) circle (2.9pt);
\draw[cfgorange,very thick] ($(MCFG)+(-0.11,-0.11)$) -- ($(MCFG)+(0.11,0.11)$);
\draw[cfgorange,very thick] ($(MCFG)+(-0.11,0.11)$) -- ($(MCFG)+(0.11,-0.11)$);
\node[tinyLab,cfgorange,anchor=south] at ($(MCFG)+(0,0.16)$)
  {$m_t^{\lambda}(x)$};

\filldraw[fill=white,fill opacity=0.92,draw=black!12,line width=0.45pt,
  rounded corners=2pt] (5.66,0.12) rectangle (8.55,1.56);
\draw[arr,softgray!85,line width=0.95pt] (5.90,1.31) -- (6.62,1.31);
\node[legendLab] at (6.75,1.31) {unconditional};
\draw[arr,condblue,line width=1.05pt] (5.90,0.99) -- (6.62,0.99);
\node[legendLab] at (6.75,0.99) {conditional};
\draw[arr,cfgorange,line width=1.15pt] (5.90,0.67) -- (6.62,0.67);
\node[legendLab] at (6.75,0.67) {fixed CFG};
\draw[arr,pmcgreen!80!black,line width=1.15pt] (5.90,0.35) -- (6.62,0.35);
\node[legendLab] at (6.75,0.35) {PMC-CFG};
\end{tikzpicture}
}
\caption{From posterior guidance to posterior-mean control. The amber fan becomes
more saturated toward the upper-right far field, indicating that
$p(c\mid x_t)$ can remain high far from the data region. The straight arrows
from $x_t$ show the unconditional and conditional posterior means together with
the nominal fixed-CFG and controlled PMC-CFG means. Fixed CFG extrapolates along
the ray from $m_t^{\mathrm u}(x)$ through $m_t^{\mathrm c}(x)$ and can reach the
remote mean $m_t^\lambda(x)$, producing overshoot. The short dashed green arc is
the locally relevant portion of the relative norm boundary
$\lVert m\rVert=\Gamma\lVert m_t^{\mathrm c}(x)\rVert$; PMC-CFG selects its
intersection with the same extrapolation ray, $m_t^{\mathrm{PMC}}(x)$, thereby
suppressing excessive extrapolation. The geometry is schematic.}
\label{fig:overview}
\end{figure*}

Our contributions are:
\begin{itemize}
 \item We characterize linear Flow Matching and CFG as non-autonomous gradient
 flows. We show that the same posterior force that provably increases the local
 conditional-alignment rate can also accumulate mean displacement and contract
 trajectory volume, thereby separating why CFG works from why fixed strong CFG
 fails.
 \item We introduce PMC-CFG, a training-free controller that caps the
 model-implied terminal posterior mean while preserving the CFG direction. We
 derive its closed-form scale and prove pointwise feasibility, maximal retention
 of nominal guidance, and non-interference whenever nominal CFG is feasible.
 \item We evaluate the predicted mechanisms and the resulting
 alignment--diversity trade-off on an analytic eight-point GMM distribution,
 ImageNet-256 class-conditional generation, and SD3.5 text-to-image generation.
\end{itemize}

\section{Preliminaries}
\label{sec:preliminaries}

\subsection{Linear Flow Matching}

Let the source sample be $X_0\sim p_0=\mathcal N(0,I_d)$ and the target sample
be $X_1\sim p_1$, independently. We consider the linear probability path
\begin{equation}
 X_t=(1-t)X_0+tX_1,\qquad t\in[0,1],
 \label{eq:path-main}
\end{equation}
which interpolates from noise at $t=0$ to data at $t=1$. Flow Matching learns
the marginal velocity
$v_t(x)=\E[\dot X_t\mid X_t=x]$ and generates samples by solving
$\dot x_t=v_t(x_t)$. For the linear path, define the model-implied terminal
posterior mean $m_t(x)=\E[X_1\mid X_t=x]$. Then, for $t\in(0,1)$,
\begin{equation}
 v_t(x)=\frac{m_t(x)-x}{1-t}
 =\frac{x}{t}+\frac{1-t}{t}\nabla_x\log p_t(x).
 \label{eq:velocity-main}
\end{equation}
The second equality is the linear-path Tweedie identity
\cite{efron2011tweedie}; Appendix~\ref{app:linear-identities} gives a
self-contained derivation.

\subsection{Classifier-Free Guidance}

For a condition $c$, superscripts $\mathrm u$ and $\mathrm c$ denote
unconditional and conditional quantities, respectively. Classifier-free
guidance (CFG) combines the two learned velocities as
\begin{equation}
 v_t^\gamma=\vu+\gamma(\vc-\vu),
 \label{eq:cfg-main}
\end{equation}
where $\gamma=0$ is unconditional, $\gamma=1$ is purely conditional, and
$\gamma>1$ extrapolates beyond the conditional prediction. Equivalently, the
guided velocity implies the terminal estimate
\begin{equation}
 m_t^\gamma(x)=x+(1-t)v_t^\gamma(x)
 =m_t^{\mathrm c}(x)+(\gamma-1)
   \bigl(m_t^{\mathrm c}(x)-m_t^{\mathrm u}(x)\bigr).
 \label{eq:mean-main}
\end{equation}
Thus standard CFG is linear extrapolation in posterior-mean space. This
identity supplies the geometric quantity that PMC-CFG will constrain in
Section~\ref{sec:method}.

\section{Flow Matching as a Non-Autonomous Gradient Flow}
\label{sec:analysis}

The preliminaries specify the standard Flow-Matching and CFG construction. We
now analyze the resulting dynamics: first as a time-varying potential system,
then through its displacement and local volume change.

\subsection{A time-varying potential}

\begin{proposition}[Non-autonomous gradient representation]
\label{prop:potential-main}
For $t\in(0,1)$ and a positive differentiable intermediate density $p_t$,
\begin{equation}
 v_t(x)=\nabla_x\Phi_t(x),\qquad
 \Phi_t(x)=\frac{\norm{x}^2}{2t}
 +\frac{1-t}{t}\log p_t(x).
 \label{eq:potential-main}
\end{equation}
Equivalently, with $U_t=-\Phi_t$, the sampler obeys
$\dot x_t=-\nabla_xU_t(x_t)$, but its energy evolves as
\begin{equation}
 \frac{\dd}{\dd t}U_t(x_t)
 =\partial_tU_t(x_t)-\norm{\nabla_xU_t(x_t)}^2.
 \label{eq:energy-balance-main}
\end{equation}
\end{proposition}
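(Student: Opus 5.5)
The plan is to read the representation directly off the Tweedie form of the velocity in \eqref{eq:velocity-main}, which we take as established (Appendix~\ref{app:linear-identities}). For $t\in(0,1)$ and $p_t>0$ differentiable, $\log p_t$ is differentiable. Differentiating the candidate potential term by term gives $\nabla_x\bigl(\norm{x}^2/(2t)\bigr)=x/t$ and $\nabla_x\bigl(\tfrac{1-t}{t}\log p_t(x)\bigr)=\tfrac{1-t}{t}\nabla_x\log p_t(x)$. Because the prefactors depend only on $t$, they pass through $\nabla_x$. Summing the two terms reproduces the second expression in \eqref{eq:velocity-main}, so $v_t=\nabla_x\Phi_t$. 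Setting $U_t=-\Phi_t$ then turns the sampler $\dot x_t=v_t(x_t)$ into $\dot x_t=-\nabla_xU_t(x_t)$ with no further work.

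For the energy balance \eqref{eq:energy-balance-main}, I would apply the multivariate chain rule to $t\mapsto U_t(x_t)$ along a trajectory:
\[
\frac{\dd}{\dd t}U_t(x_t)=\partial_tU_t(x_t)+\inner{\nabla_xU_t(x_t)}{\dot x_t}.
\]
Substituting $\dot x_t=-\nabla_xU_t(x_t)$ turns the inner product into $-\norm{\nabla_xU_t(x_t)}^2$, which is exactly the claimed identity. The point of the remark is that the explicit term $\partial_tU_t$, which is absent for an autonomous gradient flow, can be positive, so $U_t$ need not decrease along the flow.

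The main obstacle is regularity. The chain rule requires $(t,x)\mapsto U_t(x)$ to be jointly $C^1$, and in particular $\partial_t\log p_t$ must exist. I would either add this as an explicit standing assumption, or justify it from the smoothing structure of the path. For $t<1$ we have $p_t=p_1^{(t)}*\mathcal N\bigl(0,(1-t)^2I\bigr)$, with $p_1^{(t)}$ the law of $tX_1$, which is a Gaussian convolution. Such a density is positive and smooth in $(t,x)$ on $(0,1)\times\R^d$, with derivatives obtained by differentiating under the integral. The only remaining check is that the trajectory $x_t$ is $C^1$, which holds because it solves an ODE with a continuous (locally Lipschitz) right-hand side.
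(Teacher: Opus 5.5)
Your proposal is correct and follows the paper's proof essentially step for step: differentiate $\Phi_t$ term by term, identify the result with the score form of the velocity from Lemma~\ref{lem:linear-full}, set $U_t=-\Phi_t$, and apply the chain rule along the trajectory. Your regularity discussion is a sound addition that the paper leaves implicit: $p_t$ is jointly smooth and positive on $(0,1)\times\R^d$ by the Gaussian-convolution structure, which justifies both the existence of $\partial_tU_t$ and the use of the chain rule.
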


Because the landscape changes with time, energy need not decrease, and the
endpoint transport is determined by the accumulated dynamics rather than by
any single instantaneous potential. Appendix~\ref{app:linear-identities}
proves Proposition~\ref{prop:potential-main}.

\subsection{CFG constructs a new non-autonomous potential}

Let $\Phi_t^{\mathrm u}$ and $\Phi_t^{\mathrm c}$ be the potentials obtained
from~\eqref{eq:potential-main}, and write
$h_t(x)=\log p(c\mid X_t=x)$. Their difference is determined entirely by the
conditional-to-unconditional density ratio. Indeed, Bayes' rule gives
\begin{align}
 \Phi_t^{\mathrm c}(x)-\Phi_t^{\mathrm u}(x)
 &=\frac{1-t}{t}\left[\log\pc(x)-\log\pu(x)\right]\notag\\
 &=\frac{1-t}{t}h_t(x)-\frac{1-t}{t}\log p(c).
 \label{eq:potential-gap-main}
\end{align}
The common quadratic term in~\eqref{eq:potential-main} cancels in the first
line, and the last term in the second line is independent of $x$. When
$\gamma$ is constant or depends only on time, substituting this potential
difference into the CFG combination~\eqref{eq:cfg-main} yields
\begin{align}
 v_t^\gamma
 &=(1-\gamma)\nabla_x\Phi_t^{\mathrm u}
   +\gamma\nabla_x\Phi_t^{\mathrm c}
  =\nabla_x\Phi_t^\gamma,\label{eq:cfg-gradient-main}\\
 \Phi_t^\gamma
 &=(1-\gamma)\Phi_t^{\mathrm u}+\gamma\Phi_t^{\mathrm c} =\Phi_t^{\mathrm c}
  +(\gamma-1)\frac{1-t}{t}h_t(x)+K_t,
  \label{eq:cfg-potential-main}
\end{align}
where $K_t=-(\gamma-1)(1-t)\log p(c)/t$ is independent of $x$. Thus CFG
superposes a time-weighted condition-posterior potential on the
pure-conditional landscape.

The corresponding force identity is the spatial derivative of the same
potential difference in~\eqref{eq:potential-gap-main}.

\begin{proposition}[Posterior force]
\label{prop:posterior-force}
For $t\in(0,1)$ and positive differentiable intermediate densities,
\begin{equation}
 \vc(x)-\vu(x)
 =\nabla_x\!\left(\Phi_t^{\mathrm c}-\Phi_t^{\mathrm u}\right)
 =\frac{1-t}{t}\nabla_x\log p(c\mid X_t=x).
 \label{eq:force-main}
\end{equation}
\end{proposition}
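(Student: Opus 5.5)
The plan is to obtain the first equality from Proposition~\ref{prop:potential-main} and the second by differentiating the potential gap~\eqref{eq:potential-gap-main}.

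\textbf{First equality.} Proposition~\ref{prop:potential-main} applies separately to the unconditional path, with density $\pu$, and to the conditional path, with density $\pc$. Both paths share the source $p_0=\mathcal N(0,I_d)$ and the linear interpolation~\eqref{eq:path-main}; only the terminal law changes from $p_1$ to $p_1(\cdot\mid c)$. Hence $\vu=\nabla_x\Phi_t^{\mathrm u}$ and $\vc=\nabla_x\Phi_t^{\mathrm c}$. Linearity of the gradient then gives $\vc-\vu=\nabla_x(\Phi_t^{\mathrm c}-\Phi_t^{\mathrm u})$.

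\textbf{Second equality.} By~\eqref{eq:potential-gap-main}, the quadratic term $\norm{x}^2/(2t)$ cancels and
\[
\Phi_t^{\mathrm c}-\Phi_t^{\mathrm u}=\frac{1-t}{t}\bigl[\log\pc(x)-\log\pu(x)\bigr].
\]
Bayes' rule for the joint law of $(X_t,c)$ gives $\pc(x)=p(c\mid X_t=x)\,\pu(x)/p(c)$. Therefore the bracket equals $h_t(x)-\log p(c)$.

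Taking $\nabla_x$ removes the constant $\log p(c)$. The prefactor $(1-t)/t$ depends only on $t$, so it passes through the gradient. This yields $\frac{1-t}{t}\nabla_x\log p(c\mid X_t=x)$.

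\textbf{Where care is needed.} There is no real analytic obstacle; the work is in checking the hypotheses.
\begin{itemize}
\item $h_t$ must be differentiable. Since $p(c\mid X_t=x)=p(c)\pc(x)/\pu(x)$, it is a ratio of positive differentiable functions, so $h_t$ is differentiable. The same positivity keeps every logarithm finite.
\item The unconditional density $\pu$ must be the marginal of the conditional ones, $\pu(x)=\sum_c p(c)\pc(x)$ (or the integral version). This requires the conditional path to be built by the same interpolation, with $X_0$ independent of $(X_1,c)$. I would state this assumption explicitly, because it is exactly what licenses Bayes' rule at time $t$ rather than only at $t=1$.
\item As a cross-check, I would also derive the result from the posterior-mean form~\eqref{eq:velocity-main}. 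There the $x/t$ terms cancel and the score difference equals $\nabla_x h_t$, which reproduces the same formula.
\end{itemize}
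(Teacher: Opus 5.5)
Your proposal is correct and matches the paper's proof: the paper applies the score form of the velocity, \eqref{eq:app-score-velocity}, to $\pc$ and $\pu$ separately and subtracts so that the $x/t$ terms cancel, which is exactly your cross-check. It then uses Bayes' rule $\pc/\pu=p(c\mid X_t=x)/p(c)$ to drop the constant $\log p(c)$ under the gradient. Routing the argument through the potential gap \eqref{eq:potential-gap-main} is the same computation, and your explicit assumption that $X_0$ is independent of $(X_1,c)$ is a hypothesis the paper leaves implicit.
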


Consequently,
$v_t^\gamma=\vc+(\gamma-1)(\vc-\vu)$ adds
$(\gamma-1)(1-t)\nabla_xh_t/t$ to the pure-conditional velocity. Equations
\eqref{eq:cfg-potential-main} and~\eqref{eq:force-main} are therefore the
scalar-potential and vector-field forms of the same posterior-guidance term;
its next spatial derivative changes the velocity Jacobian.
Appendix~\ref{app:linear-identities} gives the underlying linear-path
derivation.

For a time-only schedule $\gamma(t)$, the representation
\eqref{eq:cfg-potential-main} still holds, but
$\partial_t\Phi_t^{\gamma(t)}$ acquires an $x$-dependent term proportional to
$\dot\gamma(t)h_t$ (in addition to an $x$-independent gauge term). A schedule
therefore changes both the instantaneous force and the energy-injection term.
The frequently used density
\[
 \widetilde p_{t,\gamma}(x)
 \propto \pu(x)^{1-\gamma}\pc(x)^\gamma
 \propto \pc(x)p(c\mid X_t=x)^{\gamma-1}
\]
is only an instantaneous score interpretation. It does not collapse the
two-parameter transport into sampling from one fixed terminal energy.

\subsection{Why CFG works---and why fixed guidance can fail}

Equation~\eqref{eq:force-main} gives a precise local mechanism for the empirical
effectiveness of CFG. The conditional--unconditional difference is not an
arbitrary correction: at the population optimum it is proportional to the score
of the implicit classifier $p(c\mid X_t=x)$. Thus, relative to the
pure-conditional flow, CFG adds an ascent direction for the current condition
posterior. Indeed, with
$h_t(x)=\log p(c\mid X_t=x)$, the posterior along a guided trajectory satisfies
\begin{align}
 \frac{\dd}{\dd t}h_t(x_t)
 &=\partial_t h_t(x_t)+\nabla h_t(x_t)^\top\vc(x_t)\notag\\
 &\quad+(\gamma-1)\frac{1-t}{t}\norm{\nabla h_t(x_t)}^2.
 \label{eq:posterior-growth-main}
\end{align}
For $\gamma\geq1$, the last term is nonnegative. Thus CFG increases the
instantaneous posterior-growth rate relative to the pure-conditional velocity
at the same time--state pair. It does not imply that the posterior is monotone
along the full trajectory, because the explicit time derivative and the
 pure-conditional transport term can have either sign.

Equation~\eqref{eq:posterior-growth-main} therefore answers ``why CFG works''
without assuming that its endpoint follows a power-tilted distribution: at a
fixed time--state pair, the guidance correction improves the instantaneous
alignment rate by exactly a nonnegative squared-gradient term. This deterministic
Flow-Matching account is complementary to the predictor--corrector interpretation
of diffusion CFG, in which conditional denoising is interleaved with a sharpening
corrector \cite{bradley2024classifier}. The two views agree that guidance
reinforces conditional preference, but the present identity applies directly to
the non-autonomous Flow-Matching ODE. For learned rather than population-optimal
fields, the equality is an ideal-model characterization and model error perturbs
both the direction and its magnitude.

Posterior ascent describes one trajectory but not the diversity of a trajectory
bundle. The relevant velocity Jacobian can be written in terms of posterior
covariance.

\begin{proposition}[Posterior geometry and local volume change]
\label{prop:curvature-main}
Assuming differentiation under the posterior integral is valid,
\begin{align}
 \nabla_xm_t(x)
 &=\frac{t}{(1-t)^2}\Cov(X_1\mid X_t=x),
 \label{eq:mean-jac-main}\\
 H_t^\gamma-H_t^{\mathrm c}
 &=(\gamma-1)\frac{1-t}{t}\nabla_x^2h_t(x)
 \label{eq:hessian-shift-main}\\
 &=(\gamma-1)\frac{t}{(1-t)^3}
 \bigl(\Cov_t^{\mathrm c}-\Cov_t^{\mathrm u}\bigr),
 \label{eq:extra-jac-main}\\
 \nabla\!\cdot(v_t^\gamma-\vc)
 &=(\gamma-1)\frac{t}{(1-t)^3}
 \left[\tr\Cov_t^{\mathrm c}-\tr\Cov_t^{\mathrm u}\right].
 \label{eq:extra-div-main}
\end{align}
where $H_t^\gamma=\nabla_xv_t^\gamma=\nabla_x^2\Phi_t^\gamma$.
For a fixed starting time $s$ and state $x_s$, let
$x_t=X^\gamma(t,s,x_s)$ denote the solution of
$\dot x_t=v_t^\gamma(x_t)$ with $X^\gamma(s,s,x_s)=x_s$. Define the trajectory
Jacobian $J_t=D_{x_s}X^\gamma(t,s,x_s)$. Then $J_s=I_d$ and
\begin{equation}
 \dot J_t=H_t^\gamma(x_t)J_t,
 \qquad
 \frac{\dd}{\dd t}\log|\det J_t|=\nabla\!\cdot v_t^\gamma(x_t).
 \label{eq:liouville-main}
\end{equation}
\label{eq:variational-main}
\end{proposition}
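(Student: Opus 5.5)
We begin with \eqref{eq:mean-jac-main}. Write the posterior of $X_1$ given $X_t=x$ via Bayes: since $X_t=(1-t)X_0+tX_1$ with $X_0\sim\mathcal N(0,I_d)$, the likelihood is $p(x\mid x_1)\propto\exp\!\bigl(-\norm{x-tx_1}^2/(2(1-t)^2)\bigr)$. Hence
\[
m_t(x)=\frac{\int x_1\,e^{-\norm{x-tx_1}^2/(2(1-t)^2)}p_1(x_1)\,\dd x_1}{\int e^{-\norm{x-tx_1}^2/(2(1-t)^2)}p_1(x_1)\,\dd x_1}.
\]
Differentiating in $x$ (allowed by the stated assumption), the exponent contributes $\nabla_x\bigl(-\norm{x-tx_1}^2/(2(1-t)^2)\bigr)=(tx_1-x)/(1-t)^2$. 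The $x$-term is the same in numerator and denominator, so it cancels in the quotient rule. What remains is $\frac{t}{(1-t)^2}\bigl(\E[X_1X_1^\top\mid x]-m_tm_t^\top\bigr)$, which is the claim. The same computation applies verbatim to the conditional and unconditional posteriors, since only the prior $p_1$ changes, to $p_1(\cdot\mid c)$ or $p_1$.

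For \eqref{eq:hessian-shift-main}, we use $v_t^\gamma=\vc+(\gamma-1)(\vc-\vu)$ together with Proposition~\ref{prop:posterior-force}. Taking one more spatial derivative gives $H_t^\gamma-H_t^{\mathrm c}=(\gamma-1)\frac{1-t}{t}\nabla_x^2h_t$, with $\gamma$ constant in $x$. For \eqref{eq:extra-jac-main} we go instead through posterior means. By \eqref{eq:velocity-main}, $\vc-\vu=(m_t^{\mathrm c}-m_t^{\mathrm u})/(1-t)$, so its Jacobian is $\frac{1}{1-t}\cdot\frac{t}{(1-t)^2}(\Cov_t^{\mathrm c}-\Cov_t^{\mathrm u})$ by \eqref{eq:mean-jac-main}. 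Multiplying by $(\gamma-1)$ gives the result, and the two expressions agree because both equal $\nabla_x(\vc-\vu)$. Taking the trace yields \eqref{eq:extra-div-main}. The identification $H_t^\gamma=\nabla_x^2\Phi_t^\gamma$ is immediate from \eqref{eq:cfg-gradient-main}, and it incidentally shows that $H_t^\gamma$ is symmetric.

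For \eqref{eq:liouville-main}, we differentiate the flow identity $\partial_tX^\gamma(t,s,x_s)=v_t^\gamma(X^\gamma(t,s,x_s))$ with respect to $x_s$. By the chain rule this gives $\dot J_t=H_t^\gamma(x_t)J_t$, and $J_s=I_d$ because $X^\gamma(s,s,\cdot)$ is the identity. Jacobi's formula then gives $\frac{\dd}{\dd t}\log|\det J_t|=\tr(J_t^{-1}\dot J_t)=\tr H_t^\gamma(x_t)=\nabla\!\cdot v_t^\gamma(x_t)$. Here $\det J_t\neq0$ along the flow, since $\det J_t=\exp\int_s^t\nabla\!\cdot v^\gamma$ by Liouville's theorem for linear ODEs.

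The main obstacle is the justification of differentiation under the integral and of smooth dependence of the flow on initial data, i.e.\ the regularity of $v_t^\gamma$. Both are covered by the stated assumptions, and the remainder is bookkeeping of the $(1-t)$ and $t$ factors. As a consistency check, \eqref{eq:extra-jac-main} should also follow from \eqref{eq:hessian-shift-main} via the second-order Tweedie identity. That identity gives $\nabla^2\log p_t=\frac{t^2}{(1-t)^4}\Cov-\frac{1}{(1-t)^2}I$, so the identity terms cancel in the conditional-minus-unconditional difference, which confirms the factor $t/(1-t)^3$.
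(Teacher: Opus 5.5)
Your proposal is correct and follows essentially the same route as the paper: you differentiate the posterior mean under the integral to obtain the covariance, get the Hessian shift from the posterior-force identity, get the covariance form from $v_t^{\mathrm c}-v_t^{\mathrm u}=(m_t^{\mathrm c}-m_t^{\mathrm u})/(1-t)$, take traces for the divergence, and finish with the variational equation plus Jacobi's formula. The only small differences are that you apply the quotient rule directly to the Gaussian-kernel ratio instead of first computing the posterior score via the Tweedie identity, and that you justify $\det J_t\neq0$ through the linear-ODE Liouville formula, whereas the paper simply assumes $J_t$ is invertible.
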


The posterior gradient therefore governs the extra displacement of one
trajectory, while the posterior covariance and Hessian govern the deformation
of neighboring trajectories. In particular, when the conditional posterior
 covariance has smaller trace than the unconditional one, fixed CFG adds
 negative divergence and locally contracts trajectory volume.

The benefit and failure mode thus arise from the same posterior signal. Its first
spatial derivative supplies an alignment-improving direction, but a fixed gain
can accumulate excessive displacement; its second spatial derivative changes
the velocity Jacobian and can concentrate a bundle of trajectories. A useful
controller should therefore retain the posterior direction while adapting its
magnitude to the realized time--state pair.

Both displacement and local volume change accumulate along the realized
trajectory:
\begin{equation}
 x_t-x_s=\int_s^t v_\tau^\gamma(x_\tau)\,\dd\tau,
 \qquad
 \log|\det J_t|=\int_s^t \nabla\!\cdot v_\tau^\gamma(x_\tau)\,\dd\tau.
 \label{eq:accumulated-volume-main}
\end{equation}
\label{eq:accumulated-displacement-main}
Thus, early guidance can leave persistent displacement or contraction at the
endpoint. Its endpoint effect depends on the full time-ordered dynamics, not on
any single instantaneous power-tilted density. Appendix~\ref{app:curvature}
provides the posterior-differentiation and Liouville proofs.

\subsection{Guidance effects in a circular GMM}

We instantiate posterior ascent, mean extrapolation, local contraction, and
branch reweighting in a $K$-component isotropic GMM in $\R^2$,
\begin{equation}
 p_1(x)=\sum_{k=0}^{K-1}w_k\mathcal N(x;\mu_k,\sigma^2I_2),
 \qquad \mu_k=ru_k,
 \label{eq:gmm-main}
\end{equation}
where $u_k=(\cos\theta_k,\sin\theta_k)^\top$. At time $t$,
$X_t\mid k\sim\mathcal N(t\mu_k,C_tI_2)$ with
$C_t=(1-t)^2+t^2\sigma^2$.

\begin{theorem}[Confidence--density decoupling]
\label{thm:far-field-main}
Fix component $0$ and the ray $x=su_0$, $s\geq0$. Let
$P_0(s,t)=p(c=0\mid X_t=su_0)$ and assume $u_0$ uniquely maximizes
$u_0^\top u_k$. For every $t>0$,
\begin{align}
 \partial_s\log P_0(s,t)&\geq0,
 &\partial_s\log p_t(su_0)&<0\quad(s>tr),\notag\\
 P_0(s,t)&\to1,
 &p_t(su_0)&\to0\quad(s\to\infty).
 \label{eq:opposite-main}
\end{align}
\end{theorem}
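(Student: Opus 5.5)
The plan is to write everything in closed form using the Gaussian structure $X_t\mid k\sim\mathcal N(t\mu_k,C_tI_2)$, identifying the condition $c=k$ with the mixture component so that $p(c=k)=w_k$. On the ray $x=su_0$ we have
\[
p_t(su_0)=\sum_k w_k(2\pi C_t)^{-1}\exp\!\Bigl(-\tfrac{\norm{su_0-tru_k}^2}{2C_t}\Bigr),
\]
and expanding the square gives $\norm{su_0-tru_k}^2=s^2-2str\,a_k+t^2r^2$ with $a_k=u_0^\top u_k\le1$. The terms $s^2$ and $t^2r^2$ are common to all components, so they cancel in the posterior, which becomes a softmax:
\[
P_0(s,t)=\frac{w_0e^{\beta s}}{\sum_k w_ke^{\beta s a_k}},\qquad \beta=\frac{tr}{C_t}>0 .
\]
Here $a_0=1$, and the uniqueness assumption gives $a_k<1$ for every $k\neq0$.

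For the monotonicity of the posterior, I would differentiate $\log P_0=\log w_0+\beta s-\log\sum_k w_ke^{\beta s a_k}$ in $s$. This yields
\[
\partial_s\log P_0=\beta\Bigl(1-\sum_k\pi_k a_k\Bigr),
\]
where $\pi_k\propto w_ke^{\beta s a_k}$ are the posterior weights. The sum $\sum_k\pi_ka_k$ is a convex combination of numbers $\le1$, so the derivative is $\ge0$. For the limit, I would divide the numerator and denominator by $w_0e^{\beta s}$. The remaining terms $(w_k/w_0)e^{-\beta s(1-a_k)}$ tend to $0$ for $k\ne0$, because $1-a_k>0$ and $\beta>0$ once $t>0$. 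Hence $P_0\to1$. If $r=0$ the statement degenerates, so I assume $r>0$.

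For the density, I would differentiate term by term. Each term contributes $-(s-tra_k)/C_t$ times a positive weight, so
\[
\partial_s\log p_t(su_0)=-\frac{1}{C_t}\Bigl(s-tr\sum_k\pi_ka_k\Bigr),
\]
with the same posterior weights $\pi_k$ as before. Since $\sum_k\pi_ka_k\le1$, the bracket is at least $s-tr$, which is $>0$ whenever $s>tr$. That gives the strict inequality. For the limit, each Gaussian term is bounded by $(2\pi C_t)^{-1}\exp(-(s-tr)^2/(2C_t))$ for $s>tr$, because $\norm{su_0-tru_k}\ge s-tr$ by the triangle inequality. This bound tends to $0$, and $C_t>0$ for $t\in(0,1]$ provided $\sigma>0$ at $t=1$.

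There is no real obstacle here. The only subtle point is the cancellation of the common quadratic term in the posterior, which is what reduces it to a softmax in the linear statistic $s\,a_k$. The other care point is the exact boundary $s>tr$, which requires using $\sum_k\pi_ka_k\le1$ rather than equality. The "decoupling" then reads off directly: the posterior is driven only by the angular alignment $a_k$, while the density is driven by the radial distance.
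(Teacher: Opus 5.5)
Your proposal is correct and follows the paper's proof essentially step for step: the same cancellation reducing the posterior on the ray to a softmax in $s\,a_k$, the same derivative $\partial_s\log P_0=\frac{tr}{C_t}\bigl(1-\sum_k\pi_k a_k\bigr)\geq0$, the same division by $w_0e^{\beta s}$ for $P_0\to1$, and the same bound $\partial_s\log p_t(su_0)\leq-(s-tr)/C_t$. The only minor difference is the density limit: the paper factors out the $k=0$ term to get the asymptotic $p_t(su_0)\sim\frac{w_0}{2\pi C_t}e^{-(s-tr)^2/(2C_t)}$, whereas your triangle-inequality bound gives the same decay as a direct upper estimate and does not need the uniqueness assumption for that part.
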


Thus condition confidence can increase beyond the high-density data region,
and its earlier outward force can leave a persistent displacement.
Appendix~\ref{app:gmm} provides the proof.

For a centered mixture conditioned on component $0$, the observation at $t=0$
is independent of the clean sample, giving
\begin{equation}
 m_0^{\mathrm u}=0,\qquad m_0^{\mathrm c}=\mu_0,\qquad
 \Delta_0=\mu_0,\qquad m_0^\gamma=\gamma\mu_0.
 \label{eq:initial-displacement-main}
\end{equation}
Thus fixed CFG amplifies the full initial gap; for $r=1$ and $\gamma=3$, its
implied endpoint has radius $3$. Appendix~\ref{app:gmm-gap-evolution} provides
a further time-dependent analysis: it derives the exact conditional and
unconditional posterior means, proves that their gap vanishes under a
decreasing envelope, and shows when PMC-CFG recovers its nominal scale.

For an equal-weight rotationally symmetric mixture conditioned on component
$0$, the center field exhibits displacement and contraction simultaneously:
\begin{align}
 (v_t^\gamma-v_t^{\mathrm c})(0)
 &=(\gamma-1)\frac{1-t}{C_t}\mu_0,
 \label{eq:gmm-center-force-main}\\
 \nabla\!\cdot(v_t^\gamma-v_t^{\mathrm c})(0)
 &=-(\gamma-1)\frac{t(1-t)r^2}{C_t^2}<0,
 \qquad \gamma>1.
 \label{eq:gmm-center-divergence-main}
\end{align}
As $t\downarrow0$, $v_t^\gamma(x)\approx\gamma\mu_0-x$: early CFG targets the
extrapolated location $\gamma\mu_0$ while contracting nearby trajectories.

For a concrete multi-component example, take eight equally spaced,
equally weighted components and condition on the adjacent triple
$A=\{0,1,2\}$. Let $\delta=\pi/4$, so that the middle component has angle
$\theta_1=\delta$, and parameterize each nonzero radial shell by
\[
 x_{\rho,\phi}=\rho
 (\cos(\delta+\phi),\sin(\delta+\phi))^\top,
 \qquad \rho>0.
\]
For $t\in(0,1]$, write
$P_t(\rho,\phi)=p(A\mid X_t=x_{\rho,\phi})$. Reflection symmetry and direct
differentiation (Appendix~\ref{app:gmm-branch-reweighting}) give
\begin{equation}
 P_t(\rho,\phi)=P_t(\rho,-\phi),\qquad
 \partial_\phi P_t(\rho,\phi)<0
 \quad (0<\phi\leq\delta).
 \label{eq:gmm-middle-posterior-main}
\end{equation}
Thus, within the conditioned sector on every nonzero radial shell, the
condition posterior is largest on the middle ray and decreases continuously
toward either side branch. Let
$e_\phi=(-\sin(\delta+\phi),\cos(\delta+\phi))^\top$ denote the angular unit
vector. By the posterior-force identity~\eqref{eq:force-main}, for $0<t<1$
and $\gamma>1$,
\begin{equation}
 e_\phi^\top(v_t^\gamma-v_t^{\mathrm c})(x_{\rho,\phi})
 =\frac{(\gamma-1)(1-t)}{t\rho}\,
   \partial_\phi\log P_t(\rho,\phi)<0,
 \qquad 0<\phi\leq\delta.
 \label{eq:gmm-middle-force-main}
\end{equation}
The sign reverses on the other side by symmetry. Hence, at each time--state
pair in the sector, CFG adds an angular velocity toward the middle ray. This is
a local statement about the non-autonomous vector field; the resulting branch
occupancy still depends on the full trajectory transport.

\begin{figure*}[t]
\centering
\begin{minipage}{0.195\textwidth}
\centering
\includegraphics[width=\linewidth]{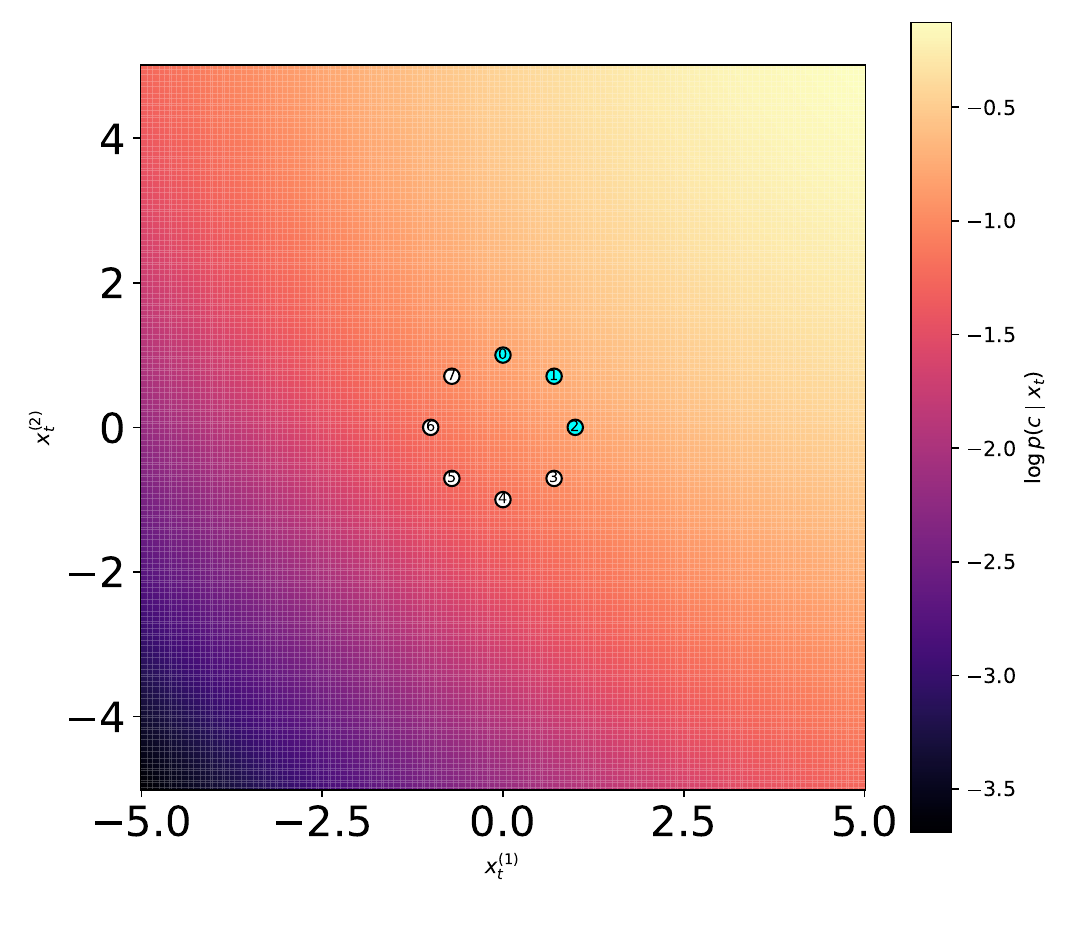}
\end{minipage}\hfill
\begin{minipage}{0.195\textwidth}
\centering
\includegraphics[width=\linewidth]{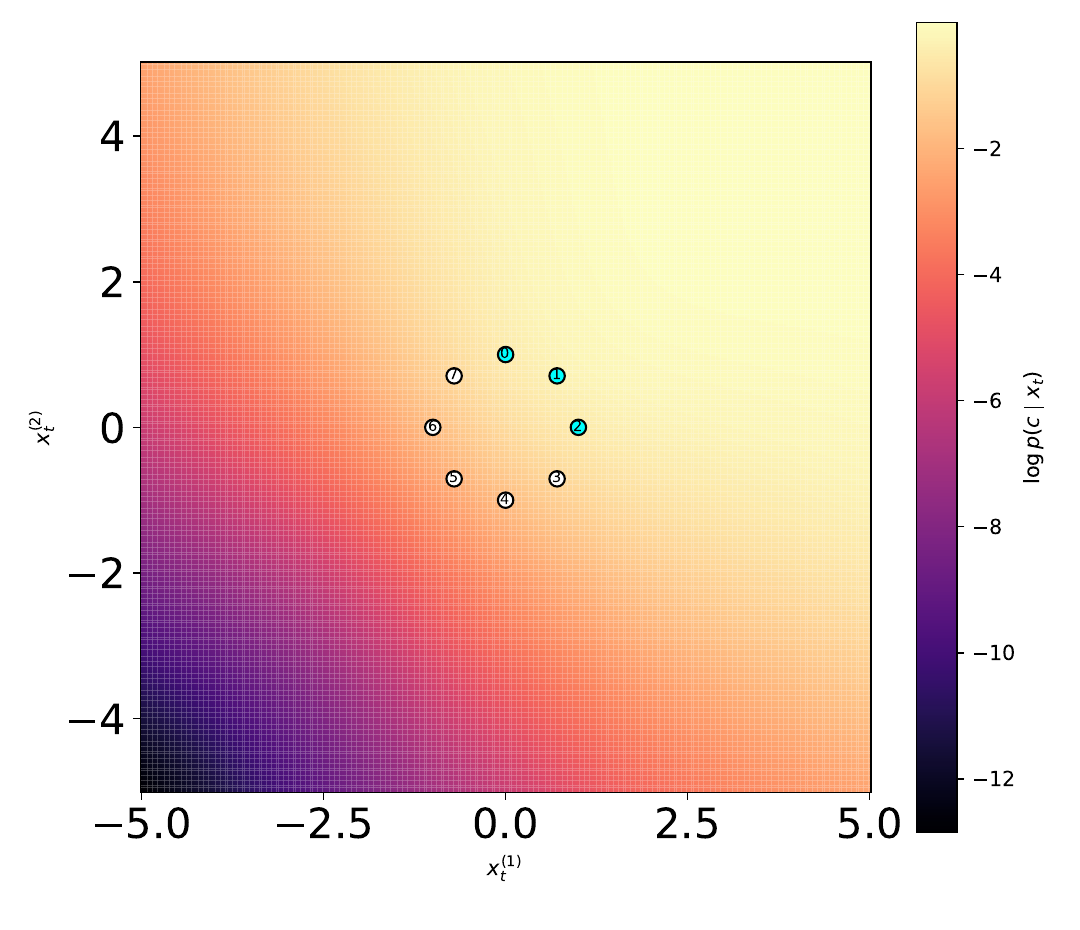}
\end{minipage}\hfill
\begin{minipage}{0.195\textwidth}
\centering
\includegraphics[width=\linewidth]{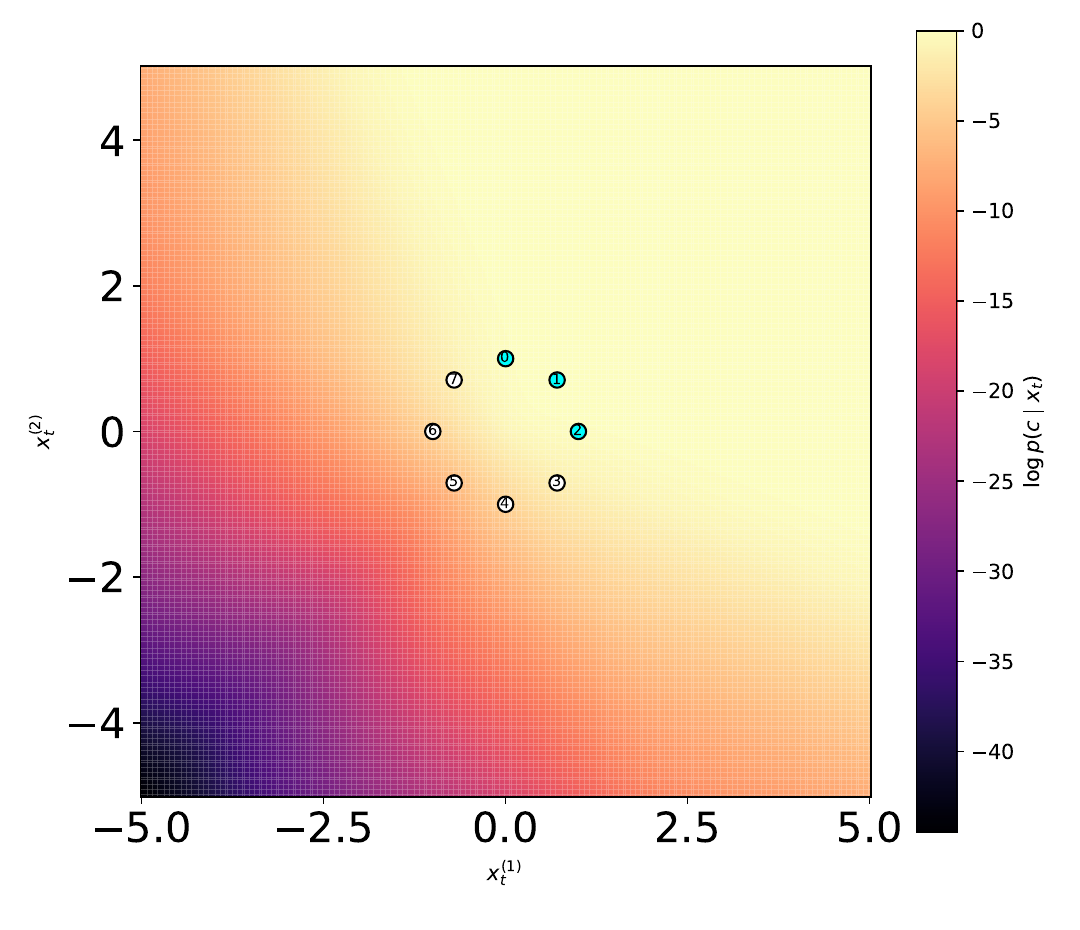}
\end{minipage}\hfill
\begin{minipage}{0.195\textwidth}
\centering
\includegraphics[width=\linewidth]{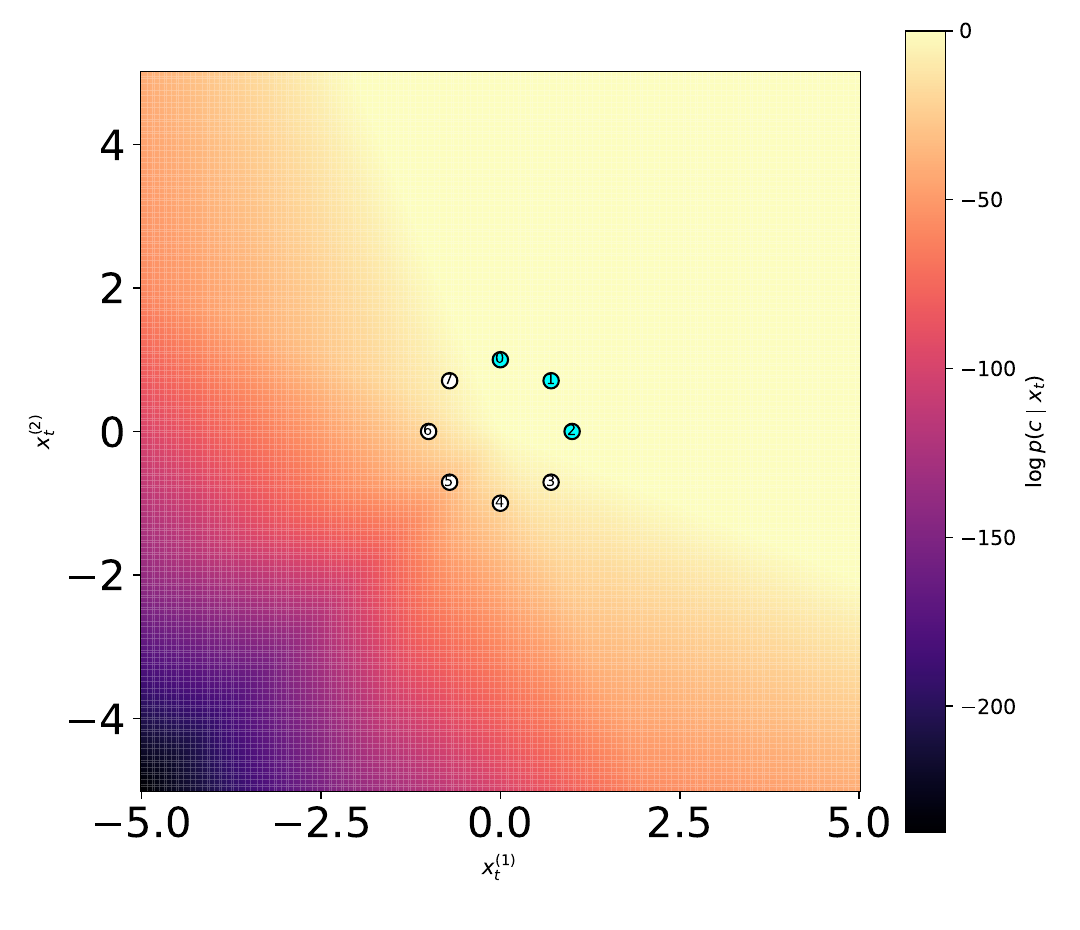}
\end{minipage}\hfill
\begin{minipage}{0.195\textwidth}
\centering
\includegraphics[width=\linewidth]{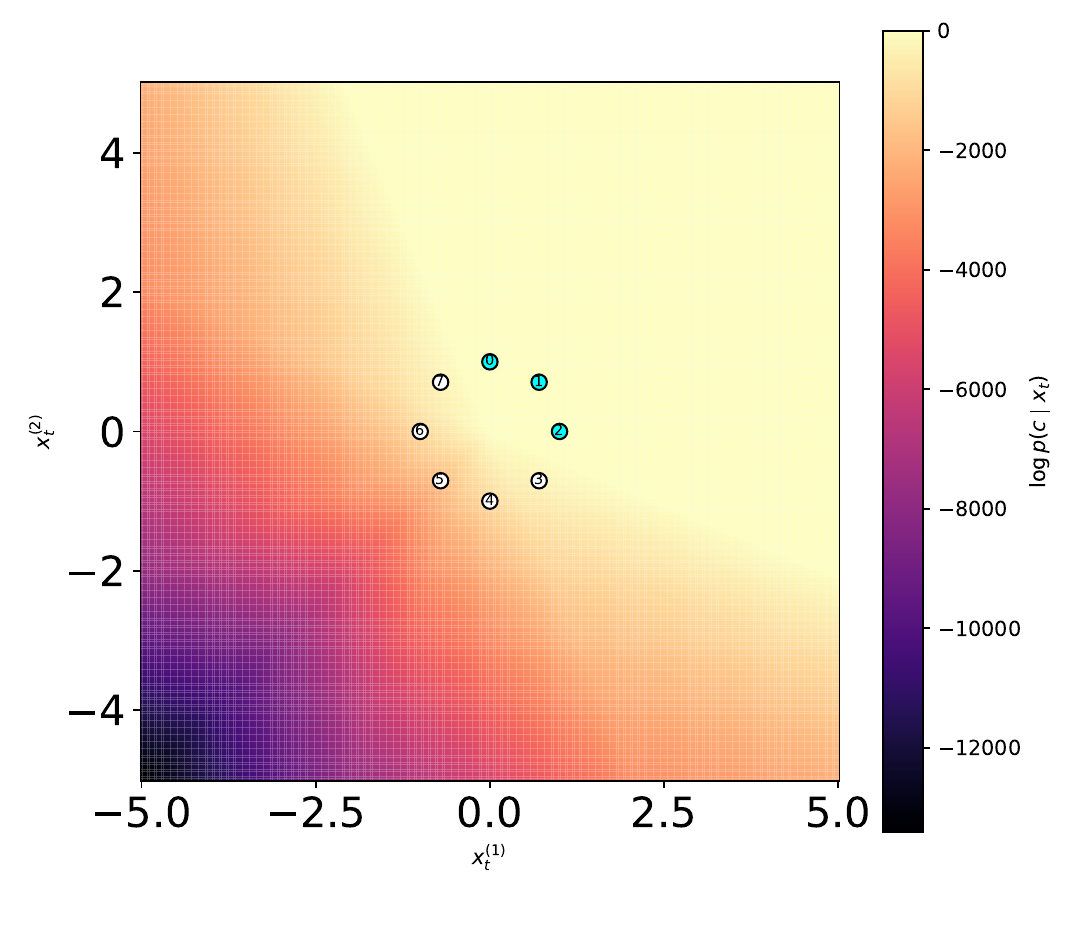}
\end{minipage}
\caption{Condition-posterior landscape $\log p(A\mid x_t)$ for the eight-point
GMM with $A=\{0,1,2\}$ at $t\in\{0.2,0.4,0.6,0.8,1.0\}$ (left to right).
On each nonzero radial shell within the conditioned sector, the posterior peaks
on the middle ray and decreases angularly toward the two side branches.}
\label{fig:gmm-log-posterior-time}
\end{figure*}

Figure~\ref{fig:gmm-log-posterior-time} visualizes this equal-weight,
middle-direction mechanism.

The time-varying posterior potential also reflects a pre-existing imbalance
between conditioned branches, although for a different reason than the
geometric middle-branch preference above.

\begin{proposition}[Posterior-potential bias under branch imbalance]
\label{prop:gmm-weight-amplification-main}
In the eight-component GMM, let $A=\{0,1\}$ consist of adjacent components with
$w_0>w_1>0$, and assume that the complementary weights are reflection
symmetric about the angular bisector of $u_0$ and $u_1$:
$w_2=w_7$, $w_3=w_6$, and $w_4=w_5$. At the interpolated component centers
$x_{k,t}=t\mu_k$, define
$P_{A,k}(t)=p(A\mid X_t=x_{k,t})$. Then, for $t\in(0,1]$,
$P_{A,0}(t)>P_{A,1}(t)$. Consequently, for $0<t<1$ and $\gamma>1$, the CFG
correction to the non-autonomous potential,
$(\gamma-1)(1-t)\log p(A\mid X_t=x)/t$, is larger at $x_{0,t}$ than at
$x_{1,t}$.
\end{proposition}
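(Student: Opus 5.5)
The plan is to compute the condition posterior exactly at the two interpolated centers and compare the two values through the posterior odds. From \eqref{eq:gmm-main}, $X_t\mid k\sim\mathcal N(t\mu_k,C_tI_2)$ with $C_t=(1-t)^2+t^2\sigma^2>0$ for all $t\in(0,1]$. Bayes' rule then gives
\[
 p(A\mid X_t=x)=\frac{\sum_{j\in A}w_j\,e^{-\norm{x-t\mu_j}^2/(2C_t)}}{\sum_{j=0}^{7}w_j\,e^{-\norm{x-t\mu_j}^2/(2C_t)}} ,
\]
since the common normalizing constant cancels.

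\textbf{Reduction to a circular kernel.} At $x=x_{k,t}=t\mu_k$ we have $\norm{t\mu_k-t\mu_j}^2=2t^2r^2\bigl(1-\cos(\theta_k-\theta_j)\bigr)$, which depends only on the circular index distance $d=d(k,j)\in\{0,1,2,3,4\}$. Set $g_d=\exp\!\bigl(-t^2r^2(1-\cos(\pi d/4))/C_t\bigr)$. For $t>0$, $r>0$, this sequence is strictly decreasing in $d$; in particular $g_0=1>g_1$. The posterior is a strictly increasing function of the odds $O_k=N_k/D_k$, where $N_k$ sums $w_jg_{d(k,j)}$ over $j\in A$ and $D_k$ sums it over $j\notin A$. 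It therefore suffices to show $N_0>N_1$ and $D_0=D_1$.

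\textbf{Numerator.} We have $N_0=w_0g_0+w_1g_1$ and $N_1=w_0g_1+w_1g_0$. Hence
\[
 N_0-N_1=(w_0-w_1)(g_0-g_1)>0,
\]
because $w_0>w_1$ and $g_0>g_1$.

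\textbf{Denominator.} This is the only step needing care, and it is where the reflection-symmetry hypothesis enters. From component $0$, the distances to $j=2,\dots,7$ are $2,3,4,3,2,1$. From component $1$, they are $1,2,3,4,3,2$. Substituting $w_7=w_2$, $w_6=w_3$, $w_5=w_4$ gives
\[
 D_0=w_2(g_2+g_1)+w_3(g_3+g_2)+w_4(g_4+g_3)=D_1 .
\]
Geometrically, this is the reflection about the bisector of $u_0,u_1$: it swaps $0\leftrightarrow1$ and permutes the complementary components while preserving their weights. So $O_0>O_1$, hence $P_{A,0}(t)>P_{A,1}(t)$ for every $t\in(0,1]$.

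\textbf{Consequence for the potential.} For $0<t<1$ and $\gamma>1$, the prefactor $(\gamma-1)(1-t)/t$ is strictly positive. Multiplying the strict inequality $\log P_{A,0}(t)>\log P_{A,1}(t)$ by this prefactor shows that the CFG correction in \eqref{eq:cfg-potential-main} is larger at $x_{0,t}$ than at $x_{1,t}$.
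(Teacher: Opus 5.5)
Your proof is correct and follows the paper's argument essentially step for step. The numerator gap $(w_0-w_1)(g_0-g_1)>0$ is the paper's $(w_0-w_1)(e^a-e^{a/\sqrt2})$ up to a common positive factor, the complement sums agree by the reflection symmetry (which you check by explicit circular-distance bookkeeping), and monotonicity of the odds-to-probability map finishes the argument; like the paper, you tacitly need the complementary weights not all zero so that $D_k>0$, which is implicit in calling this an eight-component mixture.
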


Thus the extra time-dependent potential favors the already heavier branch at
the two interpolated centers. This potential ordering alone is not a theorem
that the terminal branch occupancy must increase: the velocity is determined
by spatial potential gradients, and occupancy by their accumulation along the
full non-autonomous transport. Table~\ref{tab:gmm} reports the corresponding
endpoint behavior empirically.
Appendix~\ref{app:gmm} contains the derivations and the nonadjacent
configuration.

\section{Posterior-Mean-Capped Guidance}
\label{sec:method}

\subsection{A per-sample geometric constraint}

Equation~\eqref{eq:force-main} shows that CFG moves samples along
$\nabla_x\log p(c\mid X_t=x)$. In the circular GMM,
Theorem~\ref{thm:far-field-main} shows that this posterior can increase beyond
the high-density data region, so its gradient can drive low-density
extrapolation. Because the posterior-mean gap is proportional to the same
gradient, the limitation of fixed CFG is not its direction but its open-loop
magnitude: the same scale is applied without checking the displacement implied
at the current sample. PMC-CFG closes this structural gap by turning the fixed
gain into pointwise feedback. It keeps the original posterior direction and
limits only how far the implied terminal mean may extrapolate, while retaining
as much nominal guidance as the constraint permits.

Let $\lambda\geq1$ denote the nominal CFG scale and $\Gamma\geq1$ the
permitted relative mean amplification. The two usual model evaluations give
\begin{align}
 m_t^{\mathrm c}(x)&=x+(1-t)\vc(x), m_t^{\mathrm u}(x)&=x+(1-t)\vu(x),\\
 \Delta_t(x)&=m_t^{\mathrm c}(x)-m_t^{\mathrm u}(x)
 =(1-t)(\vc(x)-\vu(x))\notag\\
 &=\frac{(1-t)^2}{t}\nabla_x\log p(c\mid X_t=x).
 \label{eq:method-means}
\end{align}
Unconstrained nominal CFG implies
$m_t^\lambda=m_t^{\mathrm c}+(\lambda-1)\Delta_t$. PMC-CFG instead solves
\begin{equation}
 \beta_t^*(x)=\max\left\{\beta:
 \begin{array}{l}
 0\leq\beta\leq\lambda-1,\\[-0.1em]
 \norm{m_t^{\mathrm c}+\beta\Delta_t}
 \leq\Gamma\norm{m_t^{\mathrm c}}
 \end{array}\right\}
 \label{eq:program-main}
\end{equation}
and uses
\begin{equation}
 v_t^{\mathrm{cap}}(x)=\vc(x)+\beta_t^*(x)(\vc(x)-\vu(x)).
 \label{eq:velocity-cap-main}
\end{equation}
The constraint is evaluated independently per sample. 

\subsection{Closed-form scale and guarantees}

Define
\begin{equation}
 a=\norm{\Delta_t}^2,\qquad
 b=\inner{m_t^{\mathrm c}}{\Delta_t},\qquad
 q=\norm{m_t^{\mathrm c}}^2.
 \label{eq:abc-main}
\end{equation}
For $a>0$, let
\begin{equation}
 \beta^{\mathrm{cap}}
 =\frac{-b+\sqrt{b^2+(\Gamma^2-1)aq}}{a}.
 \label{eq:root-main}
\end{equation}

\begin{theorem}[Maximal feasible guidance]
\label{thm:maximal-main}
For every $\Gamma,\lambda\geq1$, problem~\eqref{eq:program-main} is feasible.
Its largest solution is
\begin{equation}
 \beta_t^*=\min\{\lambda-1,\beta^{\mathrm{cap}}\}
 \quad (a>0).
 \label{eq:beta-main}
\end{equation}
If $a=0$, all scales produce the same velocity and we set
$\beta_t^*=\lambda-1$. In exact arithmetic,
\begin{equation}
 \norm{x+(1-t)v_t^{\mathrm{cap}}(x)}
 \leq\Gamma\norm{x+(1-t)\vc(x)}.
 \label{eq:guarantee-main}
\end{equation}
Nominal CFG is unchanged whenever it is feasible.
\end{theorem}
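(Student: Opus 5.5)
The plan is to reduce the norm constraint in \eqref{eq:program-main} to a scalar quadratic inequality in $\beta$ and read off its feasible set. Expanding the square gives
\[
 g(\beta):=\norm{m_t^{\mathrm c}+\beta\Delta_t}^2-\Gamma^2\norm{m_t^{\mathrm c}}^2
 =a\beta^2+2b\beta-(\Gamma^2-1)q ,
\]
with $a,b,q$ as in \eqref{eq:abc-main}. The constraint is equivalent to $g(\beta)\le0$, since both sides of the original inequality are nonnegative and squaring is monotone there.

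\textbf{Feasibility.} I will show that $\beta=0$ is always admissible. Indeed $\lambda\ge1$ gives $0\le\lambda-1$, and $g(0)=-(\Gamma^2-1)q\le0$ because $\Gamma\ge1$ and $q\ge0$. So the feasible set is nonempty. It is also closed and bounded above by $\lambda-1$, hence the maximum exists.

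\textbf{Case $a>0$.} Here $g$ is a strictly convex parabola, so $\{g\le0\}$ is the closed interval between its two real roots. The discriminant $4b^2+4(\Gamma^2-1)aq\ge0$, so both roots are real. The larger root is exactly $\beta^{\mathrm{cap}}$ from \eqref{eq:root-main}, and it satisfies $\beta^{\mathrm{cap}}\ge0$ because $\sqrt{b^2+(\Gamma^2-1)aq}\ge|b|\ge b$. The smaller root is at most $0$ for the same reason.

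Intersecting $\{g\le0\}$ with $[0,\lambda-1]$ therefore gives $[0,\min\{\lambda-1,\beta^{\mathrm{cap}}\}]$, which establishes \eqref{eq:beta-main}. The one point needing care is that the lower root lies at or below $0$, so the interval really starts at $0$; I expect this sign bookkeeping to be the only mildly delicate step.

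\textbf{Case $a=0$.} Then $\Delta_t=0$. By \eqref{eq:method-means} with $t<1$, this forces $\vc=\vu$, so every $\beta$ yields the same velocity $\vc$ and the same mean $m_t^{\mathrm c}$. The constraint holds trivially since $\Gamma\ge1$. Taking $\beta^*_t=\lambda-1$ is therefore consistent with the convention in the statement.

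\textbf{Guarantee \eqref{eq:guarantee-main}.} By \eqref{eq:velocity-cap-main},
\[
 x+(1-t)v_t^{\mathrm{cap}}=m_t^{\mathrm c}+\beta_t^*\Delta_t ,
\]
using $(1-t)(\vc-\vu)=\Delta_t$. Feasibility of $\beta_t^*$ then gives the bound directly, with $x+(1-t)\vc=m_t^{\mathrm c}$ on the right-hand side.

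\textbf{Non-interference.} Suppose nominal CFG is feasible, i.e.\ $\beta=\lambda-1$ satisfies $g(\lambda-1)\le0$. Then the maximizer over $[0,\lambda-1]$ is $\lambda-1$ itself, so $v_t^{\mathrm{cap}}=\vc+(\lambda-1)(\vc-\vu)=v_t^\lambda$. In the case $a>0$ the same conclusion follows from the formula: $\lambda-1\le\beta^{\mathrm{cap}}$, so the minimum in \eqref{eq:beta-main} equals $\lambda-1$.
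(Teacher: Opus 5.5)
Your proposal is correct and follows essentially the same argument as the paper: you square the constraint into $a\beta^2+2b\beta+(1-\Gamma^2)q\le 0$, observe that $\beta=0$ is feasible, use $\sqrt{b^2+(\Gamma^2-1)aq}\ge |b|$ to place the roots as $\beta_-\le 0\le \beta_+=\beta^{\mathrm{cap}}$, intersect with $[0,\lambda-1]$, and treat $a=0$ (where $\Delta_t=0$) separately. The guarantee and non-interference then follow as you describe, by writing $x+(1-t)v_t^{\mathrm{cap}}=m_t^{\mathrm c}+\beta_t^*\Delta_t$ and noting that a feasible $\lambda-1$ is the largest point of $[0,\lambda-1]$.
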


Theorem~\ref{thm:maximal-main} follows from the feasible interval of
 $a\beta^2+2b\beta+(1-\Gamma^2)q\leq0$. Its complete proof, including
 degenerate and floating-point cases, is in Appendix~\ref{app:pmc-proof}.

The construction completes fixed-scale CFG in a precise, limited sense. First,
it is \emph{direction preserving}: the update remains on the original CFG ray
$m_t^{\mathrm c}+\beta\Delta_t$. Second, it is \emph{always feasible}: the
pure-conditional choice $\beta=0$ satisfies the constraint. Third, it is a
\emph{minimal intervention}: among all feasible points on that ray it selects
the largest $\beta$, and hence exactly recovers nominal CFG whenever no cap is
needed. Finally, it is \emph{state adaptive}: the scale depends on the current
mean, gap, and their angle rather than on time alone. These properties do not
assert global correctness of the endpoint distribution; they guarantee that
each model evaluation retains the strongest feasible posterior guidance under
the stated mean-space constraint.

\subsection{Suppressing posterior-driven extrapolation}

The cap is strongest when the posterior-driven displacement points outward.
If $\theta$ is the angle between $m_t^{\mathrm c}$ and $\Delta_t$, then
\begin{equation}
 \beta^{\mathrm{cap}}
 =\frac{\norm{m_t^{\mathrm c}}}{\norm{\Delta_t}}
 \left[-\cos\theta+\sqrt{\cos^2\theta+\Gamma^2-1}\right].
 \label{eq:angle-main}
\end{equation}
Thus, an aligned gap that increases the implied-mean norm is suppressed most
strongly, whereas a gap that initially moves the mean inward permits more
guidance.

In the centered single-component GMM,
\begin{equation}
 m_0^{\mathrm{cap}}=\min\{\lambda,\Gamma\}\mu_0,
 \label{eq:initial-cap-main}
\end{equation}
so PMC-CFG directly caps the posterior-driven extrapolation
$m_0^\lambda=\lambda\mu_0$ identified in
 \eqref{eq:initial-displacement-main}. Appendix~\ref{app:pmc-proof} gives the
 derivations, coordinate properties, and stable pseudocode.

The cap is also self-releasing rather than uniformly conservative. In the GMM,
Proposition~\ref{prop:gmm-gap-app} proves that the posterior-mean gap is bounded
by a strictly decreasing envelope that vanishes as $t\uparrow1$; consequently,
the nominal scale is recovered near every nonzero endpoint. PMC-CFG therefore
implements a data-dependent coarse-to-fine schedule: it suppresses early
large-gap extrapolation and restores full guidance once the conditional and
unconditional terminal estimates become sufficiently consistent.

\section{Experiments}
\label{sec:experiments}
\begin{figure*}[t]
\centering
\begin{minipage}{0.105\textwidth}
\centering\tiny $\lambda=1$\\[-0.3em]$A=\{0\}$\\[-0.35em]
\includegraphics[width=\linewidth]{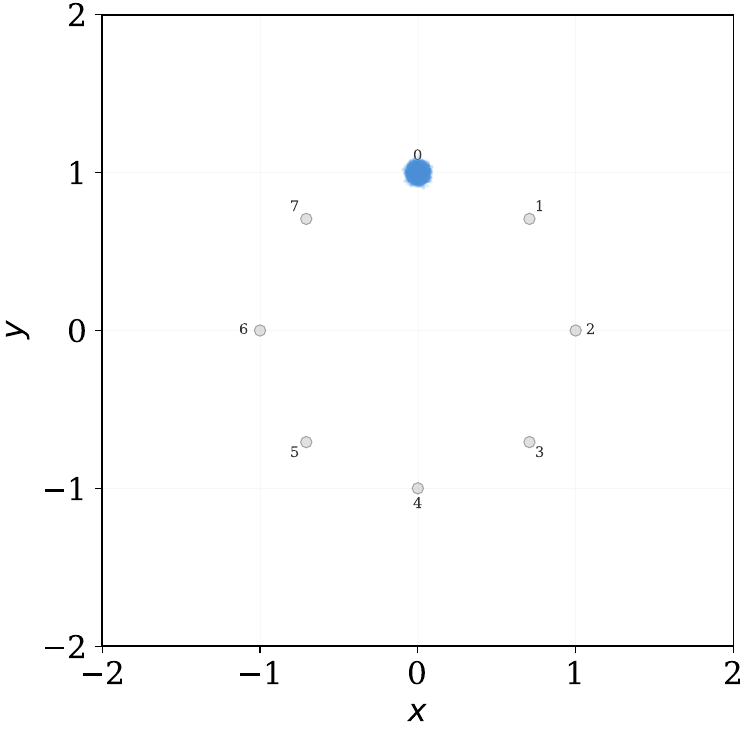}
\end{minipage}\hfill
\begin{minipage}{0.105\textwidth}
\centering\tiny Fixed $\lambda=3$\\[-0.3em]$A=\{0\}$\\[-0.35em]
\includegraphics[width=\linewidth]{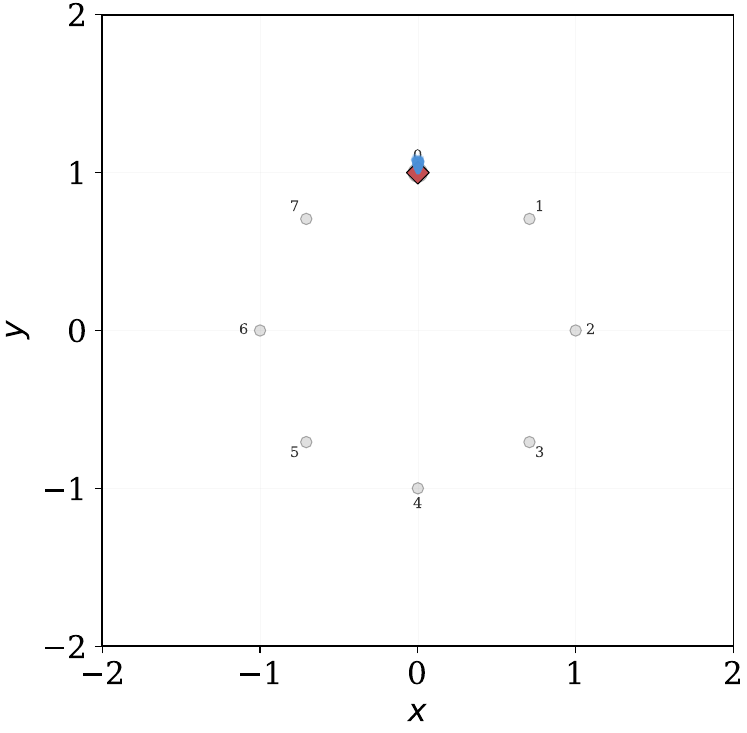}
\end{minipage}\hfill
\begin{minipage}{0.105\textwidth}
\centering\tiny PMC $\ A=\{0\}$\\[-0.35em]
\includegraphics[width=\linewidth]{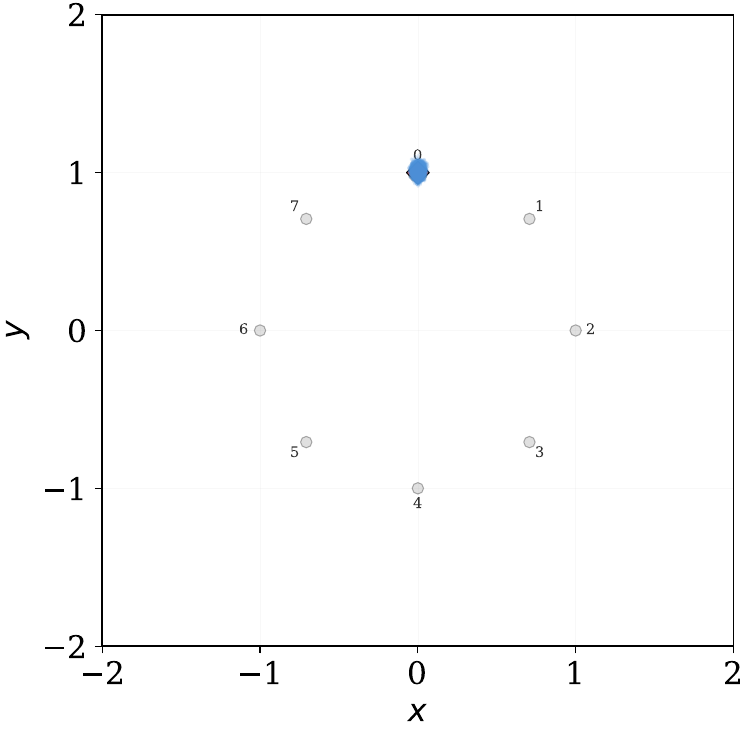}
\end{minipage}\hfill
\begin{minipage}{0.105\textwidth}
\centering\tiny $\lambda=1$\\[-0.3em]$A=\{0,1\}$\\[-0.35em]
\includegraphics[width=\linewidth]{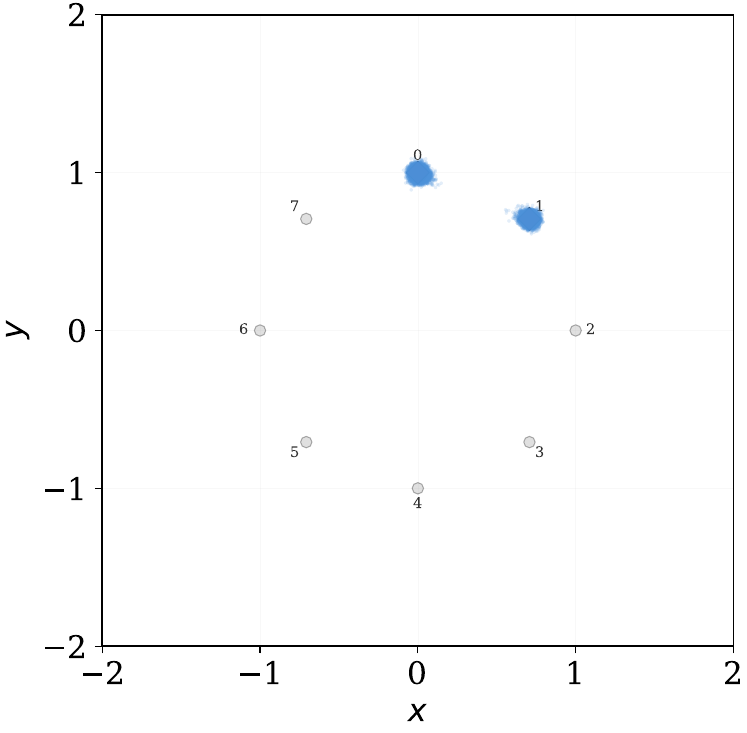}
\end{minipage}\hfill
\begin{minipage}{0.105\textwidth}
\centering\tiny Fixed $\lambda=3$\\[-0.3em]$A=\{0,1\}$\\[-0.35em]
\includegraphics[width=\linewidth]{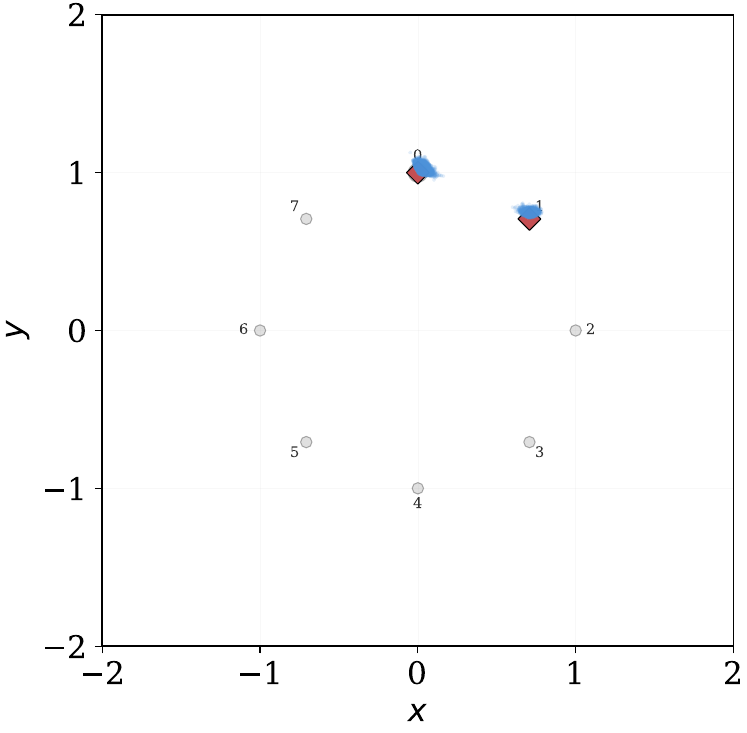}
\end{minipage}\hfill
\begin{minipage}{0.105\textwidth}
\centering\tiny PMC $\ A=\{0,1\}$\\[-0.35em]
\includegraphics[width=\linewidth]{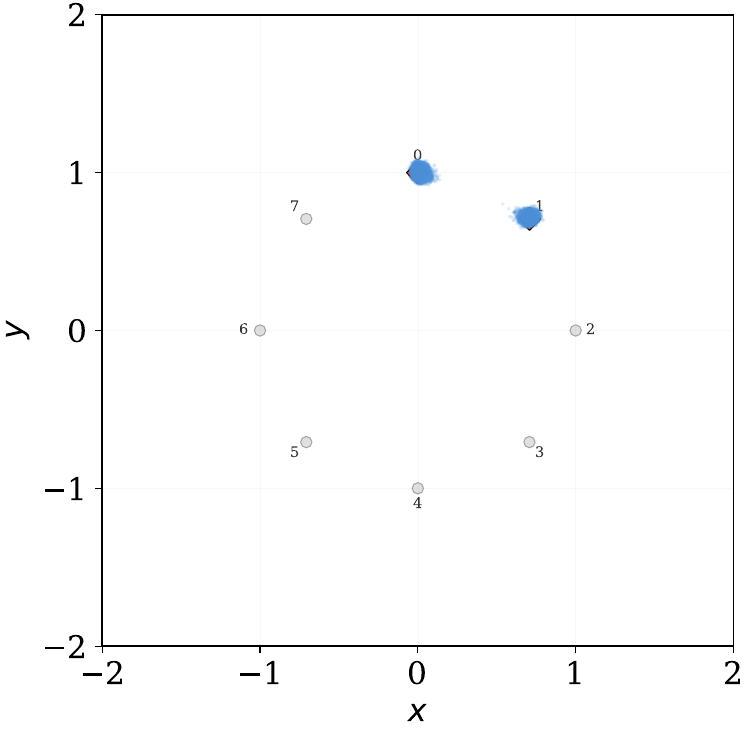}
\end{minipage}\hfill
\begin{minipage}{0.105\textwidth}
\centering\tiny $\lambda=1$\\[-0.3em]$A=\{0,1,2\}$\\[-0.35em]
\includegraphics[width=\linewidth]{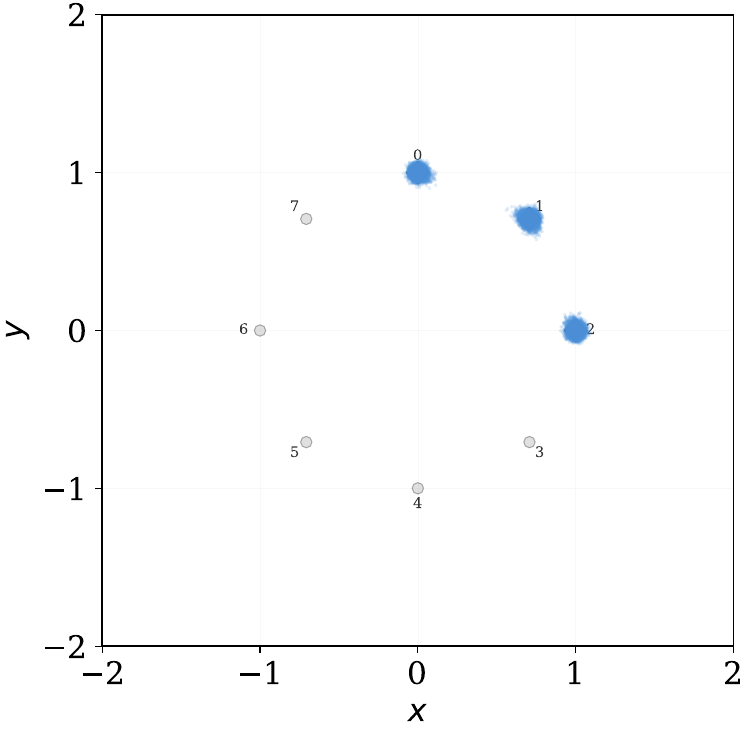}
\end{minipage}\hfill
\begin{minipage}{0.105\textwidth}
\centering\tiny Fixed $\lambda=3$\\[-0.3em]$A=\{0,1,2\}$\\[-0.35em]
\includegraphics[width=\linewidth]{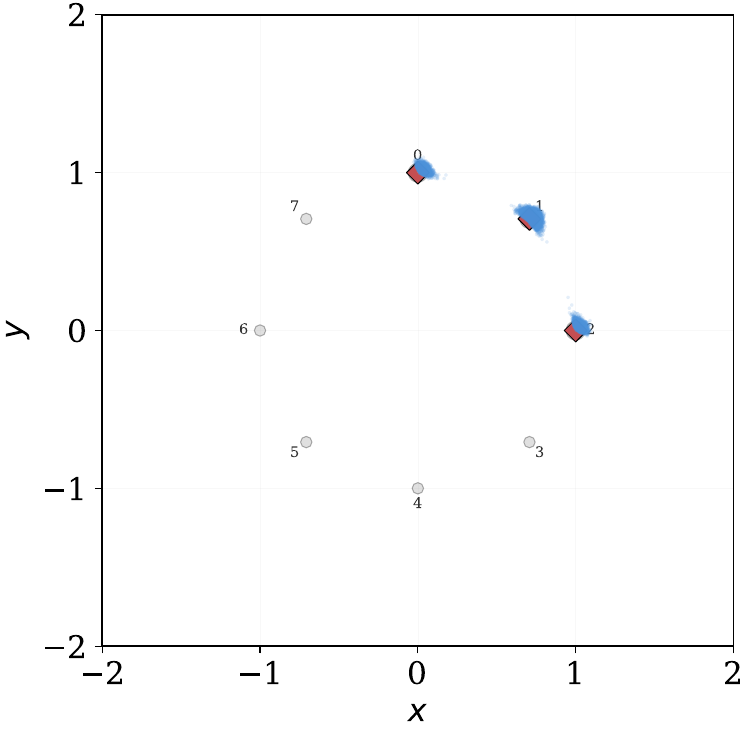}
\end{minipage}\hfill
\begin{minipage}{0.105\textwidth}
\centering\tiny PMC $\ A=\{0,1,2\}$\\[-0.35em]
\includegraphics[width=\linewidth]{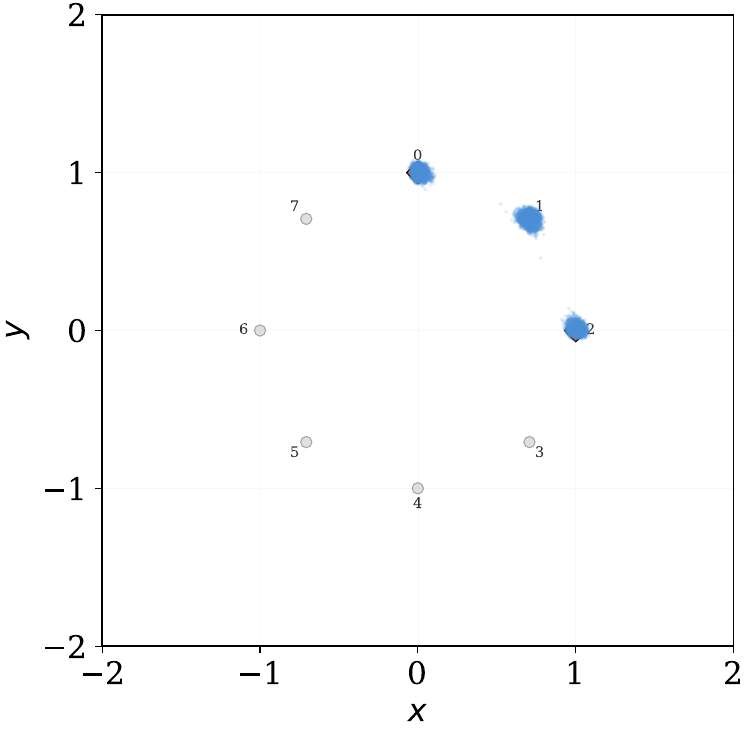}
\end{minipage}
\vspace{-0.45em}
\caption{Endpoint samples for the analytic mixture. Gray markers are component
means, red diamonds are conditioned means, and blue points are samples. PMC-CFG reduces the overshoot and
mode concentration caused by strong fixed guidance.}
\label{fig:gmm-samples}
\vspace{-0.6em}
\end{figure*}
\begin{table*}[t]
\caption{Single-run GMM endpoint statistics ($n=2\times10^5$ per row).}
\label{tab:gmm}
\centering\scriptsize
\setlength{\tabcolsep}{4.2pt}
\begin{tabular}{clcclllcc}
\toprule
$A$ & Guidance & $\lambda$ & $\Gamma$ & Theoretical occ.\ (\%) & Empirical occ.\ (\%)
& Per-label variance ratio & TV$\downarrow$ & Mean error$\downarrow$\\
\midrule
$\{0\}$ & Conditional & 1 & --- & 100.00 & 100.00 & 0.571 & 0.00000 & 0.00005\\
$\{0\}$ & Fixed CFG & 3 & --- & 100.00 & 100.00 & 0.135 & 0.00000 & 0.04091\\
$\{0\}$ & PMC-CFG & 3 & 1.10 & 100.00 & 100.00 & 0.348 & 0.00000 & 0.00831\\
\addlinespace
$\{0,1\}$ & Conditional & 1 & --- & 68.00 / 32.00 & 68.34 / 31.66
& 0.603 / 0.630 & 0.00339 & 0.00247\\
$\{0,1\}$ & Fixed CFG & 3 & --- & 68.00 / 32.00 & 83.27 / 16.73
& 0.315 / 0.358 & 0.15272 & 0.11781\\
$\{0,1\}$ & PMC-CFG & 3 & 1.10 & 68.00 / 32.00 & 71.85 / 28.15
& 0.498 / 0.536 & 0.03851 & 0.02800\\
\addlinespace
$\{0,1,2\}$ & Conditional & 1 & --- & 33.33 / 33.33 / 33.33 & 32.82 / 34.30 / 32.88
& 0.622 / 0.669 / 0.623 & 0.00967 & 0.00270\\
$\{0,1,2\}$ & Fixed CFG & 3 & --- & 33.33 / 33.33 / 33.33 & 15.74 / 68.63 / 15.64
& 0.368 / 0.526 / 0.368 & 0.35294 & 0.12883\\
$\{0,1,2\}$ & PMC-CFG & 3 & 1.10 & 33.33 / 33.33 / 33.33 & 29.26 / 41.38 / 29.37
& 0.522 / 0.647 / 0.522 & 0.08046 & 0.02980\\
\bottomrule
\end{tabular}
\end{table*}
\subsection{Experimental setup}

Within each benchmark, all methods use matched model weights, conditions,
initial noise, solver, step count, and decoder. Full implementation and metric
details are given in Appendix~\ref{app:experiments}.

\paragraph{GMM.}
We draw $2\times10^5$ samples from an eight-component circular mixture and
compare the conditional field, fixed CFG ($\lambda=3$), and PMC-CFG
($\lambda=3$, $\Gamma=1.1$) across singleton, adjacent-pair, and
adjacent-triple conditions.

\paragraph{ImageNet-256.}
We evaluate class-conditional SiT-XL/2 \cite{deng2009imagenet,ma2024sit} on 50K
samples at 20/50 NFE, testing $\lambda\in\{1.5,2.0,2.5\}$ and
$\Gamma\in\{1.05,1.10\}$. We compare fixed CFG, C2FG
\cite{gao2026c2fg}, APG \cite{sadat2025apg}, and PMC-CFG, and report FID, sFID, IS,
precision, and recall
\cite{heusel2017ttur,salimans2016improved,dhariwal2021diffusion,
kynkaanniemi2019precisionrecall}.

\paragraph{Stable Diffusion 3.5.}
We evaluate SD3.5 Medium \cite{esser2024sd3} at 10 NFE and
$\lambda\in\{3,5,7,9\}$ on 5,000 COCO 2017 validation prompts
\cite{lin2014coco}, sharing prompts and initial latents across methods. We
compare fixed CFG, C2FG \cite{gao2026c2fg}, APG \cite{sadat2025apg}, and
PMC-CFG, and report CLIP, ImageReward, FID,
Recall@$3$, and mean HSV saturation; the latter is a
diagnostic rather than a directional quality metric.

\subsection{Results}

\paragraph{GMM.}

Figure~\ref{fig:gmm-samples} and Table~\ref{tab:gmm} confirm the predicted
failure modes: strong fixed CFG
displaces and contracts a single mode, while in multimodal conditions it
concentrates mass on the heavier or geometrically central branch. PMC-CFG
consistently reduces these mean and occupancy errors and restores within-branch
spread; Appendix Figure~\ref{fig:gmm-distributions} visualizes the corresponding
marginal densities. These endpoint trends agree with the predicted posterior-mean
extrapolation, local contraction, and branch amplification; discretization and
run-level caveats are detailed in Appendix~\ref{app:experiments}.

\paragraph{ImageNet-256.}

\begin{table*}[t]
\caption{ImageNet-256 results on 50K samples. The best FID at each NFE is bold.}
\label{tab:imagenet}
\centering\tiny
\setlength{\tabcolsep}{2.2pt}
\begin{tabular}{lccrrrrrrrrrr}
\toprule
& & & \multicolumn{5}{c}{20 NFE} & \multicolumn{5}{c}{50 NFE}\\
\cmidrule(lr){4-8}\cmidrule(lr){9-13}
Method & $\lambda$ & $\Gamma$ & FID$\downarrow$ & sFID$\downarrow$
& IS$\uparrow$ & Prec.$\uparrow$ & Rec.$\uparrow$ & FID$\downarrow$ & sFID$\downarrow$
& IS$\uparrow$ & Prec.$\uparrow$ & Rec.$\uparrow$\\
\midrule
Fixed CFG & 1.50 & ---  & 3.58 & 5.71 & 239.49 & 0.79 & 0.57
& 2.39 & 4.80 & 258.11 & 0.80 & 0.59\\
APG       & 1.50 & ---  & 3.61 & 6.34 & 245.37 & 0.78 & 0.58
& 2.29 & 5.20 & 263.51 & 0.80 & 0.60\\
C2FG      & 1.50 & ---  & 3.09 & 5.45 & 260.62 & 0.80 & 0.55
& 2.21 & 4.64 & 278.73 & 0.81 & 0.58\\
PMC-CFG   & 1.50 & 1.05 & 3.21 & 5.79 & 253.32 & 0.79 & 0.58
& \textbf{2.10} & 4.75 & 274.26 & 0.81 & 0.59\\
PMC-CFG   & 1.50 & 1.10 & 3.03 & 5.51 & 261.95 & 0.80 & 0.57
& 2.12 & 4.60 & 283.15 & 0.82 & 0.58\\
\addlinespace
Fixed CFG & 2.00 & ---  & 3.58 & 4.74 & 340.73 & 0.86 & 0.50
& 3.62 & 4.55 & 357.64 & 0.87 & 0.51\\
APG       & 2.00 & ---  & 3.49 & 4.80 & 348.26 & 0.85 & 0.51
& 3.31 & 4.18 & 365.52 & 0.86 & 0.53\\
C2FG      & 2.00 & ---  & 3.98 & 4.67 & 358.61 & 0.87 & 0.49
& 4.12 & 4.58 & 375.54 & 0.87 & 0.50\\
PMC-CFG   & 2.00 & 1.05 & \textbf{2.88} & 4.79 & 337.62 & 0.84 & 0.53
& 2.85 & 4.15 & 359.72 & 0.85 & 0.55\\
PMC-CFG   & 2.00 & 1.10 & 3.42 & 4.40 & 354.70 & 0.86 & 0.52
& 3.62 & 4.09 & 376.31 & 0.87 & 0.52\\
\addlinespace
Fixed CFG & 2.50 & ---  & 6.22 & 5.42 & 402.62 & 0.89 & 0.43
& 6.52 & 5.63 & 414.58 & 0.89 & 0.45\\
APG       & 2.50 & ---  & 6.02 & 4.33 & 412.72 & 0.89 & 0.44
& 6.17 & 4.06 & 428.46 & 0.89 & 0.46\\
C2FG      & 2.50 & ---  & 6.84 & 5.48 & 417.47 & 0.90 & 0.42
& 7.17 & 5.76 & 427.26 & 0.90 & 0.44\\
PMC-CFG   & 2.50 & 1.05 & 3.82 & 4.46 & 382.53 & 0.86 & 0.51
& 4.11 & 4.07 & 403.15 & 0.87 & 0.52\\
PMC-CFG   & 2.50 & 1.10 & 5.07 & 4.14 & 406.12 & 0.88 & 0.48
& 5.56 & 4.20 & 421.97 & 0.89 & 0.49\\
\bottomrule
\end{tabular}
\end{table*}
\begin{table*}[t]
\caption{SD3.5 Medium results on 5,000 COCO 2017 validation prompts at 10 NFE.
Bold indicates the best directional metric within each guidance scale.}
\label{tab:sd35}
\centering\small
\setlength{\tabcolsep}{5pt}
\begin{tabular}{llccccc}
\toprule
Scale $\lambda$ & Method & CLIP$\uparrow$ & ImageReward$\uparrow$ & FID$\downarrow$
& Sat. & Recall@$3\uparrow$\\
\midrule
$3$ & Fixed CFG & 0.3171 & 0.6476 & 26.94 & 0.301 & 0.4456\\
    & APG       & 0.3088 & 0.2580 & 25.72 & 0.197 & \textbf{0.5020}\\
    & C2FG      & \textbf{0.3174} & \textbf{0.6715} & 26.97 & 0.302 & 0.4394\\
    & PMC-CFG   & 0.3148 & 0.5918 & \textbf{25.35} & 0.223 & 0.4722\\
\addlinespace
$5$ & Fixed CFG & 0.3192 & 0.7101 & 29.20 & 0.407 & 0.4234\\
    & APG       & 0.3138 & 0.5728 & \textbf{24.68} & 0.208 & \textbf{0.4852}\\
    & C2FG      & \textbf{0.3193} & 0.7271 & 29.06 & 0.408 & 0.4290\\
    & PMC-CFG   & 0.3168 & \textbf{0.7538} & 25.65 & 0.227 & 0.4620\\
\addlinespace
$7$ & Fixed CFG & 0.3145 & 0.4328 & 31.40 & 0.492 & 0.4208\\
    & APG       & 0.3160 & 0.7194 & \textbf{25.21} & 0.220 & \textbf{0.4784}\\
    & C2FG      & 0.3148 & 0.4516 & 31.20 & 0.493 & 0.4202\\
    & PMC-CFG   & \textbf{0.3174} & \textbf{0.8033} & 25.67 & 0.229 & 0.4540\\
\addlinespace
$9$ & Fixed CFG & 0.3049 & $-0.0295$ & 40.83 & 0.555 & \textbf{0.4512}\\
    & APG       & 0.3172 & 0.8097 & 25.88 & 0.234 & 0.4472\\
    & C2FG      & 0.3056 & $-0.0108$ & 40.54 & 0.556 & 0.4470\\
    & PMC-CFG   & \textbf{0.3178} & \textbf{0.8262} & \textbf{25.61} & 0.230 & \textbf{0.4512}\\
\bottomrule
\end{tabular}
\end{table*}

Table~\ref{tab:imagenet} shows that PMC-CFG is most effective under strong
guidance. At $\lambda=1.5$, C2FG gives lower FID while APG gives higher recall.
At larger scales, APG modestly improves FID and recall over fixed CFG, whereas
PMC-CFG with $\Gamma=1.05$ yields substantially larger improvements in both
metrics at $\lambda\in\{2.0,2.5\}$ and both solver budgets, with only a small
precision trade-off.

The FID-optimal nominal scale also exhibits a solver-budget-dependent trend.
As NFE increases from 20 to 50, the best scale for APG and PMC-CFG with
$\Gamma=1.05$ shifts from $\lambda=2.0$ to $\lambda=1.5$; fixed CFG changes
from a tie between these scales to a clear preference for $\lambda=1.5$,
while C2FG and PMC-CFG with $\Gamma=1.10$ already favor $\lambda=1.5$ at both
budgets. Moreover, $\lambda=2.5$ is suboptimal for every method and budget.
Thus, moderate guidance can help some controlled methods under a coarse solver,
but additional solver steps consistently remove any empirical need for a
larger nominal scale in this evaluation.

A plausible explanation is a change in the dominant source of error. Increasing
NFE reduces numerical integration error for a weakly guided flow, but it does
not remove guidance-induced mean displacement and local contraction because
these are properties of the modified vector field itself. Once discretization
error is reduced, increasing the guidance scale therefore exposes more of this
structural bias rather than improving integration accuracy. This interpretation
is consistent with, but not causally established by, the present endpoint
metrics. Mechanism diagnostics are provided in
Appendix~\ref{app:imagenet-mechanism}. Consistently,
Figure~\ref{fig:qualitative-comparison}(a) shows that PMC-CFG preserves more of
the unguided layout diversity while recovering sharp boundaries and texture,
whereas fixed CFG more often homogenizes composition and increases saturation.

\paragraph{Stable Diffusion 3.5.}

Table~\ref{tab:sd35} shows that both APG and PMC-CFG avoid the FID and
saturation degradation of fixed CFG and C2FG at high guidance. APG obtains the
best FID and Recall@$3$ at $\lambda\in\{5,7\}$ and the best Recall@$3$ at
$\lambda=3$. PMC-CFG gives the best FID at $\lambda\in\{3,9\}$, outperforms
all baselines in CLIP and ImageReward at $\lambda\in\{7,9\}$, and ties fixed
CFG for the best Recall@$3$ at $\lambda=9$. Thus APG gives a stronger
distributional trade-off at intermediate scales, whereas PMC-CFG preserves
stronger text alignment as the nominal scale increases.
Figure~\ref{fig:qualitative-comparison}(b) mirrors these metrics: PMC-CFG keeps
the requested text legible while preserving more variation in pose,
background, and color than fixed CFG, which more often collapses toward a
centered, homogeneous composition.

\begin{figure}[t]
\centering
\begin{minipage}[t]{0.485\linewidth}
\centering
\setlength{\tabcolsep}{0.6pt}
\begin{tabular}{@{}ccc@{}}
\shortstack[c]{\tiny No guidance\\[-1pt]\tiny $\lambda=1$}
& \shortstack[c]{\tiny Fixed CFG\\[-1pt]\tiny $\lambda=2.5$}
& \shortstack[c]{\tiny PMC-CFG\\[-1pt]\tiny $\lambda=2.5$} \\
\includegraphics[width=0.30\linewidth]{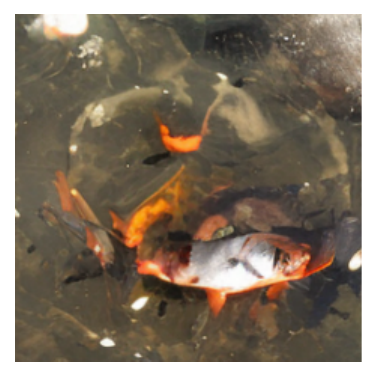}
& \includegraphics[width=0.30\linewidth]{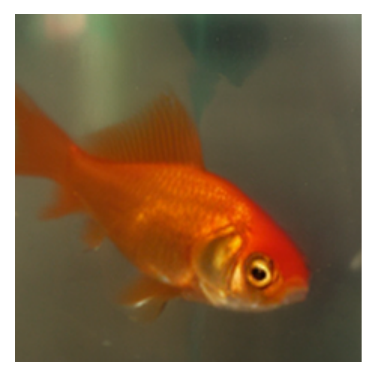}
& \includegraphics[width=0.30\linewidth]{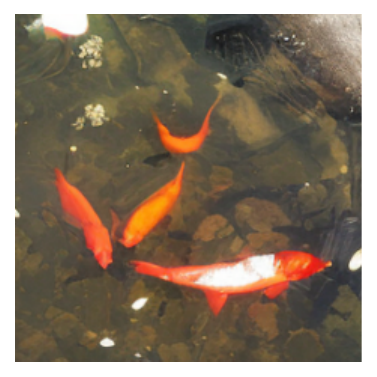} \\[-2pt]
\includegraphics[width=0.30\linewidth]{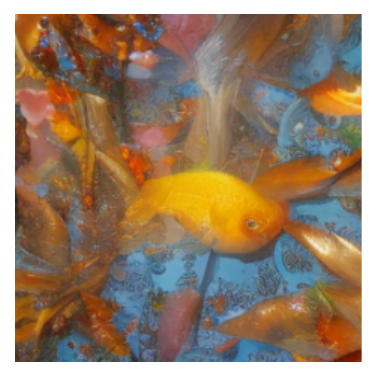}
& \includegraphics[width=0.30\linewidth]{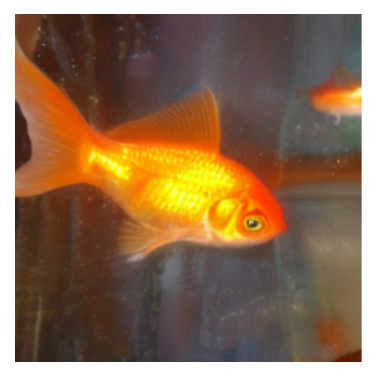}
& \includegraphics[width=0.30\linewidth]{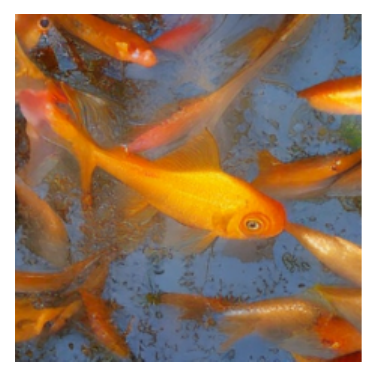} \\[-2pt]
\includegraphics[width=0.30\linewidth]{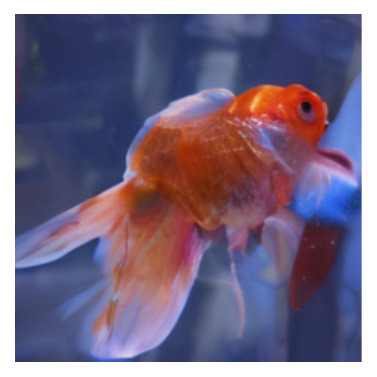}
& \includegraphics[width=0.30\linewidth]{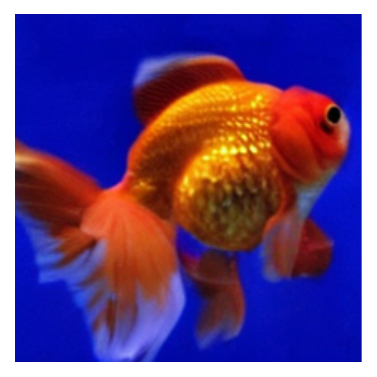}
& \includegraphics[width=0.30\linewidth]{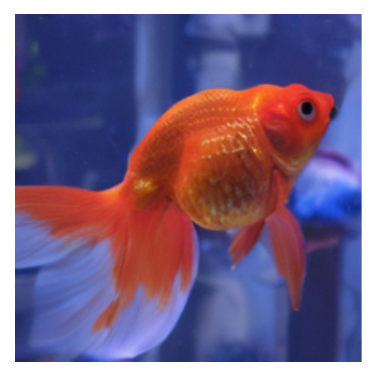} \\[-2pt]
\includegraphics[width=0.30\linewidth]{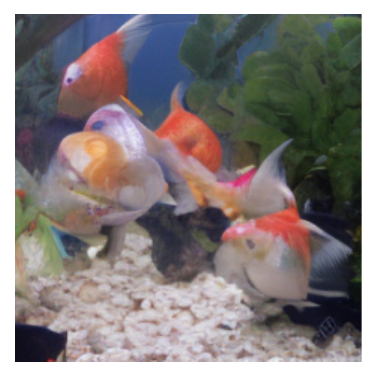}
& \includegraphics[width=0.30\linewidth]{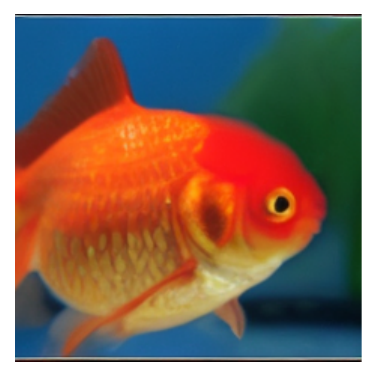}
& \includegraphics[width=0.30\linewidth]{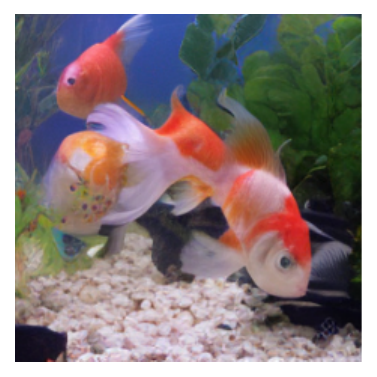}
\end{tabular}
\vspace{-0.35em}

\textbf{(a) ImageNet-256}
\end{minipage}\hfill
\begin{minipage}[t]{0.485\linewidth}
\centering
\setlength{\tabcolsep}{0.6pt}
\begin{tabular}{@{}ccc@{}}
\shortstack[c]{\tiny No guidance\\[-1pt]\tiny $\lambda=1$}
& \shortstack[c]{\tiny Fixed CFG\\[-1pt]\tiny $\lambda=5$}
& \shortstack[c]{\tiny PMC-CFG\\[-1pt]\tiny $\lambda=5$} \\
\includegraphics[width=0.29\linewidth]{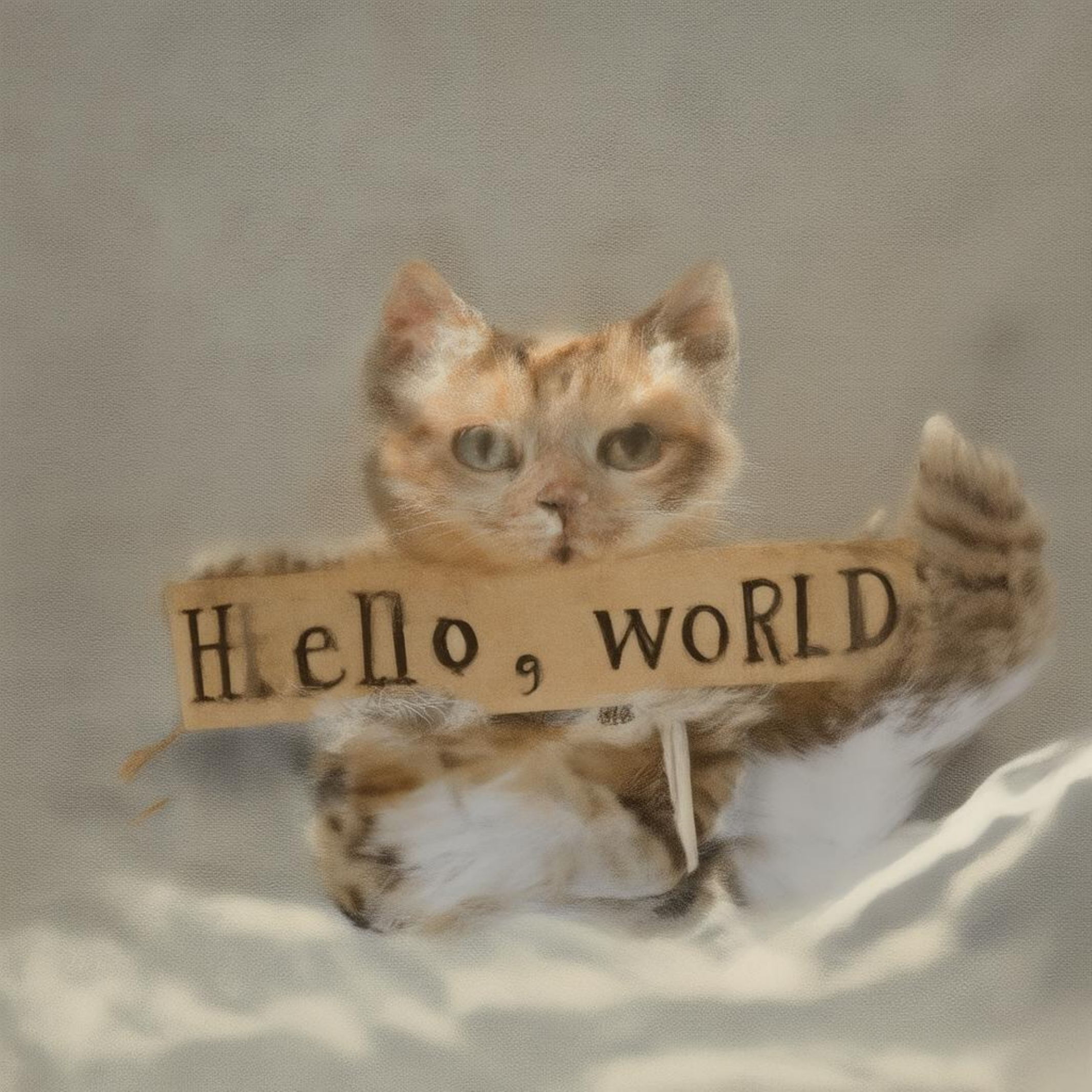}
& \includegraphics[width=0.29\linewidth]{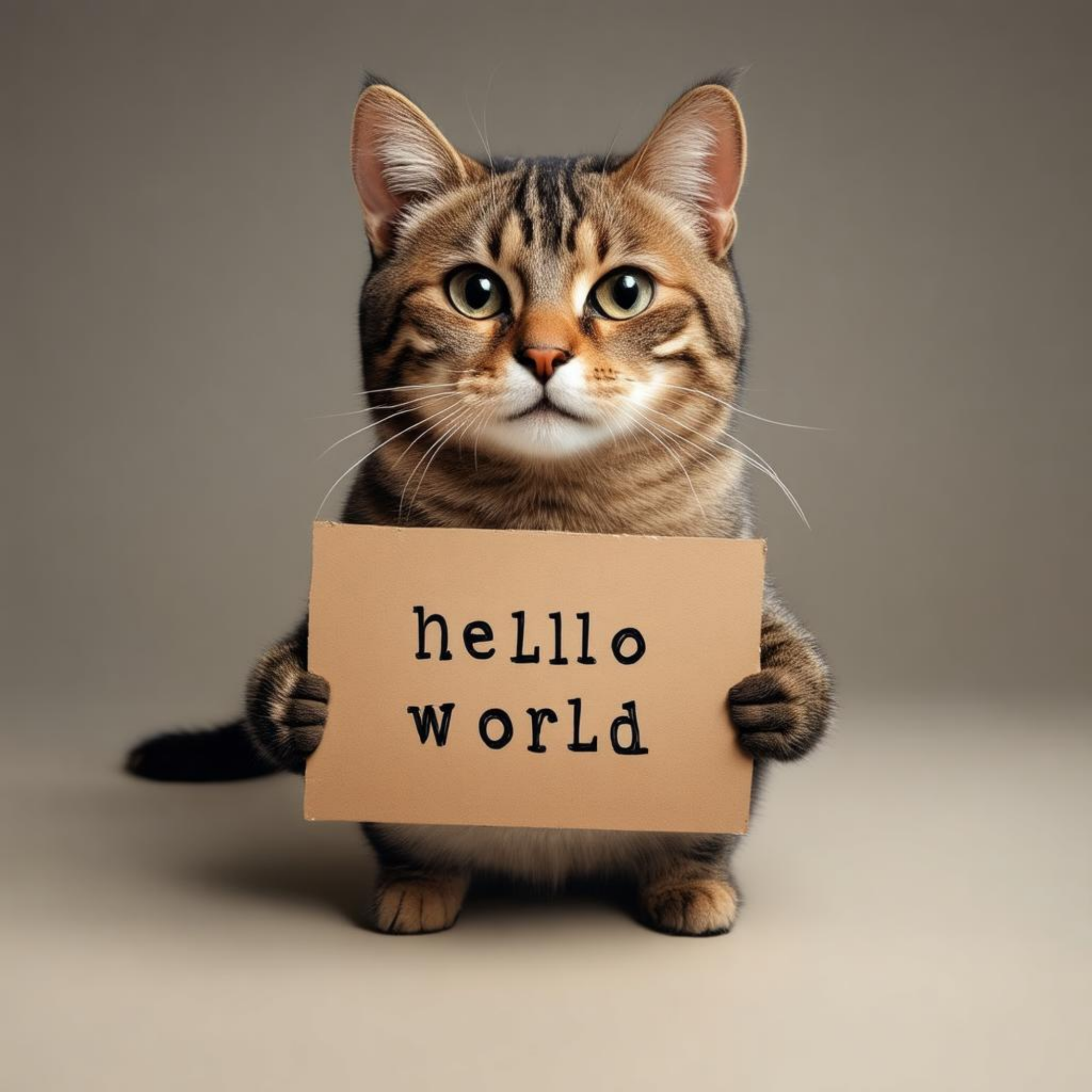}
& \includegraphics[width=0.29\linewidth]{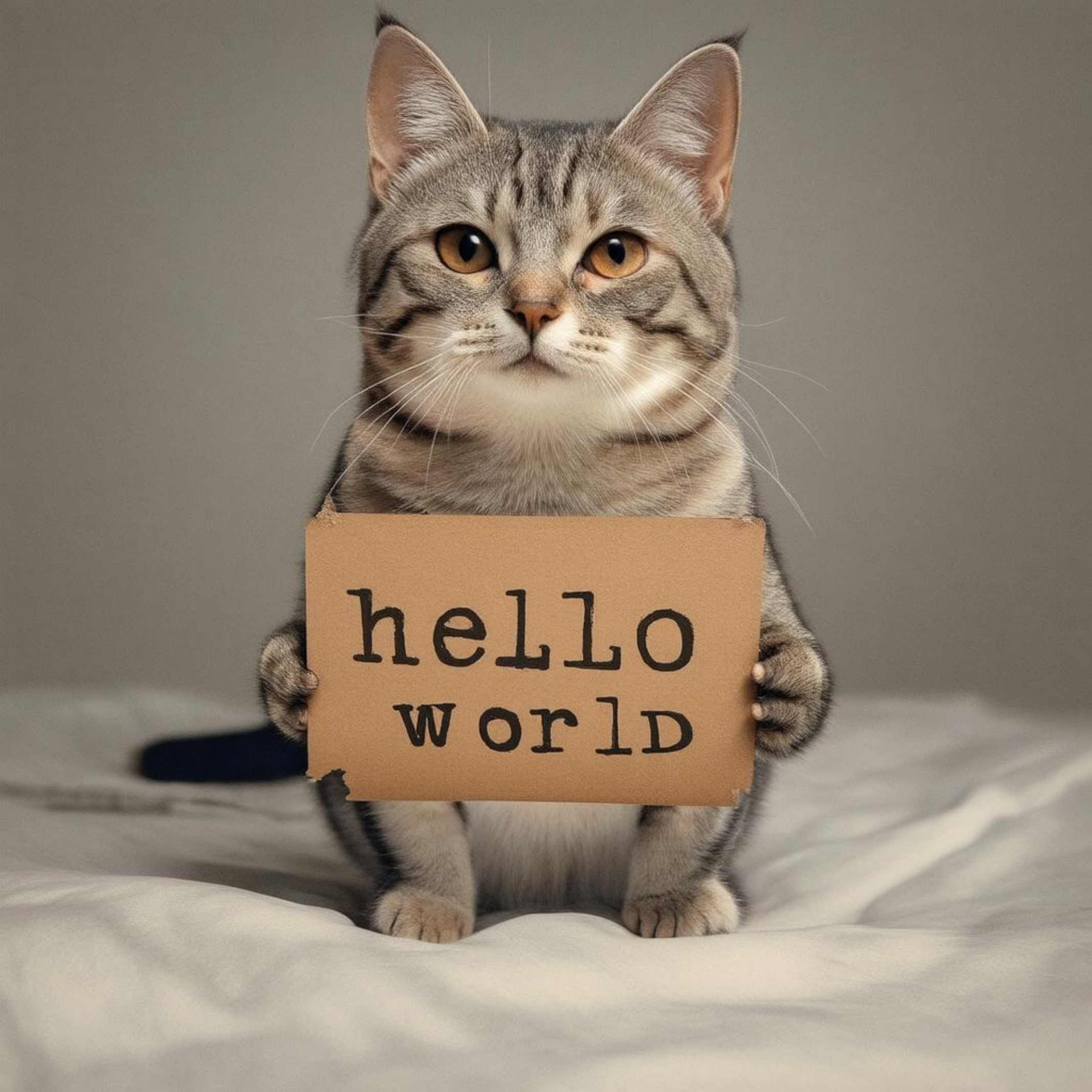} \\[-2pt]
\includegraphics[width=0.29\linewidth]{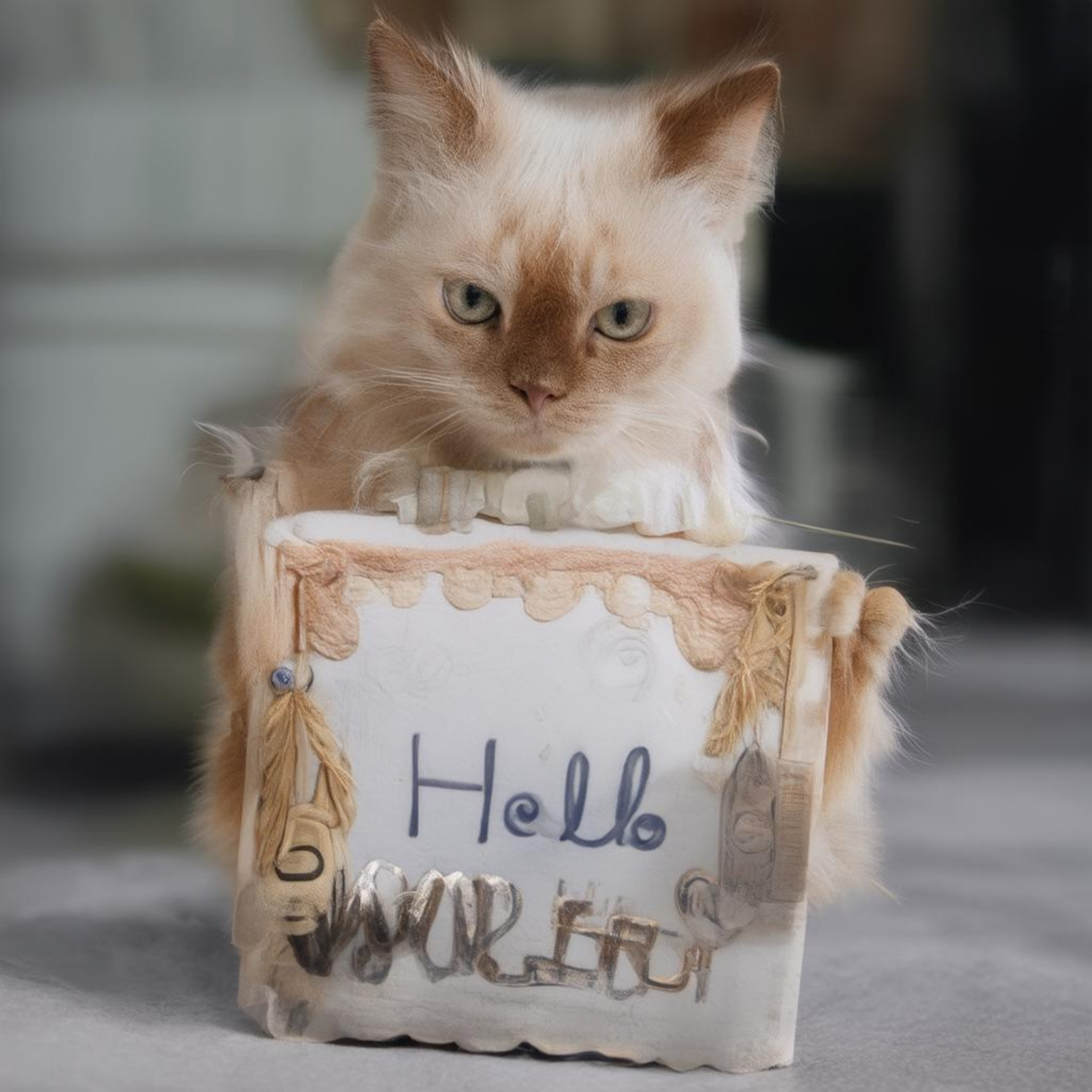}
& \includegraphics[width=0.29\linewidth]{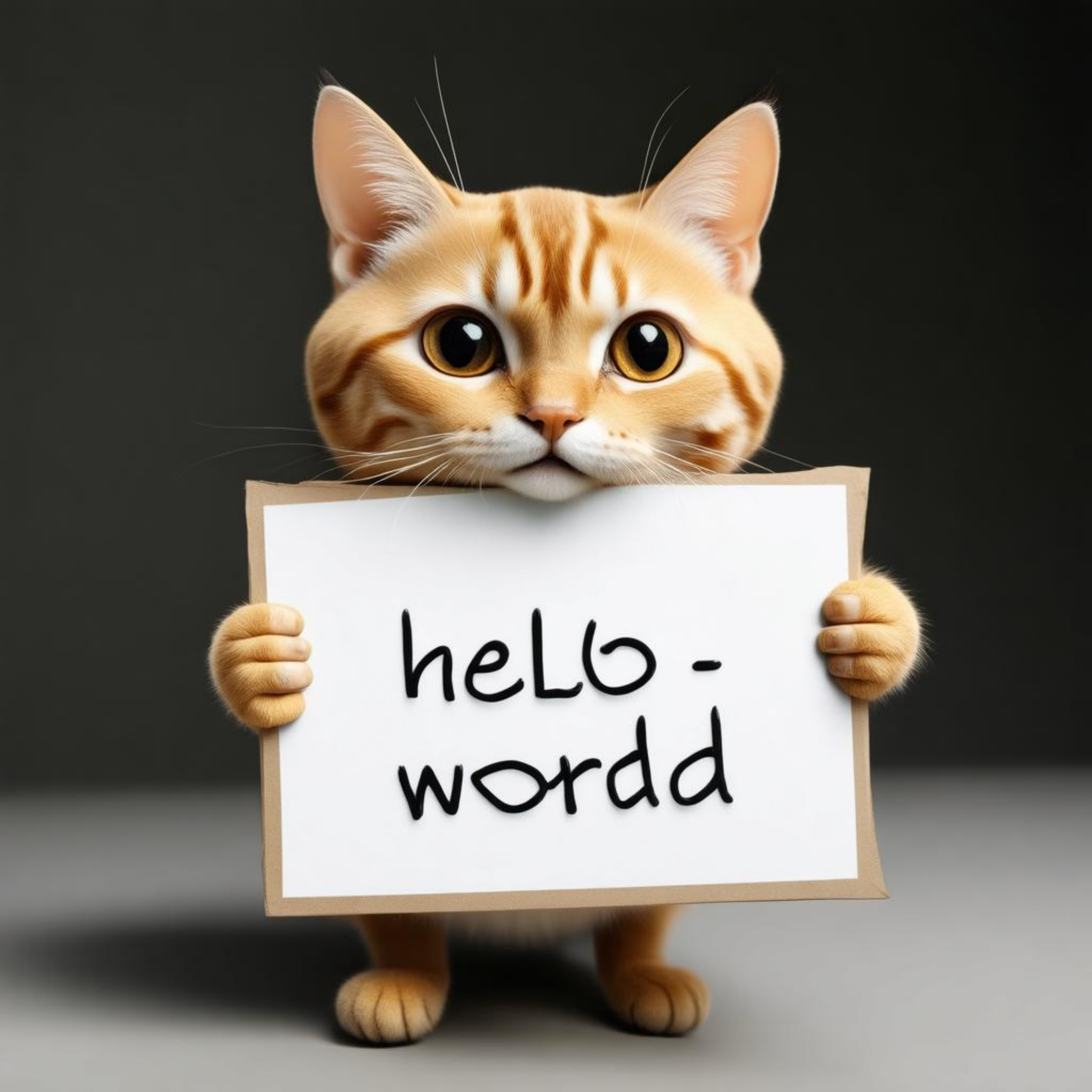}
& \includegraphics[width=0.29\linewidth]{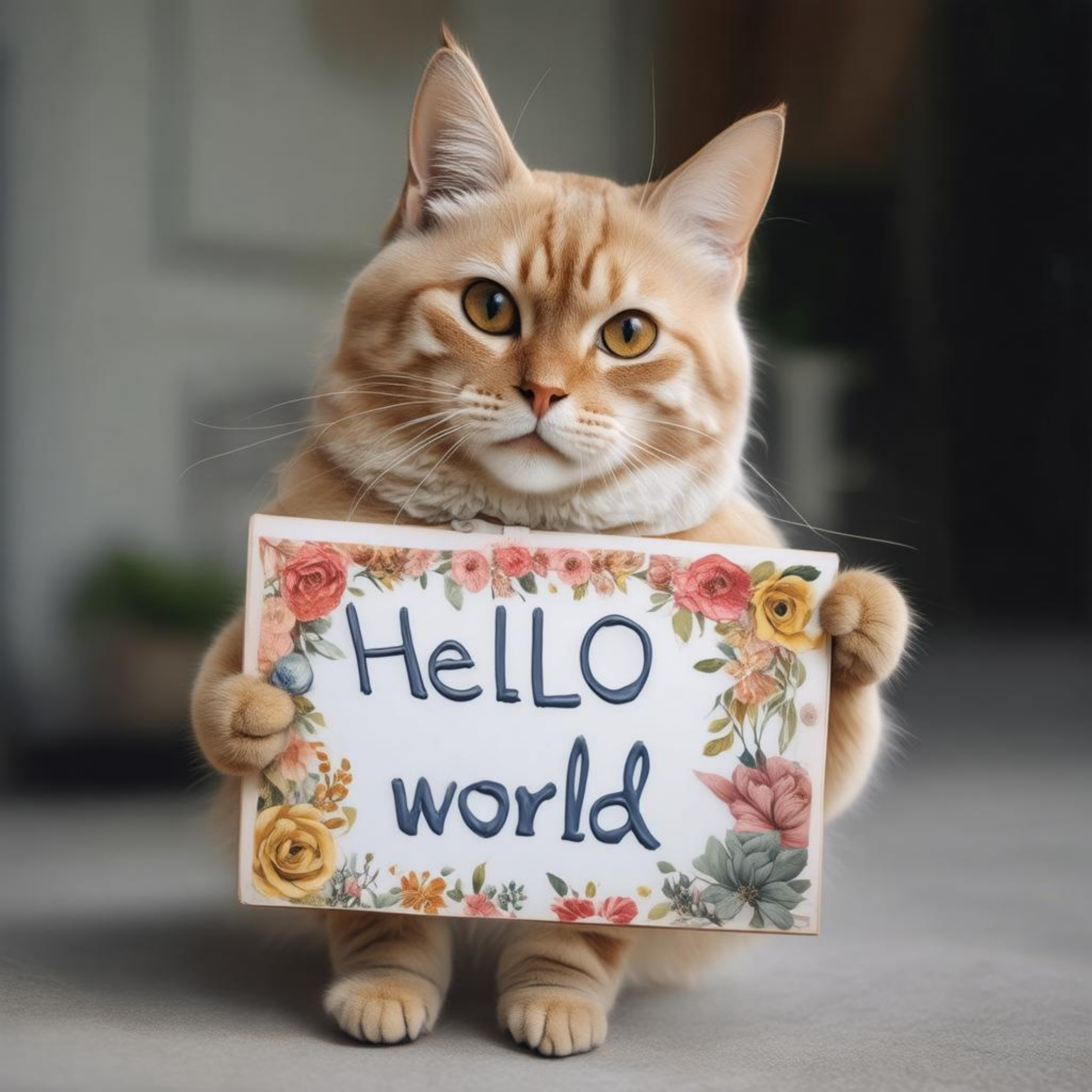} \\[-2pt]
\includegraphics[width=0.29\linewidth]{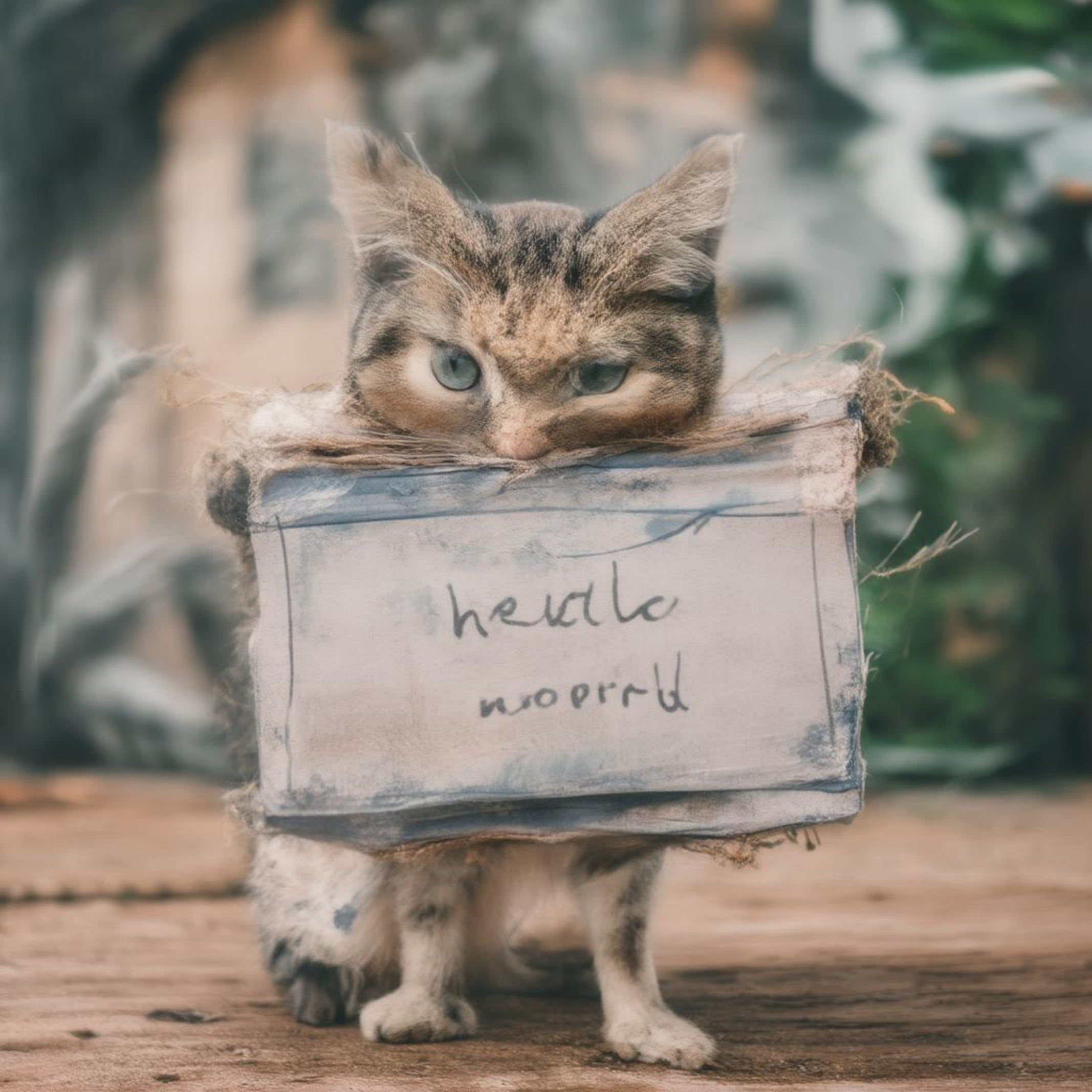}
& \includegraphics[width=0.29\linewidth]{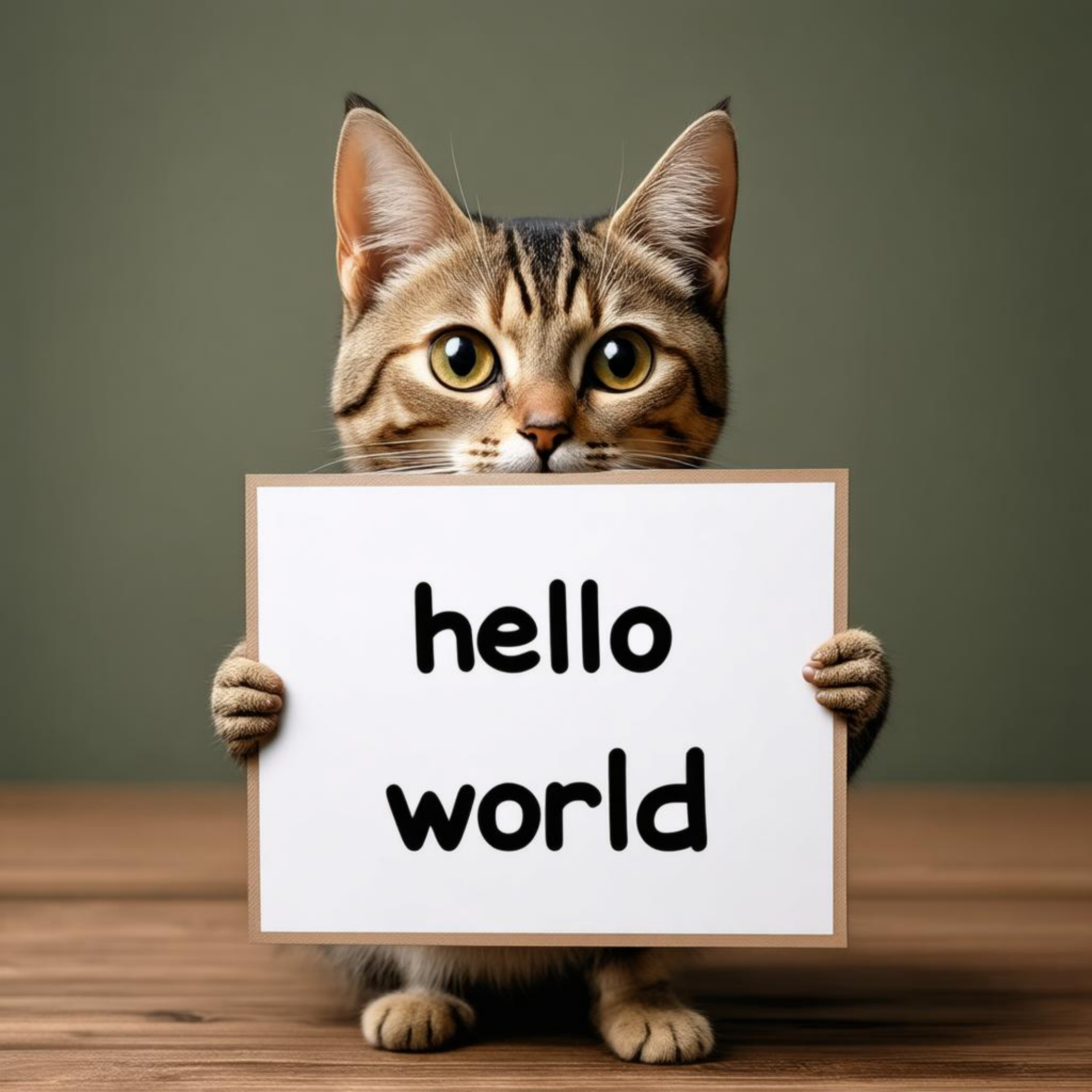}
& \includegraphics[width=0.29\linewidth]{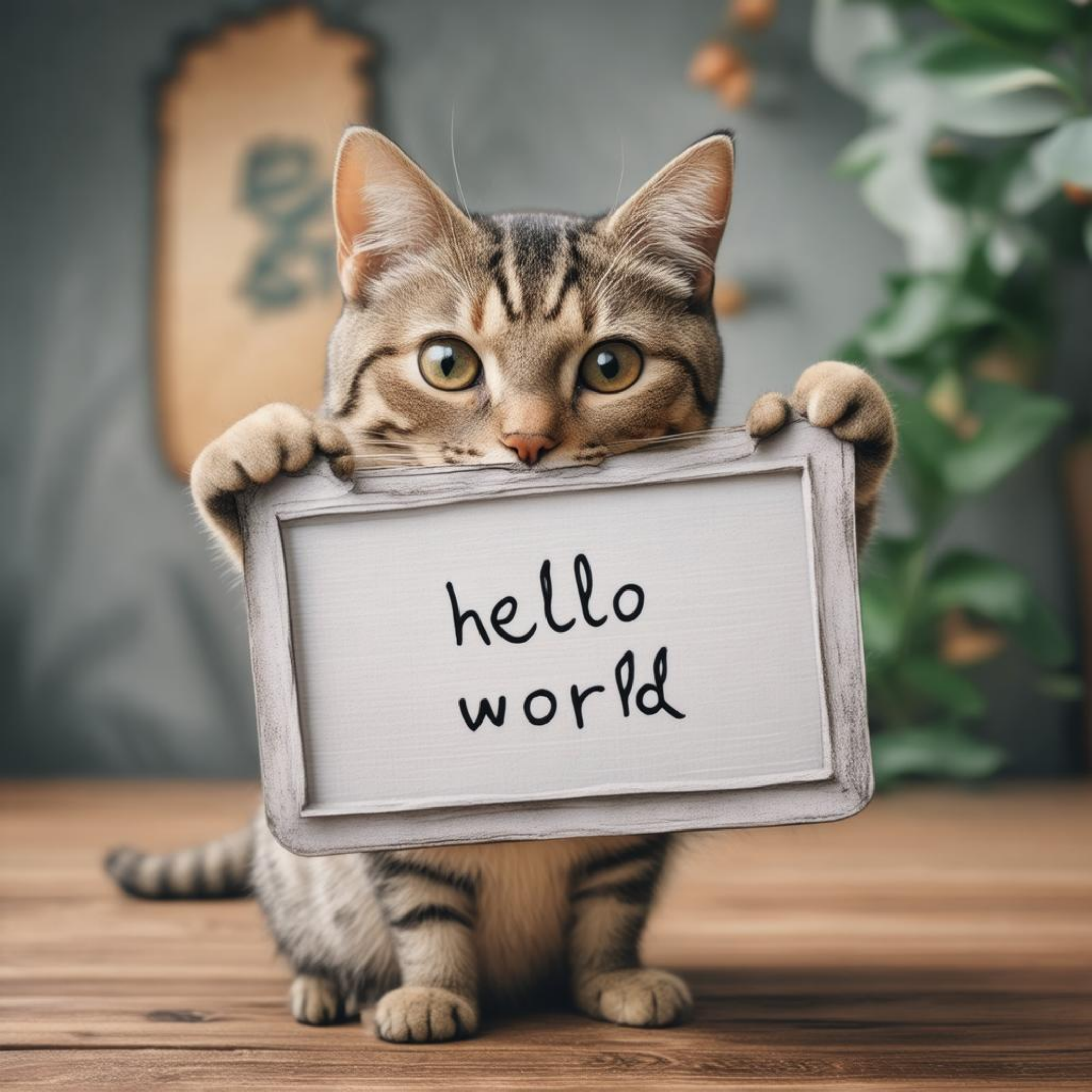} \\[-2pt]
\includegraphics[width=0.29\linewidth]{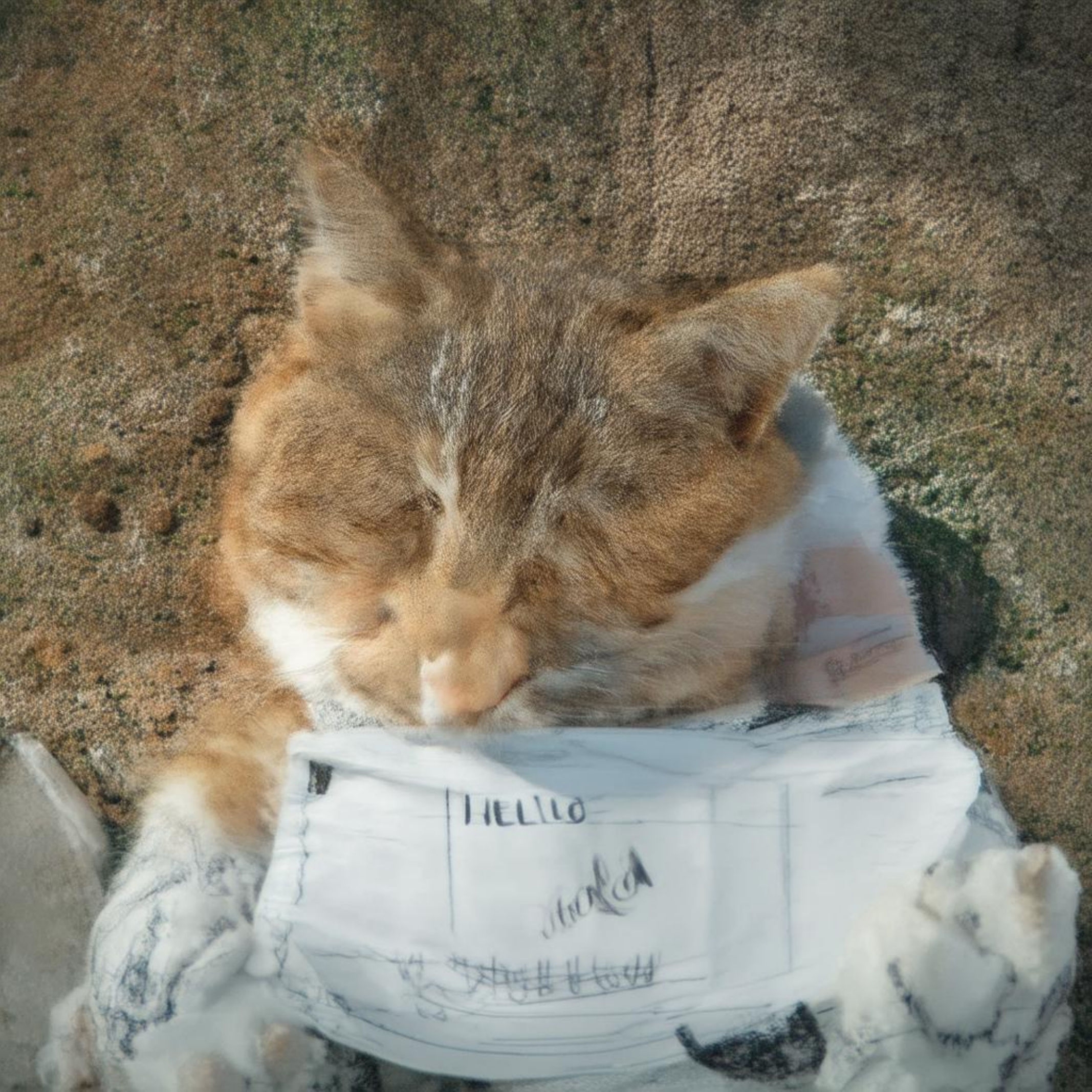}
& \includegraphics[width=0.29\linewidth]{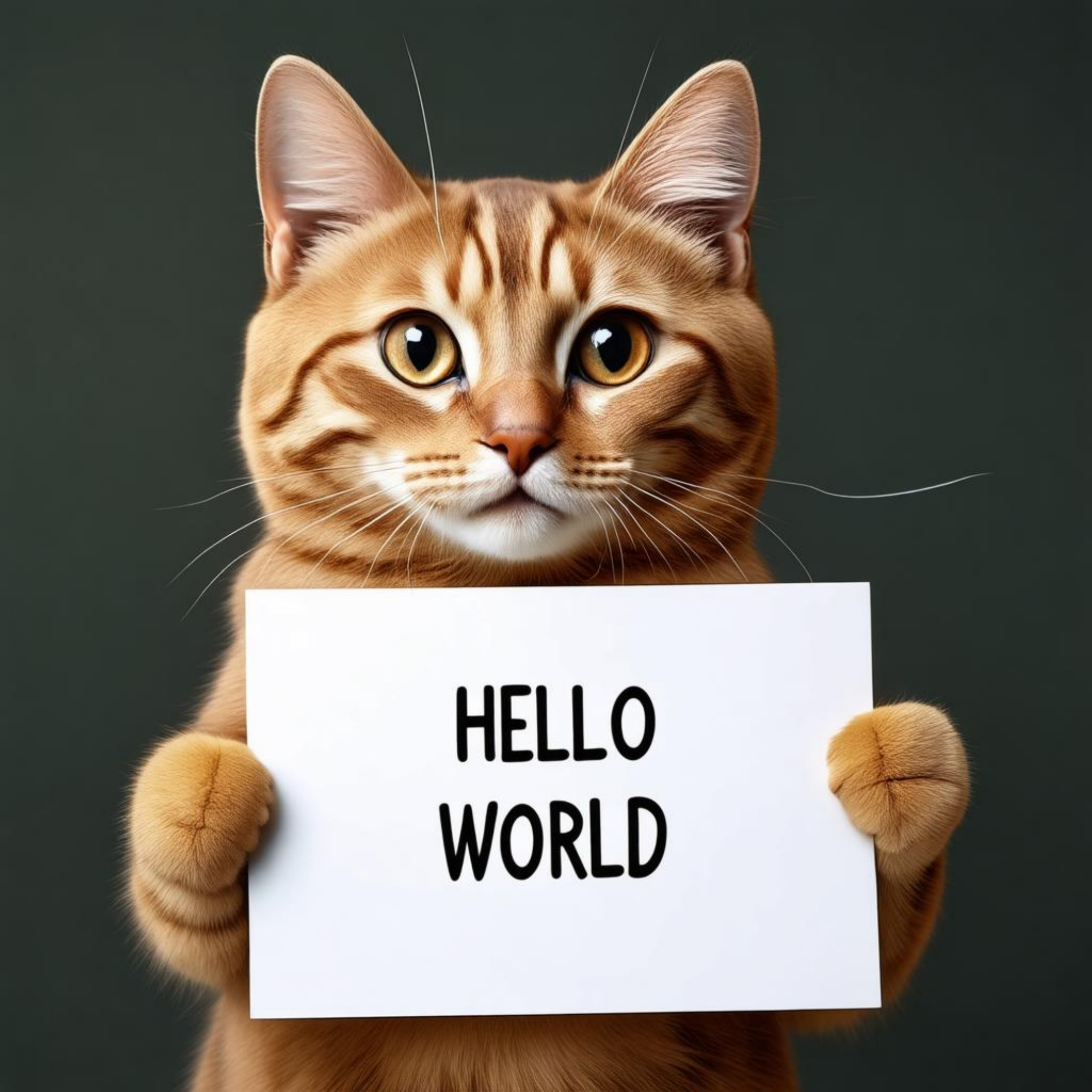}
& \includegraphics[width=0.29\linewidth]{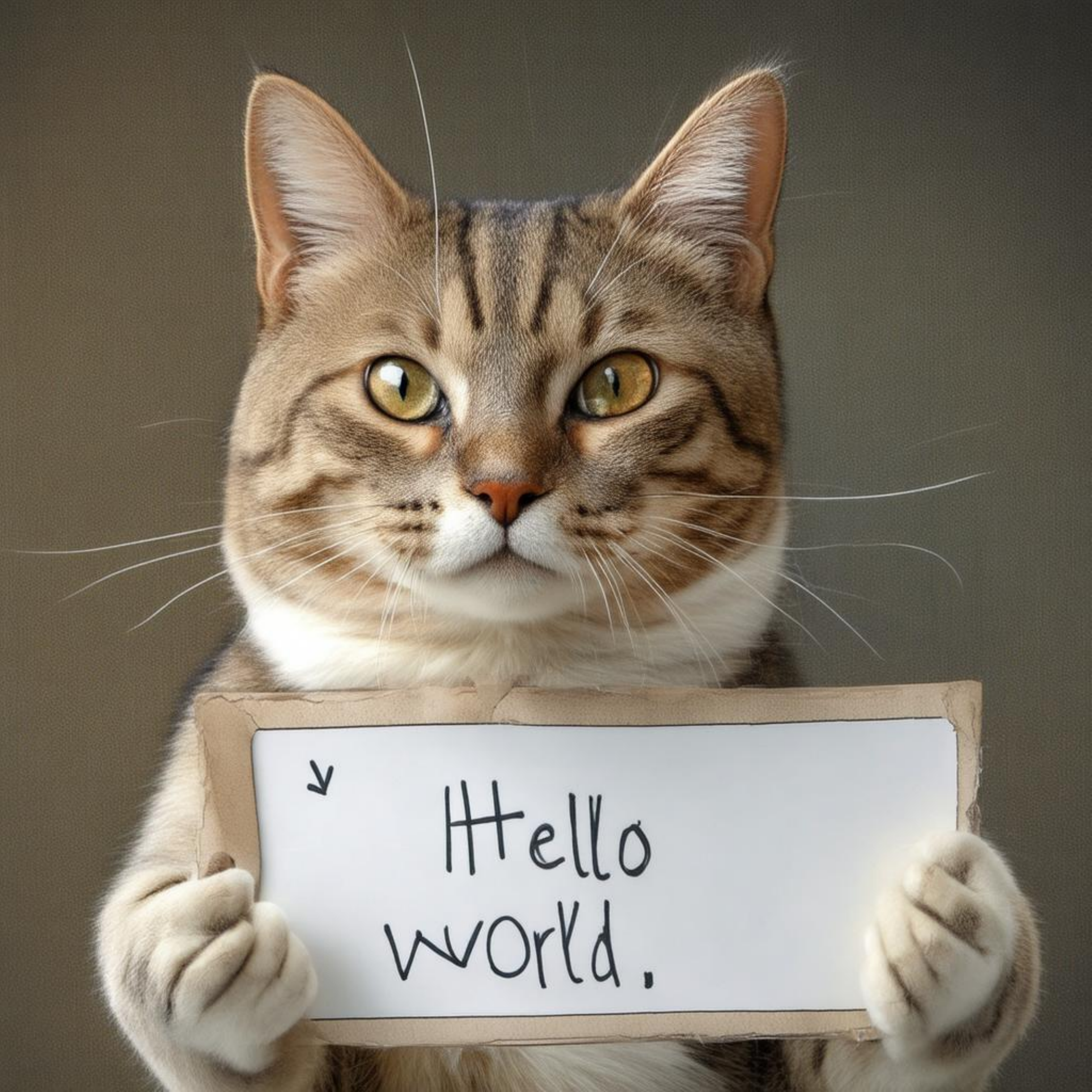} \\[-2pt]
\end{tabular}
\vspace{-0.35em}

\textbf{(b) Stable Diffusion 3.5}
\end{minipage}
\vspace{-0.35em}
\caption{Matched qualitative comparisons. Each row uses the same prompt or
class and seed. PMC-CFG better retains the unguided composition than fixed CFG
on both ImageNet-256 and SD3.5.}
\label{fig:qualitative-comparison}
\label{fig:imagenet-qualitative}
\label{fig:sd35-qualitative}
\vspace{-0.6em}
\end{figure}

\FloatBarrier
\section{Discussion and Conclusion}
\label{sec:conclusion}

We characterized linear Flow Matching as a non-autonomous gradient flow and
showed that CFG tilts its time-varying potential by the condition posterior.
The resulting posterior force explains both sides of CFG: it increases the
instantaneous conditional-alignment rate by a nonnegative squared-gradient term,
yet a fixed gain can accumulate mean displacement and local volume contraction,
causing overshoot and diversity loss. Motivated by this shared mechanism, we
introduced PMC-CFG, a training-free, per-sample feedback controller that
preserves the useful CFG direction but caps extrapolation of the model-implied
terminal posterior mean. It retains the largest feasible scale, leaves nominal
CFG unchanged when feasible, and requires no additional model evaluations.

Experiments on an analytic GMM, ImageNet-256, and SD3.5 support the predicted
mechanisms and show that PMC-CFG reduces overshoot and concentration, improves
the FID--recall trade-off, and prevents the quality and saturation degradation
 of strong fixed guidance while preserving competitive alignment. The guarantee
 is pointwise in the model-implied mean rather than a global bound on the final
 sample or a claim that the controlled flow exactly samples a prescribed tilted
 distribution; moreover, its norm depends on the latent coordinate system.
 Future work could
characterize image-distribution geometry---including anisotropy, local density,
curvature, and multimodal structure---and use these properties to design
distribution-aware caps and time-dependent guidance schedules with stronger
trajectory-level guarantees.

\clearpage
\bibliographystyle{unsrtnat}
\bibliography{pmc_cfg_conference}

\begin{thebibliography}{33}
\providecommand{\natexlab}[1]{#1}
\providecommand{\url}[1]{\texttt{#1}}
\expandafter\ifx\csname urlstyle\endcsname\relax
  \providecommand{\doi}[1]{doi: #1}\else
  \providecommand{\doi}{doi: \begingroup \urlstyle{rm}\Url}\fi

\bibitem[Ho et~al.(2020)Ho, Jain, and Abbeel]{ho2020ddpm}
Jonathan Ho, Ajay Jain, and Pieter Abbeel.
\newblock Denoising diffusion probabilistic models.
\newblock In \emph{Advances in Neural Information Processing Systems}, volume~33, pages 6840--6851, 2020.

\bibitem[Lipman et~al.(2023)Lipman, Chen, Ben-Hamu, Nickel, and Le]{lipman2023flow}
Yaron Lipman, Ricky T.~Q. Chen, Heli Ben-Hamu, Maximilian Nickel, and Matt Le.
\newblock Flow matching for generative modeling.
\newblock In \emph{International Conference on Learning Representations}, 2023.

\bibitem[Albergo and Vanden-Eijnden(2022)]{albergo2022building}
Michael~S Albergo and Eric Vanden-Eijnden.
\newblock Building normalizing flows with stochastic interpolants.
\newblock \emph{arXiv preprint arXiv:2209.15571}, 2022.

\bibitem[Liu et~al.(2023)Liu, Gong, and Liu]{liu2023flow}
Xingchao Liu, Chengyue Gong, and Qiang Liu.
\newblock Flow straight and fast: Learning to generate and transfer data with rectified flow.
\newblock In \emph{International Conference on Learning Representations}, 2023.

\bibitem[Dhariwal and Nichol(2021)]{dhariwal2021diffusion}
Prafulla Dhariwal and Alexander Nichol.
\newblock Diffusion models beat gans on image synthesis.
\newblock \emph{Advances in neural information processing systems}, 34:\penalty0 8780--8794, 2021.

\bibitem[Ho and Salimans(2022)]{ho2022classifierfree}
Jonathan Ho and Tim Salimans.
\newblock Classifier-free diffusion guidance.
\newblock \emph{arXiv preprint arXiv:2207.12598}, 2022.

\bibitem[Zheng et~al.(2023)Zheng, Le, Shaul, Lipman, Grover, and Chen]{zheng2023guidedflows}
Qinqing Zheng, Matt Le, Neta Shaul, Yaron Lipman, Aditya Grover, and Ricky T.~Q. Chen.
\newblock Guided flows for generative modeling and decision making.
\newblock \emph{arXiv preprint arXiv:2311.13443}, 2023.
\newblock \doi{10.48550/arXiv.2311.13443}.

\bibitem[Karras et~al.(2024)Karras, Aittala, Kynk{\"a}{\"a}nniemi, Lehtinen, Aila, and Laine]{karras2024badversion}
Tero Karras, Miika Aittala, Tero Kynk{\"a}{\"a}nniemi, Jaakko Lehtinen, Timo Aila, and Samuli Laine.
\newblock Guiding a diffusion model with a bad version of itself.
\newblock In \emph{Advances in Neural Information Processing Systems}, 2024.

\bibitem[Pavasovic et~al.(2026)Pavasovic, Verbeek, Biroli, and M{\'e}zard]{pavasovic2026overshoot}
Krunoslav~Lehman Pavasovic, Jakob Verbeek, Giulio Biroli, and Marc M{\'e}zard.
\newblock Overshoot and shrinkage in classifier-free guidance: From theory to practice.
\newblock In \emph{International Conference on Learning Representations}, 2026.

\bibitem[WANG et~al.(2024)WANG, Dufour, Andreou, CANI, Abrevaya, Picard, and Kalogeiton]{wang2024schedulers}
Xi~WANG, Nicolas Dufour, Nefeli Andreou, Marie-Paule CANI, Victoria~Fernandez Abrevaya, David Picard, and Vicky Kalogeiton.
\newblock Analysis of classifier-free guidance weight schedulers.
\newblock \emph{Transactions on Machine Learning Research}, 2024.
\newblock ISSN 2835-8856.
\newblock URL \url{https://openreview.net/forum?id=SUMtDJqicd}.

\bibitem[Gao et~al.(2026)Gao, Zheng, Zou, Yang, Liu, Fan, Zhang, Zhang, Chen, Jiang, Li, and Wang]{gao2026c2fg}
Jiayang Gao, Tianyi Zheng, Jiayang Zou, Fengxiang Yang, Shice Liu, Luyao Fan, Zheyu Zhang, Hao Zhang, Jinwei Chen, Peng-Tao Jiang, Bo~Li, and Jia Wang.
\newblock C{\textasciicircum}2fg: Control classifier-free guidance via score discrepancy analysis.
\newblock In \emph{Proceedings of the IEEE/CVF Conference on Computer Vision and Pattern Recognition (CVPR)}, pages 34398--34407, June 2026.

\bibitem[Jiang and Ma(2026)]{jiang2026analytic}
Enze Jiang and Zheng Ma.
\newblock Analytic distribution of classifier-free guidance for schedule design.
\newblock \emph{arXiv preprint arXiv:2607.19725}, 2026.

\bibitem[Kynk{\"a}{\"a}nniemi et~al.(2024)Kynk{\"a}{\"a}nniemi, Aittala, Karras, Laine, Aila, and Lehtinen]{kynkaanniemi2024interval}
Tuomas Kynk{\"a}{\"a}nniemi, Miika Aittala, Tero Karras, Samuli Laine, Timo Aila, and Jaakko Lehtinen.
\newblock Applying guidance in a limited interval improves sample and distribution quality in diffusion models.
\newblock In \emph{Advances in Neural Information Processing Systems}, volume~37, pages 122458--122483, 2024.
\newblock \doi{10.52202/079017-3892}.

\bibitem[Chung et~al.(2025)Chung, Kim, Park, Nam, and Ye]{chung2025cfgpp}
Hyungjin Chung, Jeongsol Kim, Geon~Yeong Park, Hyelin Nam, and Jong~Chul Ye.
\newblock {CFG++}: Manifold-constrained classifier free guidance for diffusion models.
\newblock In \emph{International Conference on Learning Representations}, 2025.

\bibitem[Xia et~al.(2025)Xia, Xue, Shen, Yi, Gong, and Liu]{xia2025rectified}
Mengfei Xia, Nan Xue, Yujun Shen, Ran Yi, Tieliang Gong, and Yong-Jin Liu.
\newblock Rectified diffusion guidance for conditional generation.
\newblock In \emph{2025 IEEE/CVF Conference on Computer Vision and Pattern Recognition (CVPR)}, pages 13371--13380. IEEE, 2025.

\bibitem[Sadat et~al.(2025)Sadat, Hilliges, and Weber]{sadat2025apg}
Seyedmorteza Sadat, Otmar Hilliges, and Romann~M. Weber.
\newblock Eliminating oversaturation and artifacts of high guidance scales in diffusion models.
\newblock In \emph{International Conference on Learning Representations}, 2025.
\newblock URL \url{https://openreview.net/forum?id=e2ONKX6qzJ}.

\bibitem[Wang et~al.(2026)Wang, Liu, Chi, Liu, Xue, and Duan]{wang2026cfgctrl}
Hanyang Wang, Yiyang Liu, Jiawei Chi, Fangfu Liu, Ran Xue, and Yueqi Duan.
\newblock Cfg-ctrl: Control-based classifier-free diffusion guidance.
\newblock In \emph{Proceedings of the IEEE/CVF Conference on Computer Vision and Pattern Recognition (CVPR)}, pages 11437--11447, June 2026.

\bibitem[Bradley and Nakkiran(2024)]{bradley2024classifier}
Arwen Bradley and Preetum Nakkiran.
\newblock Classifier-free guidance is a predictor-corrector.
\newblock \emph{arXiv preprint arXiv:2408.09000}, 2024.

\bibitem[Azangulov et~al.(2026)Azangulov, Potaptchik, Li, Aamari, Deligiannidis, and Rousseau]{azangulov2026adaptive}
Iskander Azangulov, Peter Potaptchik, Qinyu Li, Eddie Aamari, George Deligiannidis, and Judith Rousseau.
\newblock Adaptive diffusion guidance via stochastic optimal control.
\newblock In \emph{Proceedings of the 29th International Conference on Artificial Intelligence and Statistics}, volume 300 of \emph{Proceedings of Machine Learning Research}, pages 4087--4095, 2026.
\newblock URL \url{https://proceedings.mlr.press/v300/azangulov26a.html}.

\bibitem[Cai et~al.(2026)Cai, Liu, Su, and Wang]{cai2026manifold}
Jian-Feng Cai, Haixia Liu, Zhengyi Su, and Chao Wang.
\newblock Improving classifier-free guidance of flow matching via manifold projection.
\newblock \emph{arXiv preprint arXiv:2601.21892}, 2026.
\newblock \doi{10.48550/arXiv.2601.21892}.

\bibitem[Efron(2011)]{efron2011tweedie}
Bradley Efron.
\newblock Tweedie's formula and selection bias.
\newblock \emph{Journal of the American Statistical Association}, 106\penalty0 (496):\penalty0 1602--1614, 2011.
\newblock \doi{10.1198/jasa.2011.tm11181}.

\bibitem[Deng et~al.(2009)Deng, Dong, Socher, Li, Li, and Fei-Fei]{deng2009imagenet}
Jia Deng, Wei Dong, Richard Socher, Li-Jia Li, Kai Li, and Li~Fei-Fei.
\newblock Imagenet: A large-scale hierarchical image database.
\newblock In \emph{2009 IEEE conference on computer vision and pattern recognition}, pages 248--255. Ieee, 2009.

\bibitem[Ma et~al.(2024)Ma, Goldstein, Albergo, Boffi, Vanden-Eijnden, and Xie]{ma2024sit}
Nanye Ma, Mark Goldstein, Michael~S. Albergo, Nicholas~M. Boffi, Eric Vanden-Eijnden, and Saining Xie.
\newblock {SiT}: Exploring flow and diffusion-based generative models with scalable interpolant transformers.
\newblock In \emph{European Conference on Computer Vision}, pages 23--40, 2024.
\newblock \doi{10.1007/978-3-031-72980-5_2}.

\bibitem[Heusel et~al.(2017)Heusel, Ramsauer, Unterthiner, Nessler, and Hochreiter]{heusel2017ttur}
Martin Heusel, Hubert Ramsauer, Thomas Unterthiner, Bernhard Nessler, and Sepp Hochreiter.
\newblock {GANs} trained by a two time-scale update rule converge to a local {Nash} equilibrium.
\newblock In \emph{Advances in Neural Information Processing Systems}, volume~30, pages 6626--6637, 2017.

\bibitem[Salimans et~al.(2016)Salimans, Goodfellow, Zaremba, Cheung, Radford, and Chen]{salimans2016improved}
Tim Salimans, Ian Goodfellow, Wojciech Zaremba, Vicki Cheung, Alec Radford, and Xi~Chen.
\newblock Improved techniques for training {GANs}.
\newblock In \emph{Advances in Neural Information Processing Systems}, volume~29, 2016.

\bibitem[Kynk{\"a}{\"a}nniemi et~al.(2019)Kynk{\"a}{\"a}nniemi, Karras, Laine, Lehtinen, and Aila]{kynkaanniemi2019precisionrecall}
Tuomas Kynk{\"a}{\"a}nniemi, Tero Karras, Samuli Laine, Jaakko Lehtinen, and Timo Aila.
\newblock Improved precision and recall metric for assessing generative models.
\newblock In \emph{Advances in Neural Information Processing Systems}, volume~32, 2019.

\bibitem[Esser et~al.(2024)Esser, Kulal, Blattmann, Entezari, M{\"u}ller, Saini, Levi, Lorenz, Sauer, Boesel, et~al.]{esser2024sd3}
Patrick Esser, Sumith Kulal, Andreas Blattmann, Rahim Entezari, Jonas M{\"u}ller, Harry Saini, Yam Levi, Dominik Lorenz, Axel Sauer, Frederic Boesel, et~al.
\newblock Scaling rectified flow transformers for high-resolution image synthesis.
\newblock In \emph{Forty-first international conference on machine learning}, 2024.

\bibitem[Lin et~al.(2014)Lin, Maire, Belongie, Hays, Perona, Ramanan, Doll{\'a}r, and Zitnick]{lin2014coco}
Tsung-Yi Lin, Michael Maire, Serge Belongie, James Hays, Pietro Perona, Deva Ramanan, Piotr Doll{\'a}r, and C.~Lawrence Zitnick.
\newblock Microsoft {COCO}: Common objects in context.
\newblock In \emph{European Conference on Computer Vision}, pages 740--755, 2014.

\bibitem[{Stability AI}(2024)]{stabilityai2024sd35}
{Stability AI}.
\newblock Stable diffusion 3.5 medium model card.
\newblock \url{https://huggingface.co/stabilityai/stable-diffusion-3.5-medium}, 2024.

\bibitem[Radford et~al.(2021)Radford, Kim, Hallacy, Ramesh, Goh, Agarwal, Sastry, Askell, Mishkin, Clark, Krueger, and Sutskever]{radford2021clip}
Alec Radford, Jong~Wook Kim, Chris Hallacy, Aditya Ramesh, Gabriel Goh, Sandhini Agarwal, Girish Sastry, Amanda Askell, Pamela Mishkin, Jack Clark, Gretchen Krueger, and Ilya Sutskever.
\newblock Learning transferable visual models from natural language supervision.
\newblock In \emph{International Conference on Machine Learning}, pages 8748--8763. PmLR, 2021.

\bibitem[Xu et~al.(2023)Xu, Liu, Wu, Tong, Li, Ding, Tang, and Dong]{xu2023imagereward}
Jiazheng Xu, Xiao Liu, Yuchen Wu, Yuxuan Tong, Qinkai Li, Ming Ding, Jie Tang, and Yuxiao Dong.
\newblock {ImageReward}: Learning and evaluating human preferences for text-to-image generation.
\newblock In \emph{Advances in Neural Information Processing Systems}, volume~36, 2023.
\newblock \doi{10.52202/075280-0700}.

\bibitem[Detlefsen et~al.(2022)Detlefsen, Borovec, Schock, Jha, Koker, Di~Liello, Stancl, Quan, Grechkin, and Falcon]{detlefsen2022torchmetrics}
Nicki~Skafte Detlefsen, Jiri Borovec, Justus Schock, Ananya~Harsh Jha, Teddy Koker, Luca Di~Liello, Daniel Stancl, Changsheng Quan, Maxim Grechkin, and William Falcon.
\newblock {TorchMetrics}: Measuring reproducibility in {PyTorch}.
\newblock \emph{Journal of Open Source Software}, 7\penalty0 (70):\penalty0 4101, 2022.
\newblock \doi{10.21105/joss.04101}.

\bibitem[Parmar et~al.(2022)Parmar, Zhang, and Zhu]{parmar2022cleanfid}
Gaurav Parmar, Richard Zhang, and Jun-Yan Zhu.
\newblock On aliased resizing and surprising subtleties in {GAN} evaluation.
\newblock In \emph{IEEE/CVF Conference on Computer Vision and Pattern Recognition}, pages 11400--11410, 2022.
\newblock \doi{10.1109/CVPR52688.2022.01112}.

\end{thebibliography}

\clearpage
\appendix

\section{Linear Flow-Matching Identities}
\label{app:linear-identities}

This appendix derives the linear-path identities used in
Section~\ref{sec:preliminaries}, Proposition~\ref{prop:posterior-force}, and the
potential representations in Section~\ref{sec:analysis} without invoking a
denoising approximation.

\begin{lemma}[Velocity and Tweedie identities]
\label{lem:linear-full}
For the path~\eqref{eq:path-main}, assume $p_t$ is positive and
differentiation under the integral is valid. Then, for $t\in(0,1)$,
\begin{align}
 v_t(x)&=\frac{\E[X_1\mid X_t=x]-x}{1-t},
 \label{eq:app-velocity}\\
 \nabla_x\log p_t(x)
 &=-\frac{x-t\E[X_1\mid X_t=x]}{(1-t)^2},
 \label{eq:app-tweedie}\\
 v_t(x)&=\frac{x}{t}+\frac{1-t}{t}\nabla_x\log p_t(x).
 \label{eq:app-score-velocity}
\end{align}
\end{lemma}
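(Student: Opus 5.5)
We derive all three identities from the conditional structure of the linear path. Fix $t\in(0,1)$. Conditionally on $X_1=x_1$, the path gives $X_t=(1-t)X_0+tx_1$ with $X_0\sim\mathcal N(0,I_d)$. Hence $X_t\mid X_1=x_1\sim\mathcal N(tx_1,(1-t)^2I_d)$, and the marginal density is
\[
p_t(x)=\int \varphi_t(x\mid x_1)\,p_1(\dd x_1),\qquad
\varphi_t(x\mid x_1)=(2\pi(1-t)^2)^{-d/2}\exp\!\Bigl(-\tfrac{\norm{x-tx_1}^2}{2(1-t)^2}\Bigr).
\]
The posterior of $X_1$ given $X_t=x$ is $\varphi_t(x\mid x_1)p_1(\dd x_1)/p_t(x)$. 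It is well defined because $p_t>0$.

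\textbf{Velocity identity.} We have $\dot X_t=X_1-X_0$. Using $X_0=(X_t-tX_1)/(1-t)$, this becomes
\[
\dot X_t=X_1-\frac{X_t-tX_1}{1-t}=\frac{X_1-X_t}{1-t}.
\]
Conditioning on $X_t=x$ gives $v_t(x)=\E[\dot X_t\mid X_t=x]=(\E[X_1\mid X_t=x]-x)/(1-t)$, which is \eqref{eq:app-velocity}. This step needs only linearity of conditional expectation and the fact that $X_t$ is $X_t$-measurable. No smoothness is required.

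\textbf{Tweedie identity.} We differentiate $p_t$ under the integral, which the statement permits. The Gaussian kernel satisfies $\nabla_x\varphi_t(x\mid x_1)=-\frac{x-tx_1}{(1-t)^2}\varphi_t(x\mid x_1)$. Therefore
\[
\nabla_x p_t(x)=-\int\frac{x-tx_1}{(1-t)^2}\varphi_t(x\mid x_1)\,p_1(\dd x_1).
\]
Dividing by $p_t(x)>0$ turns the integral into a posterior expectation, so
\[
\nabla_x\log p_t(x)=-\frac{x-t\E[X_1\mid X_t=x]}{(1-t)^2},
\]
which is \eqref{eq:app-tweedie}. The only genuinely analytic point in the lemma is the interchange of $\nabla_x$ and the integral, and it is assumed.

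\textbf{Score--velocity identity.} Solve \eqref{eq:app-tweedie} for the posterior mean:
\[
\E[X_1\mid X_t=x]=\frac{x+(1-t)^2\nabla_x\log p_t(x)}{t}.
\]
Substitute this into \eqref{eq:app-velocity}. The numerator becomes $x(1-t)/t+(1-t)^2\nabla\log p_t/t$, and dividing by $1-t$ yields \eqref{eq:app-score-velocity}.

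The main obstacle is purely bookkeeping: carrying the factors $t$ and $(1-t)$ correctly. Also, $t=0$ and $t=1$ must be excluded. At $t=0$ the division by $t$ is undefined, and at $t=1$ the Gaussian kernel degenerates. This is why the lemma restricts to $t\in(0,1)$.
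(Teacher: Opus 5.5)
Your proposal is correct and follows essentially the same route as the paper. Both arguments use the pathwise rewriting $\dot X_t=(X_1-X_t)/(1-t)$ conditioned on $X_t=x$, then differentiate the Gaussian-kernel mixture under the integral and divide by $p_t(x)$ for the Tweedie identity, and finally substitute the solved posterior mean into the velocity formula.
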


\begin{proof}
Fix a pair $(X_0,X_1)$ and differentiate its interpolation:
\[
 \dot X_t=\frac{\dd}{\dd t}\{(1-t)X_0+tX_1\}=X_1-X_0.
\]
For $t<1$, the path equation gives
\[
 X_0=\frac{X_t-tX_1}{1-t}.
\]
Substituting this expression into the paired velocity and collecting the
$X_1$ terms yields
\begin{align*}
 X_1-X_0
 &=X_1-\frac{X_t-tX_1}{1-t}\\
 &=\frac{(1-t)X_1-X_t+tX_1}{1-t}\\
 &=\frac{X_1-X_t}{1-t}.
\end{align*}
The exact marginal velocity is the conditional mean of the paired velocity.
Therefore, at $X_t=x$,
\begin{align*}
 v_t(x)
 &=\E[\dot X_t\mid X_t=x]\\
 &=\E\left[\frac{X_1-X_t}{1-t}\,\middle|\,X_t=x\right]\\
 &=\frac{\E[X_1\mid X_t=x]-x}{1-t},
\end{align*}
which proves~\eqref{eq:app-velocity}.

Conditioning instead on $X_1=y$ gives the Gaussian kernel
\[
 p_t(x\mid y)=\frac{1}{[2\pi(1-t)^2]^{d/2}}
 \exp\left[-\frac{\norm{x-ty}^2}{2(1-t)^2}\right].
\]
The kernel score is
\[
 \nabla_x\log p_t(x\mid y)=-\frac{x-ty}{(1-t)^2},
\]
and hence
\[
 \nabla_xp_t(x\mid y)
 =-p_t(x\mid y)\frac{x-ty}{(1-t)^2}.
\]
Since $p_t(x)=\int p_1(y)p_t(x\mid y)\dd y$, differentiation under the integral
and Bayes' rule give
\begin{align*}
 \nabla_xp_t(x)
 &=\int p_1(y)p_t(x\mid y)
   \left[-\frac{x-ty}{(1-t)^2}\right]\dd y\\
 &=-\frac{1}{(1-t)^2}
   \left[xp_t(x)-t\int yp_1(y)p_t(x\mid y)\dd y\right]\\
 &=-\frac{p_t(x)}{(1-t)^2}
   \left[x-t\int y\frac{p_1(y)p_t(x\mid y)}{p_t(x)}\dd y\right]\\
 &=-\frac{p_t(x)}{(1-t)^2}
   \left(x-t\E[X_1\mid X_t=x]\right).
\end{align*}
Dividing by $p_t(x)>0$ proves~\eqref{eq:app-tweedie}. Writing
$m_t(x)=\E[X_1\mid X_t=x]$, that identity is equivalent to
\begin{align*}
 (1-t)^2\nabla_x\log p_t(x)&=-x+tm_t(x),\\
 m_t(x)&=\frac{x}{t}+\frac{(1-t)^2}{t}\nabla_x\log p_t(x).
\end{align*}
Substitution into~\eqref{eq:app-velocity} gives
\begin{align*}
 v_t(x)
 &=\frac{1}{1-t}\left[
 \frac{x}{t}+\frac{(1-t)^2}{t}\nabla_x\log p_t(x)-x\right]\\
 &=\frac{1}{1-t}\left[
 \frac{1-t}{t}x+\frac{(1-t)^2}{t}\nabla_x\log p_t(x)\right]\\
 &=\frac{x}{t}+\frac{1-t}{t}\nabla_x\log p_t(x),
\end{align*}
which is~\eqref{eq:app-score-velocity}.
\end{proof}

\begin{proof}[Proof of Proposition~\ref{prop:posterior-force}]
Equation~\eqref{eq:velocity-main} follows from
Lemma~\ref{lem:linear-full}. Apply
\eqref{eq:app-score-velocity} separately to
$p_t^{\mathrm c}(x)=p_t(x\mid c)$ and
$p_t^{\mathrm u}(x)=p_t(x)$. This gives
\begin{align*}
 \vc(x)&=\frac{x}{t}+\frac{1-t}{t}\nabla_x\log\pc(x),\\
 \vu(x)&=\frac{x}{t}+\frac{1-t}{t}\nabla_x\log\pu(x).
\end{align*}
Subtracting cancels the common $x/t$ term:
\begin{align*}
 \vc(x)-\vu(x)
 &=\frac{1-t}{t}
 \left[\nabla_x\log\pc(x)-\nabla_x\log\pu(x)\right]\\
 &=\frac{1-t}{t}\nabla_x\log\frac{\pc(x)}{\pu(x)}.
\end{align*}
Bayes' rule gives
\[
 p_t(x\mid c)=\frac{p(c\mid X_t=x)p_t(x)}{p(c)},
 \qquad
 \frac{\pc(x)}{\pu(x)}=\frac{p(c\mid X_t=x)}{p(c)}.
\]
Because $p(c)$ is independent of $x$,
\[
 \nabla_x\log\frac{\pc(x)}{\pu(x)}
 =\nabla_x\log p(c\mid X_t=x),
\]
which proves~\eqref{eq:force-main}.

Finally, the posterior-mean identity in the preliminaries follows directly
from the CFG convention
$v_t^\gamma=\vu+\gamma(\vc-\vu)$. Then
\begin{align*}
 m_t^\gamma
 &=x+(1-t)[\vu+\gamma(\vc-\vu)]\\
 &=x+(1-t)\vu+\gamma(1-t)(\vc-\vu)\\
 &=m_t^{\mathrm u}+\gamma(m_t^{\mathrm c}-m_t^{\mathrm u})\\
 &=(1-\gamma)m_t^{\mathrm u}+\gamma m_t^{\mathrm c}\\
 &=m_t^{\mathrm c}+(\gamma-1)(m_t^{\mathrm c}-m_t^{\mathrm u}),
\end{align*}
proving~\eqref{eq:mean-main}.
\end{proof}

\paragraph{Posterior growth along a guided trajectory.}
Let $x_t$ solve $\dot x_t=v_t^\gamma(x_t)$. The chain rule gives
\[
 \frac{\dd}{\dd t}h_t(x_t)
 =\partial_th_t(x_t)+\nabla_xh_t(x_t)^\top v_t^\gamma(x_t).
\]
Using $v_t^\gamma=v_t^{\mathrm c}+(\gamma-1)
\bigl(v_t^{\mathrm c}-v_t^{\mathrm u}\bigr)$ and
then~\eqref{eq:force-main},
\begin{align*}
 \frac{\dd}{\dd t}h_t(x_t)
 &=\partial_th_t(x_t)+\nabla_xh_t(x_t)^\top v_t^{\mathrm c}(x_t)\\
 &\quad+(\gamma-1)\nabla_xh_t(x_t)^\top
   (v_t^{\mathrm c}(x_t)-v_t^{\mathrm u}(x_t))\\
 &=\partial_th_t(x_t)+\nabla_xh_t(x_t)^\top v_t^{\mathrm c}(x_t)\\
 &\quad+(\gamma-1)\frac{1-t}{t}
   \nabla_xh_t(x_t)^\top\nabla_xh_t(x_t)\\
 &=\partial_th_t(x_t)+\nabla_xh_t(x_t)^\top v_t^{\mathrm c}(x_t)
 +(\gamma-1)\frac{1-t}{t}\|\nabla_xh_t(x_t)\|^2.
\end{align*}
This is~\eqref{eq:posterior-growth-main}. Only the last term has a fixed sign
for $\gamma\geq1$; the first two terms remain unrestricted.

\begin{proof}[Proof of Proposition~\ref{prop:potential-main}]
Define
\[
 \Phi_t(x)=\frac{\norm{x}^2}{2t}+\frac{1-t}{t}\log p_t(x).
\]
Spatial differentiation gives
\begin{align*}
 \nabla_x\Phi_t(x)
 &=\nabla_x\frac{\norm{x}^2}{2t}
   +\frac{1-t}{t}\nabla_x\log p_t(x)\\
 &=\frac{x}{t}+\frac{1-t}{t}\nabla_x\log p_t(x)\\
 &=v_t(x)
\end{align*}
by~\eqref{eq:app-score-velocity}. With $U_t=-\Phi_t$,
$\nabla_xU_t=-\nabla_x\Phi_t=-v_t$, and therefore
$\dot x_t=v_t(x_t)=-\nabla_xU_t(x_t)$. The chain rule along the trajectory
yields
\[
\begin{aligned}
 \frac{\dd}{\dd t}U_t(x_t)
 &=\partial_tU_t(x_t)
   +\nabla_xU_t(x_t)^\top\dot x_t\\
 &=\partial_tU_t(x_t)-\norm{\nabla_xU_t(x_t)}^2,
\end{aligned}
\]
which proves~\eqref{eq:energy-balance-main}.
\end{proof}

\paragraph{CFG potential and instantaneous power tilting.}
Linearity of the spatial gradient gives
\[
 v_t^\gamma=(1-\gamma)\nabla\Phi_t^{\mathrm u}
 +\gamma\nabla\Phi_t^{\mathrm c}
 =\nabla[(1-\gamma)\Phi_t^{\mathrm u}+\gamma\Phi_t^{\mathrm c}].
\]
Bayes' rule implies
$\log\pc-\log\pu=h_t-\log p(c)$. To make the substitution explicit,
\begin{align*}
 \Phi_t^\gamma
 &=(1-\gamma)\Phi_t^{\mathrm u}+\gamma\Phi_t^{\mathrm c}\\
 &=\Phi_t^{\mathrm c}
   +(\gamma-1)(\Phi_t^{\mathrm c}-\Phi_t^{\mathrm u})\\
 &=\Phi_t^{\mathrm c}
   +(\gamma-1)\frac{1-t}{t}
     (\log\pc-\log\pu)\\
 &=\Phi_t^{\mathrm c}
   +(\gamma-1)\frac{1-t}{t}h_t
   -(\gamma-1)\frac{1-t}{t}\log p(c).
\end{align*}
The last term depends on $t$ and $c$ but not on $x$, so it is absorbed into
$K_t$:
\[
 K_t=-(\gamma-1)\frac{1-t}{t}\log p(c).
\]
Thus
\[
 \Phi_t^\gamma
 =\Phi_t^{\mathrm c}+(\gamma-1)\frac{1-t}{t}h_t+K_t,
\]
proving~\eqref{eq:cfg-potential-main}. Equivalently, the spatial score is that
of
\begin{align*}
 \widetilde p_{t,\gamma}(x)
 &\propto\exp\{(1-\gamma)\log\pu(x)+\gamma\log\pc(x)\}\\
 &=\pu(x)^{1-\gamma}\pc(x)^\gamma\\
 &\propto\pc(x)p(c\mid X_t=x)^{\gamma-1}.
\end{align*}
This equality is pointwise in $t$. It does not assert that the density
transported by the non-autonomous ODE equals a fixed power tilt at the
endpoint.

\section{Posterior Curvature and Volume Transport}
\label{app:curvature}

\begin{proof}[Proof of Proposition~\ref{prop:curvature-main}]
Let $q_t(y\mid x)$ denote the posterior density of $X_1=y$ given $X_t=x$.
Bayes' rule writes it as
\[
 q_t(y\mid x)=
 \frac{p_1(y)(2\pi(1-t)^2)^{-d/2}
 \exp[-\|x-ty\|^2/(2(1-t)^2)]}{p_t(x)}.
\]
Taking the spatial log-gradient gives
\begin{align*}
 \nabla_x\log q_t(y\mid x)
 &=-\frac{x-ty}{(1-t)^2}-\nabla_x\log p_t(x)\\
 &=-\frac{x-ty}{(1-t)^2}
   +\frac{x-tm_t(x)}{(1-t)^2}\\
 &=\frac{t}{(1-t)^2}(y-m_t(x)),
\end{align*}
where the second line uses~\eqref{eq:app-tweedie}. Thus
\begin{equation}
 \nabla_x\log q_t(y\mid x)
 =\frac{t}{(1-t)^2}\bigl(y-m_t(x)\bigr).
 \label{eq:app-posterior-score}
\end{equation}
For coordinates $i,j$, differentiate the posterior expectation under the
integral sign and use
$\partial_{x_j}q_t=q_t\partial_{x_j}\log q_t$:
\begin{align*}
 \frac{\partial m_{t,i}}{\partial x_j}
 &=\int y_i\frac{\partial q_t(y\mid x)}{\partial x_j}\dd y\\
 &=\frac{t}{(1-t)^2}
 \left(\E[X_{1,i}X_{1,j}\mid X_t=x]
 -m_{t,j}\E[X_{1,i}\mid X_t=x]\right)\\
 &=\frac{t}{(1-t)^2}
 \left(\E[X_{1,i}X_{1,j}\mid X_t=x]-m_{t,i}m_{t,j}\right)\\
 &=\frac{t}{(1-t)^2}\Cov(X_1\mid X_t=x)_{ij}.
\end{align*}
This proves~\eqref{eq:mean-jac-main}. Since $t$ is fixed under the spatial
derivative, differentiating $v_t=(m_t-x)/(1-t)$ gives
\[
 \nabla_xv_t
 =\frac{t}{(1-t)^3}\Cov(X_1\mid X_t=x)
 -\frac{1}{1-t}I_d.
\]
Writing the conditional and unconditional posterior covariances as
$\Sigma_t^{\mathrm c}(x)$ and $\Sigma_t^{\mathrm u}(x)$, respectively,
\begin{align*}
 \nabla_x(v_t^{\mathrm c}-v_t^{\mathrm u})
 &=\left[\frac{t}{(1-t)^3}\Sigma_t^{\mathrm c}
 -\frac1{1-t}I_d\right]
 -\left[\frac{t}{(1-t)^3}\Sigma_t^{\mathrm u}
 -\frac1{1-t}I_d\right]\\
 &=\frac{t}{(1-t)^3}
 (\Sigma_t^{\mathrm c}-\Sigma_t^{\mathrm u}).
\end{align*}
Because $v_t^\gamma=v_t^{\mathrm c}+
(\gamma-1)(v_t^{\mathrm c}-v_t^{\mathrm u})$, multiplication by
$\gamma-1$ proves the covariance expression~\eqref{eq:extra-jac-main}.
Alternatively, differentiating~\eqref{eq:force-main} in space gives
\begin{align*}
 H_t^\gamma-H_t^{\mathrm c}
 &=(\gamma-1)\nabla_x(v_t^{\mathrm c}-v_t^{\mathrm u})\\
 &=(\gamma-1)\frac{1-t}{t}\nabla_x^2h_t,
\end{align*}
which is~\eqref{eq:hessian-shift-main} and equals the covariance expression
above. Taking traces also gives the
corresponding extra divergence explicitly:
\[
 \nabla\!\cdot(v_t^\gamma-v_t^{\mathrm c})
 =(\gamma-1)\frac{t}{(1-t)^3}
 \left(\tr\Sigma_t^{\mathrm c}-\tr\Sigma_t^{\mathrm u}\right).
\]

Using the flow-map notation above, differentiate
$\partial_tX^\gamma(t,s,x_s)=v_t^\gamma(X^\gamma(t,s,x_s))$ with respect to
$x_s$. The chain rule gives
\[
 \dot J_t=(\nabla_xv_t^\gamma)(x_t)J_t,
 \qquad x_t=X^\gamma(t,s,x_s).
\]
For an invertible $J_t$, Jacobi's determinant formula and cyclicity of trace
yield
\[
 \frac{\dd}{\dd t}\log|\det J_t|
 =\tr(J_t^{-1}\dot J_t)
 =\tr\!\left(J_t^{-1}(\nabla_xv_t^\gamma)J_t\right)
 =\tr\!\left((\nabla_xv_t^\gamma)J_tJ_t^{-1}\right)
 =\tr(\nabla_xv_t^\gamma)
 =\nabla\!\cdot v_t^\gamma,
\]
where the vector field is evaluated at $x_t$. Integrating from $s$ to $t$ and
using $\log|\det J_s|=0$ gives
\[
 \log|\det J_t|=\int_s^t
 \nabla\!\cdot v_\tau^\gamma(x_\tau)\,\dd\tau,
\]
which proves~\eqref{eq:liouville-main}. Likewise, integrating
$\dot x_\tau=v_\tau^\gamma(x_\tau)$ gives the accumulated displacement
\[
 x_t-x_s=\int_s^t v_\tau^\gamma(x_\tau)\,\dd\tau.
\]
\end{proof}

\paragraph{Exact GMM Jacobian.}
Differentiating the responsibility softmax component by component gives
\[
 \nabla_x\pi_k
 =\pi_k\left(\frac{t}{C_t}\mu_k
 -\sum_j\pi_j\frac{t}{C_t}\mu_j\right)
 =\frac{t}{C_t}\pi_k(\mu_k-\bar\mu_t).
\]
Consequently, with
$\Sigma_\pi=\sum_k\pi_k(\mu_k-\bar\mu_t)(\mu_k-\bar\mu_t)^\top$,
\begin{align*}
 \nabla_x\bar\mu_t
 &=\sum_k\mu_k(\nabla_x\pi_k)^\top\\
 &=\frac{t}{C_t}
 \left(\sum_k\pi_k\mu_k\mu_k^\top
 -\bar\mu_t\bar\mu_t^\top\right)
 =\frac{t}{C_t}\Sigma_\pi.
\end{align*}
Now differentiate both terms of~\eqref{eq:app-gmm-velocity}:
\begin{align*}
 \nabla_xv_t
 &=\frac{1-t}{C_t}\nabla_x\bar\mu_t
 +\frac{t\sigma^2-(1-t)}{C_t}I_2\\
 &=\frac{t(1-t)}{C_t^2}\Sigma_\pi
 +\frac{t\sigma^2-(1-t)}{C_t}I_2.
\end{align*}
This makes the covariance mechanism in Proposition~\ref{prop:curvature-main}
fully explicit for the circular mixture.

\section{Circular-GMM Derivations}
\label{app:gmm}

Under component $k$, write $X_1=\mu_k+\sigma\varepsilon$ with
$\varepsilon\sim\mathcal N(0,I_2)$ independent of
$X_0\sim\mathcal N(0,I_2)$. The linear path becomes
\[
 X_t=(1-t)X_0+t\mu_k+t\sigma\varepsilon.
\]
Its conditional mean and covariance are
\begin{align*}
 \E[X_t\mid k]&=t\mu_k,\\
 \Cov(X_t\mid k)
 &=t^2\sigma^2I_2+(1-t)^2I_2
 =C_tI_2,
\end{align*}
where
$C_t=(1-t)^2+t^2\sigma^2$. Hence
\[
 X_t\mid k\sim\mathcal N(t\mu_k,C_tI_2).
\]
Bayes' rule gives the component responsibility
\[
 \pi_k(x,t)=p(k\mid X_t=x)
 =\frac{w_kp_t(x\mid k)}{\sum_jw_jp_t(x\mid j)}.
\]
The Gaussian normalizing factors are identical across components and cancel,
which yields
\begin{equation}
 \pi_k(x,t)=
 \frac{w_k\exp[-\norm{x-t\mu_k}^2/(2C_t)]}
 {\sum_jw_j\exp[-\norm{x-t\mu_j}^2/(2C_t)]}.
 \label{eq:app-responsibility}
\end{equation}
To obtain the softmax form, expand
\begin{align*}
 -\frac{\norm{x-t\mu_k}^2}{2C_t}
 &=-\frac{\norm{x}^2}{2C_t}
   +\frac{t}{C_t}x^\top\mu_k
   -\frac{t^2\norm{\mu_k}^2}{2C_t}.
\end{align*}
All centers satisfy $\norm{\mu_k}=r$, so the first and third terms do not
depend on $k$ and cancel between numerator and denominator. Therefore
\[
 \pi_k(x,t)=\softmax_k\left(
 \log w_k+\frac{t}{C_t}x^\top\mu_k\right).
\]

\subsection{Posterior means and exact velocity}
\label{app:gmm-gap-evolution}

Define
\[
 \alpha_t=\frac{(1-t)^2}{C_t},\qquad
 \rho_t=\frac{t\sigma^2}{C_t},\qquad 0\leq t<1.
\]
Within component $k$, the jointly Gaussian pair $(X_1,X_t)$ has moments
\begin{align*}
 \E[X_1\mid k]&=\mu_k,
 &\E[X_t\mid k]&=t\mu_k,\\
 \Cov(X_1\mid k)&=\sigma^2I_2,
 &\Cov(X_t\mid k)&=C_tI_2,\\
 \Cov(X_1,X_t\mid k)
 &=\Cov(X_1,(1-t)X_0+tX_1\mid k)
 =t\sigma^2I_2.
\end{align*}
The Gaussian conditioning formula therefore gives
\begin{align}
 m_{t,k}(x)
 &=\E[X_1\mid X_t=x,k]\notag\\
 &=\mu_k+t\sigma^2I_2(C_tI_2)^{-1}(x-t\mu_k)\notag\\
 &=\mu_k+\frac{t\sigma^2}{C_t}(x-t\mu_k)\notag\\
 &=\left(1-\frac{t^2\sigma^2}{C_t}\right)\mu_k
   +\frac{t\sigma^2}{C_t}x\notag\\
 &=\frac{C_t-t^2\sigma^2}{C_t}\mu_k
   +\frac{t\sigma^2}{C_t}x\notag\\
 &=\frac{(1-t)^2}{C_t}\mu_k+
   \frac{t\sigma^2}{C_t}x.
 \label{eq:app-component-mean}
\end{align}
For a nonempty component subset $A$, define
\begin{align}
 \pi_{k\mid A}(x,t)
 &=\frac{\pi_k(x,t)}{\sum_{j\in A}\pi_j(x,t)},\qquad k\in A,\notag\\
 \bar\mu_t(x)&=\sum_k\pi_k(x,t)\mu_k,\notag\\
 \bar\mu_A(x)&=\sum_{k\in A}\pi_{k\mid A}(x,t)\mu_k.
 \label{eq:app-conditional-responsibility}
\end{align}
Averaging~\eqref{eq:app-component-mean} over all components or over $A$ gives
\begin{align}
 m_t^{\mathrm u}(x)
 &=\frac{(1-t)^2}{C_t}\bar\mu_t(x)
   +\frac{t\sigma^2}{C_t}x,
 \label{eq:app-gmm-mean}\\
 m_t^{\mathrm c}(x)
 &=\frac{(1-t)^2}{C_t}\bar\mu_A(x)
   +\frac{t\sigma^2}{C_t}x.
 \label{eq:app-gmm-conditional-mean}
\end{align}
The shared state term cancels, yielding
\begin{equation}
 \Delta_t(x)=m_t^{\mathrm c}(x)-m_t^{\mathrm u}(x)
 =\alpha_t
 \bigl(\bar\mu_A(x)-\bar\mu_t(x)\bigr).
 \label{eq:app-gmm-gap}
\end{equation}

\begin{proposition}[GMM gap decay and cap release]
\label{prop:gmm-gap-app}
For every $x$ and $0\leq t<1$,
\begin{equation}
 \norm{\Delta_t(x)}
 \leq 2r\alpha_t
 =\frac{2r}{1+\sigma^2[t/(1-t)]^2}
 \xrightarrow[t\uparrow1]{}0,
 \label{eq:gmm-gap-envelope-app}
\end{equation}
and the envelope $2r\alpha_t$ decreases strictly on $(0,1)$. If
$x_t\to x_1$, then $m_t^{\mathrm u}(x_t),m_t^{\mathrm c}(x_t)\to x_1$.
The nominal PMC-CFG scale is feasible whenever
\begin{equation}
 2r(\lambda-1)\alpha_t
 \leq(\Gamma-1)\norm{m_t^{\mathrm c}(x)}.
 \label{eq:gmm-cap-release-app}
\end{equation}
Consequently, PMC-CFG recovers $\gamma_t^{\mathrm{eff}}=\lambda$ sufficiently
near every nonzero GMM endpoint.
\end{proposition}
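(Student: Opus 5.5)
The plan is to work directly from the gap formula \eqref{eq:app-gmm-gap}, $\Delta_t(x)=\alpha_t(\bar\mu_A(x)-\bar\mu_t(x))$. Both $\bar\mu_A(x)$ and $\bar\mu_t(x)$ are convex combinations of the centers $\mu_k$, and each center has norm $r$. Each average therefore lies in the closed ball of radius $r$, so the triangle inequality gives $\norm{\bar\mu_A-\bar\mu_t}\leq 2r$ and hence $\norm{\Delta_t}\leq 2r\alpha_t$. For the closed form, I will divide the numerator and denominator of $\alpha_t=(1-t)^2/((1-t)^2+t^2\sigma^2)$ by $(1-t)^2$. This gives $\alpha_t=1/(1+\sigma^2[t/(1-t)]^2)$. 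The map $t\mapsto t/(1-t)$ is strictly increasing and positive on $(0,1)$ and tends to $\infty$ as $t\uparrow1$. Since $\sigma>0$, the envelope is strictly decreasing and tends to $0$. The statement implicitly assumes $\sigma>0$; I will note this, since the density must be positive.

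For the endpoint convergence, I will use \eqref{eq:app-gmm-mean}--\eqref{eq:app-gmm-conditional-mean}. The term $\alpha_t\bar\mu(\cdot)$ is bounded by $r\alpha_t\to0$. The coefficient of $x$ is $\rho_t=t\sigma^2/C_t$, and as $t\uparrow1$ we have $C_t\to\sigma^2$, so $\rho_t\to1$. Therefore, if $x_t\to x_1$, then $\rho_t x_t\to x_1$, and both posterior means converge to $x_1$.

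For feasibility of the nominal scale, the argument is a triangle inequality: $\norm{m_t^{\mathrm c}+(\lambda-1)\Delta_t}\leq\norm{m_t^{\mathrm c}}+(\lambda-1)\norm{\Delta_t}\leq\norm{m_t^{\mathrm c}}+2r(\lambda-1)\alpha_t$. Under \eqref{eq:gmm-cap-release-app}, this is at most $\Gamma\norm{m_t^{\mathrm c}}$, so $\beta=\lambda-1$ is feasible in \eqref{eq:program-main}. Theorem~\ref{thm:maximal-main} then gives $\beta_t^*=\lambda-1$, which means the effective scale $\gamma_t^{\mathrm{eff}}=1+\beta_t^*=\lambda$.

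For the final consequence, take $x_1\neq0$ and $x_t\to x_1$. By the previous step, $\norm{m_t^{\mathrm c}(x_t)}\to\norm{x_1}>0$, while the left side of \eqref{eq:gmm-cap-release-app} tends to $0$. If $\Gamma>1$, the right side stays bounded below by roughly $(\Gamma-1)\norm{x_1}/2$ for $t$ close to $1$, so the inequality holds eventually. The case $\Gamma=1$ is the main subtlety. There the sufficient condition requires $\Delta_t=0$ and does not give release, so I will state the consequence for $\Gamma>1$; with $\Gamma=1$, release holds only where the gap vanishes. The remaining steps are routine.
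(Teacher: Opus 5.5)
Your proposal is correct and follows the paper's proof essentially step for step: the convex-hull bound plus the triangle inequality for $\norm{\Delta_t}$, the closed form and strict monotonicity of $\alpha_t$, $\rho_t\to1$ for the endpoint limits, and the triangle-inequality feasibility check combined with Theorem~\ref{thm:maximal-main}. Your caveats are genuine refinements, not gaps. First, $\sigma>0$ is needed both for strict decrease of $\alpha_t$ and for $\rho_t\to1$. Second, the paper's final step tacitly needs $\Gamma>1$: for $\Gamma=1<\lambda$ the right side of \eqref{eq:gmm-cap-release-app} vanishes while the left side is positive for every $t<1$. Indeed, for a singleton condition near $\mu_0$ one gets $\inner{m_t^{\mathrm c}}{\Delta_t}>0$, so the cap stays active with $\beta^{\mathrm{cap}}=0$ and the release claim actually fails.
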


\begin{proof}
Both weighted centers lie in the convex hull of $\{\mu_k\}$, so
$\norm{\bar\mu_t}\leq r$ and $\norm{\bar\mu_A}\leq r$. The triangle inequality
therefore gives
\[
 \norm{\Delta_t(x)}
 \leq\alpha_t\bigl(\norm{\bar\mu_A(x)}+\norm{\bar\mu_t(x)}\bigr)
 \leq2r\alpha_t.
\]
For $0\leq t<1$,
\[
 \frac{(1-t)^2}{C_t}
 =\left[1+\sigma^2\left(\frac{t}{1-t}\right)^2\right]^{-1},
\qquad
 \frac{\dd\alpha_t}{\dd t}
 =-\frac{2\sigma^2t(1-t)}{C_t^2}<0
 \quad(0<t<1),
\]
so $\alpha_t$ decreases from $1$ to $0$. Since
$\rho_t=t\sigma^2/C_t\to1$, equations~\eqref{eq:app-gmm-mean}
and~\eqref{eq:app-gmm-conditional-mean} also give
$m_t^{\mathrm u}(x_t),m_t^{\mathrm c}(x_t)\to x_1$ whenever $x_t\to x_1$.
Finally, under~\eqref{eq:gmm-cap-release-app},
\begin{align*}
 \norm{m_t^{\mathrm c}+(\lambda-1)\Delta_t}
 &\leq\norm{m_t^{\mathrm c}}+(\lambda-1)\norm{\Delta_t}\\
 &\leq\Gamma\norm{m_t^{\mathrm c}},
\end{align*}
so the nominal scale is feasible. Along a trajectory converging to
$x_1\ne0$, $\alpha_t\to0$ and $m_t^{\mathrm c}\to x_1$; therefore
\eqref{eq:gmm-cap-release-app} holds for all sufficiently large $t$.
\end{proof}

Writing \(\bar\mu\) for the weighted center of either field,
using~\eqref{eq:app-velocity} gives the corresponding exact velocity
\begin{align}
 v_t(x)
 &=\frac{m_t(x)-x}{1-t}\notag\\
 &=\frac{1-t}{C_t}\bar\mu(x)
   +\frac{t\sigma^2-C_t}{(1-t)C_t}x\notag\\
 &=\frac{1-t}{C_t}\bar\mu(x)
   +\frac{t\sigma^2-(1-t)}{C_t}x,
 \label{eq:app-gmm-velocity}
\end{align}
where the last equality uses
\begin{align*}
 t\sigma^2-C_t
 &=t\sigma^2-(1-t)^2-t^2\sigma^2\\
 &=(1-t)[t\sigma^2-(1-t)].
\end{align*}
For the singleton condition $A=\{0\}$, $\bar\mu_A=\mu_0$. At $t=0$,
$X_t=X_0$ is independent of $X_1$, and a centered mixture therefore satisfies
\begin{align*}
 m_0^{\mathrm u}(x)&=\E[X_1]=\sum_kw_k\mu_k=0,\\
 m_0^{\mathrm c}(x)&=\E[X_1\mid c=0]=\mu_0.
\end{align*}
Substitution into~\eqref{eq:mean-main} gives
\begin{align*}
 m_0^\gamma
 &=\mu_0+(\gamma-1)(\mu_0-0)
 =\gamma\mu_0,
\end{align*}
which proves~\eqref{eq:initial-displacement-main}.

\subsection{Force and curvature}

Using the responsibilities from~\eqref{eq:app-conditional-responsibility}, define
\[
 \Sigma_A=\sum_{k\in A}\pi_{k\mid A}
 (\mu_k-\bar\mu_A)(\mu_k-\bar\mu_A)^\top.
\]
We first differentiate the unconditional responsibilities. Writing
$\eta_k=\log w_k+(t/C_t)x^\top\mu_k$ and
$\pi_k=e^{\eta_k}/\sum_je^{\eta_j}$, the quotient rule gives
\begin{align*}
 \nabla_x\pi_k
 &=\pi_k\left(\nabla_x\eta_k-
       \sum_j\pi_j\nabla_x\eta_j\right)\\
 &=\frac{t}{C_t}\pi_k
       \left(\mu_k-\sum_j\pi_j\mu_j\right)\\
 &=\frac{t}{C_t}\pi_k(\mu_k-\bar\mu_t).
\end{align*}
Consequently,
\begin{align*}
 \nabla_x\bar\mu_t
 &=\sum_k\mu_k(\nabla_x\pi_k)^\top\\
 &=\frac{t}{C_t}\sum_k\pi_k
      \mu_k(\mu_k-\bar\mu_t)^\top\\
 &=\frac{t}{C_t}\left(
      \sum_k\pi_k\mu_k\mu_k^\top-
      \bar\mu_t\bar\mu_t^\top\right)
 =\frac{t}{C_t}\Sigma_\pi.
\end{align*}
The same calculation after renormalizing over $A$ gives
$\nabla_x\bar\mu_A=(t/C_t)\Sigma_A$.

For completeness, the condition posterior is a ratio of two mixture sums:
\[
 h_t^A(x)=\log p(A\mid x)
 =\log\sum_{k\in A}w_ke^{-\|x-t\mu_k\|^2/(2C_t)}
 -\log\sum_jw_je^{-\|x-t\mu_j\|^2/(2C_t)}.
\]
Differentiating the first log-sum gives
$-(x-t\bar\mu_A)/C_t$, and differentiating the second gives
$-(x-t\bar\mu_t)/C_t$. Their common $-x/C_t$ terms cancel. Therefore
\begin{align}
 \nabla_x h_t^A(x)
 &=-\frac{x-t\bar\mu_A}{C_t}
   +\frac{x-t\bar\mu_t}{C_t}
 =\frac{t}{C_t}(\bar\mu_A-\bar\mu_t),
 \label{eq:app-gmm-h-gradient}\\
 \nabla_x^2h_t^A(x)
 &=\frac{t}{C_t}
   (\nabla_x\bar\mu_A-\nabla_x\bar\mu_t)
 =\frac{t^2}{C_t^2}(\Sigma_A-\Sigma_\pi).
 \label{eq:app-gmm-h-hessian}
\end{align}
By~\eqref{eq:cfg-potential-main}, the extra CFG force is
\begin{align*}
 v_t^\gamma-v_t^{\mathrm c}
 &=(\gamma-1)\frac{1-t}{t}\nabla_xh_t^A
 =(\gamma-1)\frac{1-t}{C_t}
   (\bar\mu_A-\bar\mu_t),
\end{align*}
and one more derivative gives
\[
 \nabla_x(v_t^\gamma-v_t^{\mathrm c})
 =(\gamma-1)\frac{t(1-t)}{C_t^2}
 (\Sigma_A-\Sigma_\pi).
\]
This proves the general GMM force and curvature corrections.

For the eight equally weighted centers, $\mu_k=ru_k$ with angles
$\theta_k=2\pi k/8$. At $x=0$, all logits are equal, so $\pi_k=1/8$ and
\begin{align*}
 \bar\mu_t(0)&=\frac r8\sum_{k=0}^7
   (\cos\theta_k,\sin\theta_k)^\top=0,\\
 \Sigma_\pi(0)&=\frac{r^2}{8}\sum_{k=0}^7u_ku_k^\top
 =\frac{r^2}{8}
 \begin{pmatrix}
  \sum_k\cos^2\theta_k&\sum_k\sin\theta_k\cos\theta_k\\
  \sum_k\sin\theta_k\cos\theta_k&\sum_k\sin^2\theta_k
 \end{pmatrix}
 =\frac{r^2}{2}I_2,
\end{align*}
where the three trigonometric sums are $4,0,4$. For the singleton condition
$A=\{0\}$, $\bar\mu_A=\mu_0$ and $\Sigma_A=0$. Direct substitution yields
\begin{align*}
 (v_t^\gamma-v_t^{\mathrm c})(0)
 &=(\gamma-1)\frac{1-t}{C_t}\mu_0,\\
 \nabla_x(v_t^\gamma-v_t^{\mathrm c})(0)
 &=(\gamma-1)\frac{t(1-t)}{C_t^2}
   \left(0-\frac{r^2}{2}I_2\right),
\end{align*}
which proves~\eqref{eq:gmm-center-force-main} and gives
\begin{equation}
 \nabla_x(v_t^\gamma-v_t^{\mathrm c})(0)
 =-(\gamma-1)\frac{t(1-t)r^2}{2C_t^2}I_2.
 \label{eq:gmm-center-curvature-main}
\end{equation}
For the pure conditional field, substituting $\bar\mu_A=\mu_0$ into
\eqref{eq:app-gmm-velocity} gives
\[
 v_t^{\mathrm c}(x)=\frac{1-t}{C_t}\mu_0+\alpha_tx,
 \qquad
 \alpha_t=\frac{t\sigma^2-(1-t)}{C_t}
 =\frac{t(1+\sigma^2)-1}{C_t}.
\]
The extra force has value
$(\gamma-1)(1-t)\mu_0/C_t$ and first-order Jacobian
from~\eqref{eq:gmm-center-curvature-main}. Hence its Taylor expansion at the
origin is
\[
 v_t^\gamma(x)=v_t^\gamma(0)+\nabla_xv_t^\gamma(0)x+o(\|x\|),
\]
with $v_t^\gamma(0)=\gamma(1-t)\mu_0/C_t$. Dropping the remainder gives
\begin{equation}
 v_t^\gamma(x)\approx
 \gamma\frac{1-t}{C_t}\mu_0+
 \left[\alpha_t-(\gamma-1)
 \frac{t(1-t)r^2}{2C_t^2}\right]x.
 \label{eq:gmm-linearization-main}
\end{equation}
At $t=0$, the bracketed coefficient equals $-1$, while the constant term equals
$\gamma\mu_0$. Thus the frozen linear field is exactly
$\gamma\mu_0-x$ and its zero is $x=\gamma\mu_0$. Continuity gives the stated
small-$t$ limit.

For $A=\{0,2\}$, define the orthonormal directions
$u_+=(u_0+u_2)/\sqrt2$ and $u_-=(u_0-u_2)/\sqrt2$. At the origin the two
conditional responsibilities equal $1/2$, so
\begin{align*}
 \bar\mu_A&=\frac12(ru_0+ru_2)=\frac{r}{\sqrt2}u_+,\\
 \mu_0-\bar\mu_A&=\frac{r}{\sqrt2}u_-,\qquad
 \mu_2-\bar\mu_A=-\frac{r}{\sqrt2}u_-,\\
 \Sigma_A&=\frac12\frac{r^2}{2}u_-u_-^\top
           +\frac12\frac{r^2}{2}u_-u_-^\top
 =\frac{r^2}{2}u_-u_-^\top.
\end{align*}
Therefore
\[
 \Sigma_Au_+=0,\qquad
 \Sigma_Au_-=\frac{r^2}{2}u_-,
\]
whereas the unconditional covariance at the origin is
$\Sigma_\pi=(r^2/2)I_2$ and hence
$\Sigma_\pi u_\pm=(r^2/2)u_\pm$. Multiplying the Hessian difference
\eqref{eq:app-gmm-h-hessian} by $(\gamma-1)(1-t)/t$ gives
\begin{align}
 u_+^\top(H_t^\gamma-H_t^{\mathrm c})u_+
 &=-(\gamma-1)\frac{t(1-t)r^2}{2C_t^2},\\
 u_-^\top(H_t^\gamma-H_t^{\mathrm c})u_-&=0.
 \label{eq:gmm-splitting-main}
\end{align}
Finally, differentiating the pure conditional velocity gives
\[
 \nabla_xv_t^{\mathrm c}
 =\alpha_tI_2+\frac{t(1-t)}{C_t^2}\Sigma_A.
\]
Along $u_-$, the covariance term equals
$t(1-t)r^2/(2C_t^2)$. The CFG correction along this direction is zero, so the
complete splitting eigenvalue is
\[
 \lambda_-(t)=\alpha_t+\frac{t(1-t)r^2}{2C_t^2}.
\]
It is independent of $\gamma$ at the symmetric center.

\subsection{Branch reweighting for adjacent conditions}
\label{app:gmm-branch-reweighting}

We first prove the continuous equal-weight middle-branch effect. Rotate the
coordinate system so that the middle direction $u_1$ is the positive horizontal
axis. The three directions in $A=\{0,1,2\}$ then have angles
$-\delta,0,\delta$, where $\delta=\pi/4$. On a fixed shell, write
$x_{\rho,\phi}=\rho(\cos\phi,\sin\phi)^\top$ in these rotated coordinates and
set
\[
 \kappa=\frac{tr\rho}{C_t},\qquad
 a=\kappa\cos\phi,\qquad b=\kappa\sin\phi,
 \qquad q=\frac1{\sqrt2}.
\]
For $0\leq\phi\leq\delta$, we have $a\geq b\geq0$. Common Gaussian factors
and equal mixture weights cancel from the condition posterior. Its odds between
the conditioned set $A$ and its complement $B$ are therefore
\begin{equation}
 \frac{P_t(\rho,\phi)}{1-P_t(\rho,\phi)}
 =\frac{N(a,b)}{M(a,b)},
 \label{eq:app-middle-odds}
\end{equation}
where
\begin{align*}
 N(a,b)&=e^a+2e^{qa}\cosh(qb),\\
 M(a,b)&=e^{-a}+2e^{-qa}\cosh(qb)+2\cosh b.
\end{align*}
Reflection across the middle ray preserves both $A$ and the full eight-point
mixture, immediately giving
$P_t(\rho,\phi)=P_t(\rho,-\phi)$.

It remains to establish strict angular decrease. Let
$L(a,b)=\log N(a,b)-\log M(a,b)$. Its partial derivative with respect to $a$
is strictly positive: every direction in $A$ has horizontal coordinate at
least $q$, whereas every direction in $B$ has horizontal coordinate at most
zero. Equivalently, $\partial_aL$ is the difference between the exponentially
weighted mean horizontal coordinates in $A$ and $B$.

For $b>0$, the corresponding weighted vertical mean in $B$ is strictly larger
than that in $A$, so $\partial_bL<0$. To verify the sign directly, cross
multiplication reduces that mean difference to the positive quantity
\begin{align*}
 &e^a\sinh b
 +2e^{qa}\bigl[\cosh(qb)\sinh b
                -q\sinh(qb)\cosh b\bigr]\\
 &\qquad+2q\sinh(qb)\sinh((1-q)a).
\end{align*}
The bracket is positive because
\[
 \tanh b-q\tanh(qb)>0
 \qquad (b>0,\;0<q<1).
\]
Since $\partial_\phi a=-b$ and $\partial_\phi b=a$, we obtain
\begin{equation}
 \partial_\phi L
 =-b\,\partial_aL+a\,\partial_bL<0
 \qquad(0<\phi\leq\delta).
 \label{eq:app-middle-monotonicity}
\end{equation}
The logistic map from odds to probability is strictly increasing, so the same
sign holds for $\partial_\phi P_t(\rho,\phi)$. This proves
\eqref{eq:gmm-middle-posterior-main}.

For $x_{\rho,\phi}$, the angular derivative satisfies
$e_\phi^\top\nabla_x=\rho^{-1}\partial_\phi$. Combining this identity with
the posterior-force formula~\eqref{eq:force-main} and
\eqref{eq:app-middle-monotonicity} proves
\eqref{eq:gmm-middle-force-main}. This force is evaluated at a particular
time--state pair; its effect on terminal branch occupancy is determined only
after integrating the non-autonomous ODE.

We next prove Proposition~\ref{prop:gmm-weight-amplification-main}. Let
$A=\{0,1\}$, assume $w_0>w_1>0$, and impose reflection symmetry on the
complement,
\[
 w_2=w_7,\qquad w_3=w_6,\qquad w_4=w_5.
\]
At $x_{0,t}$ and $x_{1,t}$, canceling the same common radial factor gives the
unnormalized conditional likelihoods
\begin{align}
 N_0(a)&=w_0e^a+w_1e^{a/\sqrt2},\notag\\
 N_1(a)&=w_0e^{a/\sqrt2}+w_1e^a,
 \label{eq:app-unequal-N}\\
 N_0(a)-N_1(a)
 &=(w_0-w_1)(e^a-e^{a/\sqrt2})>0.
 \label{eq:app-unequal-gap}
\end{align}
To verify the complement contribution, reflect across the angular bisector
between $u_0$ and $u_1$. This reflection exchanges the component pairs
$2\leftrightarrow7$, $3\leftrightarrow6$, and $4\leftrightarrow5$, and maps
$x_{0,t}$ to $x_{1,t}$. The assumed equality within every pair makes the sum
of complement likelihoods, denoted $E(a)$, equal at the two centers. Hence
\[
 p_k^A(t):=p(A\mid x_{k,t})
 =\frac{N_k(a)}{N_k(a)+E(a)},\qquad
 p_0^A(t)>p_1^A(t).
\]
The last inequality follows because $z\mapsto z/(z+E)$ is strictly increasing
for $E>0$. For $0<t<1$ and $\gamma>1$, the difference between the CFG
potential corrections at the two centers is
\[
 (\gamma-1)\frac{1-t}{t}
 \log\frac{p_0^A(t)}{p_1^A(t)}>0.
\]
Thus the time-varying posterior potential is biased toward the already heavier
branch at these centers. This pointwise potential ordering does not determine
the transported density or terminal branch occupancy. Without complement
symmetry, the two center contributions must be evaluated separately, and
$w_0>w_1$ alone no longer guarantees the posterior ordering.
This proves Proposition~\ref{prop:gmm-weight-amplification-main}.

\subsection{Proof of the confidence--density theorem}

\begin{proof}[Proof of Theorem~\ref{thm:far-field-main}]
On the ray $x=su_0$, define
\[
 \vartheta_k=\theta_k-\theta_0,\qquad
 \kappa=\frac{str}{C_t}.
\]
Because $\mu_k=ru_k$ and $u_0^\top u_k=\cos\vartheta_k$, the squared distance
expands as
\[
 \|su_0-t\mu_k\|^2
 =s^2+t^2r^2-2str\cos\vartheta_k.
\]
Thus every likelihood contains the common factor
$\exp[-(s^2+t^2r^2)/(2C_t)]$, while the component-dependent factor is
$e^{\kappa\cos\vartheta_k}$. Canceling the common factor in
\eqref{eq:app-responsibility} gives
\begin{equation}
 P_0(s,t)=\frac{w_0e^\kappa}
 {\sum_kw_ke^{\kappa\cos\vartheta_k}},
 \qquad
 \pi_k(s,t)=\frac{w_ke^{\kappa\cos\vartheta_k}}
 {\sum_jw_je^{\kappa\cos\vartheta_j}}.
 \label{eq:app-ray-posterior}
\end{equation}
Taking logs gives
\[
 \log P_0=\log w_0+\kappa-
 \log\sum_kw_ke^{\kappa\cos\vartheta_k}.
\]
Since $\partial_s\kappa=tr/C_t$, differentiation gives
\begin{align}
 \partial_s\log P_0
 &=\frac{tr}{C_t}\left(
 1-\frac{\sum_kw_ke^{\kappa\cos\vartheta_k}\cos\vartheta_k}
          {\sum_jw_je^{\kappa\cos\vartheta_j}}\right)\notag\\
 &=\frac{tr}{C_t}\left(1-\sum_k\pi_k\cos\vartheta_k\right)\geq0,
 \label{eq:posterior-ray-main}\\
 \partial_s^2\log P_0
 &=-\left(\frac{tr}{C_t}\right)^2
 \left[\sum_k\pi_k\cos^2\vartheta_k
 -\left(\sum_k\pi_k\cos\vartheta_k\right)^2\right]\notag\\
 &=-\left(\frac{tr}{C_t}\right)^2
 \Var_\pi(\cos\vartheta_k)\leq0.
 \label{eq:posterior-concavity-main}
\end{align}
The first sign follows from $\cos\vartheta_k\leq1$. For the second derivative,
the intermediate responsibility derivative is
\[
 \partial_s\pi_k=\frac{tr}{C_t}\pi_k
 \left(\cos\vartheta_k-\sum_j\pi_j\cos\vartheta_j\right),
\]
and summing $\cos\vartheta_k\,\partial_s\pi_k$ produces the variance above.

For the asymptotic posterior, divide numerator and denominator of
\eqref{eq:app-ray-posterior} by $w_0e^\kappa$:
\[
 P_0(s,t)=
 \left[1+\sum_{j\ne0}\frac{w_j}{w_0}
 e^{-\kappa(1-\cos\vartheta_j)}\right]^{-1}.
\]
Uniqueness of $u_0$ means
$1-\cos\vartheta_j>0$ for $j\ne0$, hence $P_0(s,t)\to1$.

The marginal density on the ray is
\[
 p_t(su_0)=\frac{e^{-(s^2+t^2r^2)/(2C_t)}}{2\pi C_t}
 \sum_kw_ke^{\kappa\cos\vartheta_k}.
\]
Taking the logarithm and differentiating the Gaussian factor and the log-sum
separately gives
\[
 \partial_s\log p_t(su_0)
 =-\frac{s}{C_t}
  +\frac{tr}{C_t}\sum_k\pi_k\cos\vartheta_k
 \leq-\frac{s-tr}{C_t}.
\]
The inequality holds because the responsibility-weighted average of the
cosines is at most one.
This is strictly negative for $s>tr$, while the posterior derivative is
nonnegative.

Finally, factor the $k=0$ term from the mixture sum:
\begin{align*}
 p_t(su_0)
 &=\frac{w_0}{2\pi C_t}
 \exp\left[-\frac{(s-tr)^2}{2C_t}\right]
 \left[1+\sum_{j\ne0}\frac{w_j}{w_0}
 e^{-\kappa(1-\cos\vartheta_j)}\right].
\end{align*}
The bracket tends to one by the same strict angular gaps used for the posterior.
Therefore
\[
 p_t(su_0)\sim\frac{w_0}{2\pi C_t}
 \exp\left[-\frac{(s-tr)^2}{2C_t}\right]\to0.
\]
All claims follow.
\end{proof}

\section{PMC-CFG Proofs and Implementation}
\label{app:pmc-proof}

\subsection{Feasibility, maximality, and degenerate cases}

\begin{proof}[Proof of Theorem~\ref{thm:maximal-main}]
Squaring the nonnegative norms in~\eqref{eq:program-main} and expanding gives
\begin{align*}
 \norm{m_t^{\mathrm c}+\beta\Delta_t}^2
 -\Gamma^2\norm{m_t^{\mathrm c}}^2
 &=\left(\norm{m_t^{\mathrm c}}^2
 +2\beta\langle m_t^{\mathrm c},\Delta_t\rangle
 +\beta^2\norm{\Delta_t}^2\right)
 -\Gamma^2\norm{m_t^{\mathrm c}}^2\\
 &=\norm{\Delta_t}^2\beta^2
 +2\langle m_t^{\mathrm c},\Delta_t\rangle\beta
 +(1-\Gamma^2)\norm{m_t^{\mathrm c}}^2\\
 &=a\beta^2+2b\beta+(1-\Gamma^2)q.
\end{align*}
Thus the norm constraint is equivalent to
$f(\beta):=a\beta^2+2b\beta+(1-\Gamma^2)q\leq0$.
Since $\Gamma\geq1$ and $q\geq0$, the constant term is nonpositive;
therefore $\beta=0$ is feasible and the program is never empty.

Suppose $a>0$. Applying the quadratic formula to
$a\beta^2+2b\beta+(1-\Gamma^2)q=0$ gives the discriminant
\begin{align*}
 (2b)^2-4a(1-\Gamma^2)q
 =4[b^2+(\Gamma^2-1)aq]\geq0
\end{align*}
and roots
\[
 \beta_\pm
 =\frac{-2b\pm2\sqrt{b^2+(\Gamma^2-1)aq}}{2a}
 =\frac{-b\pm\sqrt{b^2+(\Gamma^2-1)aq}}{a}.
\]
Because the square root is at least $|b|$, its two numerators satisfy
$-b-\sqrt{\cdot}\leq0$ and $-b+\sqrt{\cdot}\geq0$; hence
$\beta_-\leq0\leq\beta_+$. Equivalently, their product is
$(1-\Gamma^2)q/a\leq0$. Since $a>0$, the quadratic opens upward and its
nonpositive set is exactly $[\beta_-,\beta_+]$. Intersecting it with the nominal interval
$[0,\lambda-1]$ gives
\[
 [0,\lambda-1]\cap[\beta_-,\beta_+]
 =[0,\min\{\lambda-1,\beta_+\}].
\]
The largest feasible element is therefore~\eqref{eq:beta-main}, with
$\beta_+=\beta^{\mathrm{cap}}$ from~\eqref{eq:root-main}.

If $a=0$, then $a=\|\Delta_t\|^2$ implies $\Delta_t=0$ and consequently
$b=\langle m_t^{\mathrm c},\Delta_t\rangle=0$. Every candidate mean equals
$m_t^{\mathrm c}$, and the constraint reduces to
$\|m_t^{\mathrm c}\|\leq\Gamma\|m_t^{\mathrm c}\|$, which holds because
$\Gamma\geq1$. The output velocity is also independent of $\beta$, so choosing
$\lambda-1$ preserves the nominal convention.

In both cases the selected $\beta_t^*$ belongs to the feasible set. Substituting
it into the original, unsquared constraint yields
\[
 \|m_t^{\mathrm c}+\beta_t^*\Delta_t\|
 \leq\Gamma\|m_t^{\mathrm c}\|,
\]
which is~\eqref{eq:guarantee-main}. If $\lambda-1$ is feasible, it is already
the largest point in $[0,\lambda-1]$ and is therefore selected; nominal CFG is
then unchanged.
\end{proof}

\subsection{Directional form, invariance, and the GMM corollary}

\begin{proposition}[Directional cap]
\label{prop:directional-app}
If $m_t^{\mathrm c}\ne0$ and $\Delta_t\ne0$, and $\theta$ is their angle, then
\eqref{eq:angle-main} holds. For aligned, orthogonal, and opposing vectors,
respectively,
\begin{align*}
 \cos\theta=1:\quad&
 \beta^{\mathrm{cap}}
 =(\Gamma-1)\frac{\norm{m_t^{\mathrm c}}}{\norm{\Delta_t}},\\
 \cos\theta=0:\quad&
 \beta^{\mathrm{cap}}
 =\sqrt{\Gamma^2-1}
 \frac{\norm{m_t^{\mathrm c}}}{\norm{\Delta_t}},\\
 \cos\theta=-1:\quad&
 \beta^{\mathrm{cap}}
 =(\Gamma+1)\frac{\norm{m_t^{\mathrm c}}}{\norm{\Delta_t}}.
\end{align*}
\end{proposition}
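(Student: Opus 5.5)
The plan is to substitute the polar form of the coefficients in \eqref{eq:abc-main} directly into the closed-form root \eqref{eq:root-main}. This is legitimate because Theorem~\ref{thm:maximal-main} has already shown that $\beta^{\mathrm{cap}}$ is the nonnegative root $\beta_+$ of $a\beta^2+2b\beta+(1-\Gamma^2)q$ whenever $a>0$, and the hypothesis $\Delta_t\ne0$ guarantees $a>0$.

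Write $M=\norm{m_t^{\mathrm c}}>0$ and $D=\norm{\Delta_t}>0$. With $\theta$ the angle between the two vectors, we have $a=D^2$, $b=MD\cos\theta$, and $q=M^2$. The discriminant term then becomes
\[
b^2+(\Gamma^2-1)aq=M^2D^2\bigl(\cos^2\theta+\Gamma^2-1\bigr).
\]
Because $MD>0$, its square root is $MD\sqrt{\cos^2\theta+\Gamma^2-1}$, and the radicand is nonnegative since $\Gamma\geq1$. Dividing $-MD\cos\theta+MD\sqrt{\cos^2\theta+\Gamma^2-1}$ by $a=D^2$ gives $(M/D)\bigl[-\cos\theta+\sqrt{\cos^2\theta+\Gamma^2-1}\bigr]$, which is \eqref{eq:angle-main}.

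The three special cases follow by substitution:
\begin{itemize}
\item For $\cos\theta=1$, the radicand is $\Gamma^2$ and the bracket equals $\Gamma-1$.
\item For $\cos\theta=0$, the bracket equals $\sqrt{\Gamma^2-1}$.
\item For $\cos\theta=-1$, the bracket equals $1+\Gamma$.
\end{itemize}
In each case we use $\sqrt{\Gamma^2}=\Gamma$, which holds because $\Gamma\geq1>0$.

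There is no real obstacle. The only points needing care are checking that the square root factors cleanly because $MD>0$, and confirming that the root used is $\beta_+$ rather than $\beta_-$; both are already settled by the proof of Theorem~\ref{thm:maximal-main}.
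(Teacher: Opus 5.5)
Your proposal is correct and follows essentially the same route as the paper: substitute $a=\norm{\Delta_t}^2$, $b=\norm{m_t^{\mathrm c}}\norm{\Delta_t}\cos\theta$, $q=\norm{m_t^{\mathrm c}}^2$ into \eqref{eq:root-main}, factor $\norm{m_t^{\mathrm c}}\norm{\Delta_t}$ out of the square root, divide by $a$, and evaluate the bracket at $\cos\theta\in\{1,0,-1\}$ using $\sqrt{\Gamma^2}=\Gamma$. Your added remarks on $MD>0$ and the nonnegative radicand are fine. Since $\beta^{\mathrm{cap}}$ is defined directly by \eqref{eq:root-main}, you do not need the appeal to Theorem~\ref{thm:maximal-main} to identify which root is meant.
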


\begin{proof}
Substitute
$a=\|\Delta_t\|^2$, $q=\|m_t^{\mathrm c}\|^2$, and
$b=\|m_t^{\mathrm c}\|\|\Delta_t\|\cos\theta$ into~\eqref{eq:root-main}.
The numerator becomes
\begin{align*}
 &-\|m_t^{\mathrm c}\|\|\Delta_t\|\cos\theta\\
 &\quad+
 \sqrt{\|m_t^{\mathrm c}\|^2\|\Delta_t\|^2\cos^2\theta
 +(\Gamma^2-1)\|\Delta_t\|^2\|m_t^{\mathrm c}\|^2}\\
 &=\|m_t^{\mathrm c}\|\|\Delta_t\|
 \left[-\cos\theta+
 \sqrt{\cos^2\theta+\Gamma^2-1}\right].
\end{align*}
Dividing by $a=\|\Delta_t\|^2$ proves~\eqref{eq:angle-main}. For
$\cos\theta=1$, the bracket is $-1+\sqrt{\Gamma^2}=\Gamma-1$; for
$\cos\theta=0$, it is $\sqrt{\Gamma^2-1}$; and for
$\cos\theta=-1$, it is $1+\sqrt{\Gamma^2}=\Gamma+1$, because
$\Gamma\geq1$. These are the three displayed cases.
\end{proof}

\begin{proposition}[Coordinate behavior]
\label{prop:coordinate-app}
The origin-centered constraint is invariant under a common positive scalar
rescaling and an orthogonal transformation. It is not generally invariant
under translation or anisotropic scaling. If an anchor $r_t$ is transformed
together with the means, the constraint
\begin{equation}
 \norm{m_t^{\mathrm{cap}}-r_t}
 \leq\Gamma\norm{m_t^{\mathrm c}-r_t}
 \label{eq:anchored-app}
\end{equation}
is translation covariant.
\end{proposition}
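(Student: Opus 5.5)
The plan is to treat the three claims of Proposition~\ref{prop:coordinate-app} separately. Each one reduces to how the constraint $\norm{m_t^{\mathrm c}+\beta\Delta_t}\leq\Gamma\norm{m_t^{\mathrm c}}$ and its origin-centered quantities transform under an affine map $T(m)=sQm+c$.

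\textbf{Step 1: invariance under $T(m)=sQm$ with $s>0$ and $Q$ orthogonal.} Since $T$ is linear, it maps the two means to $sQm_t^{\mathrm c}$ and $sQm_t^{\mathrm u}$. The gap therefore maps to $sQ\Delta_t$, and each candidate point $m_t^{\mathrm c}+\beta\Delta_t$ maps to $sQ(m_t^{\mathrm c}+\beta\Delta_t)$. Using $\norm{sQy}=s\norm{y}$, both sides of the constraint are multiplied by the same factor $s>0$. Hence the feasible set of $\beta$ in~\eqref{eq:program-main} is unchanged. Equivalently, the quantities $a,b,q$ in~\eqref{eq:abc-main} all scale by $s^2$, so $\beta^{\mathrm{cap}}$ in~\eqref{eq:root-main} is invariant, and so is $\beta_t^*$ from Theorem~\ref{thm:maximal-main}.

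\textbf{Step 2: failure under translations and anisotropic scalings.} A single explicit counterexample suffices for each; I would take them in $\R^1$ or $\R^2$.

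For a translation $m\mapsto m+c$, the gap $\Delta_t$ is unchanged but $m_t^{\mathrm c}$ moves to $m_t^{\mathrm c}+c$. Take $d=1$, $m_t^{\mathrm c}=1$, $\Delta_t=1$ and $\Gamma=2$. Then $\beta^{\mathrm{cap}}=1$. After the shift $c=1$, the formula gives $\beta^{\mathrm{cap}}=2$. So whether a given $\beta$ is feasible depends on where the origin sits.

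For an anisotropic scaling $D=\mathrm{diag}(1,\epsilon)$ in $\R^2$, take $m_t^{\mathrm c}=e_1$ and $\Delta_t=e_2$. These are orthogonal, so the directional case of Proposition~\ref{prop:directional-app} gives $\beta^{\mathrm{cap}}=\sqrt{\Gamma^2-1}$. After applying $D$, the gap becomes $\epsilon e_2$ while $m_t^{\mathrm c}$ stays $e_1$, so $\beta^{\mathrm{cap}}=\sqrt{\Gamma^2-1}/\epsilon$. For $\epsilon\neq1$ this differs from the original value. The phrase ``not generally'' is settled by these two instances.

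\textbf{Step 3: translation covariance of the anchored constraint.} Let $T(m)=m+c$ act on all of $m_t^{\mathrm{cap}}$, $m_t^{\mathrm c}$ and $r_t$. Differences are then preserved:
\[
(m_t^{\mathrm{cap}}+c)-(r_t+c)=m_t^{\mathrm{cap}}-r_t,\qquad (m_t^{\mathrm c}+c)-(r_t+c)=m_t^{\mathrm c}-r_t.
\]
So~\eqref{eq:anchored-app} holds after the transformation if and only if it held before. The same computation shows the anchored version also keeps the rotation and positive-scaling invariance from Step 1, provided $r_t$ is transformed too.

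\textbf{Main obstacle.} The step most likely to cause trouble is the non-invariance claim. It only asserts generic failure, so it must be shown by counterexample rather than by a general argument. One must also check that the counterexample changes the selected scale $\beta_t^*=\min\{\lambda-1,\beta^{\mathrm{cap}}\}$ and not merely the constraint: this requires choosing $\lambda-1$ at least as large as the smaller of the two caps, for example $\lambda$ large, so the $\min$ does not mask the difference.
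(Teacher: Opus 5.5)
Your proposal is correct, and Steps 1 and 3 coincide with the paper's proof. The paper also takes $T(x)=sQx$, notes that both squared norms acquire the common factor $s^2$, and obtains translation covariance of the anchored form because the differences $m^{\mathrm{cap}}-r$ and $m^{\mathrm c}-r$ are unchanged.

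You differ in Step 2. The paper gives no counterexamples. It argues structurally that after translation by $h$ the constraint compares $\norm{m^{\mathrm c}+\beta\Delta+h}$ with $\Gamma\norm{m^{\mathrm c}+h}$, whose added cross terms depend differently on $\beta$. It also notes that $\norm{Ay}^2=y^\top A^\top Ay$ is a common multiple of $\norm{y}^2$ only when $A^\top A=s^2I$. That explains why the cancellation of Step 1 breaks down and identifies exactly which linear maps preserve it, but on its own it does not exhibit an instance where the feasible set actually changes. Your explicit instances settle ``not generally invariant'' directly: $\beta^{\mathrm{cap}}$ moves from $1$ to $2$ under the shift, and from $\sqrt{\Gamma^2-1}$ to $\sqrt{\Gamma^2-1}/\epsilon$ under $\mathrm{diag}(1,\epsilon)$. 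Your route is therefore the more rigorous one for this part, while the paper's is more explanatory.

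Two small fixes are needed. First, the anisotropic example needs $\Gamma>1$, since for $\Gamma=1$ both caps are $0$. Second, in your closing remark $\lambda-1$ must \emph{strictly} exceed the smaller cap; if it equals it, the two values of $\min\{\lambda-1,\beta^{\mathrm{cap}}\}$ coincide. Neither affects the claim about the constraint itself, whose solution set in $\beta$ changes in both examples irrespective of $\lambda$.
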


\begin{proof}
Let $T(x)=sQx$ with $s>0$ and $Q^\top Q=I$. The transformed candidate and
baseline means are
\[
 \widetilde m^{\mathrm{cap}}=sQ(m^{\mathrm c}+\beta\Delta),
 \qquad \widetilde m^{\mathrm c}=sQm^{\mathrm c}.
\]
Therefore
\begin{align*}
 \|\widetilde m^{\mathrm{cap}}\|^2
 &=s^2(m^{\mathrm c}+\beta\Delta)^\top Q^\top Q
   (m^{\mathrm c}+\beta\Delta)
 =s^2\|m^{\mathrm c}+\beta\Delta\|^2,\\
 \Gamma^2\|\widetilde m^{\mathrm c}\|^2
 &=s^2\Gamma^2\|m^{\mathrm c}\|^2.
\end{align*}
The common positive factor $s^2$ cancels, so the feasible set and its largest
$\beta$ are unchanged.

Under translation by $h$, the constraint would compare
$\|m^{\mathrm c}+\beta\Delta+h\|$ with
$\Gamma\|m^{\mathrm c}+h\|$. The two added cross terms depend differently on
$\beta$, so this is not generally equivalent to the original constraint. Under
an anisotropic linear map $A$, a squared norm becomes
$\|Ay\|^2=y^\top A^\top Ay$; it is a common scalar multiple of the Euclidean
norm for all $y$ only when $A^\top A=s^2I$.

For the anchored form, translate
$m^{\mathrm{cap}},m^{\mathrm c},r$ by the same $h$. Then
\begin{align*}
 (m^{\mathrm{cap}}+h)-(r+h)&=m^{\mathrm{cap}}-r,\\
 (m^{\mathrm c}+h)-(r+h)&=m^{\mathrm c}-r.
\end{align*}
Both sides of~\eqref{eq:anchored-app} are therefore unchanged, proving
translation covariance.
\end{proof}

\begin{corollary}[Exact initial GMM clipping]
\label{cor:gmm-cap-app}
For the centered circular GMM conditioned on component $0$,
\eqref{eq:initial-cap-main} holds.
\end{corollary}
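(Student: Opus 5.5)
We need to show $m_0^{\mathrm{cap}}=\min\{\lambda,\Gamma\}\mu_0$. The plan is to evaluate the PMC-CFG program~\eqref{eq:program-main} at $t=0$ using the exact posterior means already computed in~\eqref{eq:initial-displacement-main}, and then to apply the aligned case of Proposition~\ref{prop:directional-app} together with the closed form of Theorem~\ref{thm:maximal-main}.

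First, we collect the inputs. At $t=0$ the state $X_0$ is independent of $X_1$. The centered mixture therefore gives $m_0^{\mathrm u}=0$, and conditioning on component $0$ gives $m_0^{\mathrm c}=\mu_0$, so $\Delta_0=\mu_0$. Here the mean $m_t^{\mathrm c}$ and the gap $\Delta_t$ are defined through $x+(1-t)v_t$. At $t=0$ the factor $1-t$ equals $1$, so no singular coefficient appears. The identity~\eqref{eq:app-velocity} is stated for $t\in(0,1)$. We should therefore either note that the posterior-mean formulas~\eqref{eq:app-gmm-mean} and~\eqref{eq:app-gmm-conditional-mean} extend continuously to $t=0$ (where $\alpha_0=1$ and $t\sigma^2/C_t=0$), or simply take~\eqref{eq:initial-displacement-main} as given.

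Second, we compute the quantities in~\eqref{eq:abc-main}: $a=\|\mu_0\|^2=r^2$, $b=r^2$, and $q=r^2$. Assuming $r>0$, we have $a>0$, and $m_0^{\mathrm c}$ and $\Delta_0$ are aligned with $\cos\theta=1$. The root~\eqref{eq:root-main} then gives $\beta^{\mathrm{cap}}=(-r^2+\sqrt{r^4+(\Gamma^2-1)r^4})/r^2=\Gamma-1$, which agrees with the aligned case of Proposition~\ref{prop:directional-app}. By~\eqref{eq:beta-main}, $\beta_0^*=\min\{\lambda-1,\Gamma-1\}$.

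Third, we substitute into the capped mean. From~\eqref{eq:velocity-cap-main}, $m_0^{\mathrm{cap}}=x+v_0^{\mathrm{cap}}=m_0^{\mathrm c}+\beta_0^*\Delta_0=(1+\beta_0^*)\mu_0=\min\{\lambda,\Gamma\}\mu_0$, which is~\eqref{eq:initial-cap-main}.

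The only delicate points are the $t=0$ endpoint convention and the degenerate case $r=0$. If $r=0$, then $a=0$ and every mean equals $0$, so the identity holds trivially. I expect the endpoint justification to be the main obstacle. I would handle it by taking the limit $t\downarrow0$ in the exact GMM means, which are continuous in $t$ with $\alpha_t\to1$.
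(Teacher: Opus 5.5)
Your proposal is correct and follows essentially the paper's route: both use $m_0^{\mathrm c}=\Delta_0=\mu_0$ to obtain $\beta_0^*=\min\{\lambda-1,\Gamma-1\}$. The only difference is that the paper reads this off directly from the collinear constraint $(1+\beta)r\le\Gamma r$ instead of going through the root formula~\eqref{eq:root-main}. Also, the $t=0$ endpoint needs no limiting argument, because $v_0(x)=\E[X_1-X_0\mid X_0=x]$ already gives $m_0=x+v_0(x)=\E[X_1]$ (respectively $\E[X_1\mid c]$).
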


\begin{proof}
Appendix~\ref{app:gmm} established
$m_0^{\mathrm c}=\mu_0$ and $m_0^{\mathrm u}=0$, hence
$\Delta_0=m_0^{\mathrm c}-m_0^{\mathrm u}=\mu_0$. Every candidate mean is
\[
 m_0^{\mathrm c}+\beta\Delta_0=(1+\beta)\mu_0.
\]
Because $\beta\geq0$, $1+\beta>0$, and because $\|\mu_0\|=r>0$, the
relative-ball constraint becomes
\begin{align*}
 \|(1+\beta)\mu_0\|&\leq\Gamma\|\mu_0\|\\
 (1+\beta)r&\leq\Gamma r\\
 \beta&\leq\Gamma-1.
\end{align*}
The nominal interval independently gives $\beta\leq\lambda-1$. Thus the
feasible interval is
$[0,\min\{\lambda-1,\Gamma-1\}]$. Maximizing $\beta$ yields
$1+\beta_0^*=\min\{\lambda,\Gamma\}$, proving the claim.
\end{proof}

\subsection{Feedback Jacobian and well-posedness}

Let $d_t=\vc-\vu$. In a region where the active set is fixed and
$\beta_t^*$ is differentiable, the product rule gives
\begin{equation}
 \nabla_xv_t^{\mathrm{cap}}
 =\nabla_x\vc+\beta_t^*\nabla_xd_t
 +d_t(\nabla_x\beta_t^*)^\top.
 \label{eq:feedback-main}
\end{equation}
Indeed, for output coordinate $i$ and input coordinate $j$,
\begin{align*}
 \partial_{x_j}(v_t^{\mathrm{cap}})_i
 &=\partial_{x_j}(v_t^{\mathrm c})_i
 +\partial_{x_j}(\beta_t^*(d_t)_i)\\
 &=\partial_{x_j}(v_t^{\mathrm c})_i
 +\beta_t^*\partial_{x_j}(d_t)_i
 +(d_t)_i\partial_{x_j}\beta_t^*,
\end{align*}
which is exactly~\eqref{eq:feedback-main}. Its trace is
Taking traces also yields
\[
 \nabla\!\cdot v_t^{\mathrm{cap}}
 =\nabla\!\cdot\vc+\beta_t^*\nabla\!\cdot d_t
 +(\nabla_x\beta_t^*)^\top d_t.
\]
The conditional field and $d_t$ are spatial gradients under fixed CFG, so
their Jacobians are symmetric. Writing $g_t=\nabla_x\beta_t^*$, the only
possibly nonsymmetric term is $d_tg_t^\top$. Since its transpose is
$g_td_t^\top$, the antisymmetric part of the controlled Jacobian is exactly
\begin{equation}
 \frac12\left[d_t(\nabla_x\beta_t^*)^\top
 -(\nabla_x\beta_t^*)d_t^\top\right],
 \label{eq:antisymmetric-main}
\end{equation}
It vanishes when $d_tg_t^\top=g_td_t^\top$. For nonzero vectors this equality
holds exactly when $d_t$ and $g_t$ are collinear; it also holds when either is
zero. This establishes the stated local integrability condition.

The derivative of the active cap can also be written explicitly without
differentiating the square-root formula. Define
\[
 F(x,\beta)=a(x)\beta^2+2b(x)\beta+(1-\Gamma^2)q(x).
\]
On the active boundary, $F(x,\beta_t^*(x))=0$. Spatial differentiation gives
\begin{align*}
 0=\nabla_xF
 &=\beta_t^{*2}\nabla_xa+2\beta_t^*\nabla_xb
 +(1-\Gamma^2)\nabla_xq
 +(2a\beta_t^*+2b)\nabla_x\beta_t^*.
\end{align*}
If $a\beta_t^*+b\ne0$, solving for the last term yields
\begin{equation}
 \nabla_x\beta_t^*
 =-\frac{\beta_t^{*2}\nabla_xa+2\beta_t^*\nabla_xb
 +(1-\Gamma^2)\nabla_xq}
 {2(a\beta_t^*+b)}.
 \label{eq:active-beta-gradient-app}
\end{equation}
The denominator is nonzero precisely away from a repeated root. In the inactive
region $\beta_t^*=\lambda-1$ is constant in $x$, so
$\nabla_x\beta_t^*=0$ and the feedback term disappears.

When the discriminant is positive and the denominator above is nonzero, the
active root is smooth by the implicit-function theorem. At the switching surface
$\beta^{\mathrm{cap}}=\lambda-1$, the hard minimum is continuous and piecewise
smooth. Under the usual local-Lipschitz assumptions on model fields, this
piecewise-smooth feedback field admits the standard Carath\'eodory
interpretation. A smooth approximation to the minimum may improve numerical
derivatives, but its feasibility must be checked separately; smoothness alone
does not preserve~\eqref{eq:guarantee-main}.

\subsection{Stable per-sample implementation}

We log the controlled cap ratio
\begin{equation}
 \chi_t=
 \frac{\norm{m_t^{\mathrm c}+\beta_t^*\Delta_t}}
 {\norm{m_t^{\mathrm c}}+\varepsilon_m}.
 \label{eq:chi-main}
\end{equation}
The positive root admits a cancellation-resistant form when $b\geq0$.
Let $D=b^2+(\Gamma^2-1)aq$. Multiplying numerator and denominator by the
conjugate gives
\begin{align*}
 \frac{-b+\sqrt D}{a}
 &=\frac{(-b+\sqrt D)(b+\sqrt D)}{a(b+\sqrt D)}\\
 &=\frac{D-b^2}{a(b+\sqrt D)}
 =\frac{(\Gamma^2-1)q}{b+\sqrt D}.
\end{align*}
For $b\geq0$, the latter avoids subtracting two nearly equal positive numbers.
For $b<0$, the original numerator $-b+\sqrt D$ is an addition and is already
stable. If $b+\sqrt D=0$, necessarily $b=0$ and $D=0$; the positive root is
then zero, so it is assigned directly rather than evaluated as $0/0$.
For each sample and ODE evaluation:
\begin{enumerate}
 \item Evaluate $\vc$ and $\vu$, then compute
 $m_t^{\mathrm c}=x+(1-t)\vc$ and
 $\Delta_t=(1-t)(\vc-\vu)$.
 \item Test the nominal candidate
 $m_t^{\mathrm c}+(\lambda-1)\Delta_t$ directly. If it satisfies the bound,
 set $\beta_t^*=\lambda-1$.
 \item Otherwise compute $a,b,q$ from~\eqref{eq:abc-main}. If
 $a\leq\varepsilon_a$, use the conservative feasible value $\beta_t^*=0$.
 \item For $a>\varepsilon_a$, form $D=b^2+(\Gamma^2-1)aq$ and clamp it below
 by zero only to remove negative roundoff. Use the rationalized root above when
 $b\geq0$ and $b+\sqrt D>0$, use~\eqref{eq:root-main} when $b<0$, and set the
 root to zero when $b+\sqrt D=0$. Set
 $\beta_t^*=\min\{\lambda-1,\beta^{\mathrm{cap}}\}$.
 \item Use~\eqref{eq:velocity-cap-main} and log~\eqref{eq:chi-main} together
 with $\mathbf1\{\beta_t^*<\lambda-1\}$.
\end{enumerate}
The direct nominal test avoids unstable division when $\|\Delta_t\|$ is small.
The fallback $\beta=0$ remains feasible for every $\Gamma\geq1$. Norms and
inner products must reduce over each sample's latent dimensions, never over the
batch.

\section{Detailed Experimental Protocol}
\label{app:experiments}

\subsection{Eight-point distribution}

The endpoint experiments use a two-dimensional mixture with $K=8$ component
means placed clockwise on a circle of radius $r=1$. Component $0$ is at angle
$\pi/2$, and component $k$ is at
$\theta_k=\pi/2-k\pi/4$. Every component has isotropic standard deviation
$\sigma_k=0.03$. For the singleton $A=\{0\}$ and adjacent triple
$A=\{0,1,2\}$, the unconditional mixture weights are
$w_k=1/8$ for all $k$. The adjacent-pair experiment $A=\{0,1\}$ instead uses
deliberately imbalanced unconditional mixture weights,
\[
(w_0,\ldots,w_7)
=(0.17,0.08,0.125,0.125,0.125,0.125,0.125,0.125),
\]
which induce conditional weights $68\%/32\%$ after renormalization over
$A$. This pair is therefore an imbalance stress test rather than an equal-weight
counterpart of the other two conditions.

Both conditional and unconditional fields are analytic GMM posterior-mean
velocities; no neural network, learned approximation, or checkpoint is used.
The conditional weights are obtained by renormalizing the unconditional
weights over $A$, and fixed CFG is evaluated as
$v_t^{\mathrm u}+\lambda(v_t^{\mathrm c}-v_t^{\mathrm u})$. Thus
$\lambda=1$ is exactly the analytic conditional field. The PMC-CFG runs use
the same fields with nominal $\lambda=3$ and the relative norm cap
$\Gamma=1.1$ from Section~\ref{sec:method}.

For each row of Table~\ref{tab:gmm}, the implementation independently draws
$2\times10^5$ samples from $X_0\sim\mathcal N(0,I_2)$ in one call and uses
PyTorch's default floating-point dtype (float32). It then applies 50 explicit
Euler updates on the uniform grid $t_i=i/50$. The field is evaluated at the 50
left endpoints $t_i$, $i=0,\ldots,49$, and the final state is at $t=1$; there
is no $1-\varepsilon$ endpoint or special last-step rule. Counting one
evaluation of the composite guided field as one NFE gives 50 NFEs, equivalently
50 conditional and 50 unconditional analytic subfield evaluations. The nine
runs use constant nominal $\lambda\in\{1,3\}$ and enable no time schedule,
cosine-similarity gate, or relative-distance schedule. Three runs enable only
the PMC norm cap with $\Gamma=1.1$. Marginal histograms use 101 bins.

Each sampler call draws new initial noise, so different conditions and guidance
scales are not paired sample by sample. Accordingly, Table~\ref{tab:gmm} reports a
single $n=200000$ estimate per setting rather than a mean over repeated runs.
For endpoint metrics, samples are assigned by MAP under the conditional GMM,
not by an unconditional nearest-center rule. TV is one half of the $\ell_1$
distance between empirical and target conditional occupancies; mean error is
$\|\widehat\mu-\mu_A\|_2$ before assignment; and each per-label variance ratio
is $\operatorname{tr}(\widehat\Sigma_k)/(2\sigma_k^2)$, with
$\widehat\Sigma_k$ the unbiased covariance of samples assigned to branch $k$.

The $\lambda=1,A=\{0\}$ variance ratio near $0.571$ is the expected 50-step
Euler bias: the exact discrete prediction is $R_{50}=0.5706506$, whereas the
exact continuous-time transport has endpoint ratio one.

\subsection{ImageNet-256 protocol}

We use the released class-conditional SiT-XL/2 checkpoint for
$256\times256$ ImageNet generation \cite{deng2009imagenet,ma2024sit}, together with its native
latent normalization.
If the implementation parameterizes time or noise in a variable other than the
linear-path $t$, we convert the model output to the velocity and remaining-time
factor used in
$m_t=x+(1-t)v_t$ before applying PMC-CFG. This conversion is verified on
synthetic inputs and documented explicitly; silently reusing a diffusion noise
prediction as a velocity would invalidate the cap.

The evaluation draws 50 samples for each of 1,000 ImageNet classes, forming
the standard 50K set. We use the official ADM/guided-diffusion Inception
implementation and its reference statistics for FID, sFID, IS, precision, and
recall \cite{dhariwal2021diffusion,heusel2017ttur,
salimans2016improved,kynkaanniemi2019precisionrecall}. These five metrics are
computed from the same 50K sample set for every method.

\subsection{Stable Diffusion 3.5 protocol}

We evaluate the official Stable Diffusion 3.5 Medium checkpoint
\cite{stabilityai2024sd35} through the public Diffusers pipeline. We generate
one $1024\times1024$ image for each of 5,000 prompts from the COCO 2017
validation split \cite{lin2014coco}, following their fixed insertion order in
the prompt file. Each run uses one persistent CUDA generator initialized with
seed 42, and all methods share the same scheduler, text encoders, VAE, prompt
order, and initial latent sequence. Generation uses batch size one; under CFG,
the conditional and unconditional inputs are concatenated into a transformer
batch of two. We use empty-text unconditional conditioning unless a baseline
requires a different negative prompt, in which case that difference is
reported.

We use the \texttt{FlowMatchEulerDiscreteScheduler} with 10 deterministic Euler
steps, \texttt{num\_train\_timesteps=1000}, and a fixed flow shift of 3.0.
Dynamic shifting, stochastic sampling, and Karras, exponential, beta sigma
schedules are disabled. The SD3.5 pipeline runs in bfloat16, while scheduler
timesteps and sigmas, PMC-CFG norms and inner products, and clean-latent
estimates are computed in FP32. We test nominal guidance scales
$\lambda\in\{3,5,7,9\}$. Fixed CFG applies the nominal scale at every step;
PMC-CFG uses $\Gamma=1.15$ and caps each sample's guided clean-latent
displacement without exceeding the nominal scale. C2FG uses its
method-specific coefficient 0.2 and the normalized sampling progress $p$ on
the shifted sigma grid, giving the schedule $w(p)=w\exp(0.2p)$. APG applies
its projected, rescaled, momentum-guidance update \cite{sadat2025apg} to the
same conditional and unconditional predictions; its evaluated sample sets use
the same nominal scales, initial latent sequence, and solver configuration.

CLIP is the mean raw cosine similarity between matched images and prompts using
OpenAI CLIP ViT-B/32 \cite{radford2021clip}; it is not multiplied by 100.
ImageReward is the mean ImageReward-v1.0 score \cite{xu2023imagereward} and may
be negative. CLIP and FID use evaluation batch size eight, whereas ImageReward
is evaluated one image at a time. For FID, generated and COCO
validation images are resized directly to $299\times299$, converted to
8-bit tensors, and evaluated with the default 2048-dimensional Inception
features in TorchMetrics \cite{detlefsen2022torchmetrics}; these values should
therefore not be mixed with Clean-FID results \cite{parmar2022cleanfid}. Mean
saturation is computed from the normalized 8-bit HSV
saturation channel and is reported only to diagnose over- or under-saturation.
Finally, diversity is improved Recall@$3$
\cite{kynkaanniemi2019precisionrecall}: using Clean-FID's clean Inception
features, a real sample is covered if it lies inside the third-nearest-neighbor
ball of at least one generated sample. FID and Recall@$3$ use the 5,000 COCO
validation images as the reference set; the remaining metrics use only each
generated image and its matched prompt.

\FloatBarrier
\section{Additional Diagnostics and Qualitative Results}
\label{app:additional-results}

\subsection{GMM marginal endpoint densities}
\label{app:gmm-marginals}

Figure~\ref{fig:gmm-distributions} complements the endpoint samples and
statistics in the main text by showing how guidance changes the endpoint
marginal densities. The conditional baseline ($\lambda=1$, top row) closely follows
the target shape, apart from the known finite-step contraction. Standard
fixed-scale CFG ($\lambda=3$, middle row) produces a clear distributional
shift: for the singleton condition, the marginal peak moves away from the
target and becomes substantially narrower; for the pair and triple conditions,
the generated mass is additionally reweighted toward the heavier or
geometrically central branch. Consequently, the generated histograms no longer
align with either the location, spread, or relative mode mass of the
conditional target.

\begin{figure*}[t]
\centering
\begin{minipage}{0.32\textwidth}
\centering\footnotesize $\lambda=1,\ A=\{0\}$\\[-0.25em]
\includegraphics[width=\linewidth]{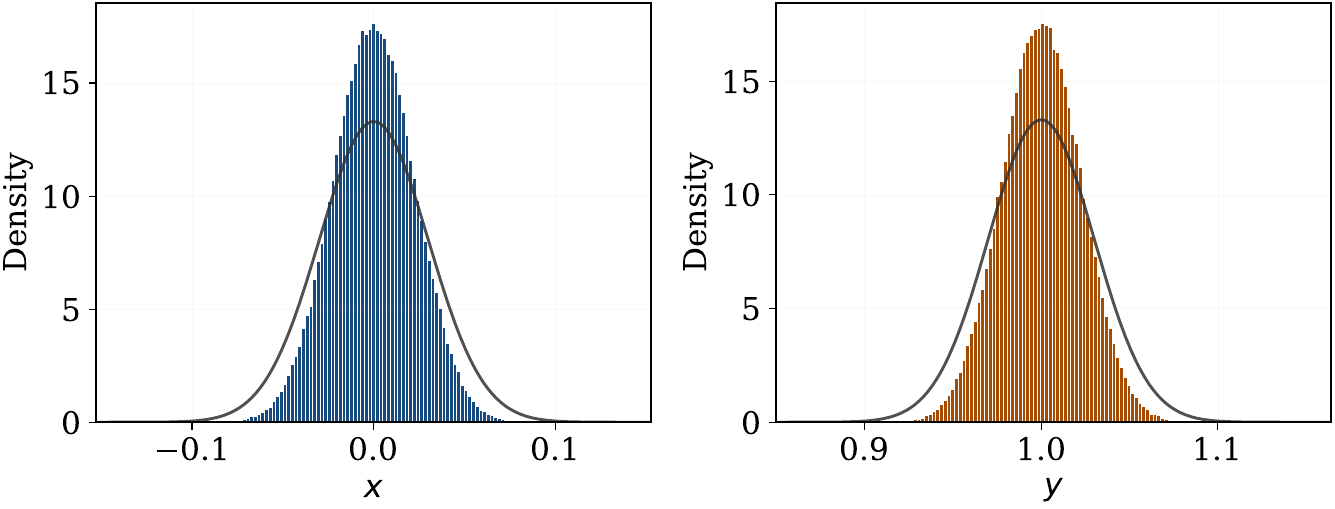}
\end{minipage}\hfill
\begin{minipage}{0.32\textwidth}
\centering\footnotesize $\lambda=1,\ A=\{0,1\}$\\[-0.25em]
\includegraphics[width=\linewidth]{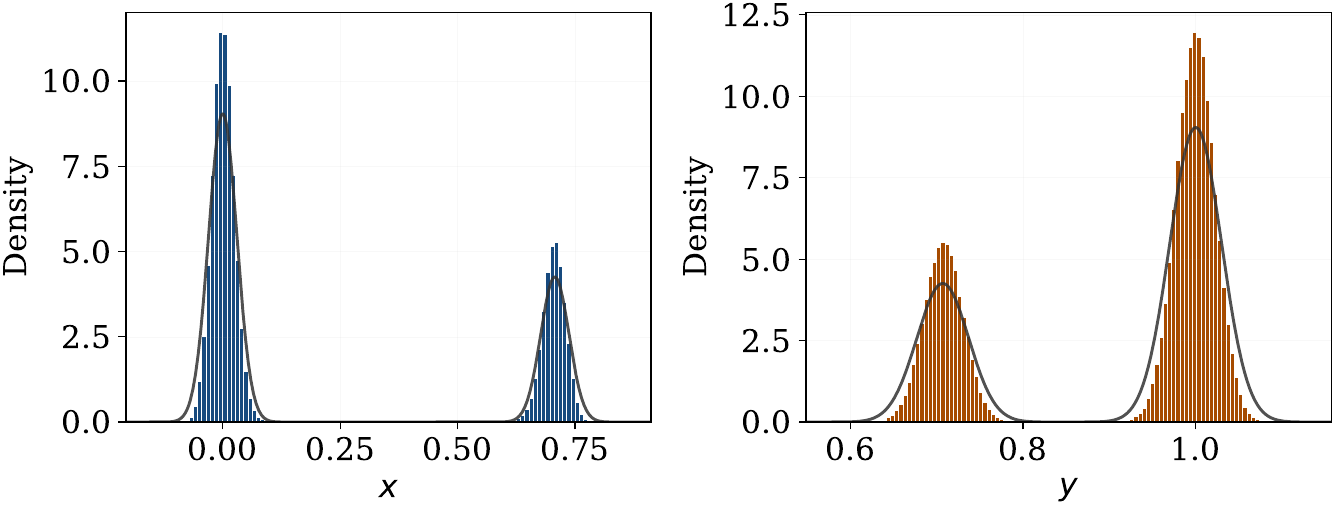}
\end{minipage}\hfill
\begin{minipage}{0.32\textwidth}
\centering\footnotesize $\lambda=1,\ A=\{0,1,2\}$\\[-0.25em]
\includegraphics[width=\linewidth]{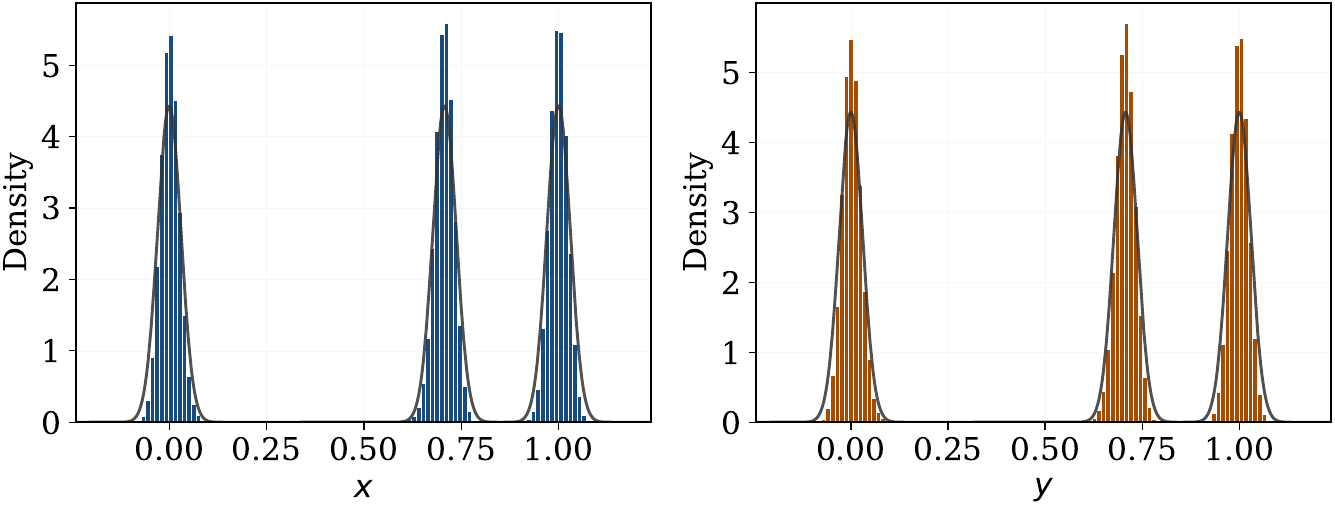}
\end{minipage}\\[0.35em]
\begin{minipage}{0.32\textwidth}
\centering\footnotesize $\lambda=3,\ A=\{0\}$\\[-0.25em]
\includegraphics[width=\linewidth]{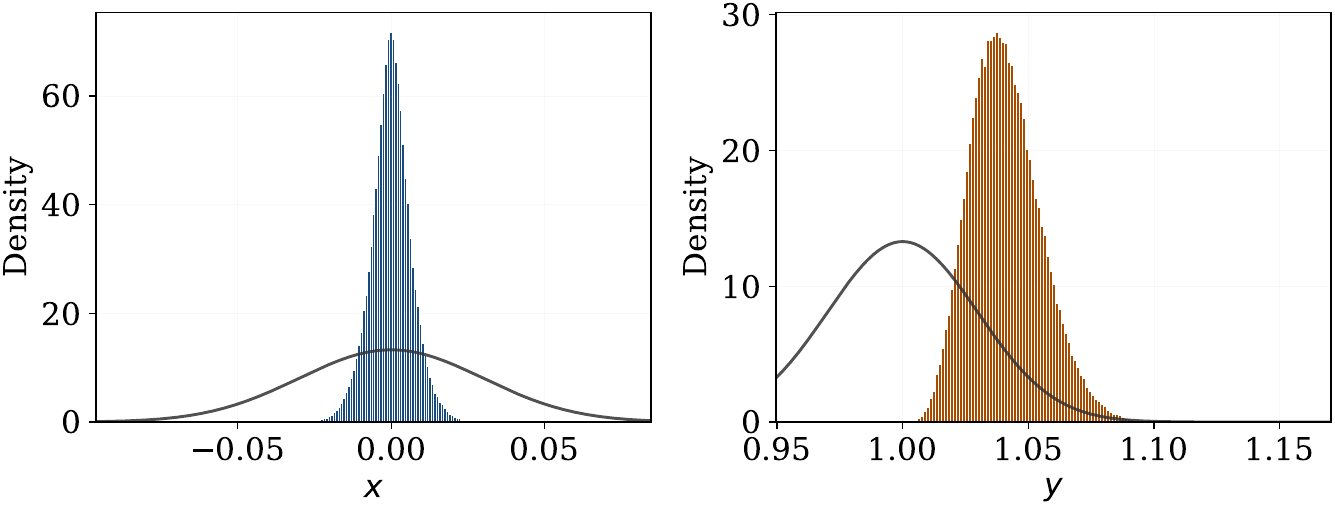}
\end{minipage}\hfill
\begin{minipage}{0.32\textwidth}
\centering\footnotesize $\lambda=3,\ A=\{0,1\}$\\[-0.25em]
\includegraphics[width=\linewidth]{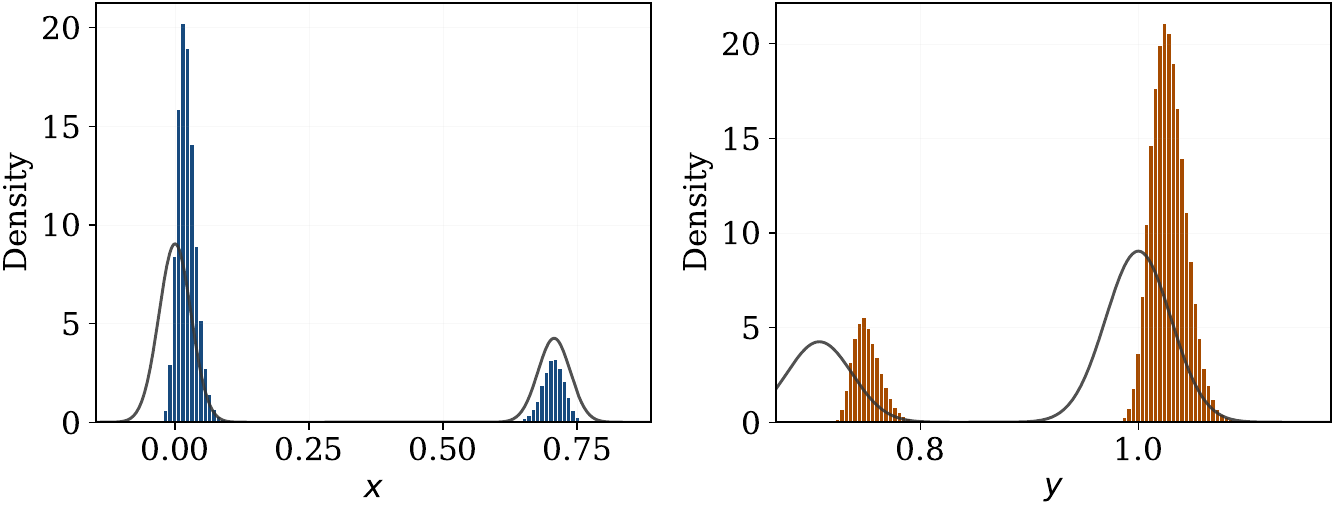}
\end{minipage}\hfill
\begin{minipage}{0.32\textwidth}
\centering\footnotesize $\lambda=3,\ A=\{0,1,2\}$\\[-0.25em]
\includegraphics[width=\linewidth]{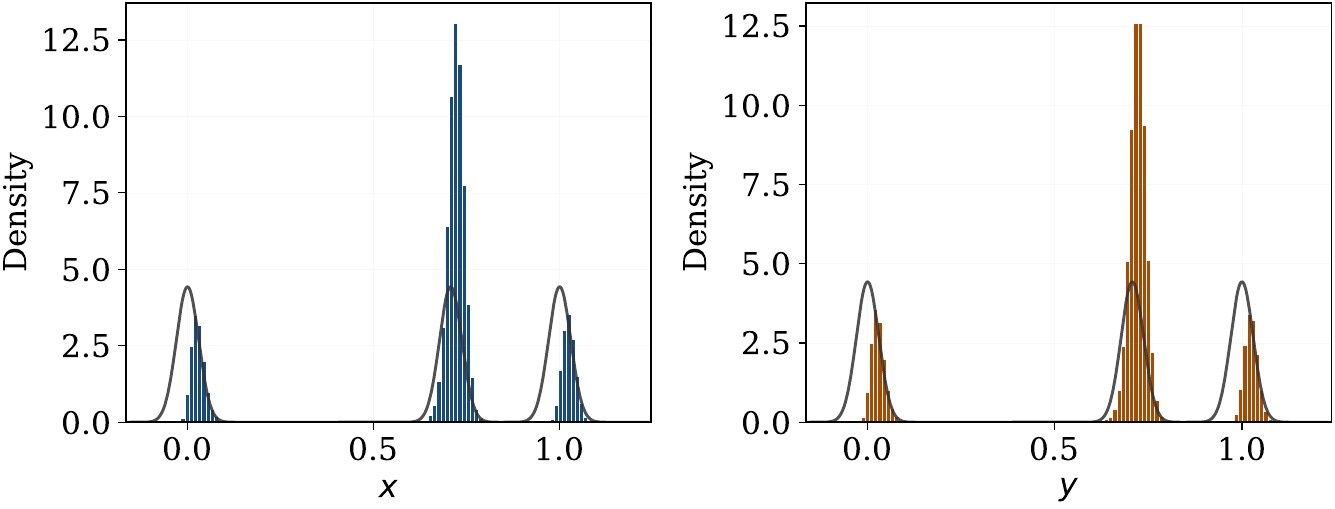}
\end{minipage}\\[0.35em]
\begin{minipage}{0.32\textwidth}
\centering\footnotesize PMC-CFG, $\lambda=3,\ \Gamma=1.1$\\[-0.2em]
$A=\{0\}$\\[-0.25em]
\includegraphics[width=\linewidth]{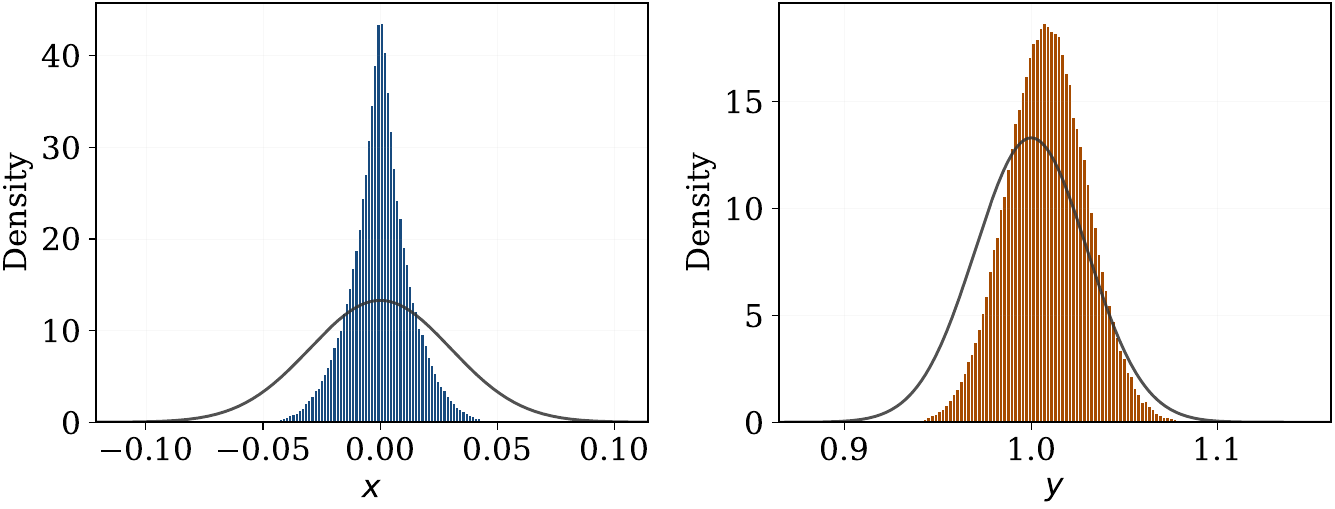}
\end{minipage}\hfill
\begin{minipage}{0.32\textwidth}
\centering\footnotesize PMC-CFG, $\lambda=3,\ \Gamma=1.1$\\[-0.2em]
$A=\{0,1\}$\\[-0.25em]
\includegraphics[width=\linewidth]{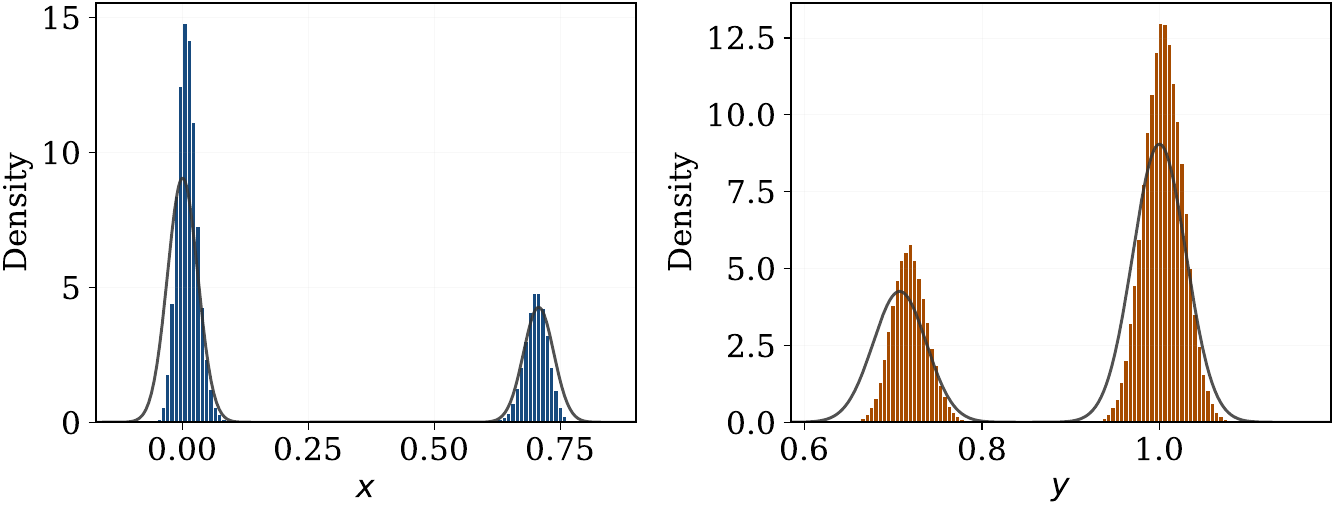}
\end{minipage}\hfill
\begin{minipage}{0.32\textwidth}
\centering\footnotesize PMC-CFG, $\lambda=3,\ \Gamma=1.1$\\[-0.2em]
$A=\{0,1,2\}$\\[-0.25em]
\includegraphics[width=\linewidth]{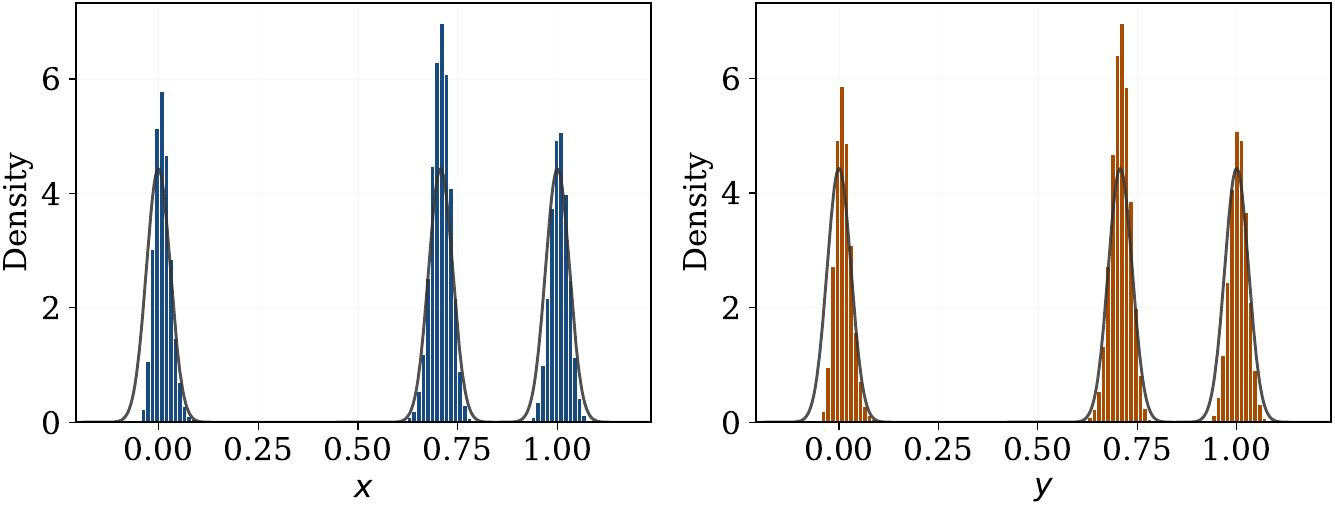}
\end{minipage}
\caption{Marginal endpoint densities for the same runs as
Figure~\ref{fig:gmm-samples}. Rows show the conditional baseline
($\lambda=1$), fixed CFG ($\lambda=3$), and PMC-CFG
($\lambda=3,\Gamma=1.1$). Histograms are generated marginals and dark curves
are the theoretical conditional target. Fixed CFG shifts and concentrates the
distribution, whereas PMC-CFG recovers substantially more of its location,
spread, and relative mode mass.}
\label{fig:gmm-distributions}
\end{figure*}

PMC-CFG uses the same nominal scale $\lambda=3$ but limits the early
posterior-mean extrapolation responsible for this displacement. The bottom row
therefore places the marginal peaks closer to the target, restores within-mode
spread, and reduces branch reweighting. Table~\ref{tab:gmm} quantifies the same
effect: relative to fixed CFG, PMC-CFG reduces mean error from $0.04091$ to
$0.00831$ for the singleton, from $0.11781$ to $0.02800$ for the pair, and from
$0.12883$ to $0.02980$ for the triple. For the multimodal cases, occupancy TV
also decreases from $0.15272$ to $0.03851$ and from $0.35294$ to $0.08046$,
respectively. Thus the cap mitigates both global displacement and
guidance-induced concentration rather than merely translating the generated
distribution.

\subsection{ImageNet-256 mechanism diagnostics}
\label{app:imagenet-mechanism}

Figure~\ref{fig:imagenet-dynamics} exhibits the posterior-mean evolution derived
for the GMM in Proposition~\ref{prop:gmm-gap-app}. Early in sampling,
$\norm{m_t^{\mathrm u}}$ is substantially smaller than
$\norm{m_t^{\mathrm c}}$ and $\norm{\Delta_t}$ is large, resembling the
centered-GMM initialization $m_0^{\mathrm u}=0$ and
$m_0^{\mathrm c}=\mu_0$. The unconditional prediction averages over many
possible classes, whereas conditioning supplies a directed endpoint estimate.
As $x_t$ becomes informative, the two predictions converge and
$\norm{\Delta_t}$ approaches zero, matching the GMM limit in
\eqref{eq:gmm-gap-envelope-app}.

This evolution of the posterior-mean gap explains the increasing effective
scale in the right panel. Writing $\gamma_t^{\mathrm{eff}}=1+\beta_t^*$, the
actual endpoint prediction used by PMC-CFG is
\begin{equation}
 m_t^{\mathrm{PMC}}
 =m_t^{\mathrm c}+(\gamma_t^{\mathrm{eff}}-1)\Delta_t,
 \qquad
 \gamma_t^{\mathrm{eff}}
 =1+\min\!\left\{\lambda-1,
 \frac{\norm{m_t^{\mathrm c}}}{\norm{\Delta_t}}
 \left[-\cos\theta_t+
 \sqrt{\cos^2\theta_t+\Gamma^2-1}\right]\right\},
 \label{eq:imagenet-effective-scale-app}
\end{equation}
for $\Delta_t\ne0$, where $\theta_t$ is the angle between
$m_t^{\mathrm c}$ and $\Delta_t$; if $\Delta_t=0$, we set
$\gamma_t^{\mathrm{eff}}=\lambda$. A large relative gap activates the cap and
keeps the early scale close to conditional guidance. As the gap contracts, the
admissible scale increases. Once
\begin{equation}
 \norm{m_t^{\mathrm c}+(\lambda-1)\Delta_t}
 \leq \Gamma\norm{m_t^{\mathrm c}},
 \label{eq:imagenet-nominal-feasible-app}
\end{equation}
nominal CFG is feasible, so the controller reaches and retains the upper limit
$\gamma_t^{\mathrm{eff}}=\lambda$.

This produces a coarse-to-fine guidance schedule. During the early,
pattern-forming phase, the smaller scale limits posterior-mean overshoot and
extra trajectory contraction, preserving variation while the global layout is
established. Later, $\Delta_t$ is small, so PMC-CFG can use the nominal scale
with a small absolute correction $(\lambda-1)\Delta_t$ to sharpen
class-specific boundaries and texture. This mechanism agrees with the improved
recall in Table~\ref{tab:imagenet} and the matched samples in
Figure~\ref{fig:imagenet-matched-extra}.

\begin{figure*}[t]
\centering
\begin{minipage}[t]{0.32\textwidth}
\centering
\includegraphics[width=\linewidth]{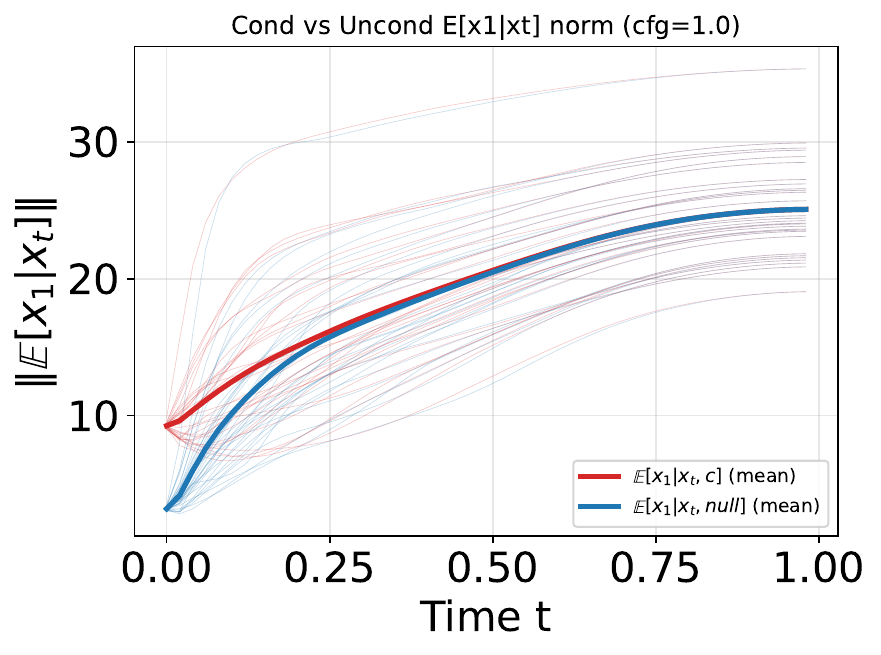}
\end{minipage}\hfill
\begin{minipage}[t]{0.32\textwidth}
\centering
\includegraphics[width=\linewidth]{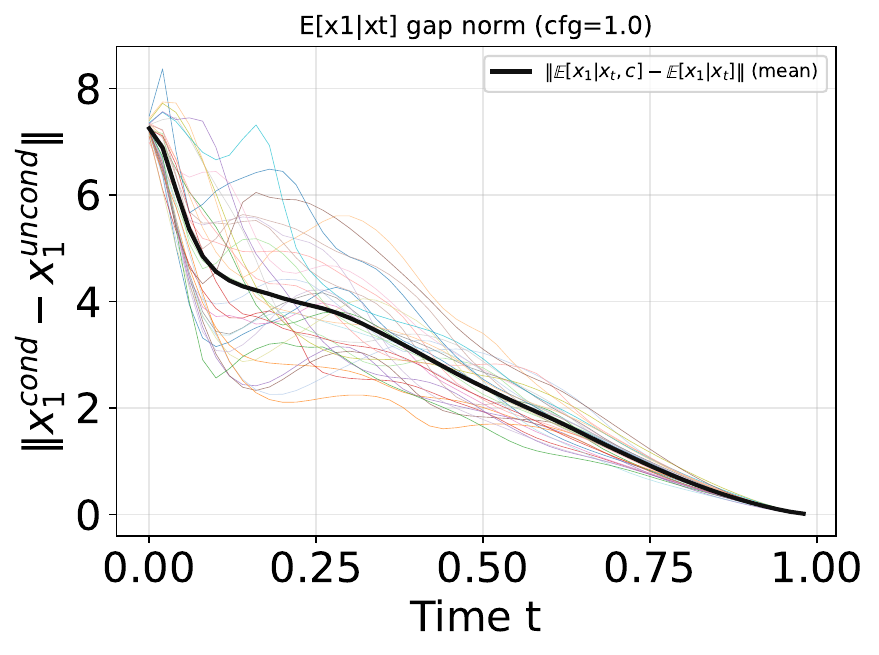}
\end{minipage}\hfill
\begin{minipage}[t]{0.32\textwidth}
\centering
\includegraphics[width=\linewidth]{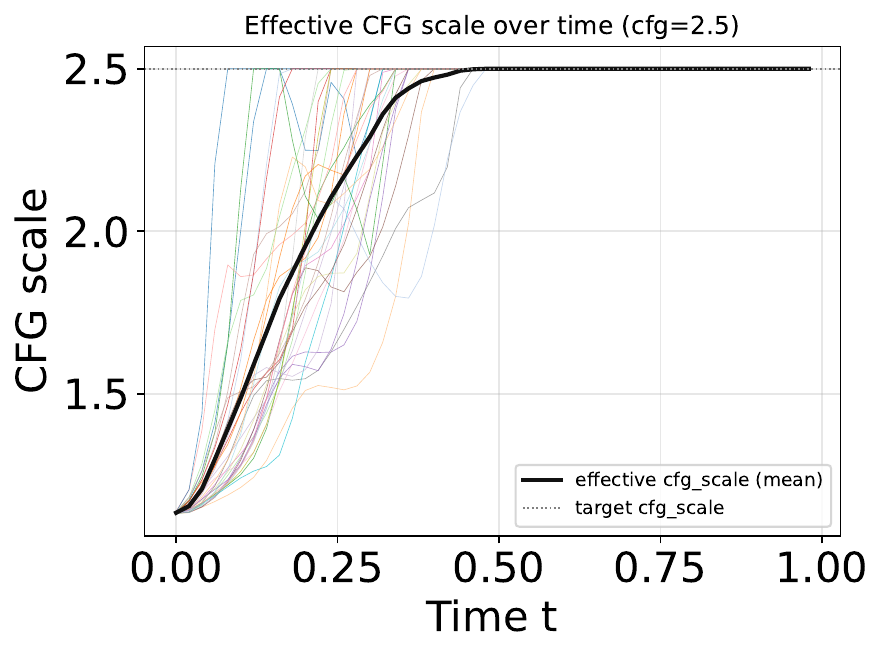}
\end{minipage}
\caption{ImageNet-256 mechanism diagnostics. From left to right: the conditional
and unconditional posterior-mean norms $\norm{m_t^{\mathrm c}}$ and
$\norm{m_t^{\mathrm u}}$ evaluated along the $\lambda=1$ trajectory; their gap
$\norm{\Delta_t}$ on the same trajectory; and the effective PMC-CFG scale
$\gamma_t^{\mathrm{eff}}=1+\beta_t^*$ for nominal scale $\lambda=2.5$. The
initially smaller unconditional norm and large gap resemble the centered-GMM
geometry; as the gap vanishes, the cap relaxes and the effective scale reaches
its nominal upper limit. Thin curves denote individual samples and thick curves
their averages.}
\label{fig:imagenet-dynamics}
\end{figure*}

\subsection{Additional qualitative results}
\label{app:additional-qualitative}

To complement the aggregate metrics, we provide matched qualitative results
that isolate how each guidance rule changes the composition and appearance of
individual samples. Within each comparison, no guidance, fixed CFG, and
PMC-CFG use the same class or text prompt and the same initial latent seed.
Figure~\ref{fig:imagenet-matched-extra} reports additional class-conditional
results on ImageNet-256, while Figure~\ref{fig:sd35-categories} covers four
text-prompt categories on SD3.5 Medium. The latter uses the following prompts:
\begin{itemize}
    \item \textbf{Dog:} \emph{A dog is shown sitting on rocks by the beach,
    photorealistic}.
    \item \textbf{Landscape:} \emph{A beautiful mountain and river landscape,
    photorealistic}.
    \item \textbf{Bedroom:} \emph{A bedroom with a large bed sitting under a
    painting, photorealistic}.
    \item \textbf{People:} \emph{a person on a small boat with another boat in
    the background, photorealistic}.
\end{itemize}

Across both ImageNet-256 and SD3.5, the matched samples show that PMC-CFG
largely preserves the objects, layout, and overall spatial pattern of the
unguided counterpart while producing clearer structures and finer details.
Compared with fixed CFG, PMC-CFG also avoids much of the excessive saturation,
improving image quality without replacing the composition established by the
unguided trajectory.

\begin{figure}[p]
\centering
\setlength{\tabcolsep}{0.5pt}
\begin{tabular}{@{}ccc@{\hspace{6pt}}ccc@{}}
\shortstack[c]{\tiny No guidance\\[-1pt]\tiny $\lambda=1$}
& \shortstack[c]{\tiny Fixed CFG\\[-1pt]\tiny $\lambda=2.5$}
& \shortstack[c]{\tiny PMC-CFG\\[-1pt]\tiny $\lambda=2.5$}
& \shortstack[c]{\tiny No guidance\\[-1pt]\tiny $\lambda=1$}
& \shortstack[c]{\tiny Fixed CFG\\[-1pt]\tiny $\lambda=2.5$}
& \shortstack[c]{\tiny PMC-CFG\\[-1pt]\tiny $\lambda=2.5$} \\
\includegraphics[width=0.155\linewidth]{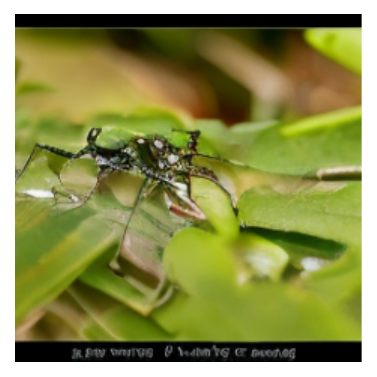}
& \includegraphics[width=0.155\linewidth]{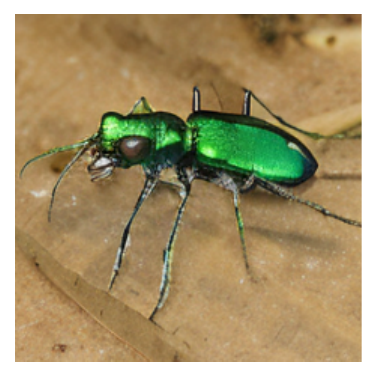}
& \includegraphics[width=0.155\linewidth]{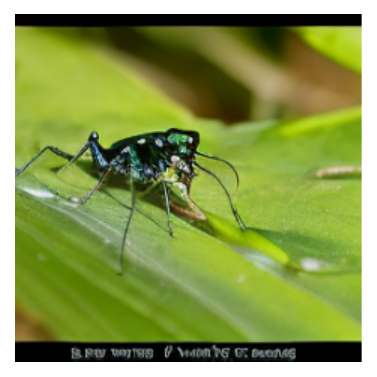}
& \includegraphics[width=0.155\linewidth]{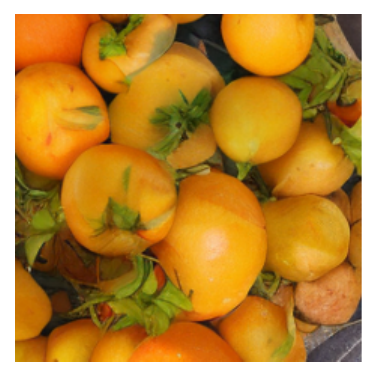}
& \includegraphics[width=0.155\linewidth]{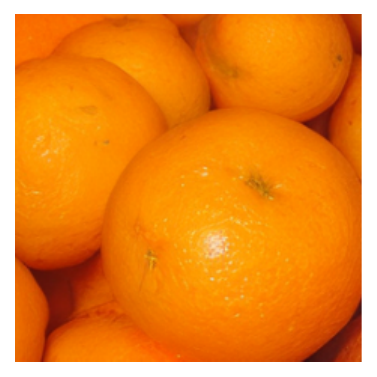}
& \includegraphics[width=0.155\linewidth]{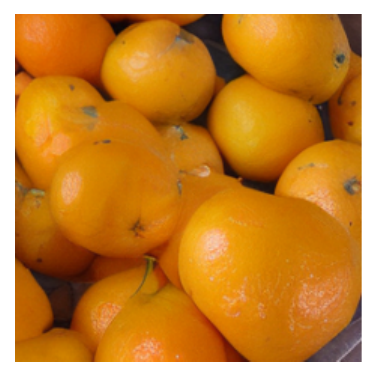} \\[-2pt]
\includegraphics[width=0.155\linewidth]{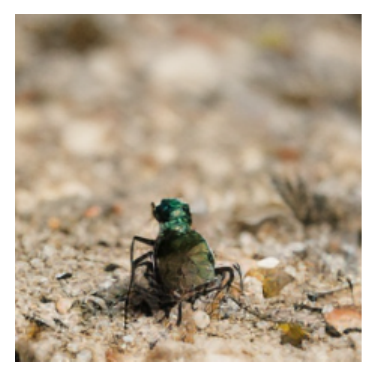}
& \includegraphics[width=0.155\linewidth]{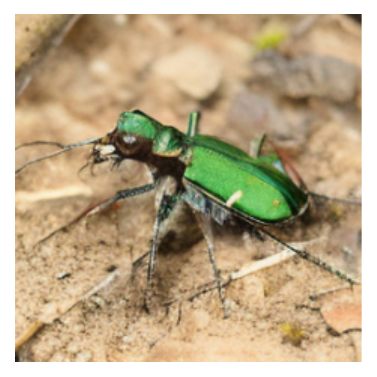}
& \includegraphics[width=0.155\linewidth]{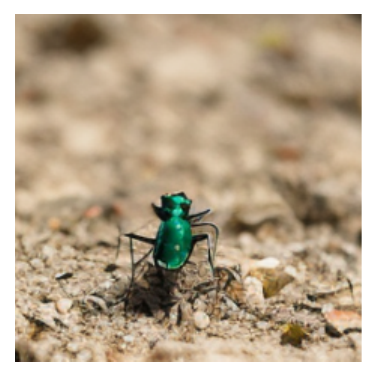}
& \includegraphics[width=0.155\linewidth]{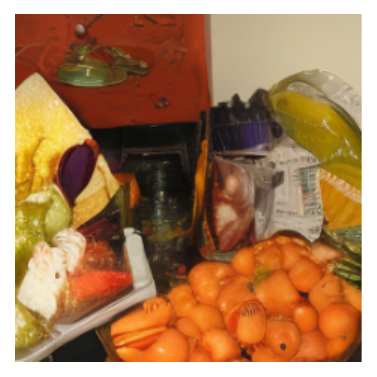}
& \includegraphics[width=0.155\linewidth]{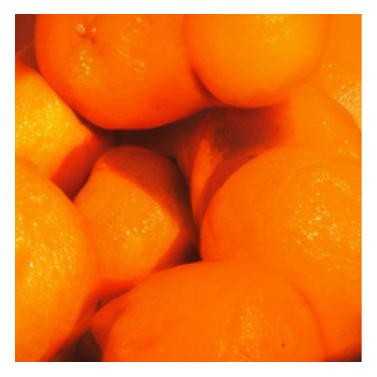}
& \includegraphics[width=0.155\linewidth]{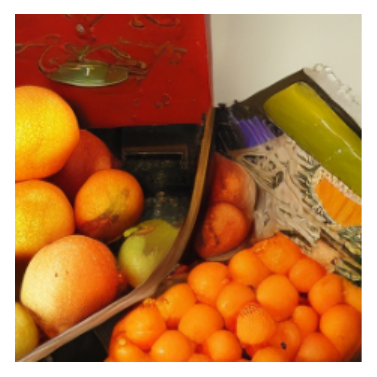} \\[-2pt]
\includegraphics[width=0.155\linewidth]{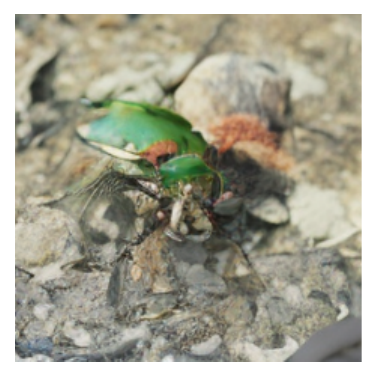}
& \includegraphics[width=0.155\linewidth]{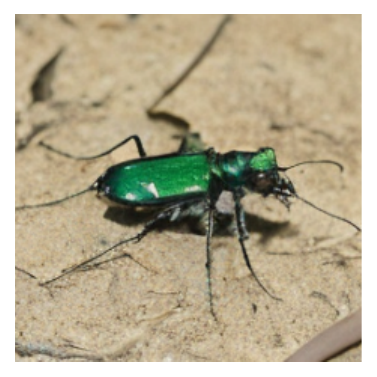}
& \includegraphics[width=0.155\linewidth]{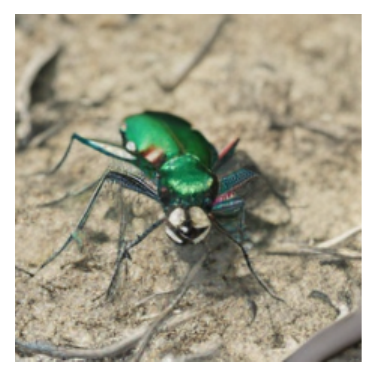}
& \includegraphics[width=0.155\linewidth]{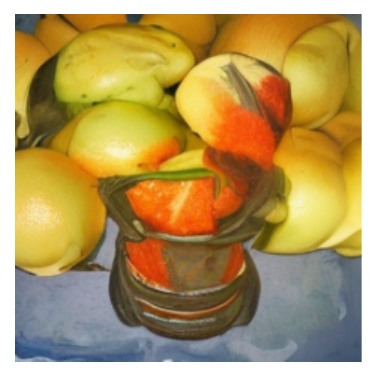}
& \includegraphics[width=0.155\linewidth]{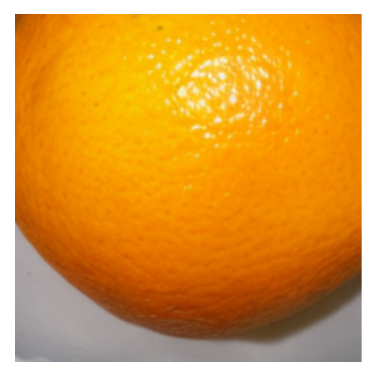}
& \includegraphics[width=0.155\linewidth]{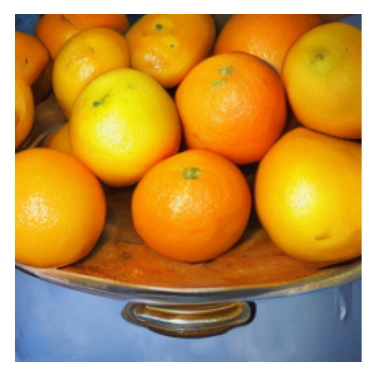} \\[-2pt]
\includegraphics[width=0.155\linewidth]{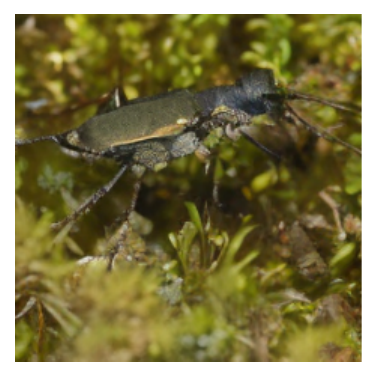}
& \includegraphics[width=0.155\linewidth]{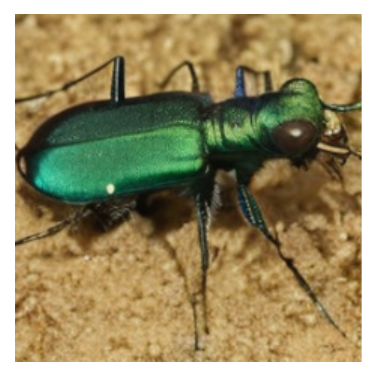}
& \includegraphics[width=0.155\linewidth]{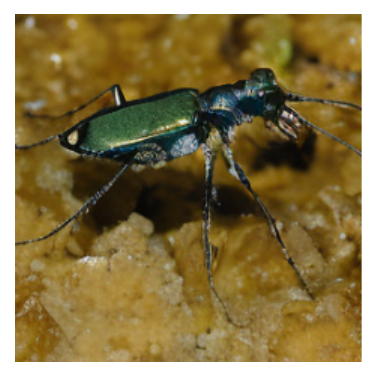}
& \includegraphics[width=0.155\linewidth]{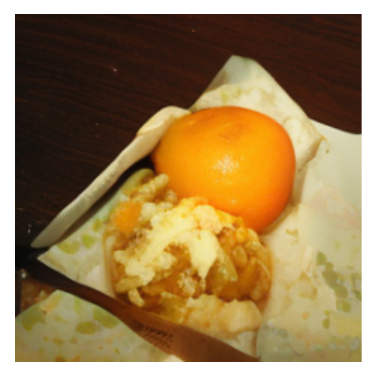}
& \includegraphics[width=0.155\linewidth]{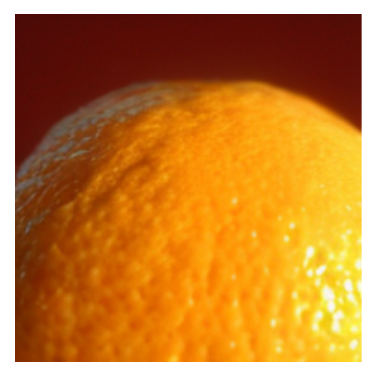}
& \includegraphics[width=0.155\linewidth]{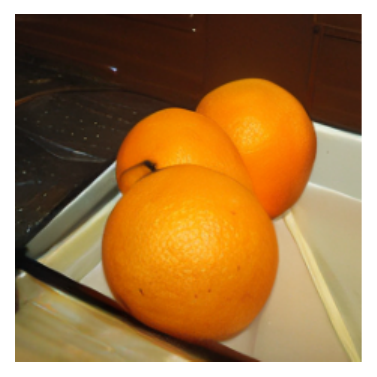} \\[-2pt]
\includegraphics[width=0.155\linewidth]{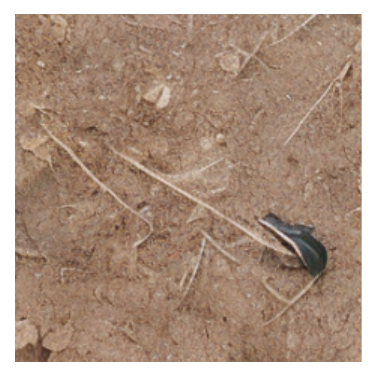}
& \includegraphics[width=0.155\linewidth]{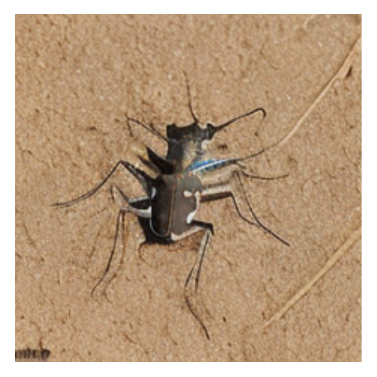}
& \includegraphics[width=0.155\linewidth]{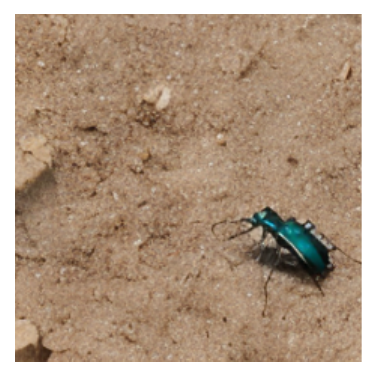}
& \includegraphics[width=0.155\linewidth]{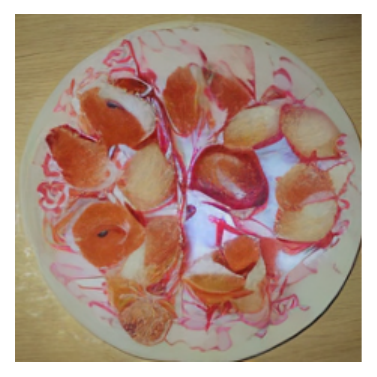}
& \includegraphics[width=0.155\linewidth]{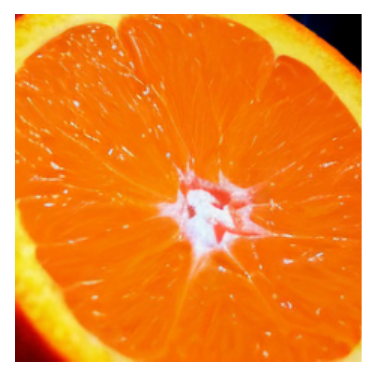}
& \includegraphics[width=0.155\linewidth]{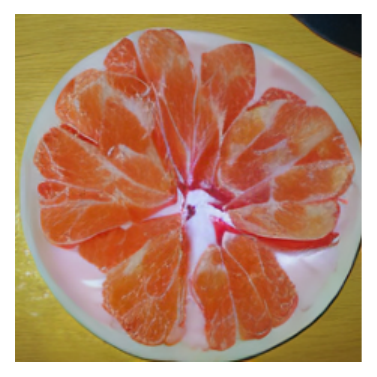} \\[-2pt]
\includegraphics[width=0.155\linewidth]{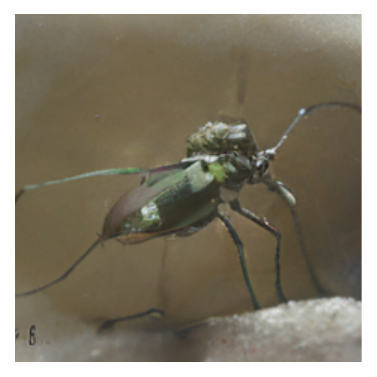}
& \includegraphics[width=0.155\linewidth]{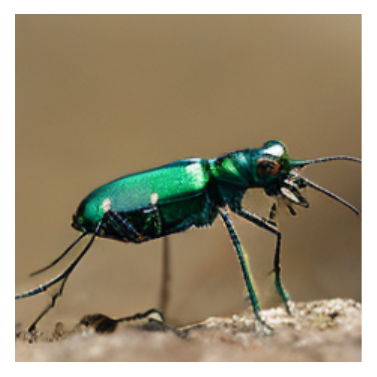}
& \includegraphics[width=0.155\linewidth]{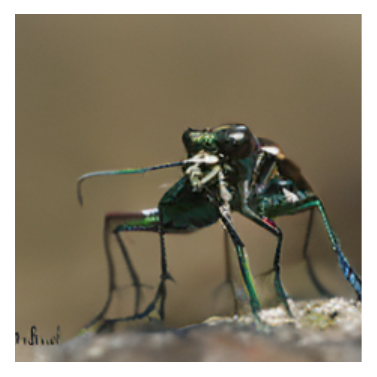}
& \includegraphics[width=0.155\linewidth]{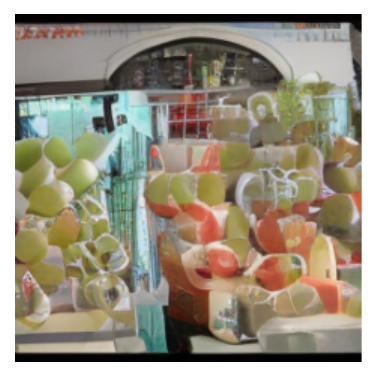}
& \includegraphics[width=0.155\linewidth]{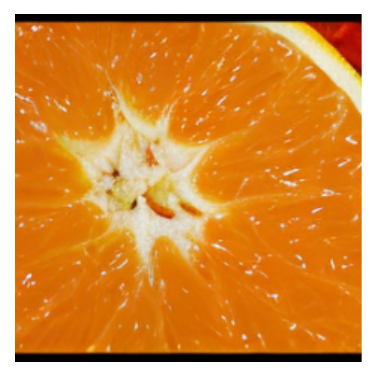}
& \includegraphics[width=0.155\linewidth]{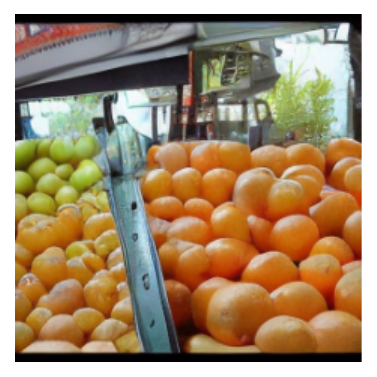} \\[-2pt]
\includegraphics[width=0.155\linewidth]{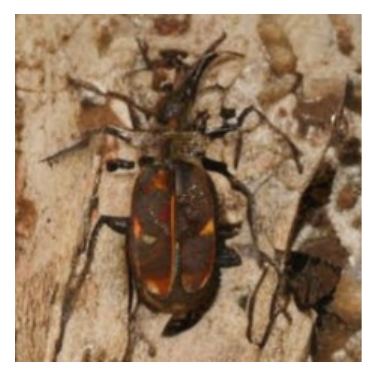}
& \includegraphics[width=0.155\linewidth]{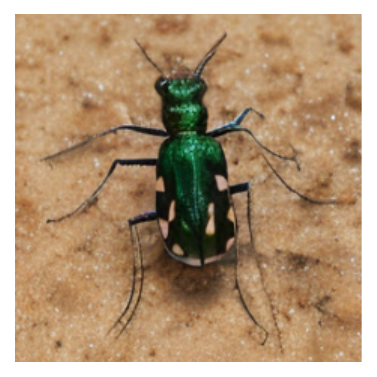}
& \includegraphics[width=0.155\linewidth]{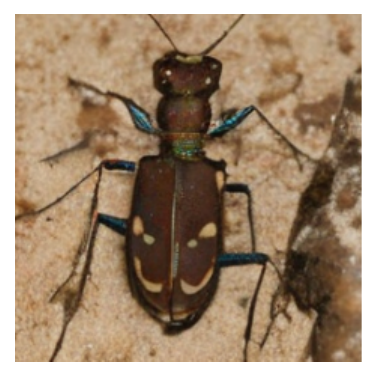}
& \includegraphics[width=0.155\linewidth]{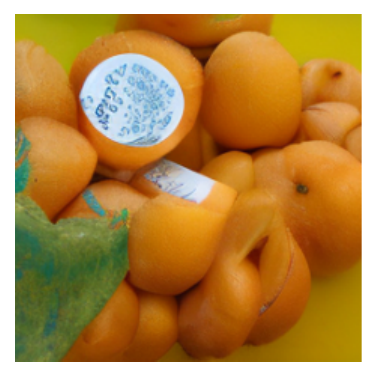}
& \includegraphics[width=0.155\linewidth]{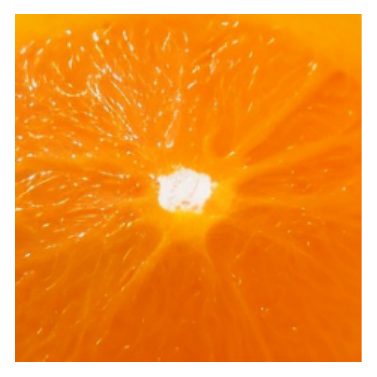}
& \includegraphics[width=0.155\linewidth]{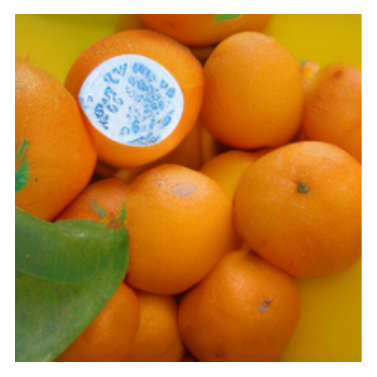} \\[-2pt]
\includegraphics[width=0.155\linewidth]{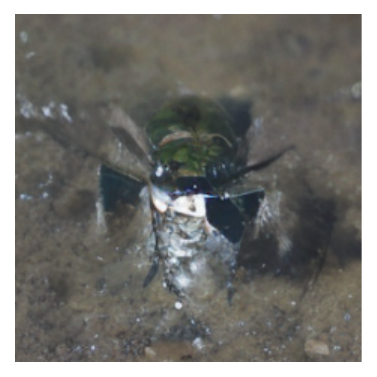}
& \includegraphics[width=0.155\linewidth]{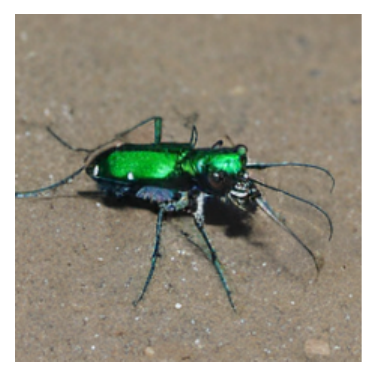}
& \includegraphics[width=0.155\linewidth]{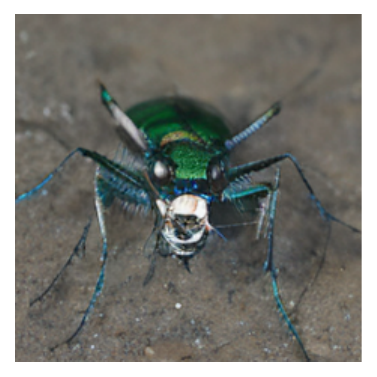}
& \includegraphics[width=0.155\linewidth]{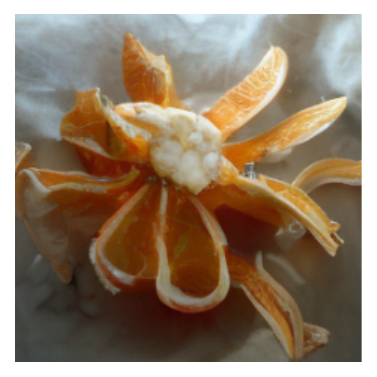}
& \includegraphics[width=0.155\linewidth]{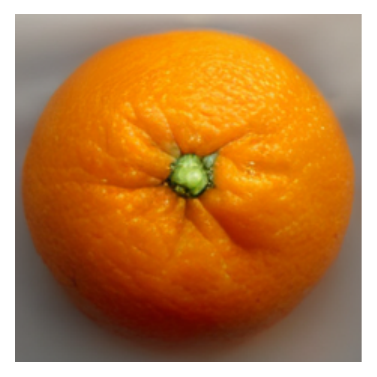}
& \includegraphics[width=0.155\linewidth]{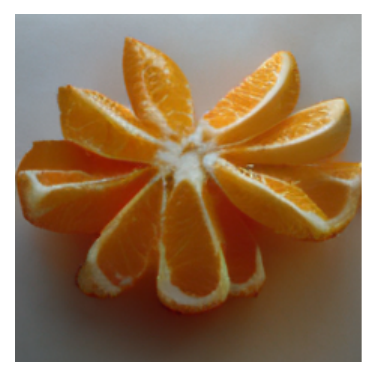}
\end{tabular}
\caption{Additional matched qualitative comparisons on ImageNet-256. Each
three-column group compares no guidance, fixed CFG, and PMC-CFG; corresponding
images within each group use the same class and initial latent seed. Fixed CFG
and PMC-CFG both use nominal guidance scale $\lambda=2.5$.}
\label{fig:imagenet-matched-extra}
\end{figure}

\begin{figure}[p]
\centering
\setlength{\tabcolsep}{0.5pt}
\resizebox{0.86\linewidth}{!}{%
\begin{tabular}{@{}ccc@{\hspace{6pt}}ccc@{}}
\multicolumn{3}{c}{\small Dog} & \multicolumn{3}{c}{\small Landscape} \\
\shortstack[c]{\tiny No guidance\\[-1pt]\tiny $\lambda=1$}
& \shortstack[c]{\tiny Fixed CFG\\[-1pt]\tiny $\lambda=5$}
& \shortstack[c]{\tiny PMC-CFG\\[-1pt]\tiny $\lambda=5$}
& \shortstack[c]{\tiny No guidance\\[-1pt]\tiny $\lambda=1$}
& \shortstack[c]{\tiny Fixed CFG\\[-1pt]\tiny $\lambda=5$}
& \shortstack[c]{\tiny PMC-CFG\\[-1pt]\tiny $\lambda=5$} \\
\includegraphics[width=0.155\linewidth]{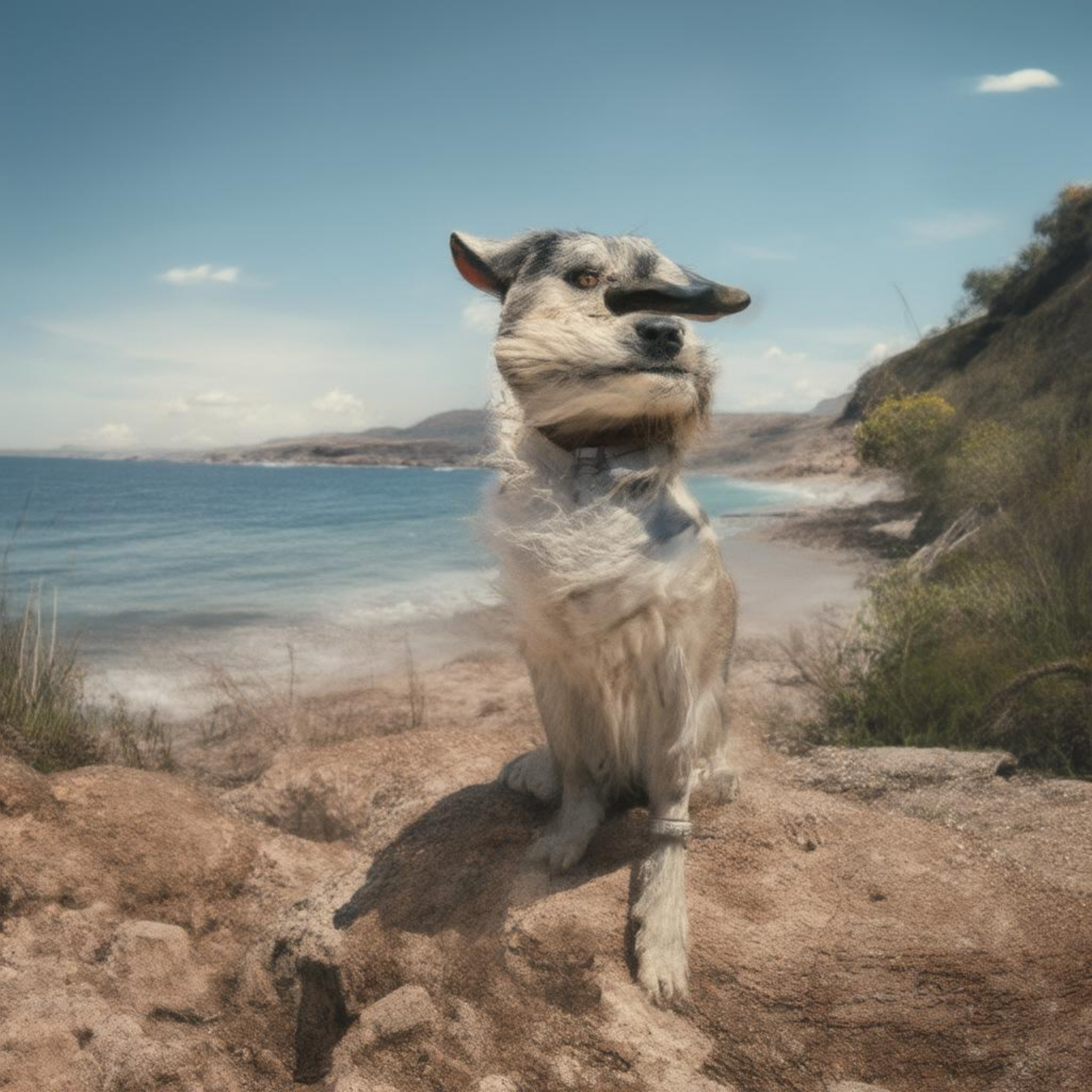}
& \includegraphics[width=0.155\linewidth]{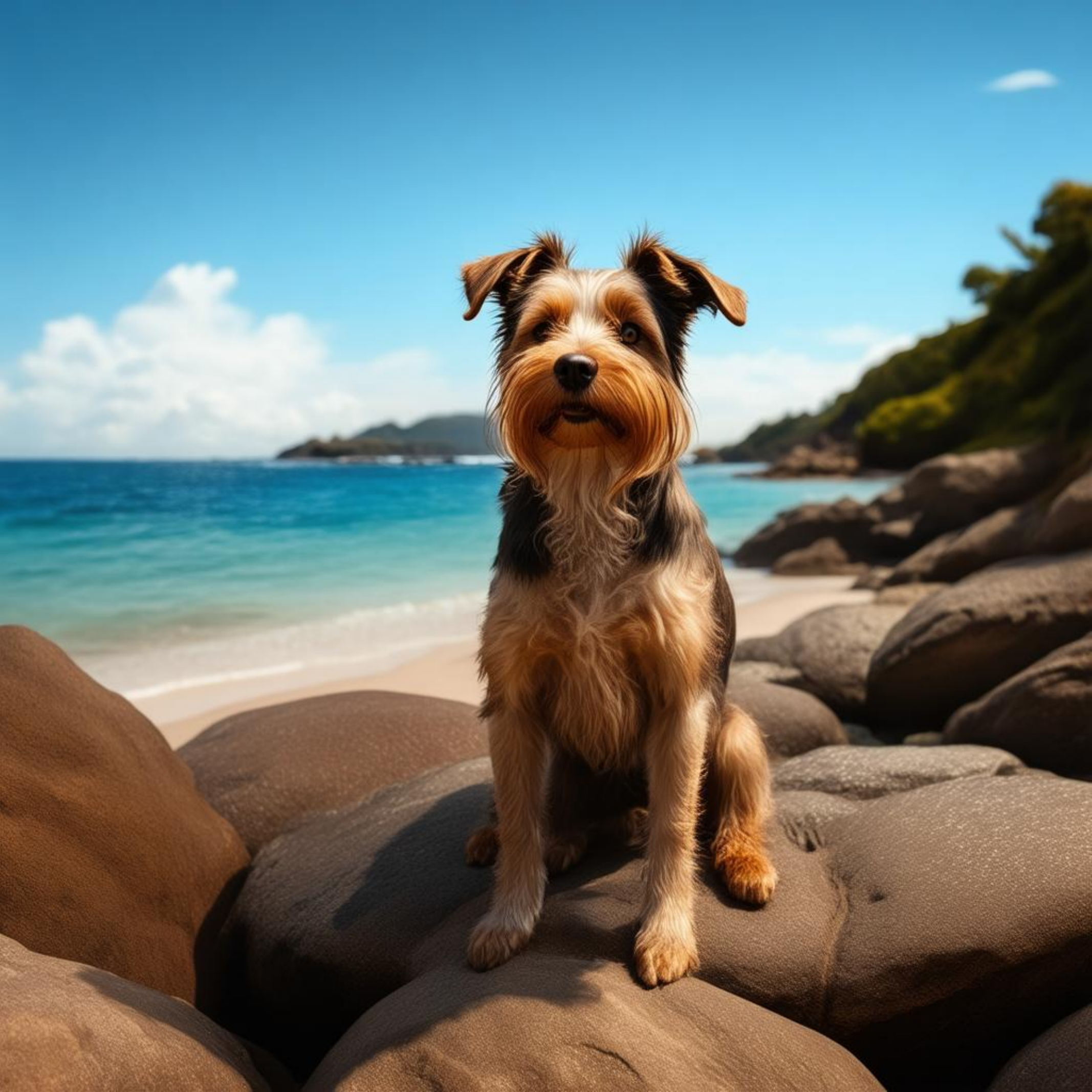}
& \includegraphics[width=0.155\linewidth]{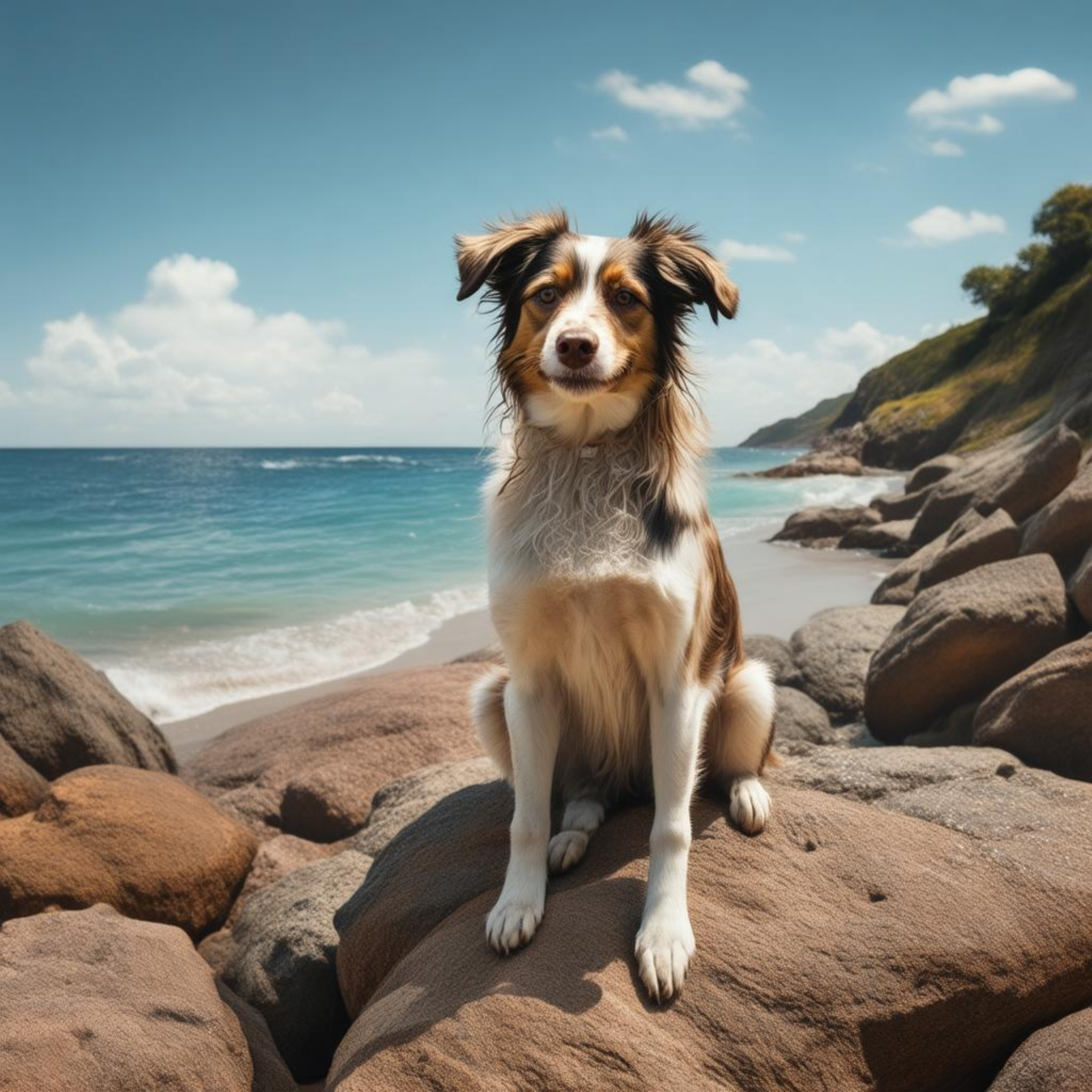}
& \includegraphics[width=0.155\linewidth]{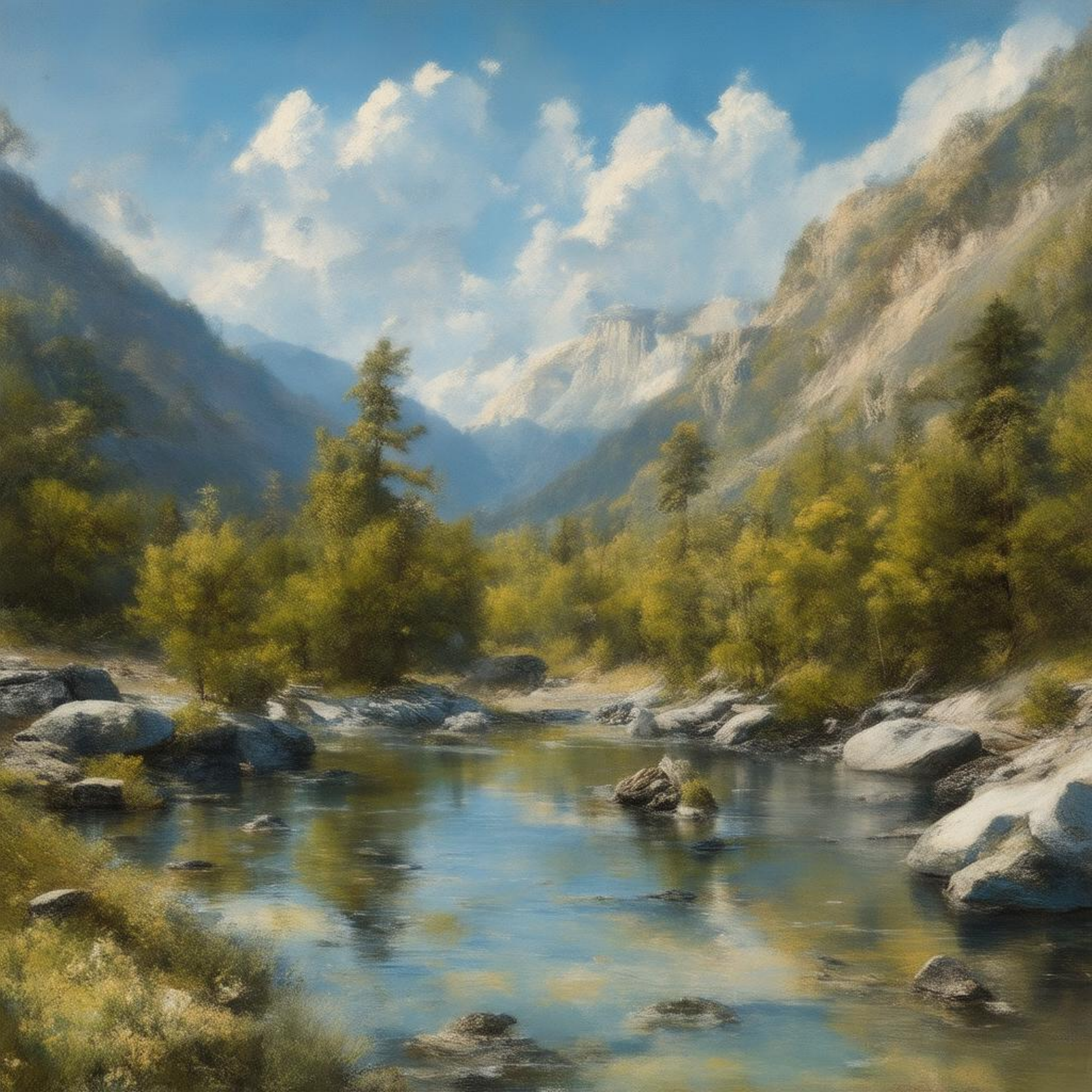}
& \includegraphics[width=0.155\linewidth]{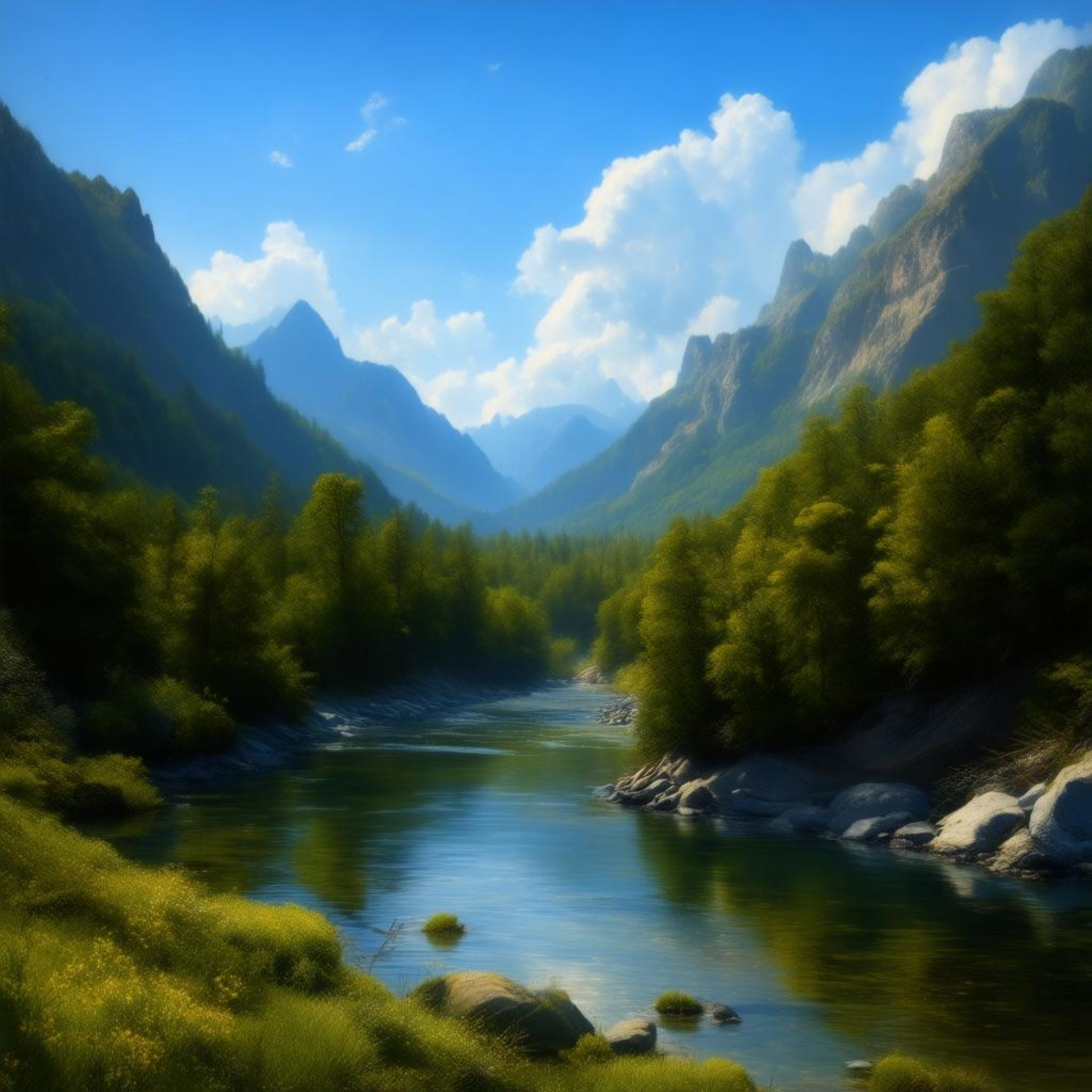}
& \includegraphics[width=0.155\linewidth]{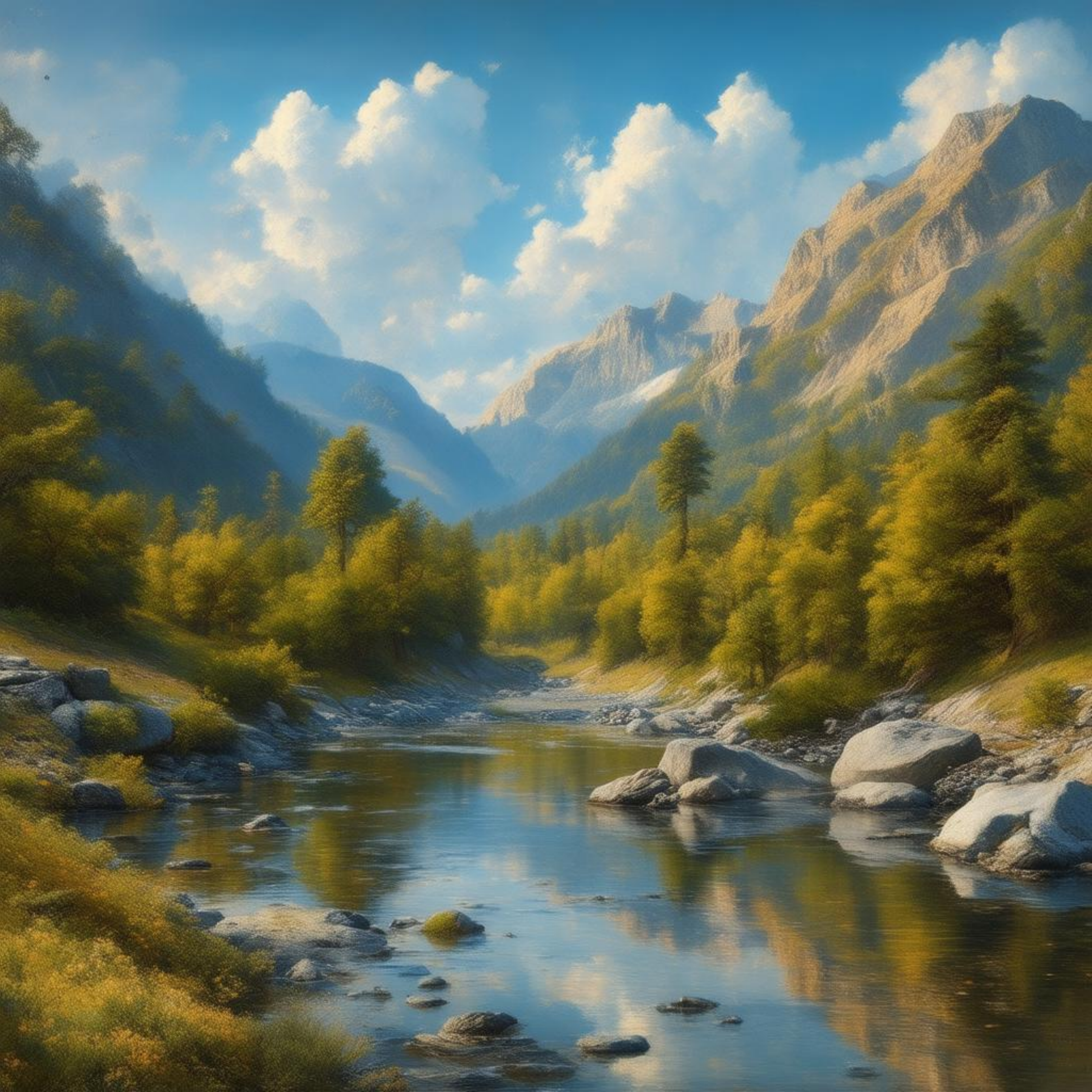} \\[-2pt]
\includegraphics[width=0.155\linewidth]{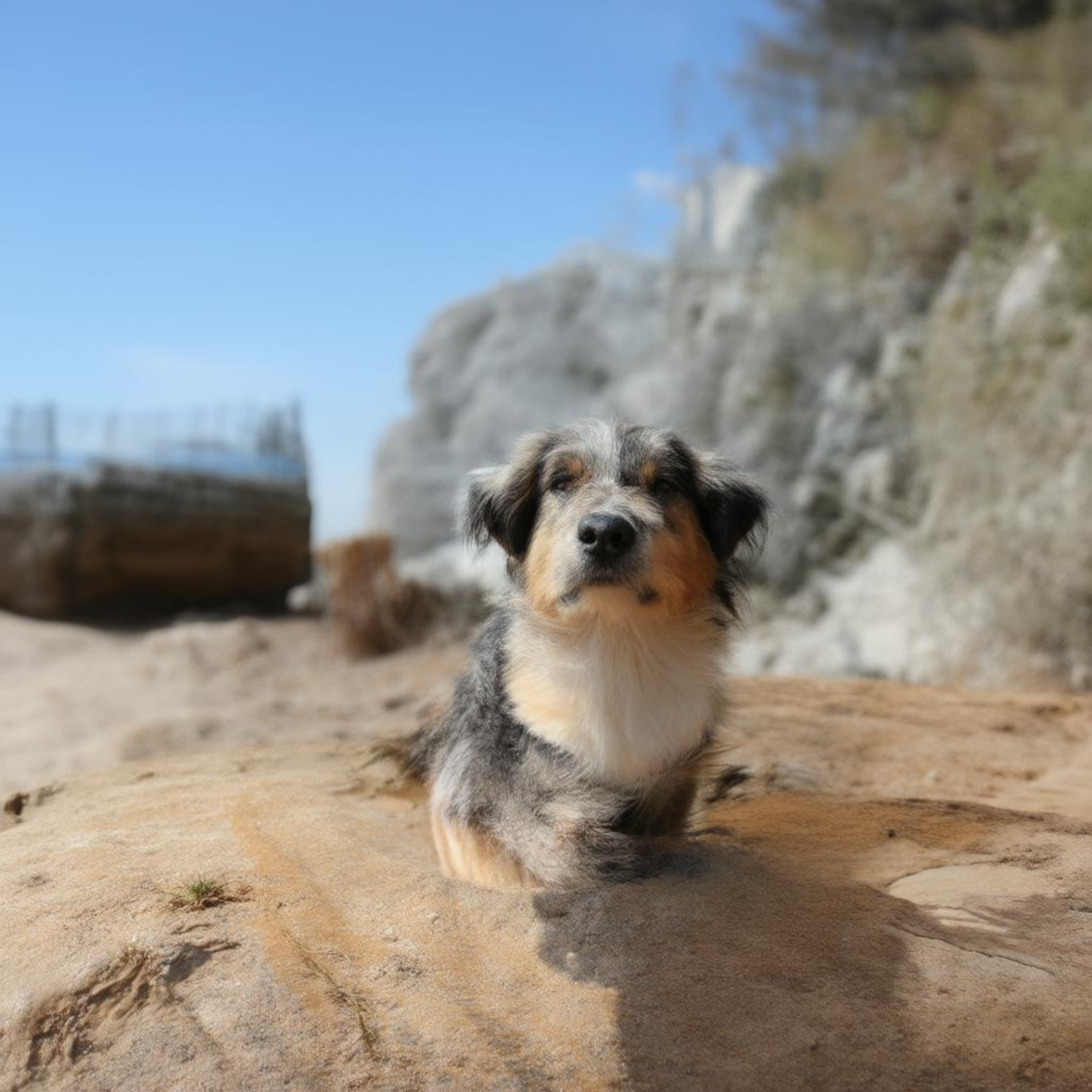}
& \includegraphics[width=0.155\linewidth]{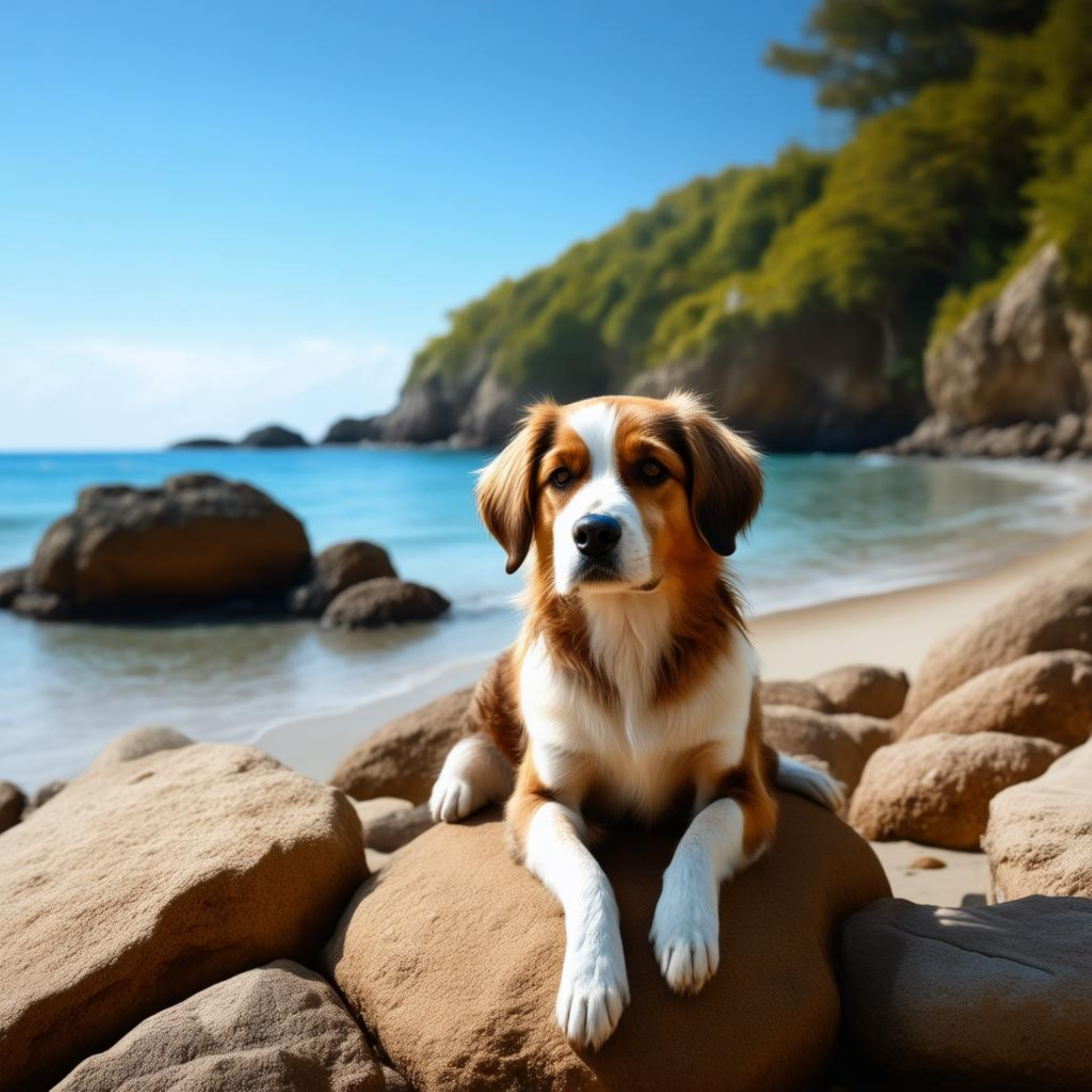}
& \includegraphics[width=0.155\linewidth]{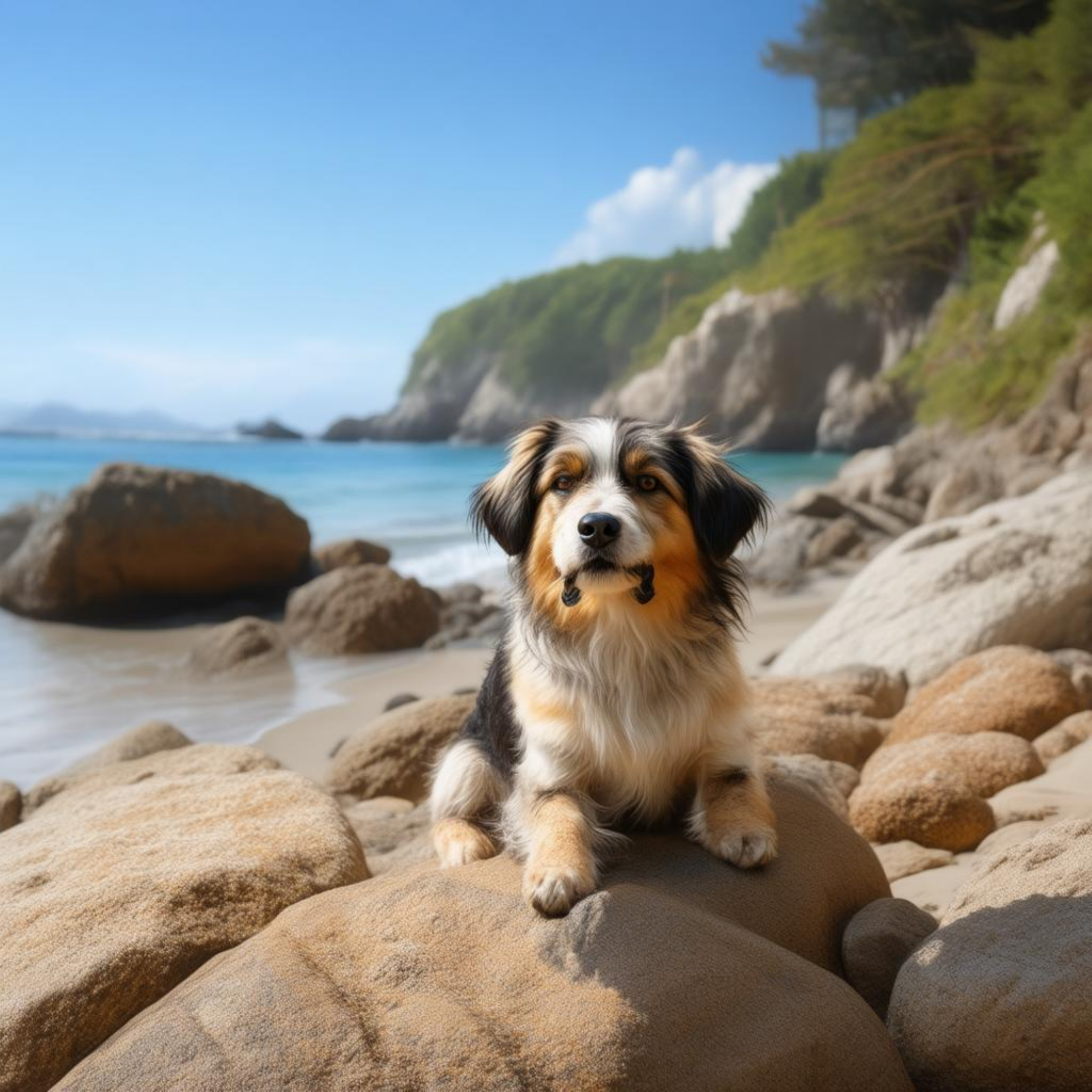}
& \includegraphics[width=0.155\linewidth]{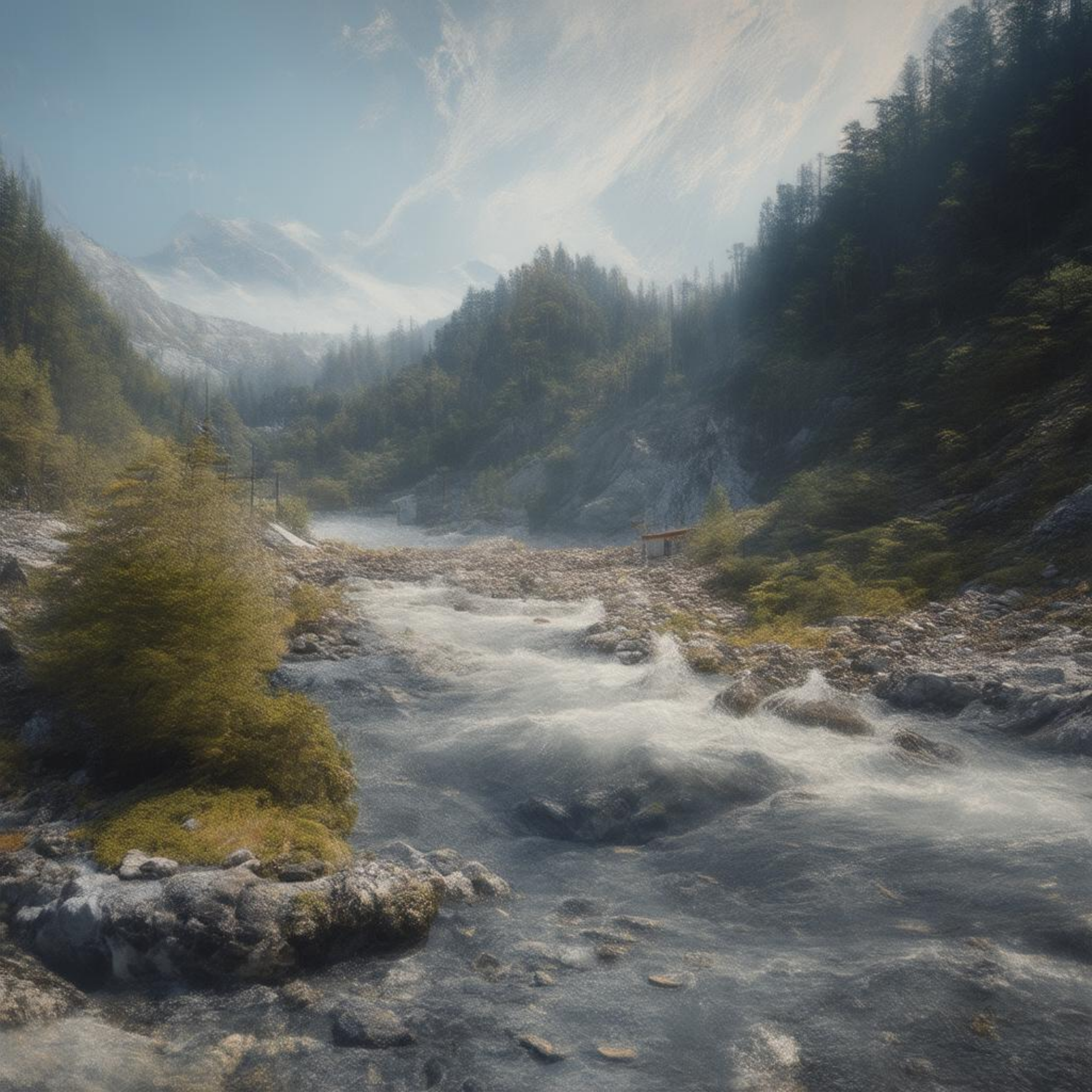}
& \includegraphics[width=0.155\linewidth]{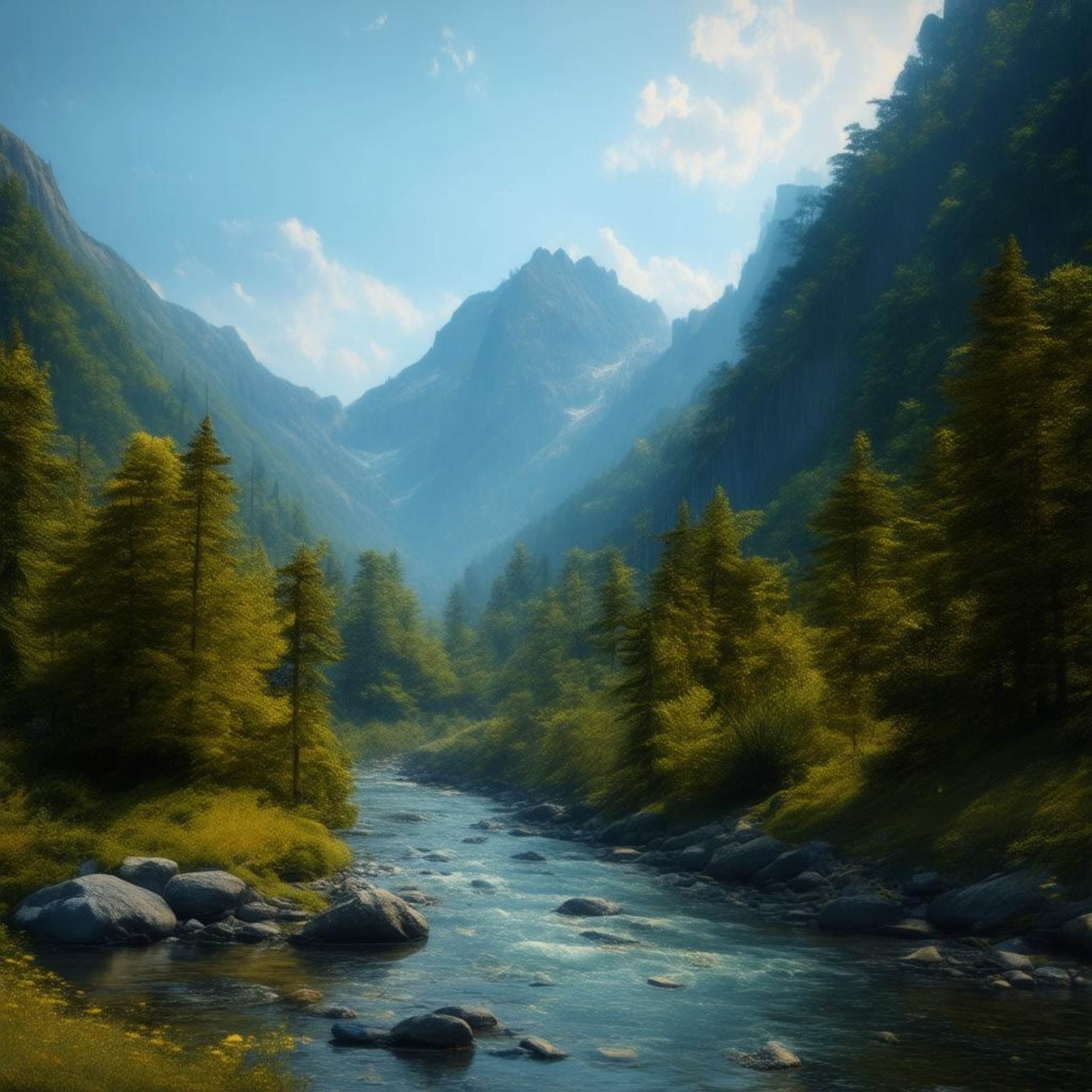}
& \includegraphics[width=0.155\linewidth]{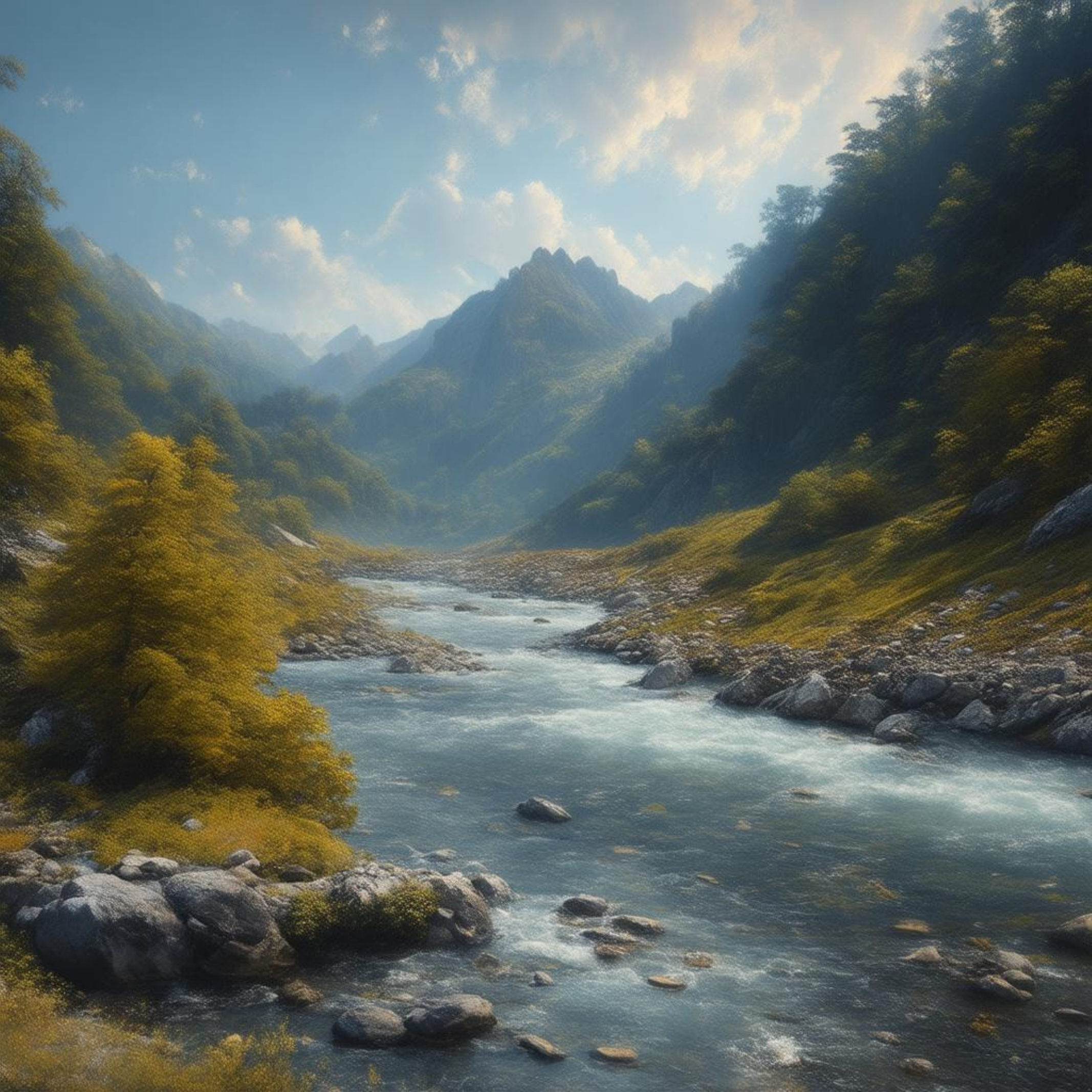} \\[-2pt]
\includegraphics[width=0.155\linewidth]{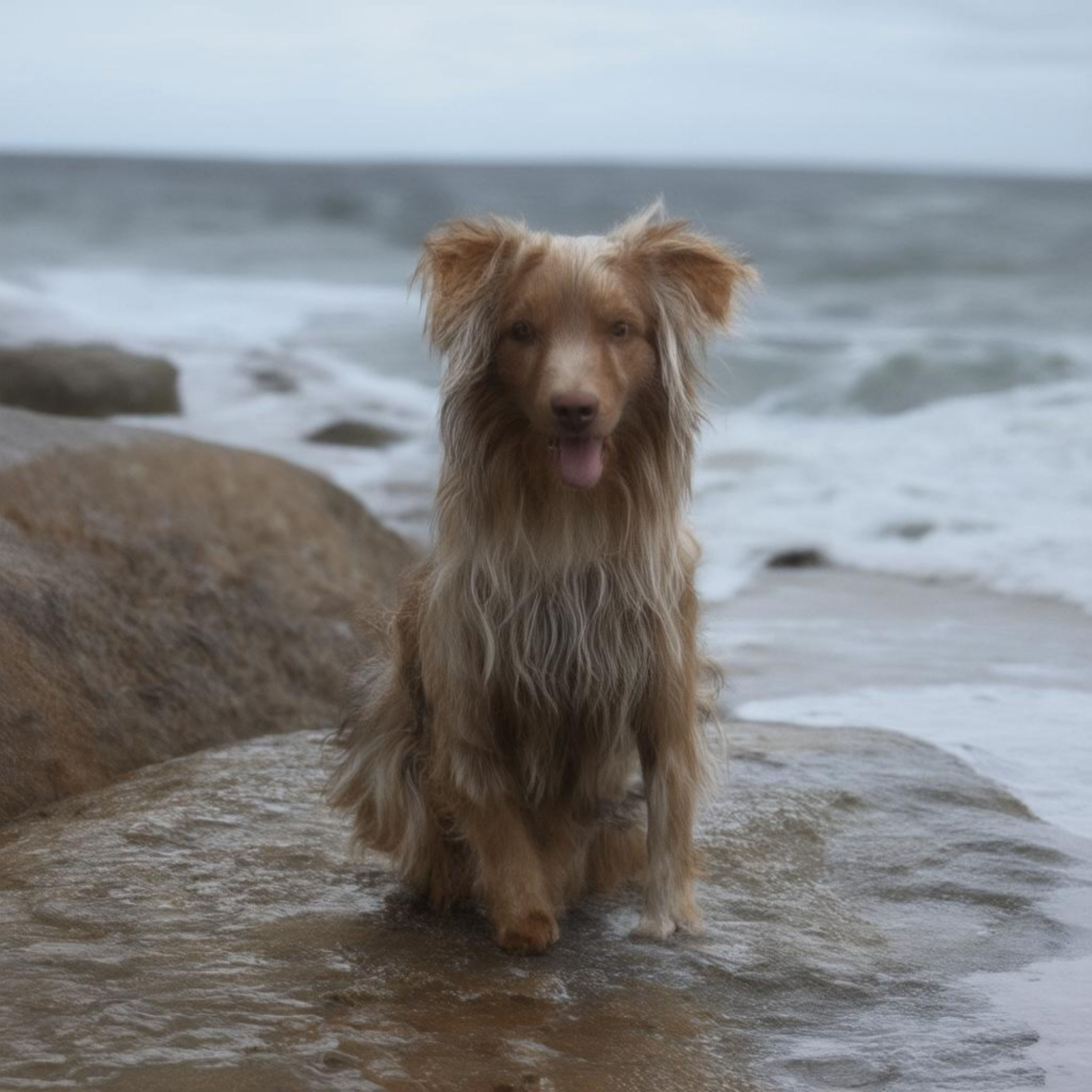}
& \includegraphics[width=0.155\linewidth]{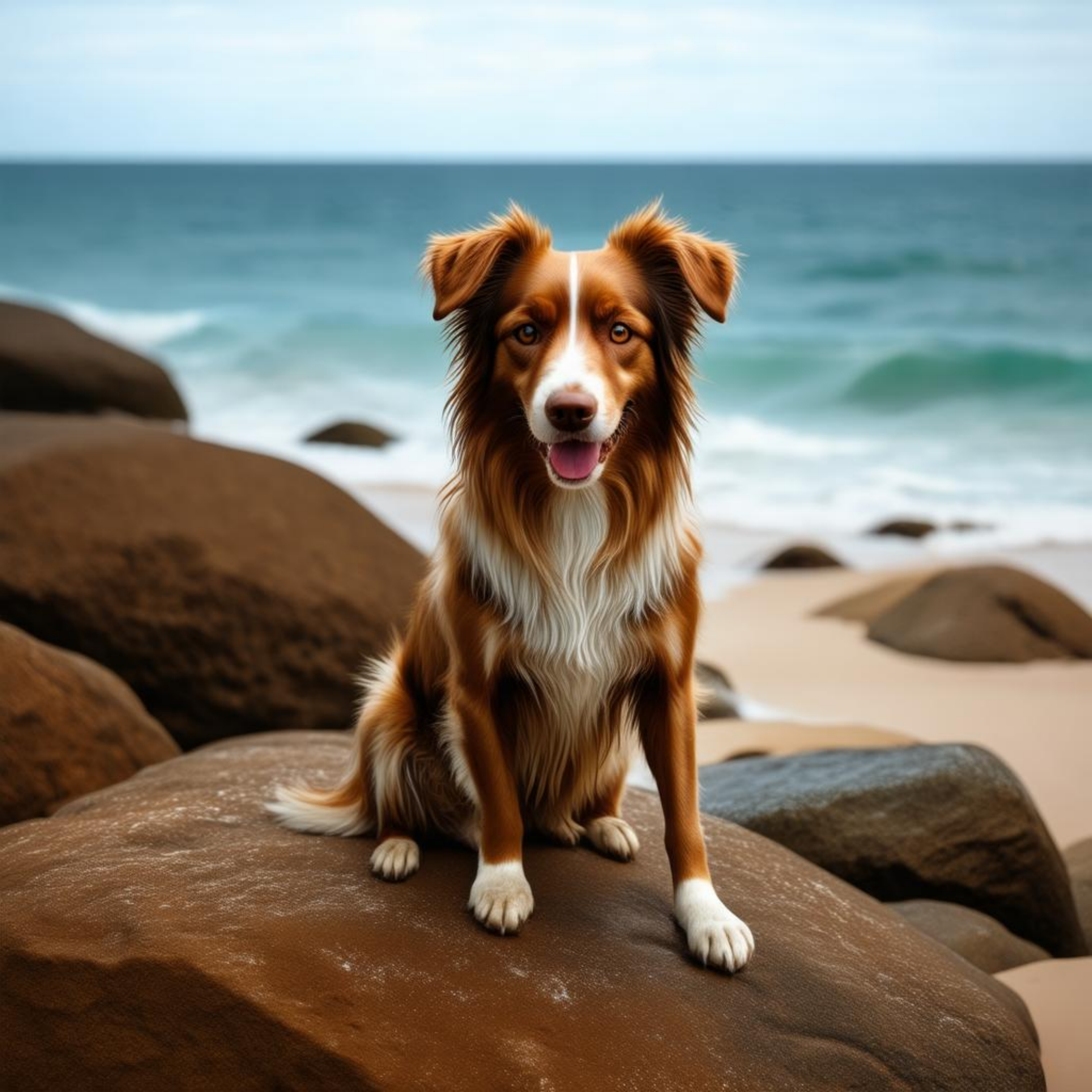}
& \includegraphics[width=0.155\linewidth]{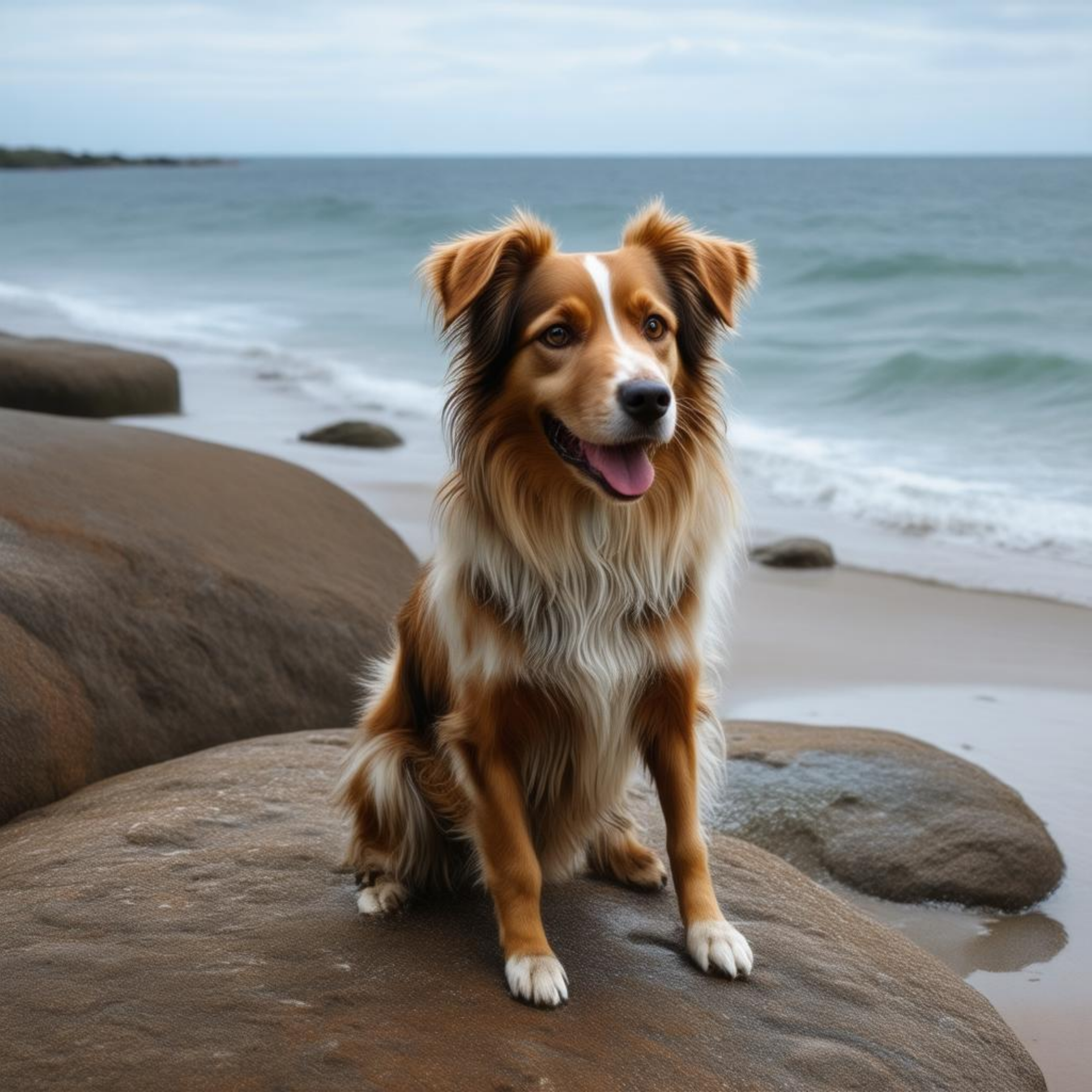}
& \includegraphics[width=0.155\linewidth]{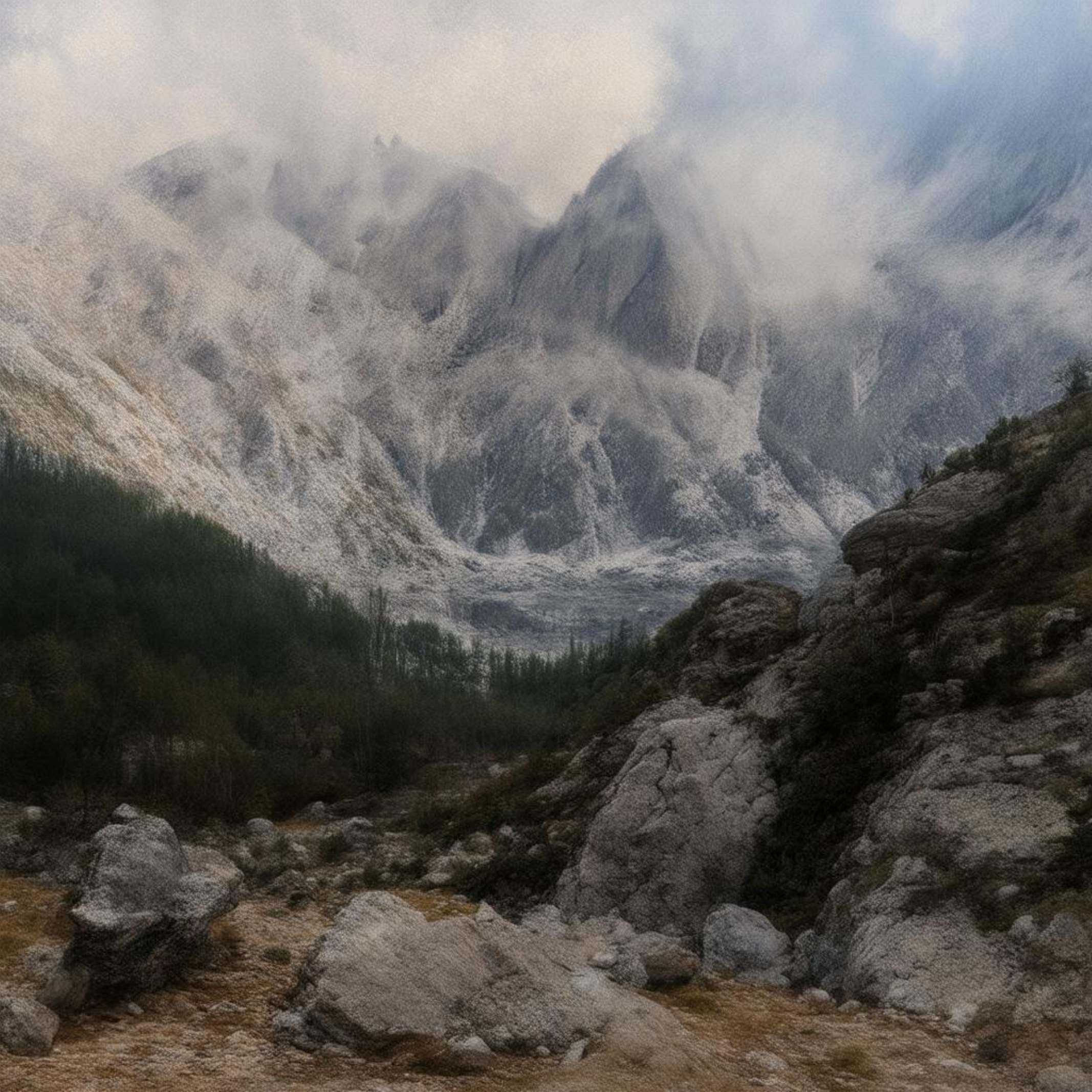}
& \includegraphics[width=0.155\linewidth]{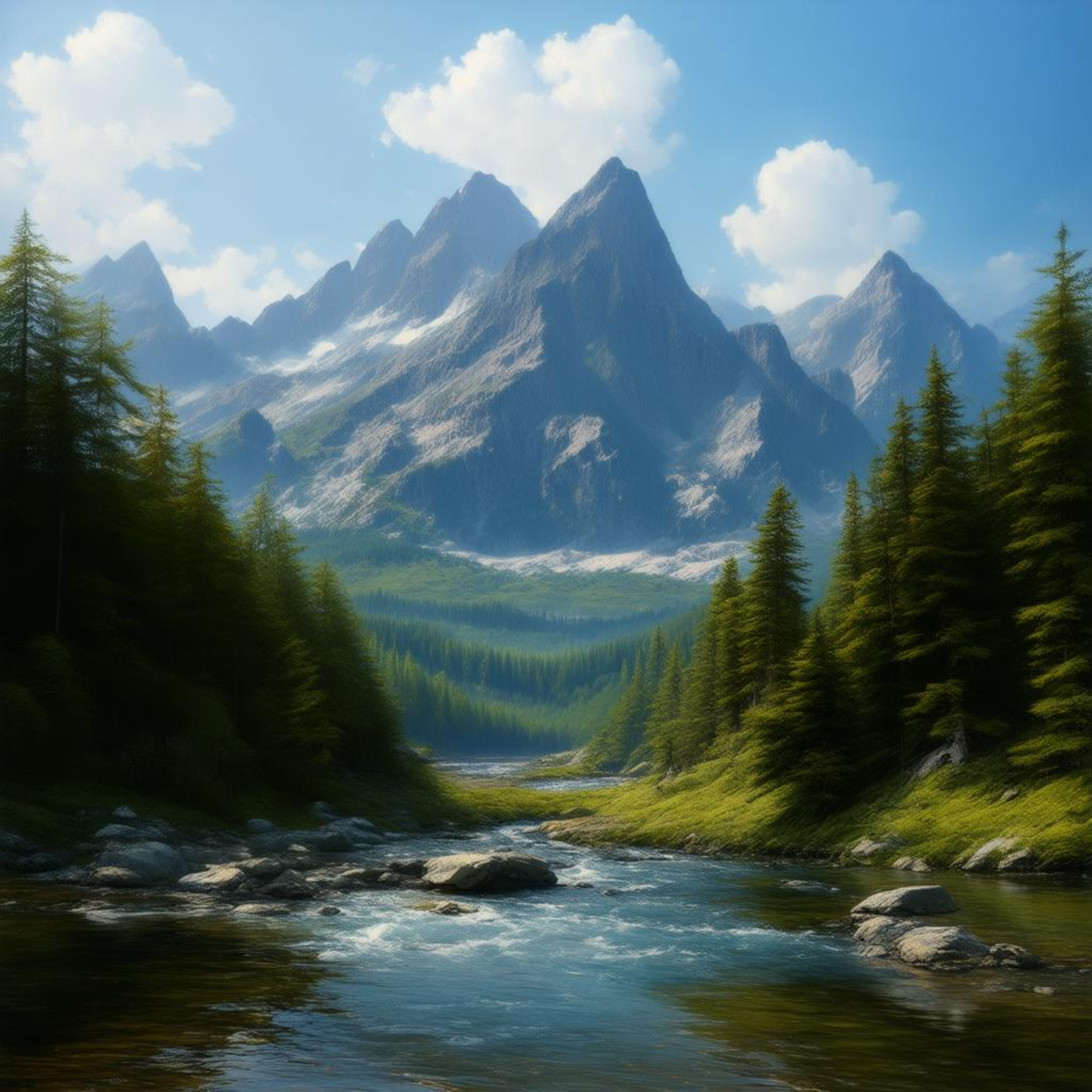}
& \includegraphics[width=0.155\linewidth]{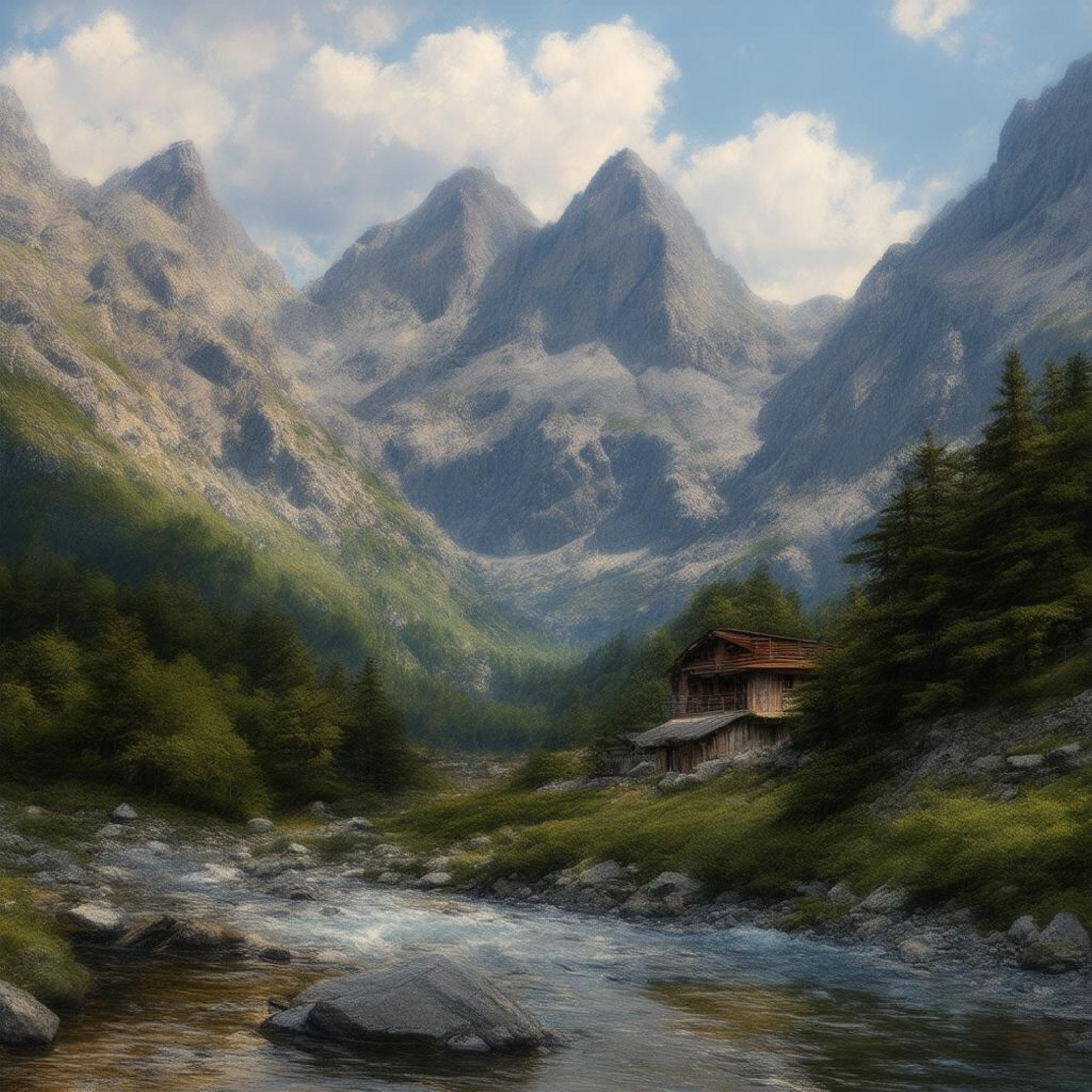} \\[-2pt]
\includegraphics[width=0.155\linewidth]{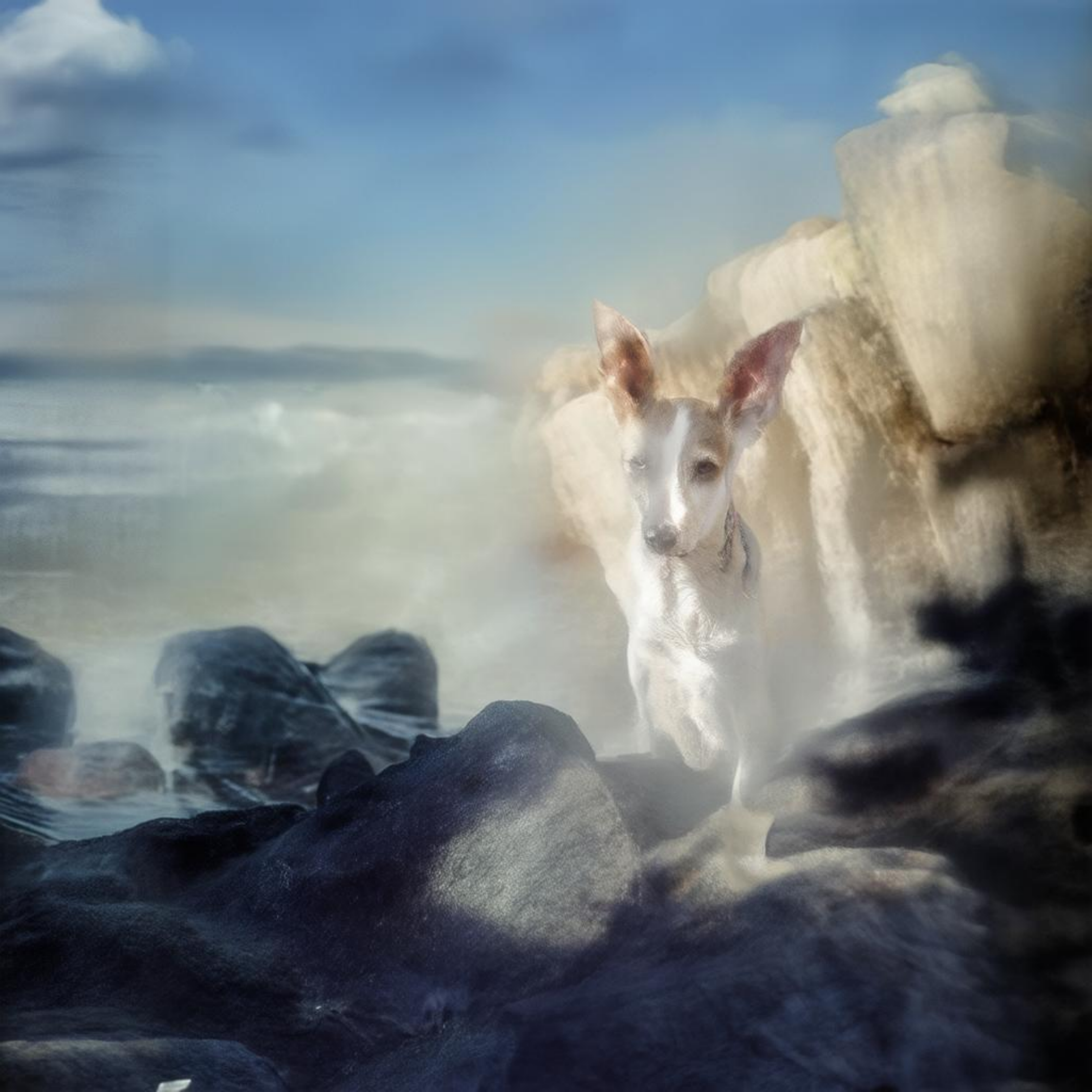}
& \includegraphics[width=0.155\linewidth]{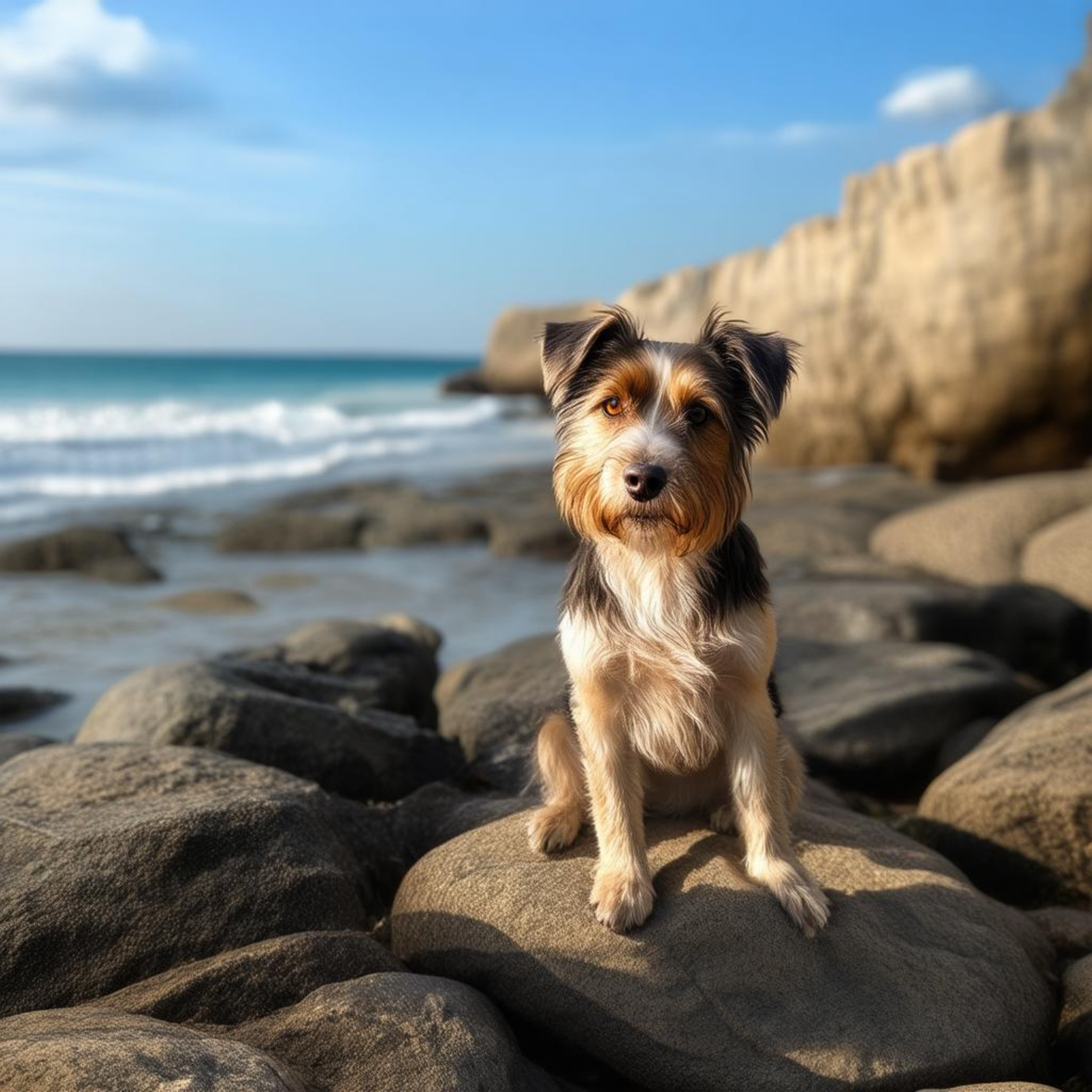}
& \includegraphics[width=0.155\linewidth]{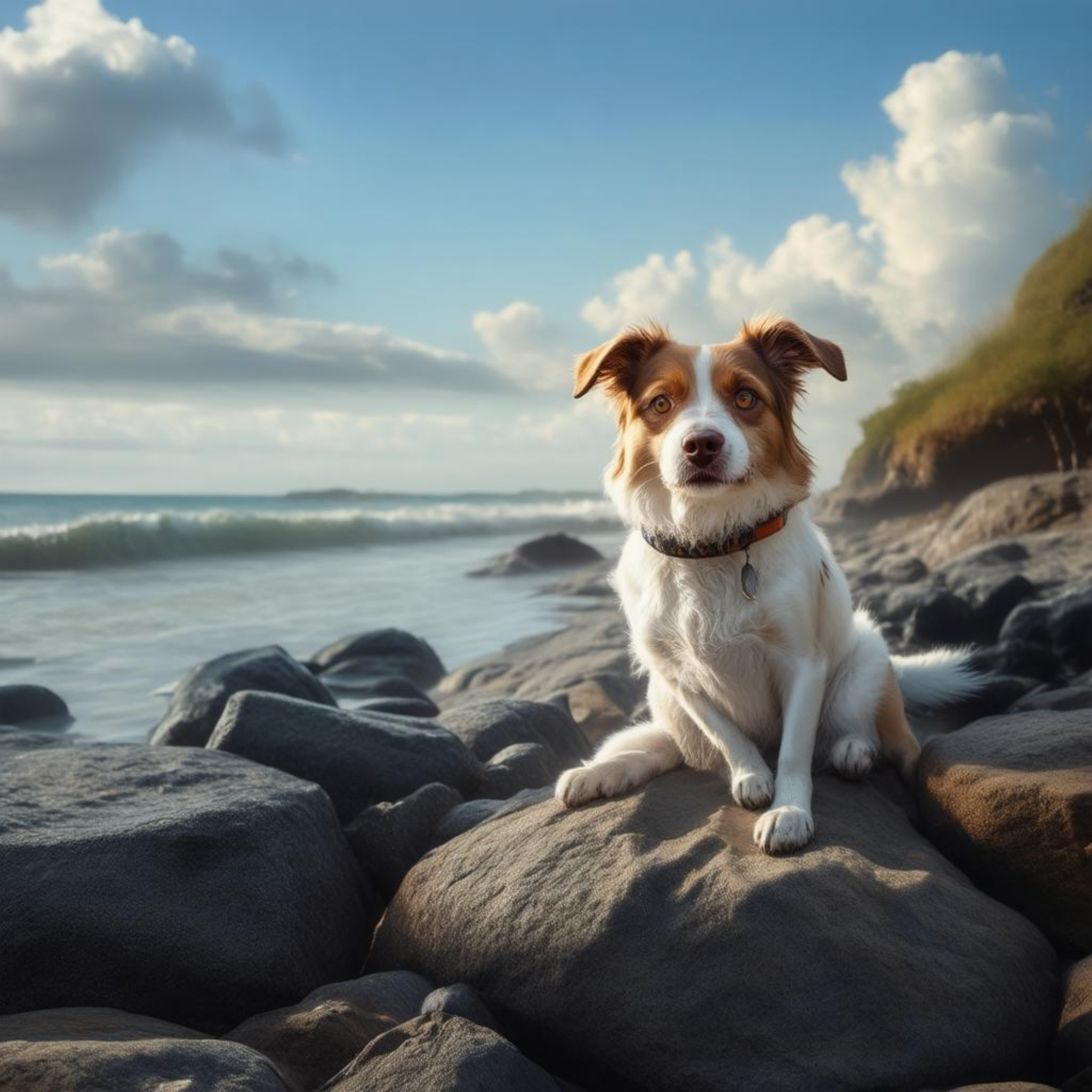}
& \includegraphics[width=0.155\linewidth]{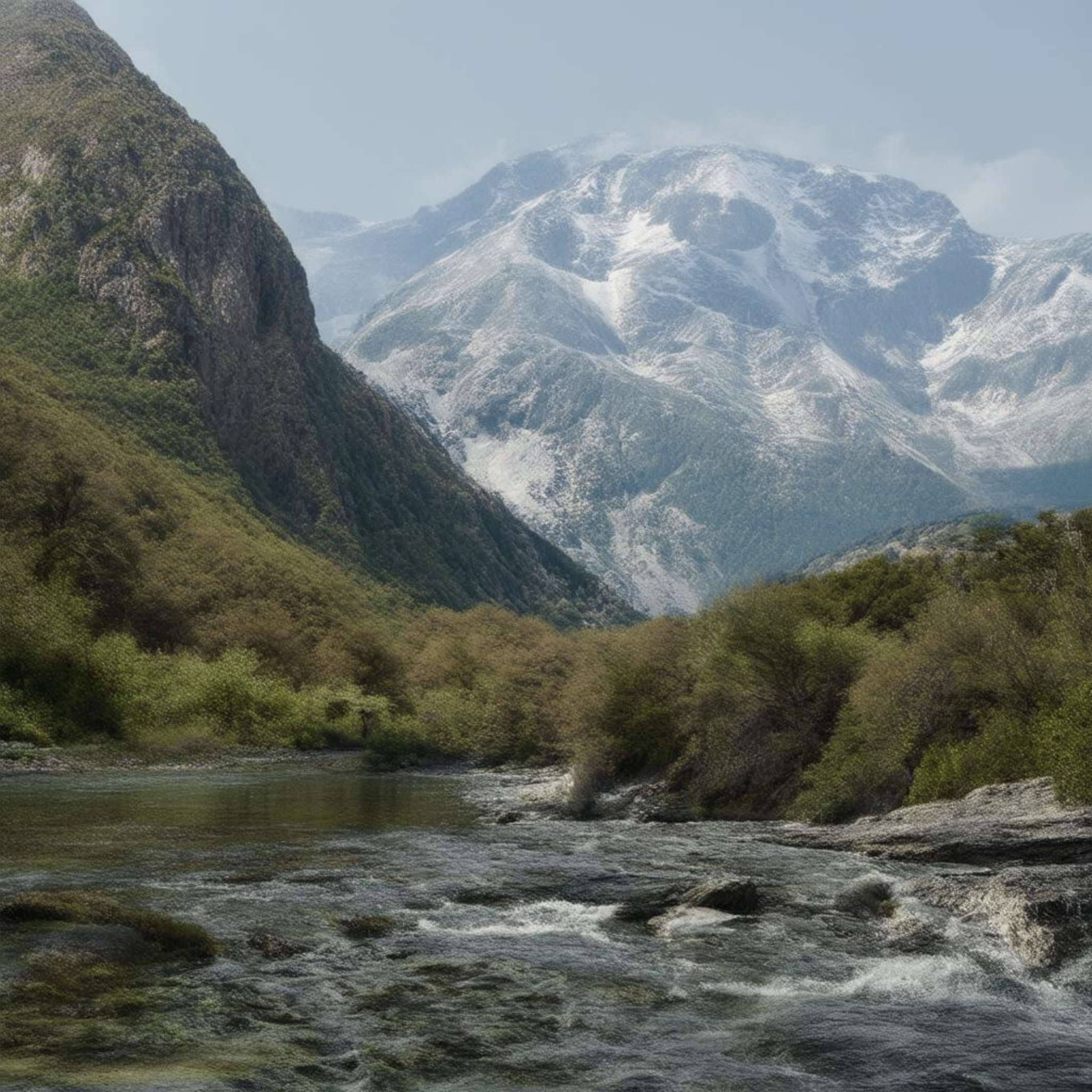}
& \includegraphics[width=0.155\linewidth]{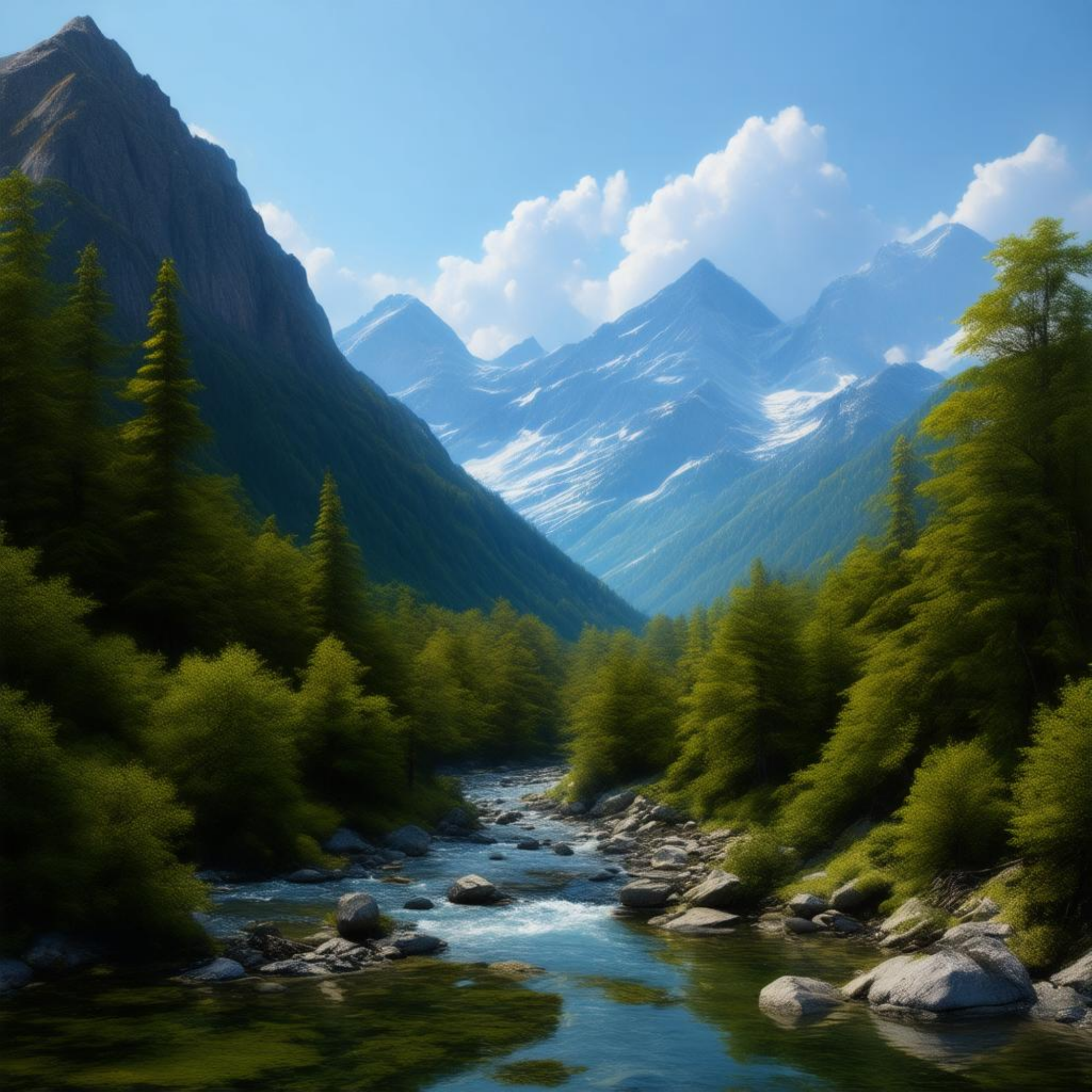}
& \includegraphics[width=0.155\linewidth]{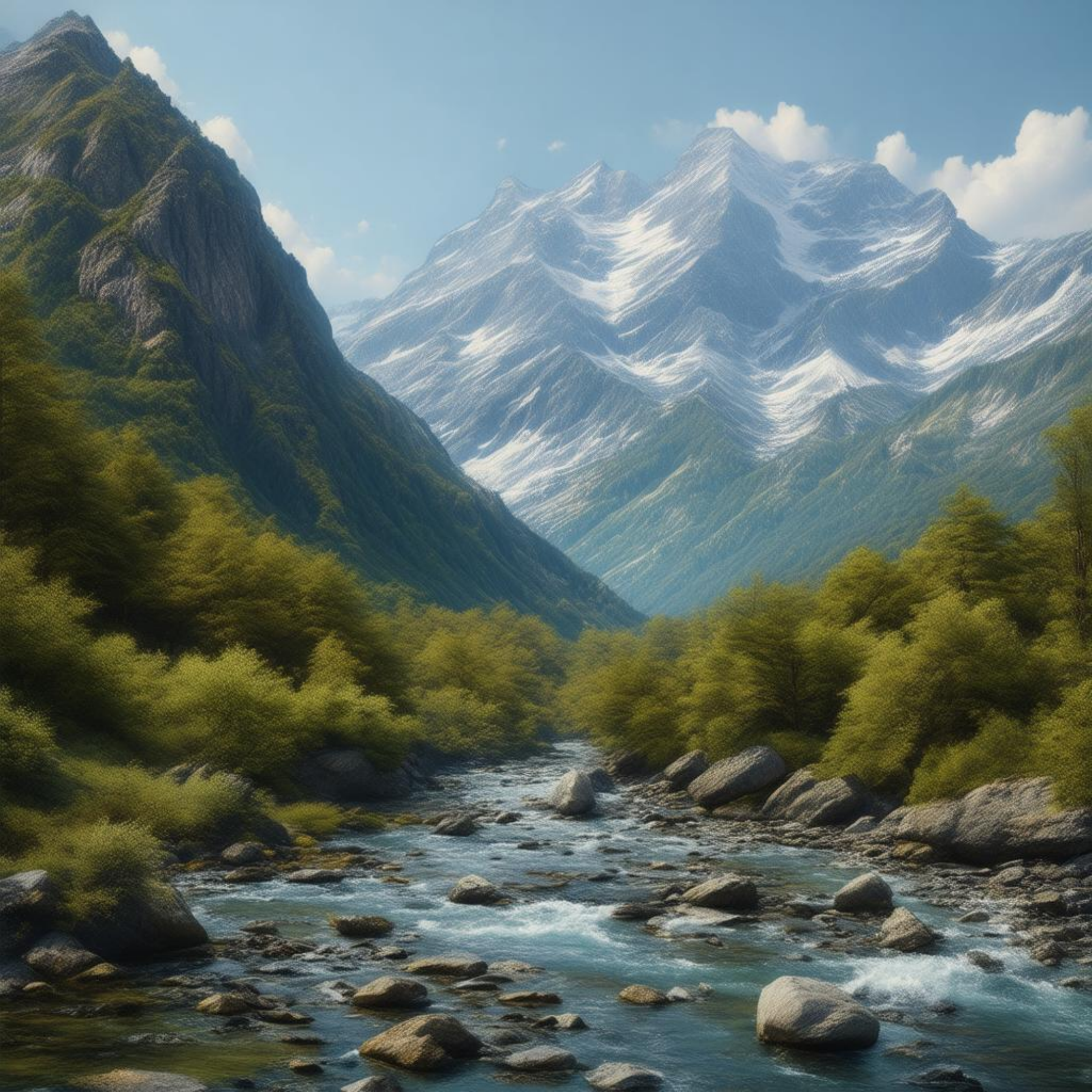} \\
\noalign{\vskip 6pt}
\multicolumn{3}{c}{\small Bedroom} & \multicolumn{3}{c}{\small People} \\
\shortstack[c]{\tiny No guidance\\[-1pt]\tiny $\lambda=1$}
& \shortstack[c]{\tiny Fixed CFG\\[-1pt]\tiny $\lambda=5$}
& \shortstack[c]{\tiny PMC-CFG\\[-1pt]\tiny $\lambda=5$}
& \shortstack[c]{\tiny No guidance\\[-1pt]\tiny $\lambda=1$}
& \shortstack[c]{\tiny Fixed CFG\\[-1pt]\tiny $\lambda=5$}
& \shortstack[c]{\tiny PMC-CFG\\[-1pt]\tiny $\lambda=5$} \\
\includegraphics[width=0.155\linewidth]{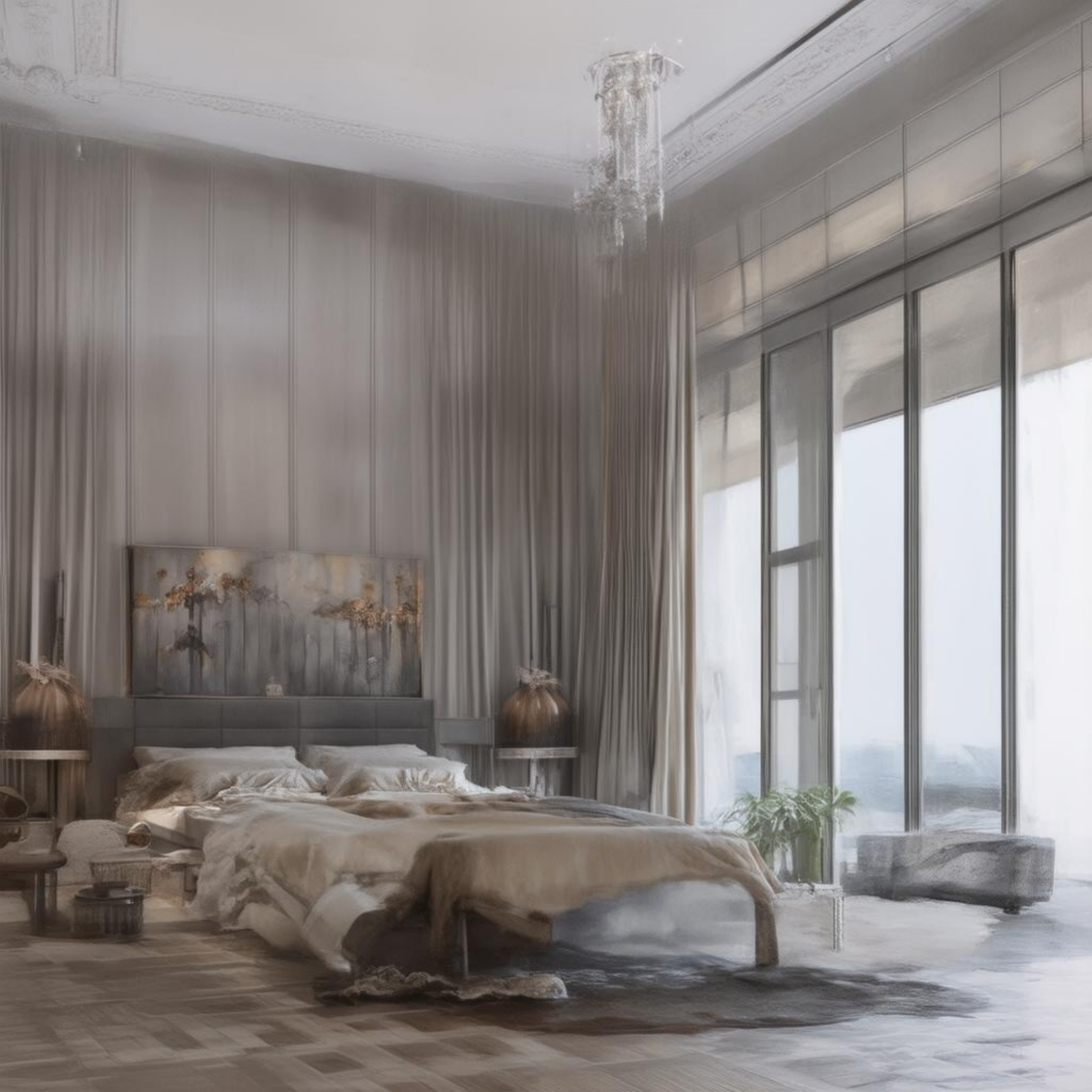}
& \includegraphics[width=0.155\linewidth]{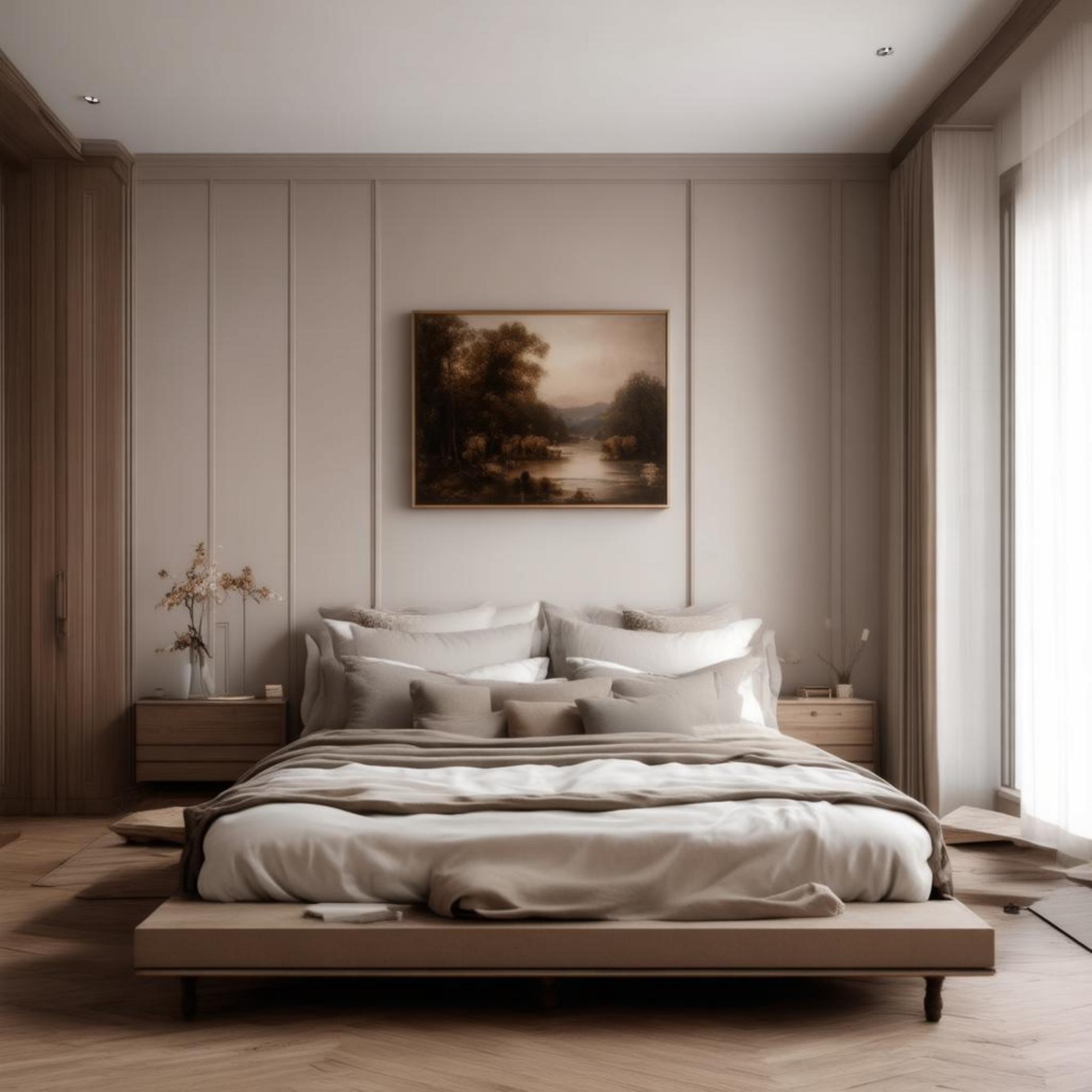}
& \includegraphics[width=0.155\linewidth]{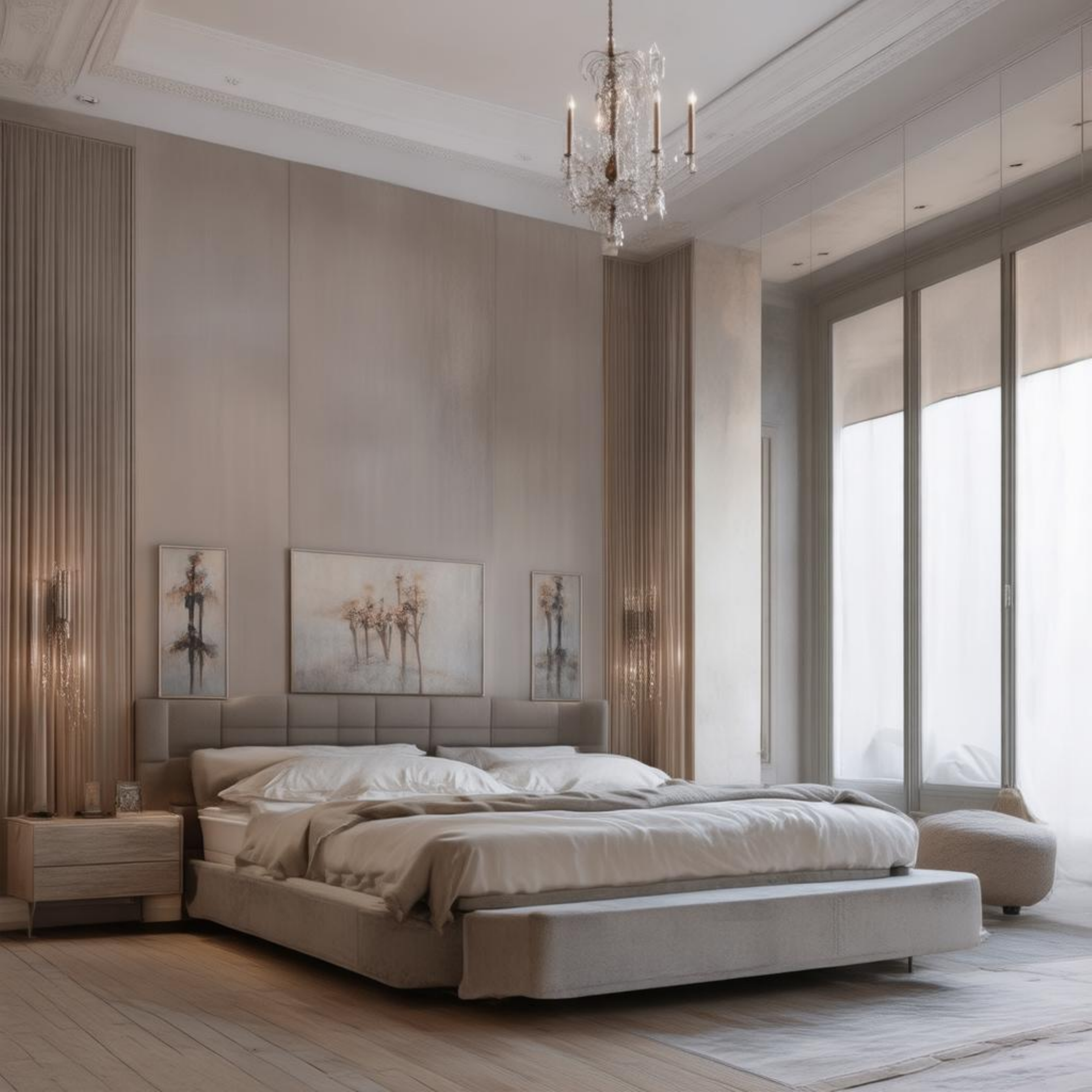}
& \includegraphics[width=0.155\linewidth]{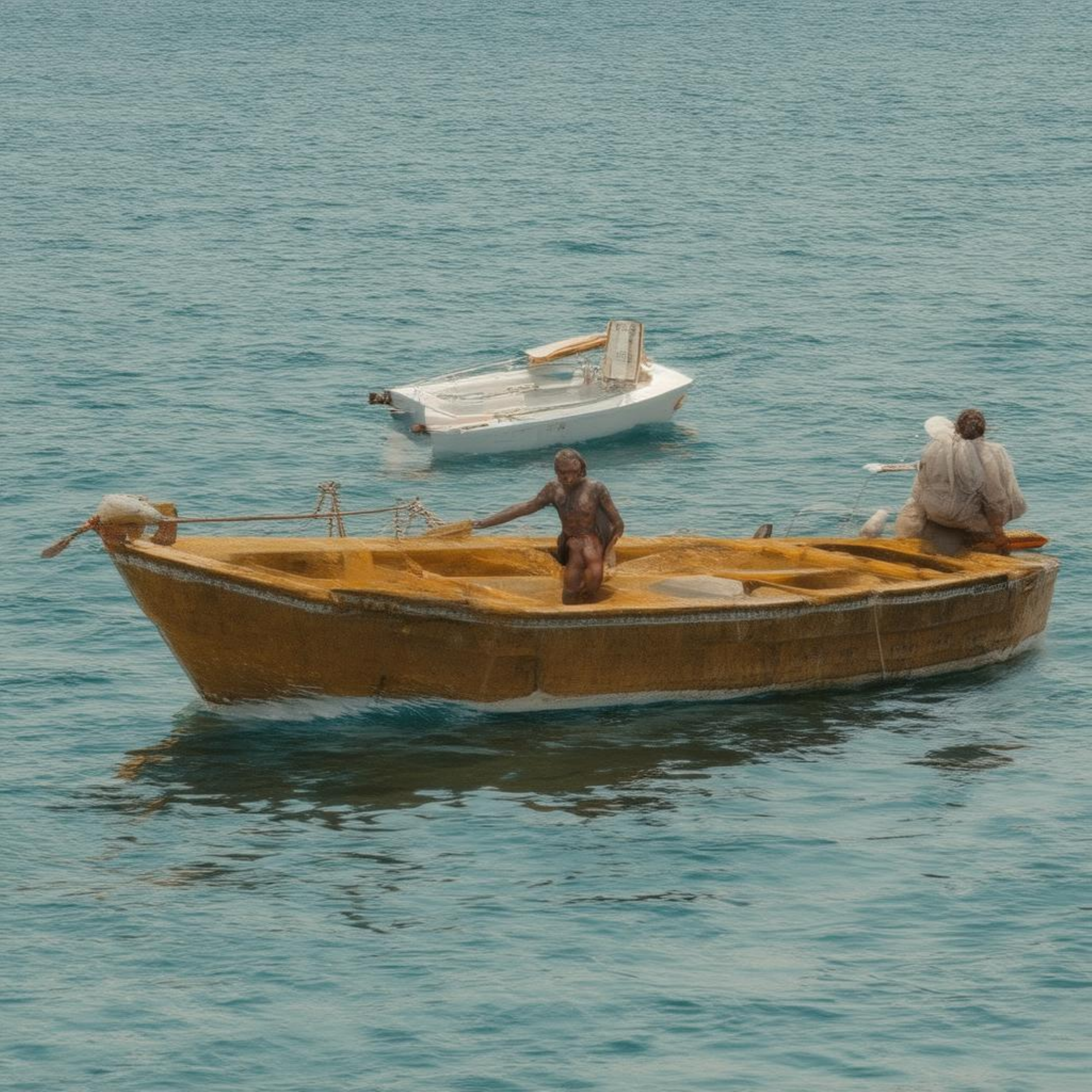}
& \includegraphics[width=0.155\linewidth]{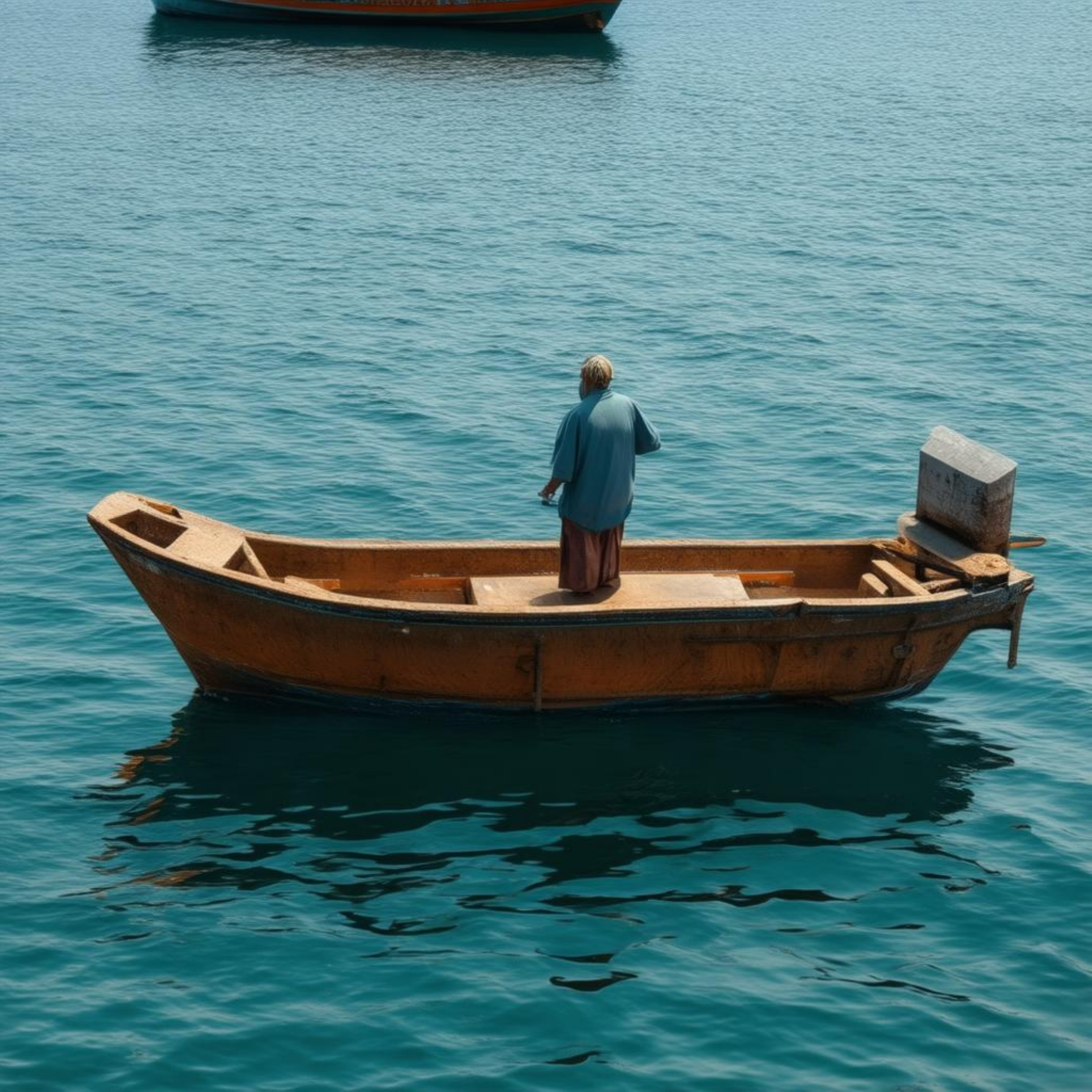}
& \includegraphics[width=0.155\linewidth]{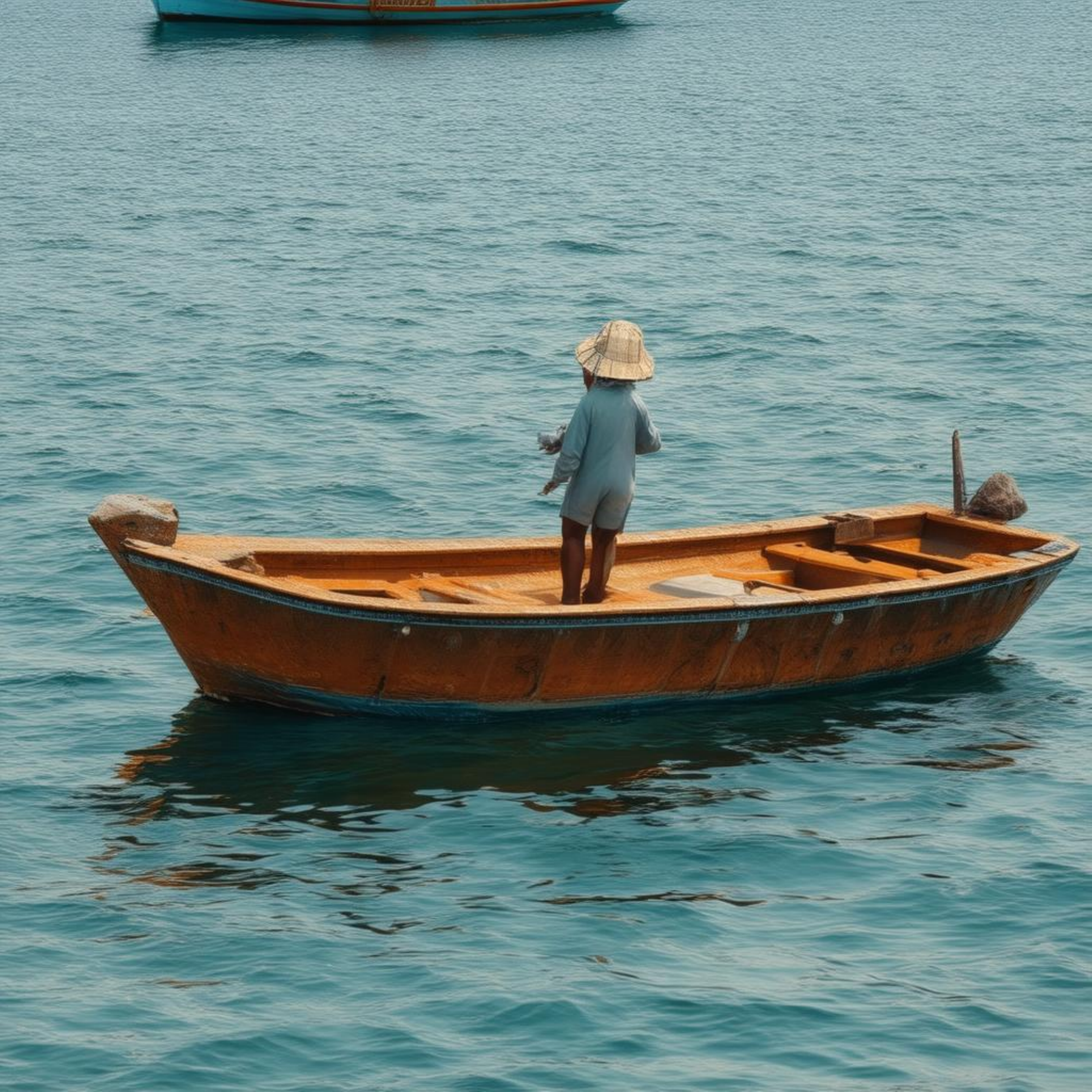} \\[-2pt]
\includegraphics[width=0.155\linewidth]{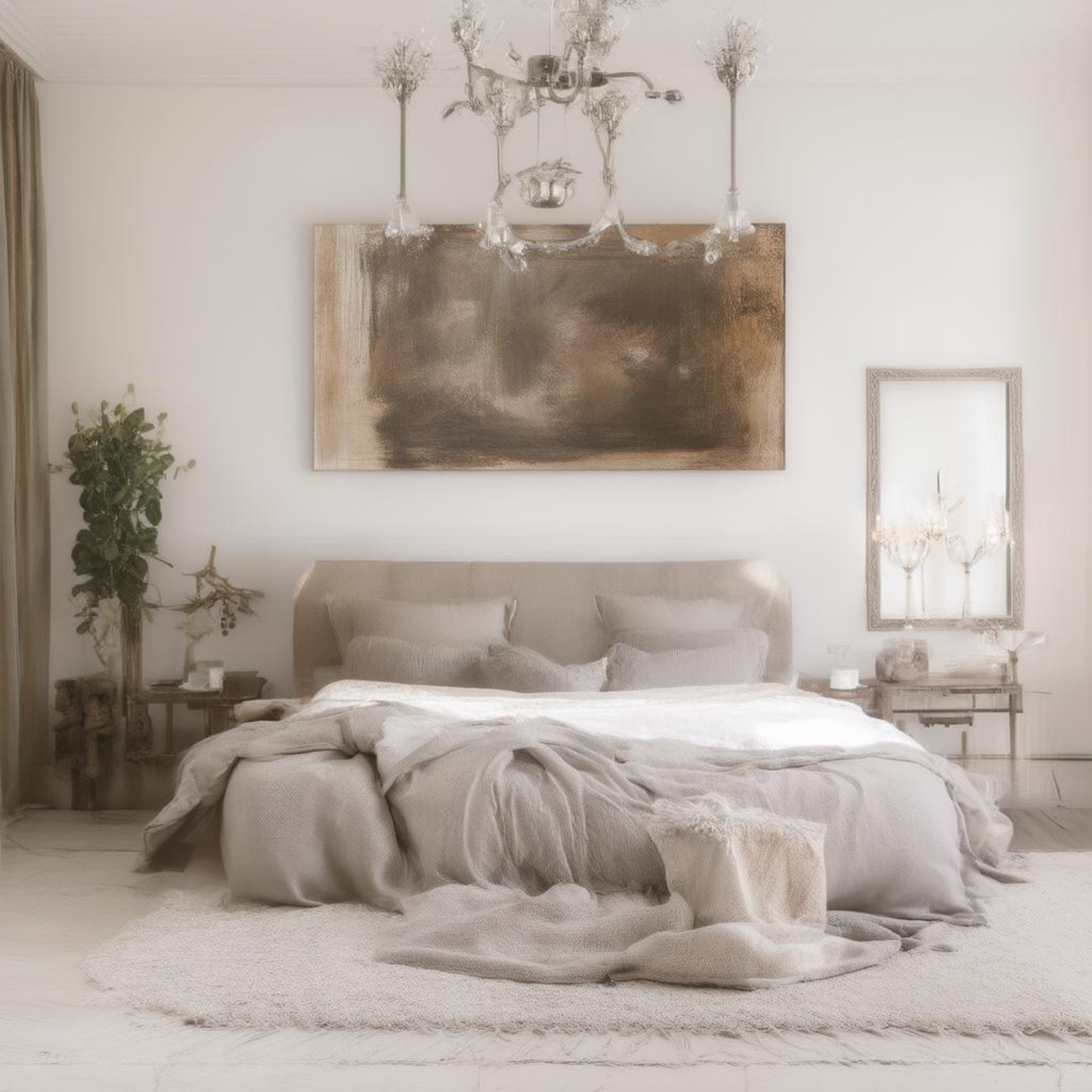}
& \includegraphics[width=0.155\linewidth]{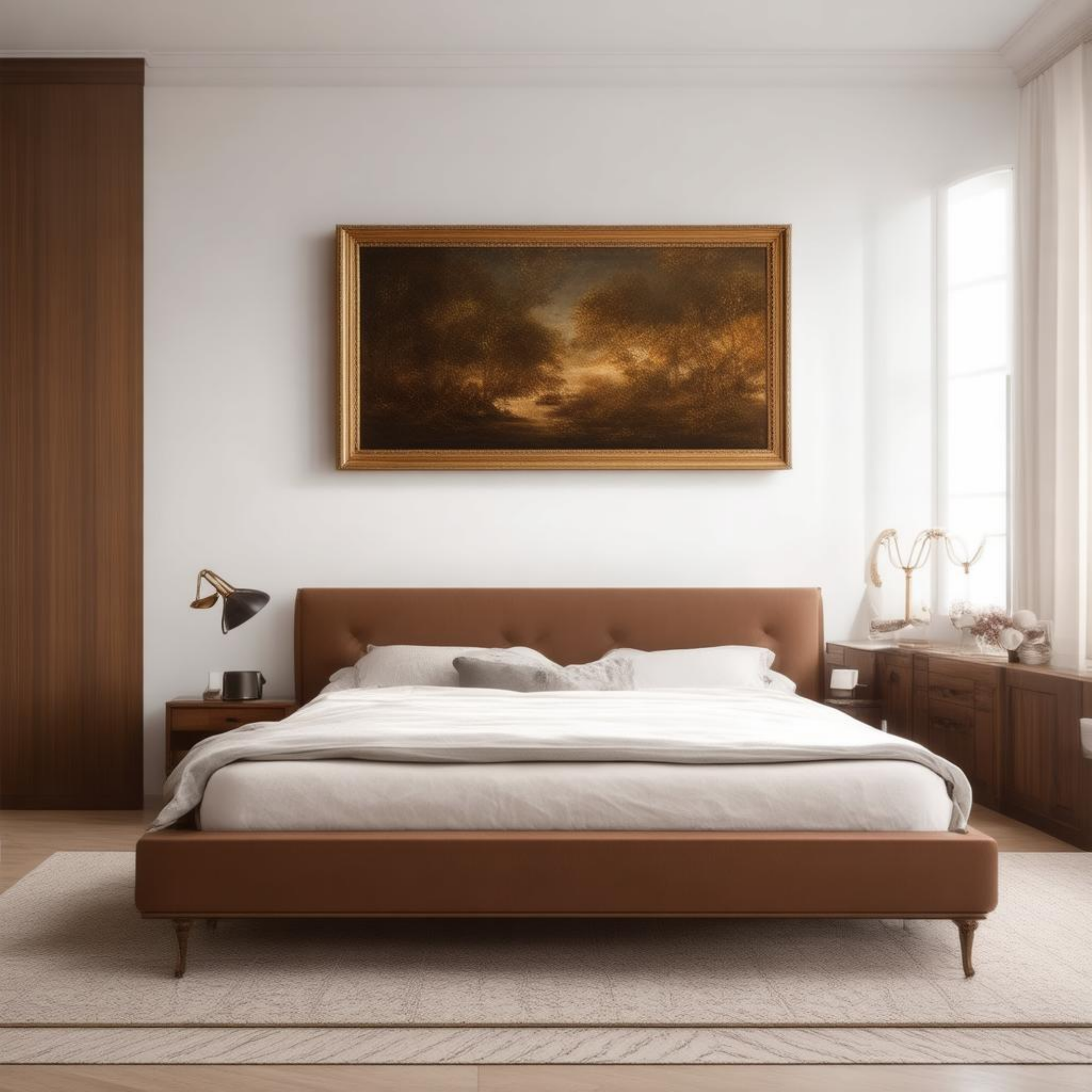}
& \includegraphics[width=0.155\linewidth]{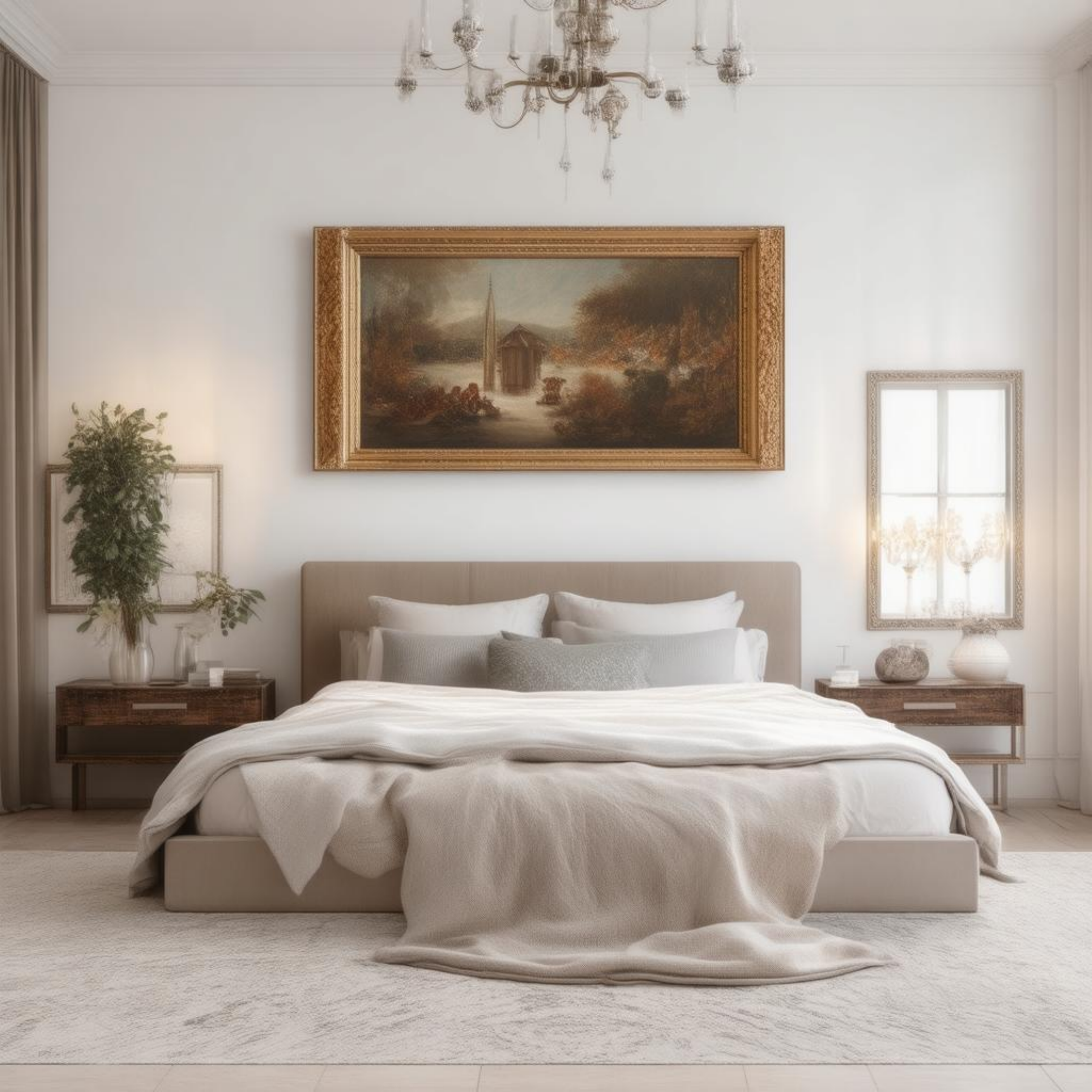}
& \includegraphics[width=0.155\linewidth]{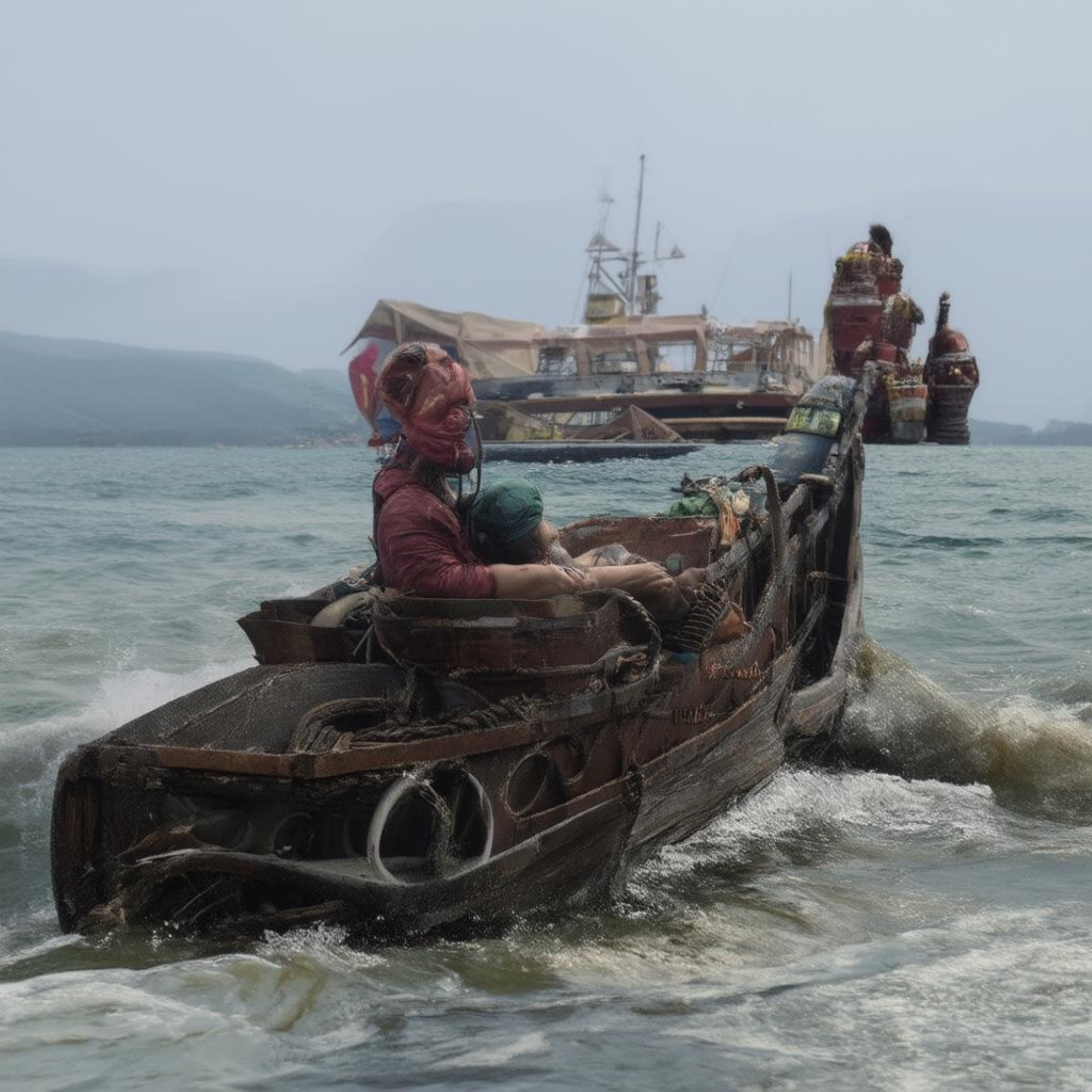}
& \includegraphics[width=0.155\linewidth]{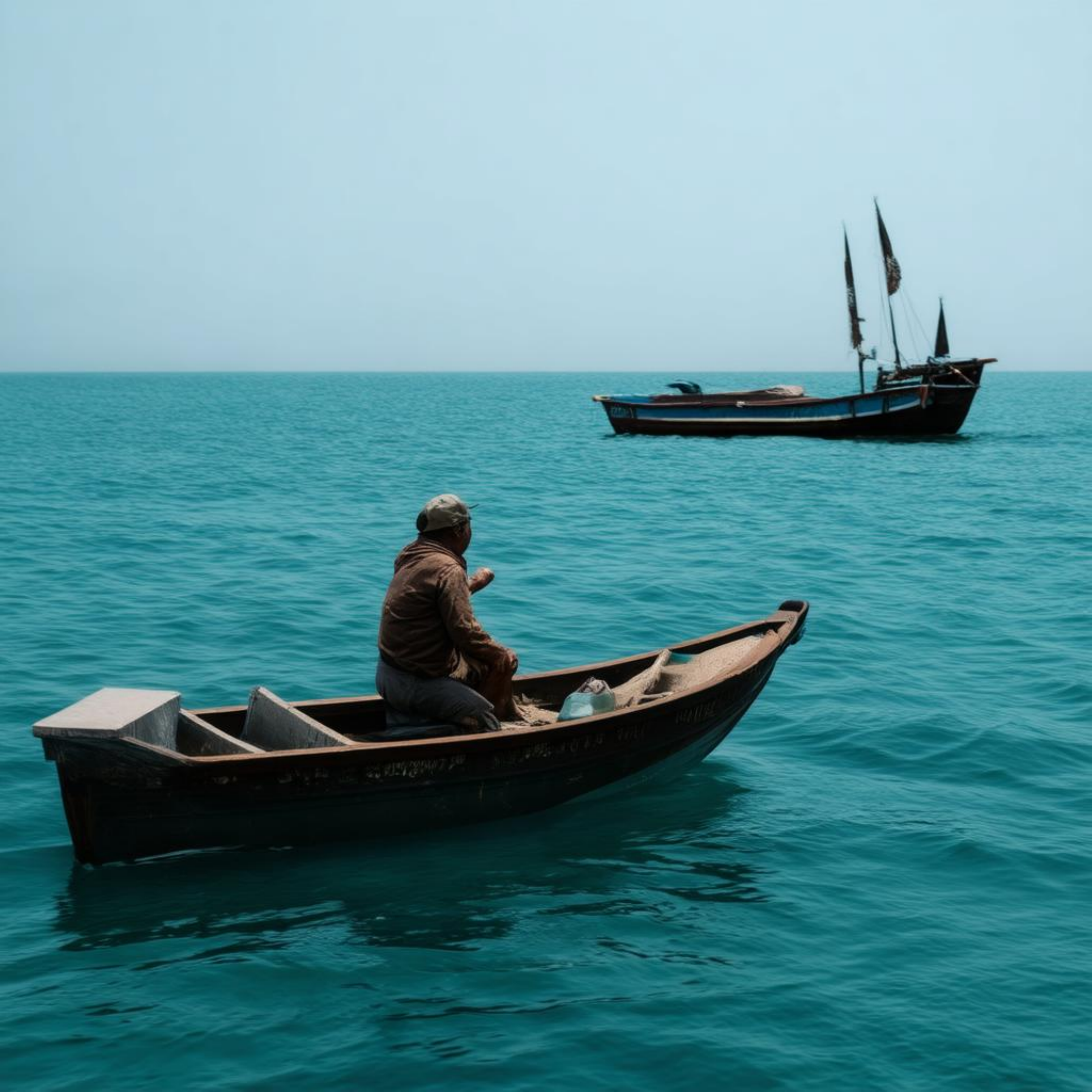}
& \includegraphics[width=0.155\linewidth]{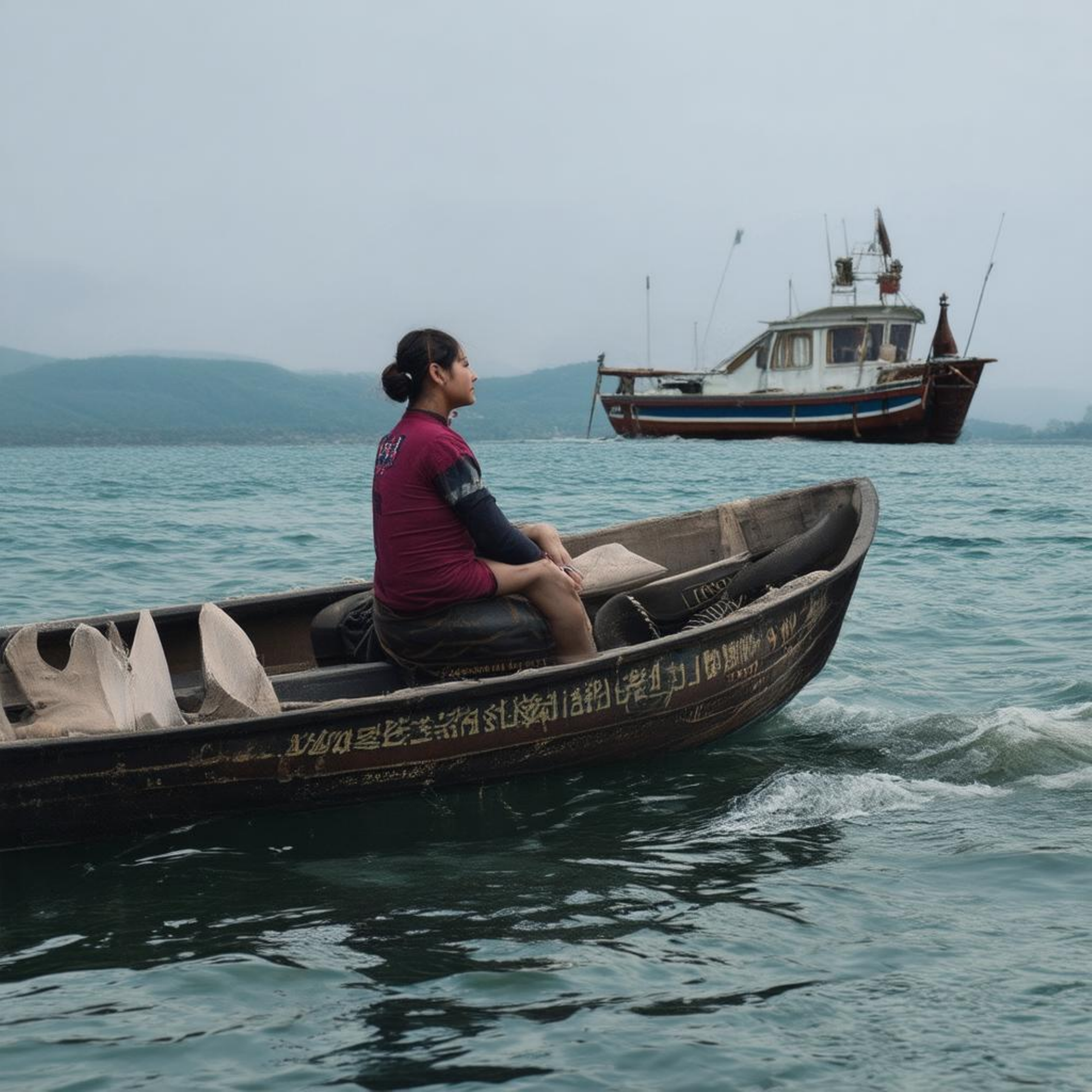} \\[-2pt]
\includegraphics[width=0.155\linewidth]{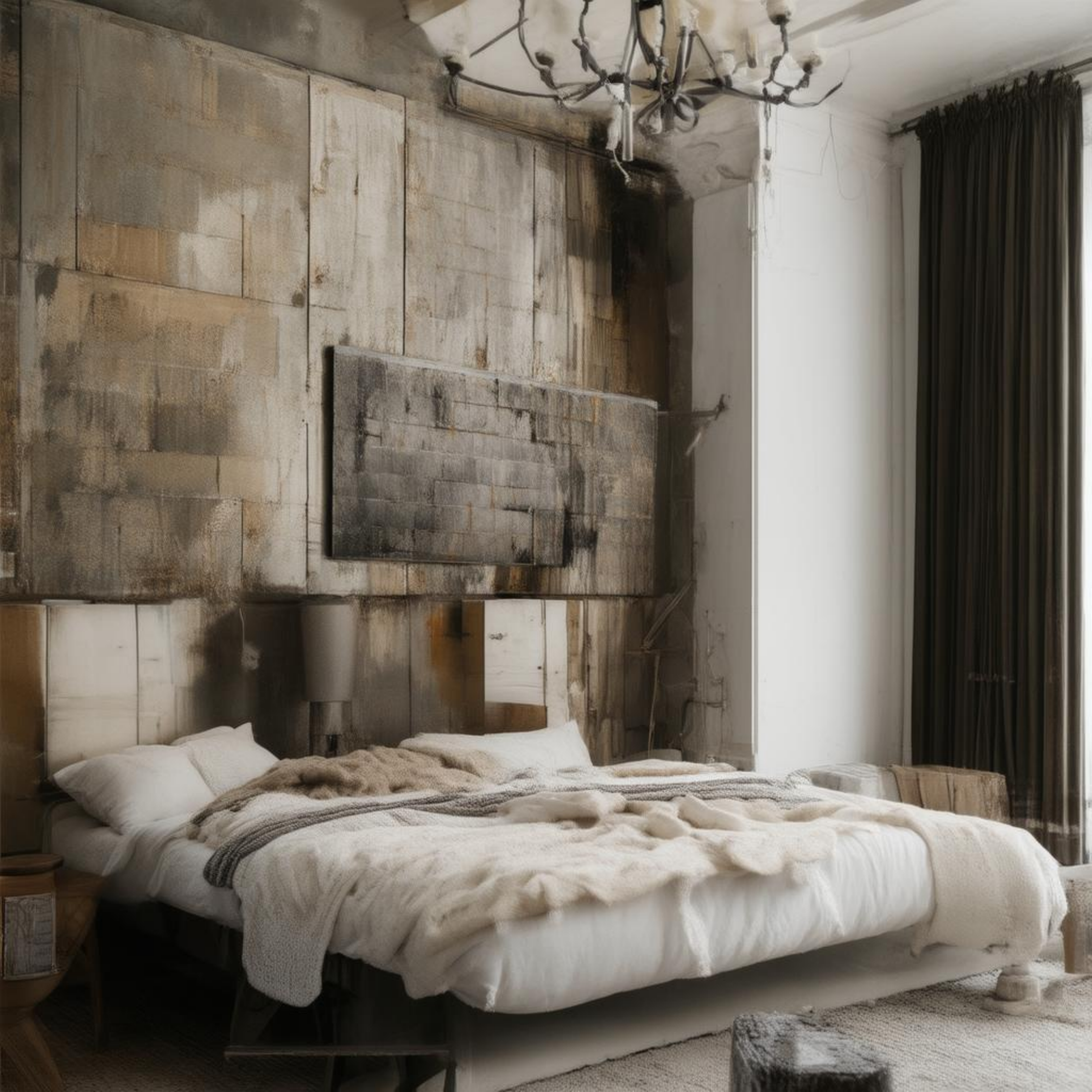}
& \includegraphics[width=0.155\linewidth]{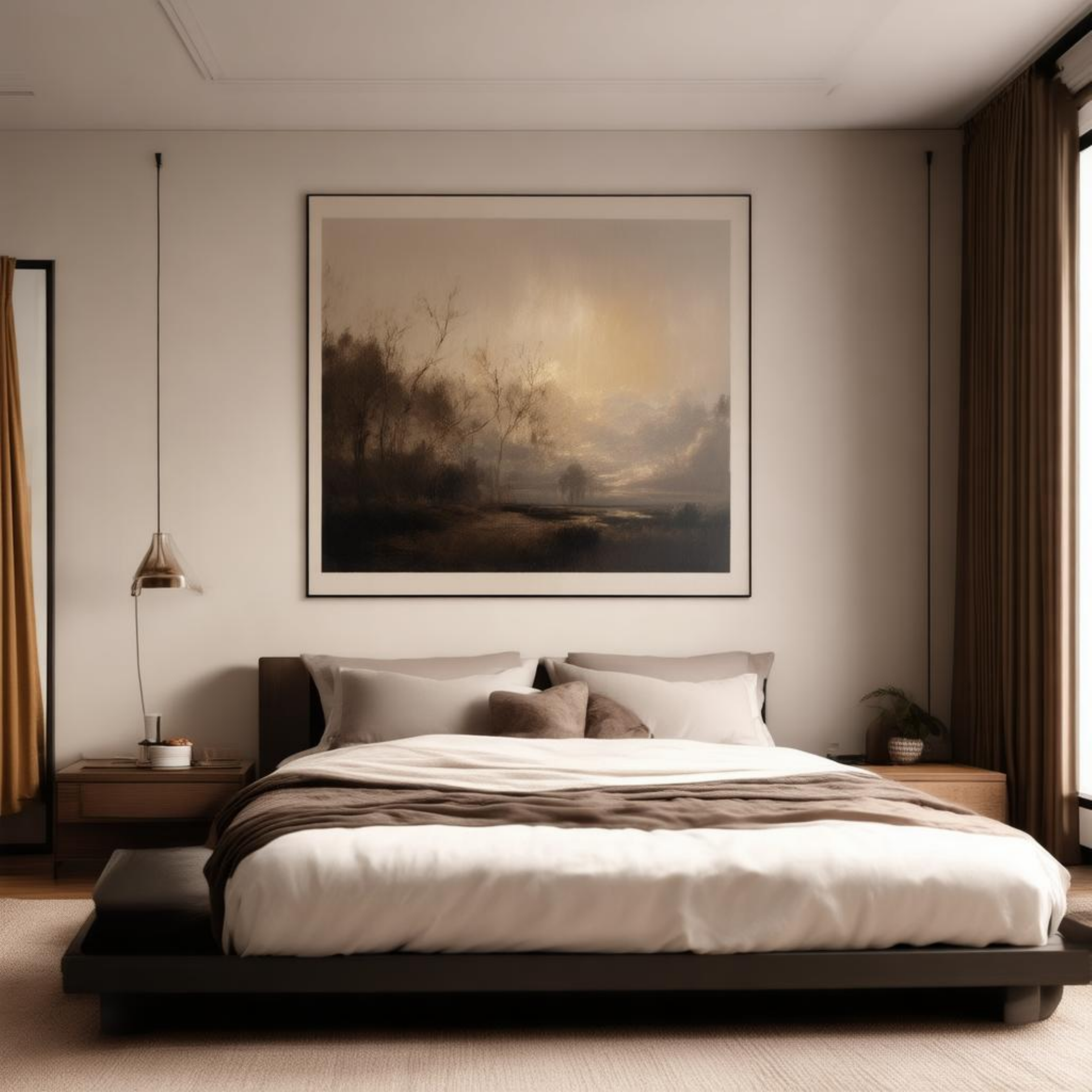}
& \includegraphics[width=0.155\linewidth]{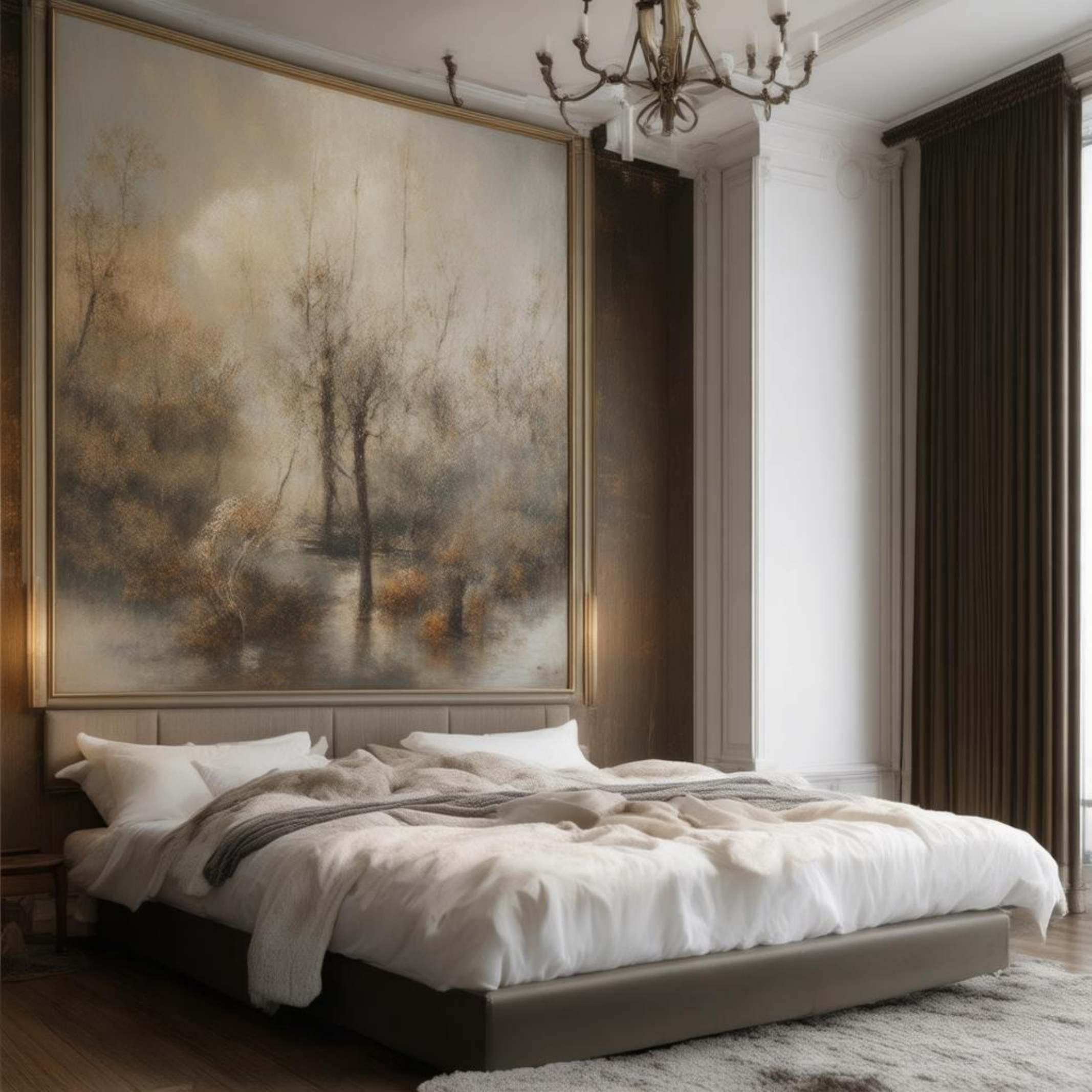}
& \includegraphics[width=0.155\linewidth]{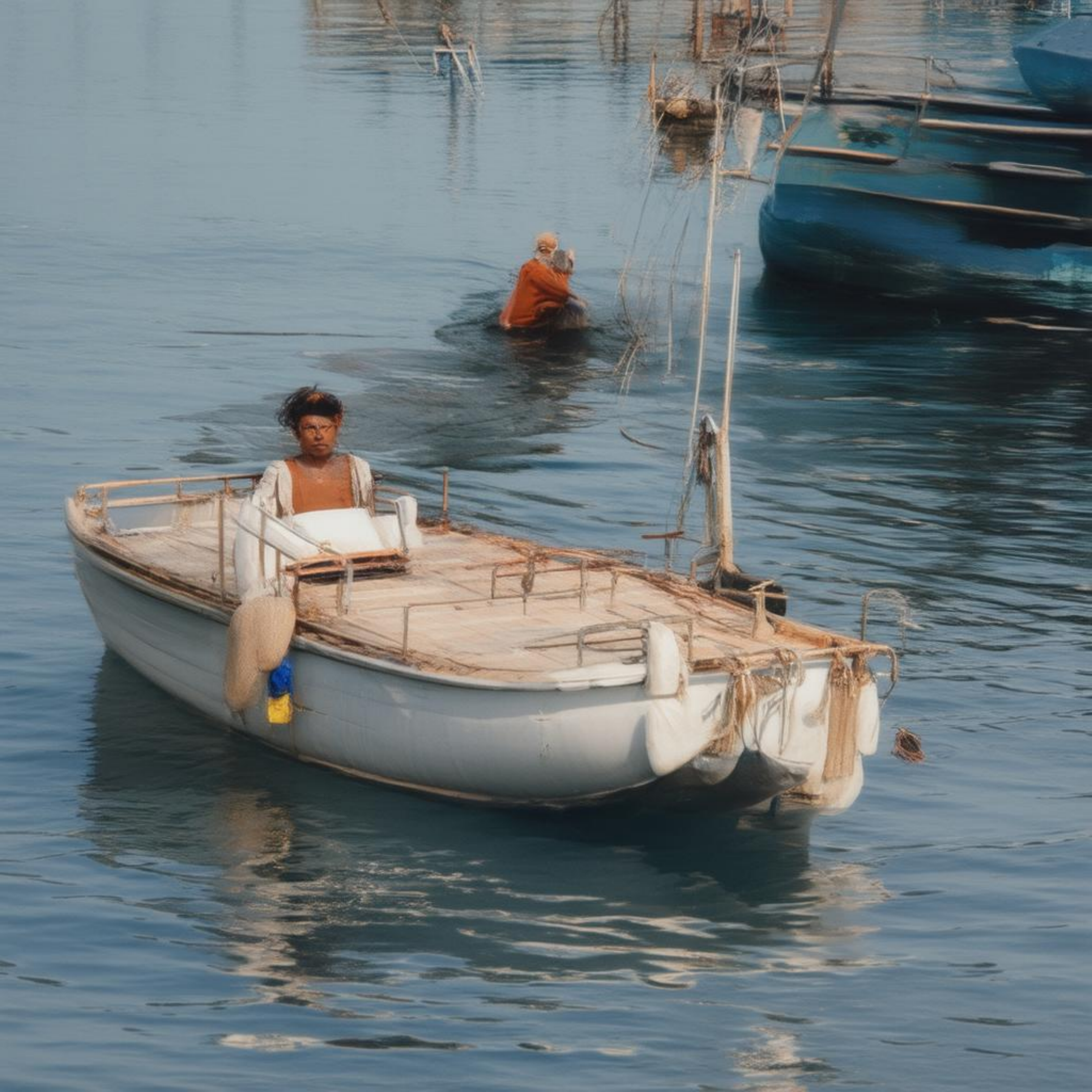}
& \includegraphics[width=0.155\linewidth]{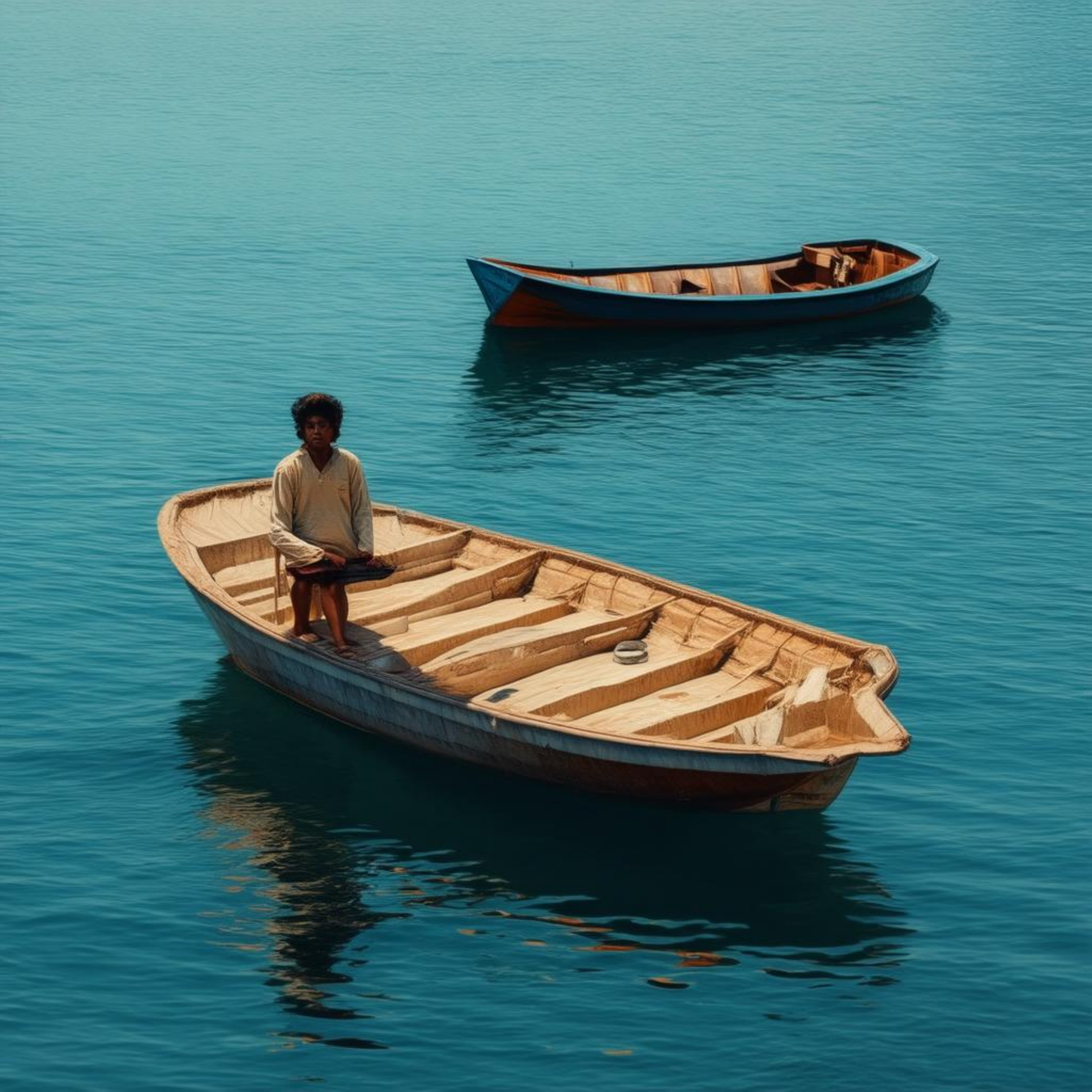}
& \includegraphics[width=0.155\linewidth]{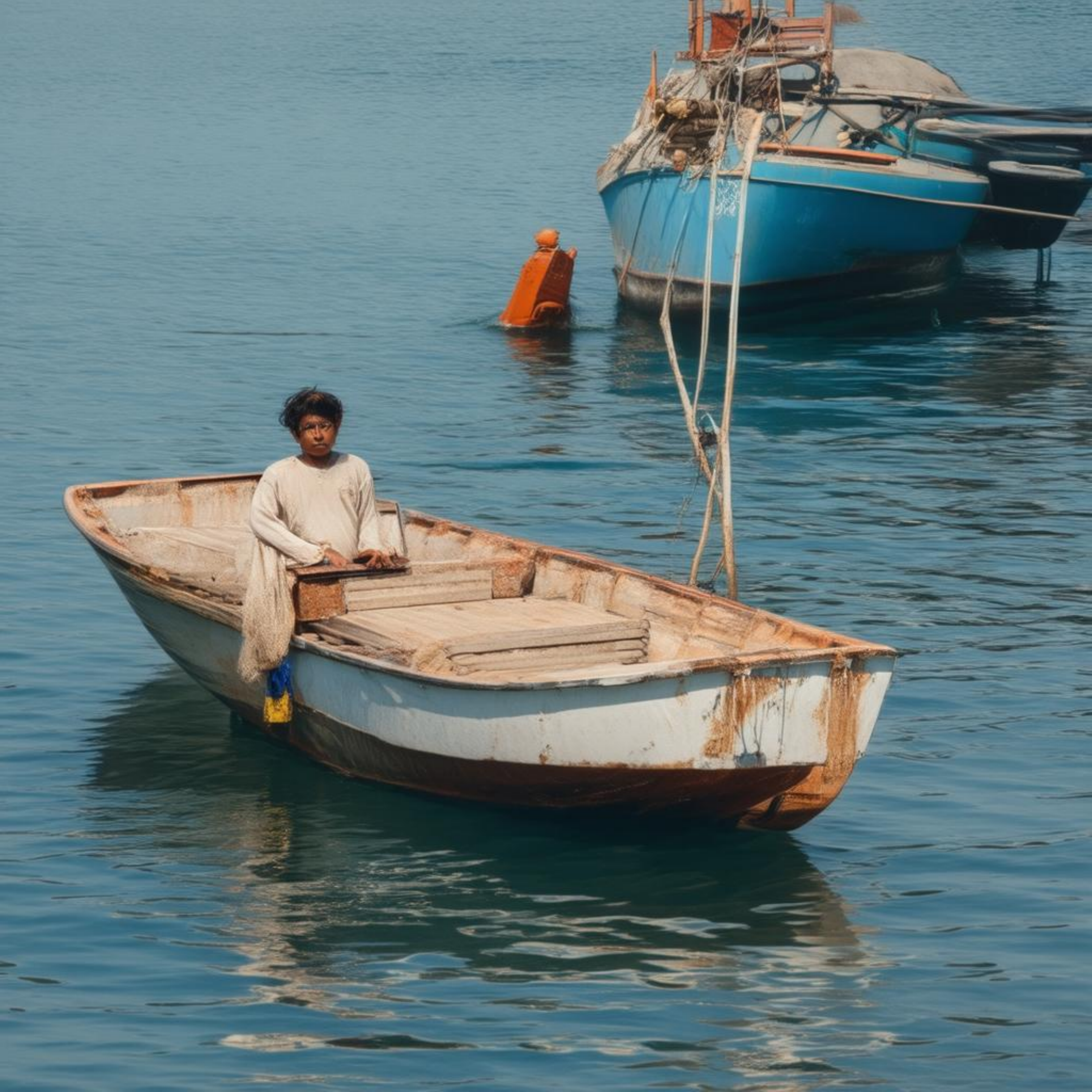} \\[-2pt]
\includegraphics[width=0.155\linewidth]{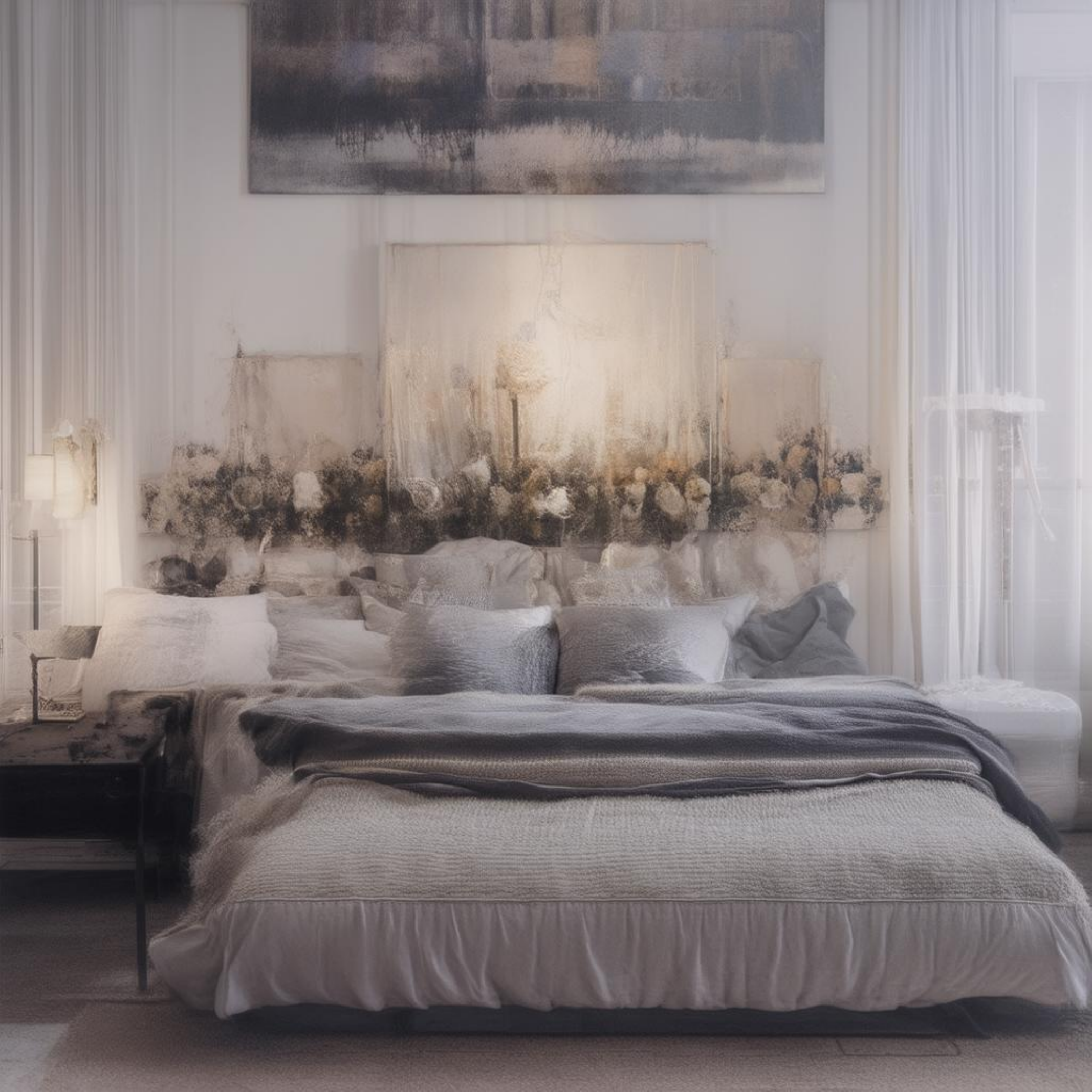}
& \includegraphics[width=0.155\linewidth]{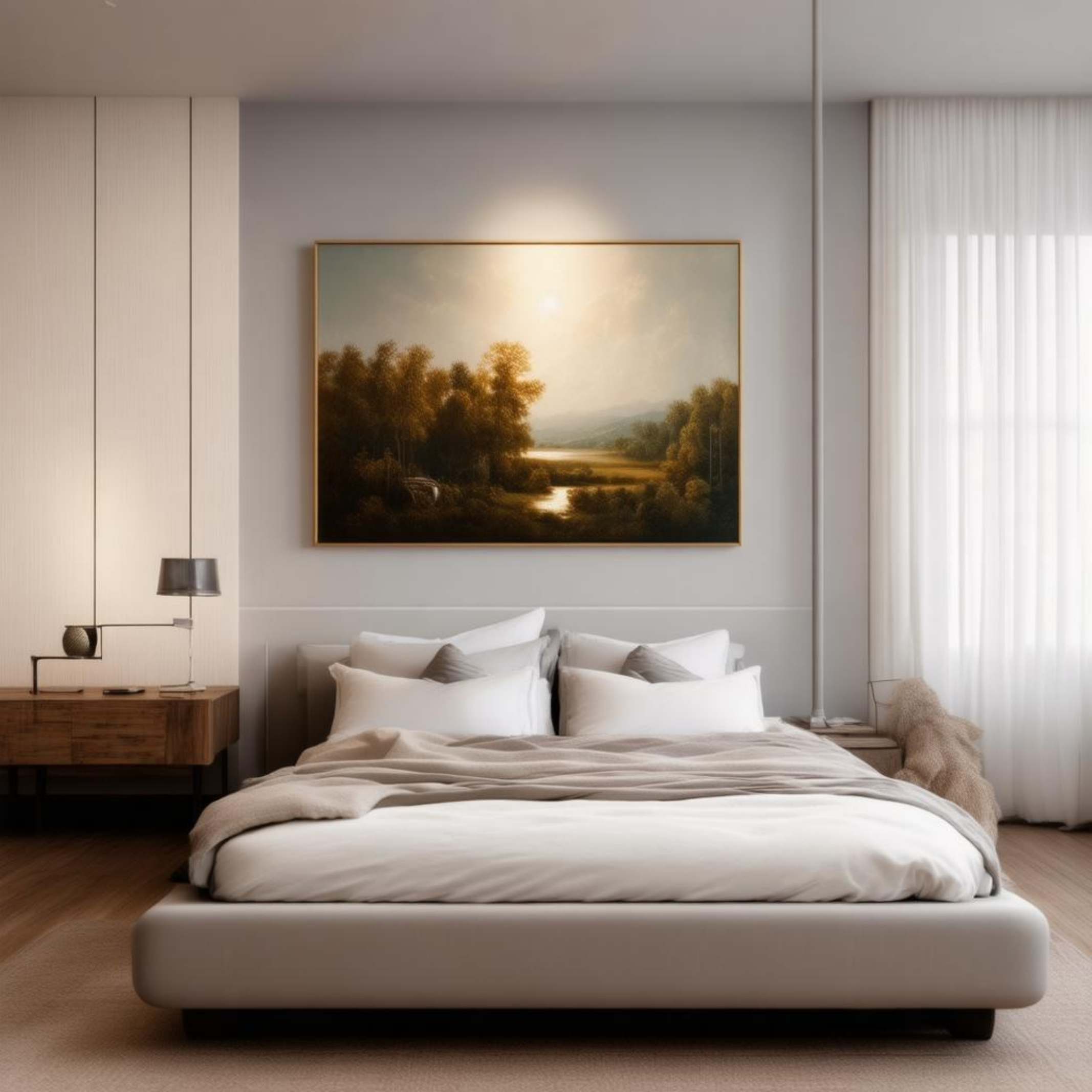}
& \includegraphics[width=0.155\linewidth]{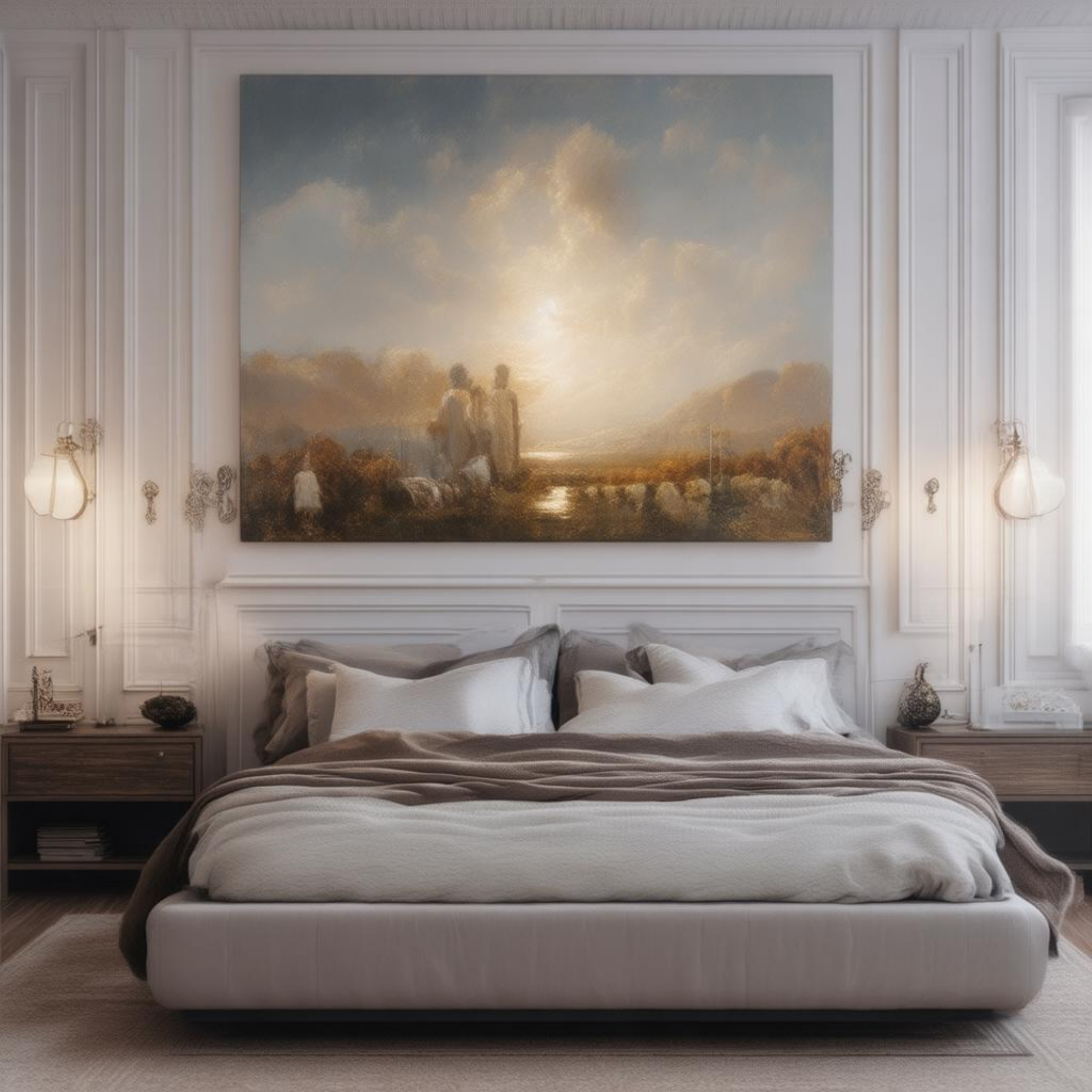}
& \includegraphics[width=0.155\linewidth]{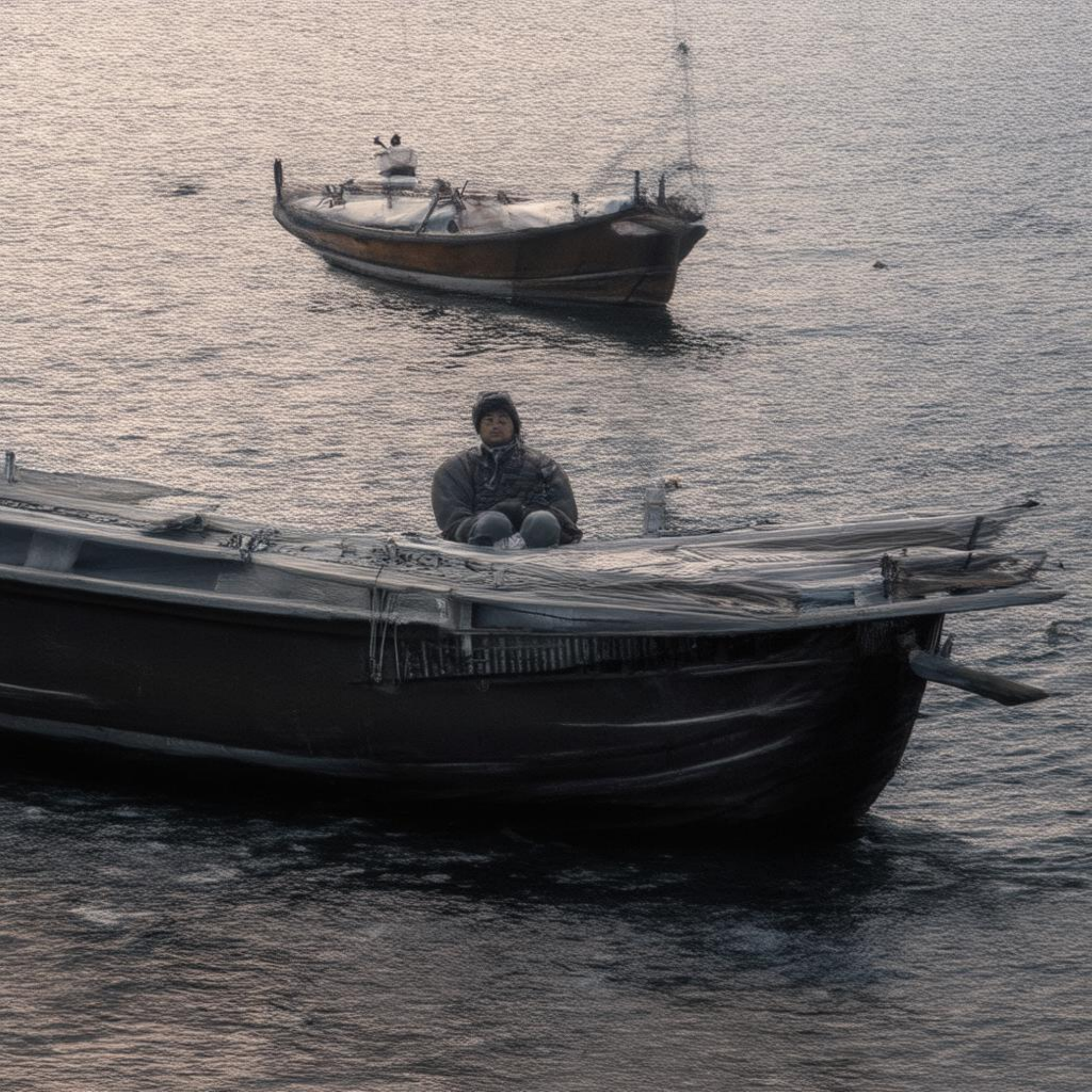}
& \includegraphics[width=0.155\linewidth]{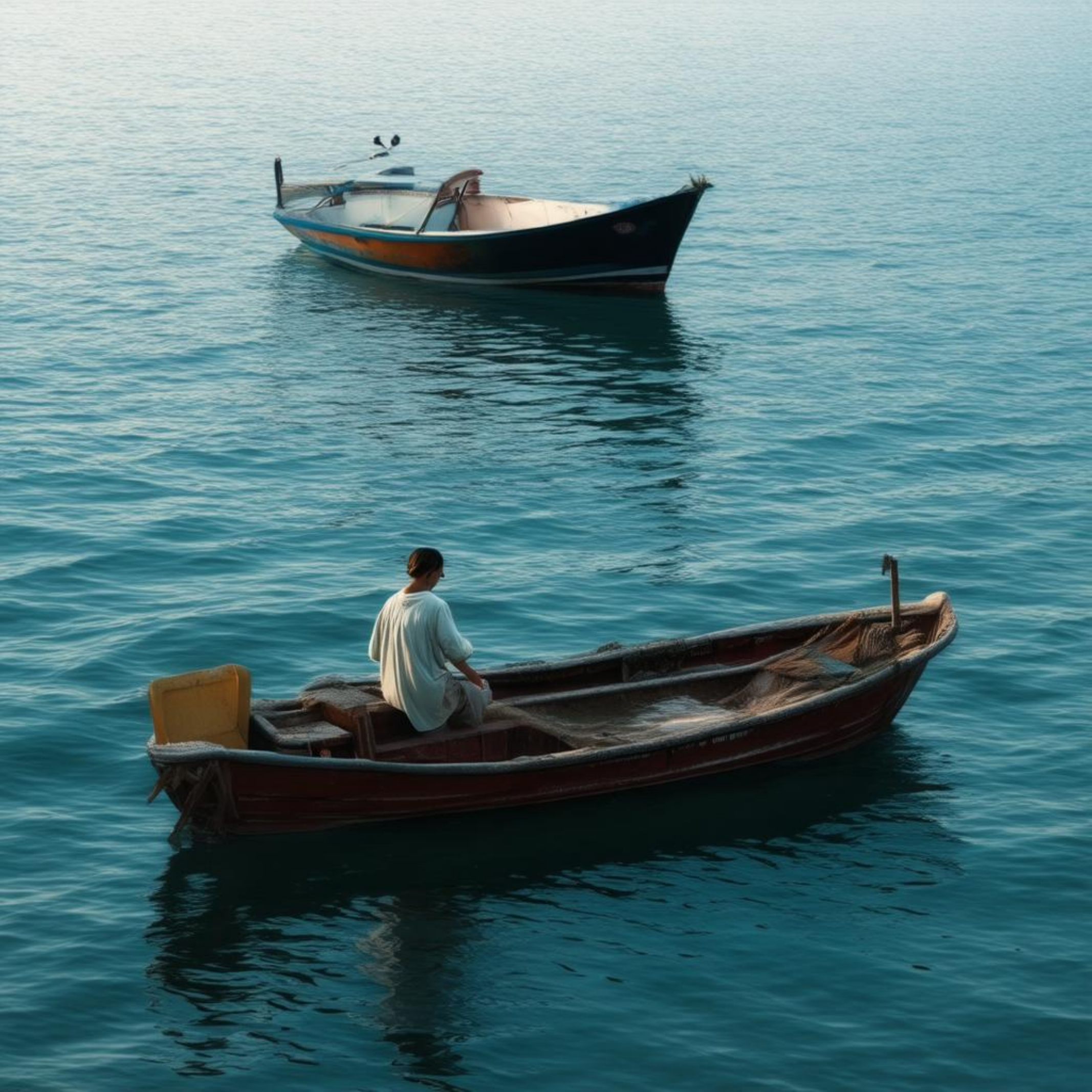}
& \includegraphics[width=0.155\linewidth]{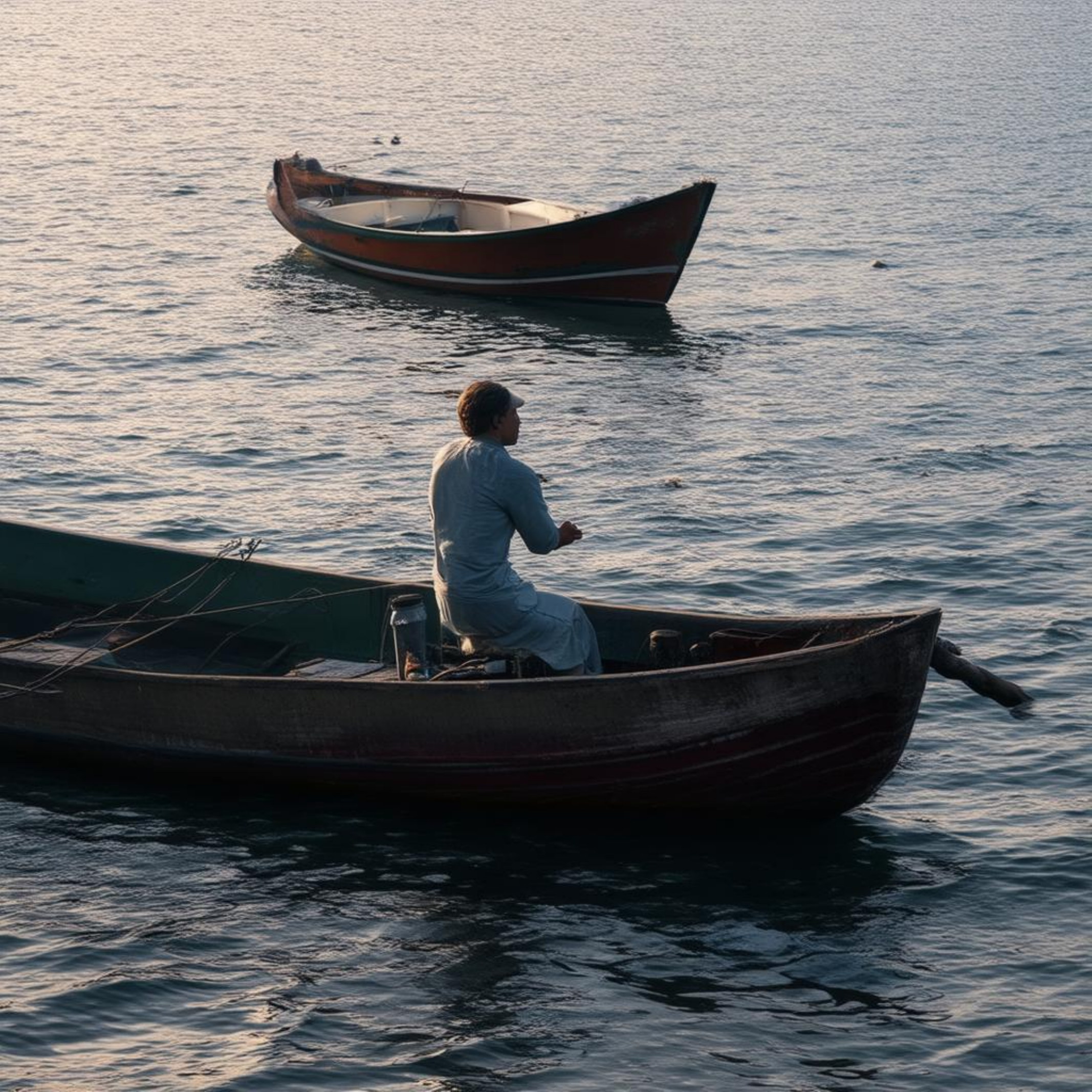}
\end{tabular}
}
\caption{Additional matched qualitative comparisons on SD3.5 Medium across
dog, landscape, bedroom, and people prompts. Each four-row category compares
no guidance, fixed CFG, and PMC-CFG using matched prompts and initial latent
seeds. Fixed CFG and PMC-CFG both use nominal guidance scale $\lambda=5$.}
\label{fig:sd35-categories}
\end{figure}

\FloatBarrier

\end{document}